\DeclareTextSymbolDefault{\DH}{T1}
\newcommand{\E}{\mathbb{E}}
\newcommand{\prox}{\operatorname{\mathrm{prox}}}
\def\namedlabel#1#2{\begingroup
    #2%
    \def\@currentlabel{#2}%
    \phantomsection\label{#1}\endgroup
}
\crefname{nlem}{Lemma}{Lemmas}
\crefname{nprop}{Proposition}{Propositions}
\crefname{ncor}{Corollary}{Corollaries}
\crefname{exa}{Example}{Examples}
\crefname{theorem}{Thm.}{Thms.}
\crefname{proposition}{Prop.}{Props.}
\crefname{assumption}{Assump.}{Assumps.}
\crefname{equation}{}{}
\crefname{section}{Sec.}{Secs.}
\crefname{appendix}{App.}{Apps.}
\newcommand\reallywidehat[1]{%
\savestack{\tmpbox}{\stretchto{%
  \scaleto{%
    \scalerel*[\widthof{\ensuremath{#1}}]{\kern.1pt\mathchar"0362\kern.1pt}%
    {\rule{0ex}{\textheight}}%
  }{\textheight}%
}{2.4ex}}%
\stackon[-6.9pt]{#1}{\tmpbox}%
}
\def\balign#1\ealign{\begin{align}#1\end{align}}
\def\baligns#1\ealigns{\begin{align*}#1\end{align*}}
\def\balignat#1\ealign{\begin{alignat}#1\end{alignat}}
\def\balignats#1\ealigns{\begin{alignat*}#1\end{alignat*}}
\def\bitemize#1\eitemize{\begin{itemize}#1\end{itemize}}
\def\benumerate#1\eenumerate{\begin{enumerate}#1\end{enumerate}}
\newenvironment{talign*}
 {\csname align*\endcsname}
 {\endalign}
\newenvironment{talign}
 {\csname align\endcsname}
 {\endalign}
\def\balignst#1\ealignst{\begin{talign*}#1\end{talign*}}
\def\balignt#1\ealignt{\begin{talign}#1\end{talign}}
\newcommand{\notate}[1]{\textcolor{blue}{\textbf{[#1]}}}
\newcommand{\qtext}[1]{\quad\text{#1}\quad}
\let\originalleft\left
\let\originalright\right
\renewcommand{\left}{\mathopen{}\mathclose\bgroup\originalleft}
\renewcommand{\right}{\aftergroup\egroup\originalright}
\def\tinycitep*#1{{\tiny\citep*{#1}}}
\def\tinycitealt*#1{{\tiny\citealt*{#1}}}
\def\tinycite*#1{{\tiny\cite*{#1}}}
\def\smallcitep*#1{{\scriptsize\citep*{#1}}}
\def\smallcitealt*#1{{\scriptsize\citealt*{#1}}}
\def\smallcite*#1{{\scriptsize\cite*{#1}}}
\def\mbi#1{\boldsymbol{#1}} %
\def\mbb#1{\mathbb{#1}}
\def\textsum{{\textstyle\sum}} %
\def\reals{\mathbb{R}} %
\def\<{\left\langle} %
\def\>{\right\rangle}
\def\defeq{\triangleq} %
\newcommand{\bs}{\backslash} %
\def\half{\frac{1}{2}}
\newcommand{\textfrac}[2]{{\textstyle\frac{#1}{#2}}}
\newcommand{\psdle}{\preccurlyeq}
\newcommand{\psdge}{\succcurlyeq}
\def\norm#1{\left\|{#1}\right\|} %
\newcommand{\onenorm}[1]{\norm{#1}_1} %
\newcommand{\twonorm}[1]{\norm{#1}_2} %
\newcommand{\infnorm}[1]{\norm{#1}_{\infty}} %
\newcommand{\opnorm}[1]{\norm{#1}_{\mathrm{op}}} %
\def\staticnorm#1{\|{#1}\|} %
\newcommand{\inner}[2]{\langle{#1},{#2}\rangle} %
\def\mineig#1{\lambda_{\mathrm{min}}\left({#1}\right)}
\def\indic#1{\mbb{I}\left[{#1}\right]} %
\def\staticindic#1{\mbb{I}[{#1}]} %
\def\E{\mbb{E}} %
\def\Esub#1{\E_{#1}}
\def\P{\mbb{P}} %
\newcommand{\grad}{\nabla} %
\newcommand{\Hess}{\nabla^2} %
\newcommand{\toprob}{\stackrel{p}{\to}}
\newcommand{\iid}{\textrm{i.i.d.}\xspace}
\providecommand{\argmin}{\mathop\mathrm{arg min}}
\providecommand{\dom}{\mathop\mathrm{dom}}
\providecommand{\diag}{\mathop\mathrm{diag}}
\newtheorem{theorem}{Theorem}
\newtheorem{lemma}[theorem]{Lemma}
\newtheorem{definition}{Definition}
\renewenvironment{proof}{\noindent\textbf{Proof}\hspace*{1em}}{\qed\\}
\newenvironment{proof-sketch}{\noindent\textbf{Proof Sketch}
  \hspace*{1em}}{\qed\bigskip\\}
\newenvironment{proof-idea}{\noindent\textbf{Proof Idea}
  \hspace*{1em}}{\qed\bigskip\\}
\newenvironment{proof-of-lemma}[1][{}]{\noindent\textbf{Proof of Lemma {#1}}
  \hspace*{1em}}{\qed\\}
\newenvironment{proof-of-theorem}[1][{}]{\noindent\textbf{Proof of Theorem {#1}}
  \hspace*{1em}}{\qed\\}
\newenvironment{proof-attempt}{\noindent\textbf{Proof Attempt}
  \hspace*{1em}}{\qed\bigskip\\}
\newtheorem{proposition}[theorem]{Proposition}
\newtheorem{assumption}{Assumption}
\renewcommand{\mineig}{\mathop\mathrm{mineig}}
\renewcommand{\bs}{\boldsymbol}
\newtheoremstyle{break}
  {\topsep}{\topsep}%
  {\itshape}{}%
  {\bfseries}{}%
  {\newline}{}%
\theoremstyle{break}
\DeclareMathSymbol{\shortminus}{\mathbin}{AMSa}{"39}
\newcommand{\ACV}{{\text{\bf ACV}}\xspace}
\newcommand{\CV}{{\text{\bf CV}}\xspace}
\newcommand{\ACVIJ}{{\text{\bf ACV}}^{\text{IJ}}\xspace}
\newcommand{\ProxACV}{{\text{\bf ProxACV}}\xspace}
\newcommand{\HOACV}{{\text{\bf ACV}_p^{\text{HO}}}}
\newcommand{\ProxACVIJ}{{\text{\bf ProxACV}}^{\text{IJ}}\xspace}
\newcommand{\hoproxacvest}{\acvest{i}^{\text{Prox-HO}_p}}
\newcommand{\rhoproxacvest}{\acvest{i}^{\text{Prox-RHO}_p}}
\newcommand{\proxHOACV}{{\text{\bf ProxACV}_p^{\text{HO}}}}
\newcommand{\proxRHOACV}{{\text{\bf ProxACV}_p^{\text{RHO}}}}
\newcommand{\loss}[0]{\ell} %
\newcommand{\reg}[0]{\pi} %
\newcommand{\obj}[0]{m} %
\newcommand{\est}[0]{\hat{\beta}} %
\newcommand{\cvest}[1]{\hat{\beta}_{\shortminus{#1}}} %
\newcommand{\acvest}[1]{\tilde{\beta}_{\shortminus{#1}}} %
\newcommand{\hoacvest}[1]{\acvest{#1}^{\text{HO}_p}}
\newcommand{\rhoacvest}[1]{\acvest{#1}^{\text{RHO}_p}}
\newcommand{\acvijest}[1]{\tilde{\beta}_{\shortminus{#1}}^{\text{IJ}}} %
\newcommand{\proxacvest}[1]{\tilde{\beta}_{\shortminus{#1}}^{\text{prox}}} %
\newcommand{\proxacvijest}[1]{\tilde{\beta}_{\shortminus{#1}}^{\text{prox},\text{IJ}}} %
\newcommand{\acvhess}[1]{\mathrm{H}_{{#1}}} %
\newcommand{\Ppop}{\P} %
\newcommand{\Pemp}{\P_n} %
\newcommand{\Pcv}[1]{\P_{n,\shortminus{#1}}} %
\newcommand{\xset}[0]{\mathcal{X}} %
\DeclareMathOperator{\interior}{interior} %
\renewcommand{\norm}{\staticnorm}
\newcommand{\Lip}[1]{\operatorname{Lip}_{#1}} %
\newcommand{\lineseg}{\mathcal{L}} %
\newcommand{\lam}{\lambda}
\newcommand{\Lam}{\Lambda}
\newcommand{\I}{\mathrm{I}} %
\newcommand{\gradlossboundmixed}[2]{\mathrm{B}^{\loss}_{{#1},{#2}}}
\begin{document}

\runningtitle{Approximate Cross-validation: Guarantees for Model Assessment and Selection}

\twocolumn[

\aistatstitle{%
Approximate Cross-validation:\\ Guarantees for Model Assessment and Selection}

\aistatsauthor{ Ashia Wilson \And Maximilian Kasy \And Lester Mackey }

\aistatsaddress{ Microsoft Research \And  Harvard University \And Microsoft Research} ]

\begin{abstract}
        Cross-validation (CV) is a popular approach for assessing and selecting predictive models.
        However, when the number of folds is large, CV suffers from a need to repeatedly refit a learning procedure on a large number of training datasets.
        Recent work in empirical risk minimization (ERM) approximates the expensive refitting with a single Newton step warm-started from the full training set optimizer.
        While this can greatly reduce runtime,  several open questions remain including whether these approximations lead to faithful model selection and whether they are suitable for non-smooth objectives.
        We address these questions with three main contributions:
        (i) we provide uniform non-asymptotic, deterministic model assessment guarantees for approximate CV;
        (ii) we show that (roughly) the same conditions also guarantee model selection performance comparable to CV;
        (iii) we provide a proximal Newton extension of the approximate CV framework for non-smooth prediction problems and develop improved assessment guarantees for problems such as $\ell_1$-regularized ERM.   
\end{abstract}

\section{Introduction}
Two important concerns when fitting a predictive model are \emph{model assessment} -- estimating the expected performance of the model on a future dataset sampled from the same distribution -- and \emph{model selection} -- choosing the model hyperparameters to minimize out-of-sample prediction error. 
Cross-validation (CV)~\citep{Stone1974,geisser1975predictive} is one of the most widely used techniques for assessment and selection, but it suffers from the need to repeatedly refit a learning procedure on different data subsets.
To reduce the computational burden of CV, recent work proposes to replace the expensive model refitting with an inexpensive surrogate.
For example, in the context of regularized empirical risk minimization (ERM), two popular techniques both approximate leave-one-out CV by taking Newton steps from the full-data optimized objective \cite[see, e.g.,][]{debruyne2008model,pmlr-v32-liua14,beirami2017optimal,rad2019scalable,giordano2018return}.
The literature provides single-model guarantees for the assessment quality of these Newton approximations for certain classes of regularized ERM models.
Two open questions are whether these approximations are suitable for model selection and whether they are suitable for non-smooth objectives, such as $\ell_1$-penalized losses.  As put by~\citep{Stephenson2019sparse},
``understanding the uses and limitations of approximate CV for selecting $\lam$ is one of the most important directions for future work in this area.''
We address these important open problems in this work.

Our principal contributions are three-fold. %
\begin{itemize}[topsep=0pt]
  \setlength\itemsep{-.1em}
\item We provide uniform guarantees for \emph{model assessment} using approximate CV. Specifically, we give conditions which guarantee that the difference between CV and approximate CV is uniformly bounded by a constant of order $1/n^2$, where $n$ is the number of CV folds.
In contrast to existing guarantees, our results are non-asymptotic, deterministic, and uniform in $\lam$; our results do not assume a bounded parameter space and provide a more precise convergence rate of $O(1/n^2)$. %
\item We provide guarantees for \emph{model selection}.
We show that roughly the same conditions that guarantee uniform quality assessment results also guarantee that estimators based on parameters tuned by approximate cross-validation and by cross-validation are within $O(1/n)$ distance of each other, so that the approximation error is negligible relative to the sampling variation.
\item We propose a \emph{generalization of approximate CV} that works for general non-smooth penalties. This generalization is based on the proximal Newton method~\citep{lee2014proximal}. We provide strong model assessment guarantees for this generalization and demonstrate that past non-smooth extensions of ACV fail to satisfy these strong guarantees.
\end{itemize}

\paragraph{Notation}
Let $[n] \defeq \{1,\dots,n\}$, $\mathrm{I}_d$ be the $d\times d$ identity matrix, and $\partial\varphi$ denote the subdifferential of a function $\varphi$ \citep{Rockafellar1970ConvexAnalysis}.
For any matrix or tensor $H$, we define $\opnorm{H} \defeq \sup_{v\neq 0\in\reals^d}{\opnorm{H[v]}}{/\twonorm{v}}$ where $\opnorm{v} \defeq \twonorm{v}$ is the Euclidean norm.
For any Lipschitz vector, matrix, or tensor-valued function $f$ with domain $\dom(f)$, we define the Lipschitz constant $\Lip{}(f) \defeq \sup_{x\neq y\in\dom(f)}\frac{\opnorm{f(x) - f(y)}}{\twonorm{x-y}}$.

\section{Cross-validation for Regularized Empirical Risk Minimization}
For a given datapoint $z\in\xset$ and candidate parameter vector $\beta \in\reals^d$, consider the objective function
\balignt\label{eq:objective}
\obj(z, \beta, \lambda) = \loss(z, \beta) + \lambda\reg(\beta)
\ealignt
comprised of a loss function $\loss$, a regularizer $\reg$, and a regularization parameter $\lambda \in [0,\infty]$.
 Common examples of loss functions are the least squares loss for regression and the exponential and logistic losses for classification; common examples of regularizers are the $\ell_2^2$ (ridge) and $\ell_1$ (Lasso) penalties.
Our interest is in assessing and selecting amongst estimators fit via \emph{regularized empirical risk minimization (ERM)}:%
$$
\est(\lambda) 
    \defeq \begin{cases}
    \argmin_\beta \loss(\Pemp,\beta) + \lambda \reg(\beta) & \lam \in [0,\infty)\\
    \argmin_\beta \reg(\beta) & \lam = \infty.
    \end{cases}
$$
Here, $\Pemp \defeq \frac{1}{n} \sum_{i=1}^n \delta_{z_i}$ is an empirical distribution over a given training set with datapoints $z_1, \dots, z_n \in\xset$, and we overload notation to write
$\loss(\mu, \beta) 
    \defeq \int \loss(z, \beta) d\mu(z)
\text{ and }
\obj(\mu, \beta, \lambda) 
    \defeq \loss(\mu, \beta) + \lam\reg(\beta)
$
for any measure $\mu$ on $\xset$ under which $\loss$ is integrable.

A standard tool for both model assessment and model selection is the leave-one-out cross-validation (CV)\footnote{We will focus on leave-one-out CV for concreteness, but our results directly apply to any variant of CV, including $k$-fold and leave-pair-out CV, by treating each fold as a (dependent) datapoint. Leave-pair-out CV is often recommended for AUC estimation~\citep{pmlr-v8-airola10a,AIROLA20111828} but is demanding even for small datasets as ${n \choose 2}$ folds are required.} estimate of risk \citep{Stone1974,geisser1975predictive}
\balignt\label{eq:cv-err}
\CV(\lambda) = \tfrac{1}{n} \textsum_{i=1}^n \loss(z_i, \cvest{i}(\lambda))
\ealignt
which is based on the leave-one-out estimators
\balignt\label{eq:cvest}
\cvest{i}(\lambda) 
&= \argmin_\beta \loss(\Pcv{i},\beta) + \lambda\reg(\beta)\\
&= \argmin_\beta \tfrac{1}{n}\textsum_{j \neq i} \loss(z_j,\beta) + \lambda\reg(\beta) 
\ealignt
for $\Pcv{i} \defeq \frac{1}{n} \sum_{j\neq i} \delta_{z_j}$.
Unfortunately, performing leave-one-out CV entails  solving an often expensive optimization problem $n$ times for every value of $\lam$ evaluated; this makes model selection with leave-one-out CV especially burdensome.

\section{Approximating Cross-validation}
To provide a faithful estimate of CV while reducing its computational cost,~\citet{beirami2017optimal} (see also \citep{rad2019scalable})
considered the following \emph{approximate cross-validation} (ACV) \emph{error}
\balignt\label{eq:acv1} 
\ACV(\lambda) 
    &\defeq \frac{1}{n}\sum_{i=1}^n \loss(z_i, \acvest{i}(\lambda)) 
\ealignt
based on the approximate leave-one-out CV estimators
\balignt
\label{eq:acvest}
\acvest{i}(\lambda) 
   \hspace{-.075cm} = \hspace{-.075cm}\est(\lambda)\hspace{-.075cm} +\hspace{-.075cm}\Hess_\beta \obj(\Pcv{i}, \est(\lambda), \lambda)^{-1} \frac{\grad_\beta \loss(z_i, \est(\lambda))}{n} %
\ealignt
In effect, \ACV replaces the task of solving a leave-one-out optimization problem \cref{eq:cvest} with taking a single Newton step \cref{eq:acvest} and realizes computational speed-ups when the former is more expensive than the latter. This approximation requires that the objective be everywhere twice-differentiable and therefore does not directly apply to non-smooth ERM problems such as the Lasso. We revisit this issue in \cref{sec:proximal}.

\subsection{Optimizer Comparison}
Each ACV estimator \cref{eq:acvest} can also be viewed as the optimizer of a second-order Taylor approximation to the leave-one-out objective \cref{eq:cvest}, expanded around the full training sample estimate $\est(\lambda)$:
\balign
\acvest{i}(\lambda) 
    = \textstyle\argmin_\beta&\   \widehat{\obj}_2(\Pcv{i}, \beta, \lam; \est(\lam)) \qtext{for}\\
\widehat{\obj}_{2}(\Pcv{i}, \beta, \lam; \est(\lam))
   &\defeq \sum_{k=0}^2 \textfrac{\nabla_\beta^k \obj(\Pcv{i}, \est(\lambda),\lam)[\beta - \est(\lambda)]^{\otimes k}}{k!}.
\ealign
This motivates our optimization perspective on analyzing \ACV.  To understand how well \ACV approximates \CV we need only understand how well the optimizers of two related optimization problems approximate one another.
As a result, the workhorse of our analysis is the following key lemma, proved in
\cref{App:opt-comp}, which controls the difference between the optimizers of similar objective functions.
In essence, two optimizers are close if their objectives (or objective gradients) are close and at least one objective has a sharp---that is, not flat---minimum.
\begin{lemma}[Optimizer comparison] \label{optimizer-comparison} 
Suppose
\balignt\label{eq:obj} 
x_{\varphi_1} \in \argmin_x \varphi_1(x)  \qtext{and} x_{\varphi_2} \in \argmin_x \varphi_2(x).
\ealignt
If each $\varphi_i$ admits an $\nu_{\varphi_i}$ error bound \cref{eq:error_bound}, defined in \cref{def:eb-and-gg} below, 
then
\begin{align}\label{eq:opt_comp_error_bound}
 &\nu_{\varphi_1}(\twonorm{x_{\varphi_1} - x_{\varphi_2}})
 + \nu_{\varphi_2}(\twonorm{x_{\varphi_1} - x_{\varphi_2}}) \\
    &\leq \varphi_2(x_{\varphi_1}) - \varphi_1(x_{\varphi_1}) - (\varphi_2(x_{\varphi_2}) 
    - \varphi_1(x_{\varphi_2})).
 \end{align}
If $\varphi_2-\varphi_1$ is differentiable and $\varphi_2$ has $\nu_{\varphi_2}$ gradient growth \cref{eq:gradient_growth}, defined in \cref{def:eb-and-gg} below, %
 then
\balignt
\label{eq:opt_comp_growth}
    &\nu_{\varphi_2}(\twonorm{x_{\varphi_1} - x_{\varphi_2}}) %
    \leq 
    \inner{x_{\varphi_1}-x_{\varphi_2}}{\grad(\varphi_2-\varphi_1)(x_{\varphi_1})}.
 \ealignt
\end{lemma}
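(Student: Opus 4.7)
The plan is to establish both bounds by substituting each minimizer into the defining inequality (error bound or gradient growth) of the \emph{other} objective. Both arguments are one-step consequences of the definitions in \cref{def:eb-and-gg}; the only mild technicality lies in handling subdifferentials for the second claim.

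For \cref{eq:opt_comp_error_bound}, I would apply the $\nu_{\varphi_1}$ error bound of $\varphi_1$ at the test point $x = x_{\varphi_2}$ to get
\begin{align*}
\nu_{\varphi_1}(\twonorm{x_{\varphi_1}-x_{\varphi_2}}) \le \varphi_1(x_{\varphi_2}) - \varphi_1(x_{\varphi_1}),
\end{align*}
and symmetrically apply the $\nu_{\varphi_2}$ error bound of $\varphi_2$ at $x = x_{\varphi_1}$ to get
\begin{align*}
\nu_{\varphi_2}(\twonorm{x_{\varphi_1}-x_{\varphi_2}}) \le \varphi_2(x_{\varphi_1}) - \varphi_2(x_{\varphi_2}).
\end{align*}
Summing these two inequalities and regrouping the right-hand side as $\varphi_2(x_{\varphi_1}) - \varphi_1(x_{\varphi_1}) - (\varphi_2(x_{\varphi_2}) - \varphi_1(x_{\varphi_2}))$ yields the claim verbatim. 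This part is pure bookkeeping.

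For \cref{eq:opt_comp_growth}, I would invoke the gradient growth property of $\varphi_2$ at $x = x_{\varphi_1}$, producing some $g \in \partial \varphi_2(x_{\varphi_1})$ with
$\nu_{\varphi_2}(\twonorm{x_{\varphi_1}-x_{\varphi_2}}) \le \inner{x_{\varphi_1}-x_{\varphi_2}}{g}$. The main step is to identify a convenient representative of $\partial\varphi_2(x_{\varphi_1})$: since $x_{\varphi_1}$ minimizes $\varphi_1$ we have $0 \in \partial \varphi_1(x_{\varphi_1})$, and since $\varphi_2 - \varphi_1$ is differentiable at $x_{\varphi_1}$ the sum rule gives $\partial \varphi_2(x_{\varphi_1}) = \partial \varphi_1(x_{\varphi_1}) + \nabla(\varphi_2-\varphi_1)(x_{\varphi_1})$. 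Taking $g = \nabla(\varphi_2-\varphi_1)(x_{\varphi_1})$, which lies in $\partial \varphi_2(x_{\varphi_1})$ by the inclusion just noted, delivers \cref{eq:opt_comp_growth}.

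The only point requiring care is the subgradient manipulation: strictly speaking, gradient growth is a condition on (some or all) subgradients of $\varphi_2$, and the conclusion is phrased in terms of the specific vector $\nabla(\varphi_2-\varphi_1)(x_{\varphi_1})$. The differentiability hypothesis on $\varphi_2-\varphi_1$ is precisely what makes the additive subdifferential calculus exact and lets me pick this specific subgradient. Beyond that, there are no estimates to grind through—the lemma is really a clean packaging of first-order optimality together with the abstract growth conditions of \cref{def:eb-and-gg}.
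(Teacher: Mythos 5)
Your proposal is correct and follows essentially the same route as the paper's proof: the first claim by summing the two error bounds evaluated at the opposite minimizers, and the second by combining first-order optimality with the exact subdifferential sum rule to identify $\nabla(\varphi_2-\varphi_1)(x_{\varphi_1})\in\partial\varphi_2(x_{\varphi_1})$ before applying gradient growth. The only detail the paper makes explicit that you leave implicit is that gradient growth is invoked with the second subgradient $v=0\in\partial\varphi_2(x_{\varphi_2})$, which comes from the optimality of $x_{\varphi_2}$.
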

This result relies on two standard ways of measuring the sharpness of objective function minima:
\begin{definition}[Error bound and gradient growth]
\label{def:eb-and-gg}
Consider the generalized inverse $\nu(r) \defeq \inf\{s : \omega(s) \geq r\}$ of any non-decreasing function $\omega$ with $\omega(0) = 0$.
We say a function $\varphi$ admits an \emph{$\nu$ error bound}~\citep{bolte2017errorbound} if 
\balignt
\label{eq:error_bound}
&\nu(\twonorm{x - x^*}) \leq  \varphi(x) - \varphi(x^*) 
\ealignt
{for} 
$x^* = \argmin_{x'} \varphi(x')$
{and all} $x\in\reals^d$.
We say a function $\varphi$ has \emph{$\nu$ gradient growth}~\citep{Nesterov08} if
$\varphi$ is subdifferentiable and
\balignt\label{eq:gradient_growth}
&\nu(\twonorm{x- y}) 
\leq \inner{y - x}{u - v}
\ealignt
for all $x, y\in \reals^d$ 
{and all} $u\in\partial\varphi(y), v\in\partial\varphi(x).$
\end{definition}
Notably, if $\varphi$ is $\mu$-strongly convex, then $\varphi$ admits an $\nu_\varphi(r) \defeq \frac{\mu}{2}r^2$ error bound and $\nu_\varphi(r) \defeq \mu r^2$  gradient growth,
but even non-strongly-convex functions can satisfy  quadratic error bounds \citep{Karimi2016PL}.
\subsection{Model Assessment}
We now present a deterministic, non-asymptotic approximation error result for $\ACV$ when used to approximate $\CV$ for a collection of models indexed by $\lambda \in\Lambda$.
Importantly for the model selection results that follow, \cref{thm:acv-approximates-cv} shows that the ACV error is an $O(1/n^2)$ approximation to CV error uniformly in $\lambda$:
\begin{theorem}[\ACV-\CV assessment error]\label{thm:acv-approximates-cv}
If \cref{gradlossboundmixed,curvedobj,HessobjLipschitz} below 
hold for some $\Lambda \subseteq [0,\infty]$ and each $(s,r) \in\{ (0,3),(1,3),(1,4)\}$,
then, for each $\lam \in \Lambda$,
\balignt\label{eq:acv-cv-approx}
|\ACV(\lambda) - \CV(\lambda)|&\leq 
   \frac{\kappa_2}{n^2}\frac{ \gradlossboundmixed{0}{3}}{c_\obj^2}
	+\frac{\kappa_2}{n^3} \frac{\gradlossboundmixed{1}{3}}{c_\obj^3} 
	+\frac{\kappa_2^2}{n^4} \frac{\gradlossboundmixed{1}{4}}{2c_\obj^4},
\ealignt 
\newcommand{\acvcvapproxspacing}{{\hspace{-.095cm}}}
\acvcvapproxspacing for  $\kappa_p\acvcvapproxspacing \defeq \acvcvapproxspacing \sup_{\lam\geq 0}\hspace{-.07cm} \tfrac{C_{\loss, p+1} + \lambda C_{\reg,p+1}}{p!(c_\loss + \lambda c_{\pi} \indic{\lam \geq \lam_\pi})} \acvcvapproxspacing\leq \acvcvapproxspacing \max(\tfrac{C_{p+1,\lam_\reg}}{p!c_\loss}, \acvcvapproxspacing \tfrac{C_{\reg,p+1}}{p!c_\reg}).$

\end{theorem}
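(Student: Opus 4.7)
My plan is to reduce the statement to a pointwise bound on $\|\acvest{i}(\lambda) - \cvest{i}(\lambda)\|$ for each held-out datapoint $z_i$, and to control this distance by two successive applications of the Optimizer Comparison Lemma \cref{optimizer-comparison}. The triangle inequality gives
\begin{align*}
|\ACV(\lambda) - \CV(\lambda)| \le \frac{1}{n}\sum_{i=1}^n \bigl|\loss(z_i, \acvest{i}(\lambda)) - \loss(z_i, \cvest{i}(\lambda))\bigr|,
\end{align*}
so it suffices to bound each summand by the quantity appearing on the right-hand side of \cref{eq:acv-cv-approx}, uniformly in $i$ and $\lambda$.

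To bound $\|\cvest{i} - \est\|$, I would apply \cref{optimizer-comparison} with $\varphi_1 = \obj(\Pcv{i}, \cdot, \lambda)$ and $\varphi_2 = \obj(\Pemp, \cdot, \lambda)$. Their difference equals $\frac{1}{n}\loss(z_i,\cdot)$, and \cref{curvedobj} endows $\varphi_2$ with $c_\obj$-gradient-growth, so the gradient-growth form \cref{eq:opt_comp_growth} yields $\|\cvest{i} - \est\| = O(\|\nabla\loss(z_i,\cdot)\|/(n c_\obj))$. To bound $\|\acvest{i} - \cvest{i}\|$, I would apply \cref{optimizer-comparison} again, now with $\varphi_1 = \obj(\Pcv{i}, \cdot, \lambda)$ and $\varphi_2 = \widehat{\obj}_{2}(\Pcv{i}, \cdot, \lambda; \est(\lambda))$, the second-order Taylor approximation of $\varphi_1$ at $\est$. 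Crucially, the gradient of $\varphi_2 - \varphi_1$ at any $\beta$ is precisely the second-order Taylor remainder of $\nabla\obj(\Pcv{i}, \cdot, \lambda)$ at $\est(\lambda)$, which \cref{HessobjLipschitz} bounds by $\tfrac{1}{2}C_{\obj,3}\|\beta - \est\|^2$. Evaluated at $\beta = \cvest{i}$ and combined with the quadratic gradient-growth of $\varphi_2$, this gives $\|\acvest{i} - \cvest{i}\| = O(\kappa_2 \|\cvest{i} - \est\|^2)$, and chaining the two estimates produces $\|\acvest{i} - \cvest{i}\| = O(\kappa_2 \|\nabla\loss(z_i,\cdot)\|^2/(n^2 c_\obj^2))$.

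The three terms on the right-hand side of \cref{eq:acv-cv-approx} will then arise from a careful Taylor expansion of $\loss(z_i, \cdot)$. The leading contribution, $\nabla\loss(z_i, \cvest{i})\cdot(\acvest{i} - \cvest{i})$, is of order $\|\nabla\loss\|\cdot\|\acvest{i}-\cvest{i}\| = O(\kappa_2 \|\nabla\loss\|^3/(n^2 c_\obj^2))$ and contributes the first summand $\kappa_2 \gradlossboundmixed{0}{3}/(n^2 c_\obj^2)$. Expanding the residual to second order splits it into a cross term coupling the $O(1/n)$ displacement $\|\acvest{i} - \est\|$ with the $O(1/n^2)$ correction $\|\acvest{i} - \cvest{i}\|$ through $\nabla^2\loss$ (producing the $O(1/n^3)$ middle term) and a pure quadratic piece of order $\|\nabla^2\loss\|\cdot\|\acvest{i}-\cvest{i}\|^2 \sim 1/n^4$ (producing the final term). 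Finally, the $\lambda$-dependent quantities $C_{\obj,3}/c_\obj$ are absorbed into the uniform constant $\kappa_2$ and the pointwise $\loss$-derivative products into the uniform quantities $\gradlossboundmixed{s}{r}$ supplied by \cref{gradlossboundmixed}.

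The principal obstacle will be the bookkeeping in the last stage: ensuring that every local Hessian, gradient, and sharpness constant is replaced by a quantity uniform over $\lambda \in \Lambda$ and $i \in [n]$, tracking the correct powers of $\kappa_2$ (one factor in the $1/n^2$ and $1/n^3$ terms, two in the $1/n^4$ term), and handling the boundary case $\lambda = \infty$ together with the two-regime definition of $\kappa_p$ driven by the indicator $\indic{\lambda \ge \lambda_\reg}$.
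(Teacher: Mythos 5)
Your plan follows the paper's own proof almost step for step: the paper derives \cref{thm:acv-approximates-cv} from the general statement \cref{thm:acv-approximates-cv-HO-full} with $p=q=2$, using exactly your two invocations of \cref{optimizer-comparison} --- \cref{lem:cvest-est-comparison} to bound $\twonorm{\cvest{i}(\lambda)-\est(\lambda)}$ and the Taylor comparison \cref{taylor-comparison} to get $\twonorm{\acvest{i}(\lambda)-\cvest{i}(\lambda)}\leq \kappa_2\twonorm{\cvest{i}(\lambda)-\est(\lambda)}^2$ --- followed by the same second-order expansion of $\loss(z_i,\cdot)$ with a mean-value re-centering of the gradient at $\est(\lambda)$, yielding the three terms with the same powers of $n$ and $\kappa_2$ (see \cref{sec:acvp-cv}).

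One local slip is worth fixing. In your first application of the lemma you assign the gradient growth to $\varphi_2=\obj(\Pemp,\cdot,\lambda)$, but \cref{curvedobj} supplies growth for the \emph{leave-one-out} objectives; more importantly, with your orientation \cref{eq:opt_comp_growth} evaluates $\grad(\varphi_2-\varphi_1)$ at $x_{\varphi_1}=\cvest{i}(\lambda)$, so your bound comes out in terms of $\twonorm{\grad_\beta\loss(z_i,\cvest{i}(\lambda))}$ --- a quantity that \cref{gradlossboundmixed} does not control (it bounds moments of gradients evaluated at $\est(\lambda)$). Swapping the roles as in the paper's \cref{lem:cvest-est-comparison}, taking $\varphi_1=\obj(\Pemp,\cdot,\lambda)$ and $\varphi_2=\obj(\Pcv{i},\cdot,\lambda)$, uses precisely the curvature that \cref{curvedobj} provides and makes the gradient appear at $\est(\lambda)$, matching \cref{gradlossboundmixed}; with that correction the rest of your outline closes exactly as in the paper.
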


This result, proved in \cref{App:higher-order-assess}, relies on the following three assumptions:
\begin{assumption}[Curvature of objective] 
\label{curvedobj} 
For some $c_\loss, c_\reg, c_\obj > 0$ and $\lam_\reg <\infty$, all $i \in [n]$, and all $\lam, \lam'$ in a given $\Lambda \subseteq [0,\infty]$,
$\obj(\Pcv{i},\cdot, \lam)$ has $\nu_\obj(r) = c_\obj r^2$ gradient growth and,
for $c_{\lam',\lam}\defeq c_\loss + \lambda' c_{\pi} \indic{\lam \geq \lam_\pi}$,
\balignt\label{eq:curvature_obj}
\Hess_\beta \obj(\Pcv{i}, \est(\lam),\lambda') \succeq  c_{\lam',\lam} \mathrm{I}_d. %
\ealignt
\end{assumption}
\begin{assumption}[Bounded moments of loss derivatives] \label{gradlossboundmixed}
For given $s,r\geq 0$ and $\Lambda \subseteq [0,\infty]$,  $\gradlossboundmixed{s}{r} <\infty$ where
\balign%
\gradlossboundmixed{s}{r} 
    &\defeq %
    \sup_{\lambda\in\Lambda}\textfrac{1}{n}\textsum_{i=1}^n
	\Lip{}(\grad_\beta\loss(z_i,\cdot))^s\twonorm{\grad_\beta \loss(z_i, \est(\lambda))}^r.
\ealign
\end{assumption}
\begin{assumption}[Lipschitz Hessian of objective] \label{HessobjLipschitz}
For a given $\Lambda \subseteq [0,\infty]$ and
 some $C_{\loss,3}, C_{\reg,3}<\infty$, 
\balignt
\Lip{}(\Hess_\beta \obj(\Pcv{i},\cdot, \lambda)) 
\leq C_{\loss,3} + \lam C_{\reg,3},\ \forall \lambda\in\Lambda, i\in[n].
\ealignt
\end{assumption}
\cref{curvedobj} ensures  the leave-one-out objectives have curvature near their minima, while \cref{gradlossboundmixed} bounds the average discrepancy between the full-data and leave-one-out objectives.  Together, \cref{curvedobj,gradlossboundmixed} ensure that the leave-one-out estimates $\cvest{i}(\lambda)$ are not too far from the full-data estimate $\est(\lambda)$ on average.
Meanwhile, \cref{HessobjLipschitz} ensures that the leave-one-out objective is well-approximated by its second-order Taylor expansion and hence that the ACV estimates $\acvest{i}(\lambda)$ are close to the CV estimates $\cvest{i}(\lambda)$. 

\cref{thm:acv-approximates-cv}
most resembles the (fixed dimension) results obtained by~\citet[Sec. A.9]{rad2019scalable}, who show $|\ACV(\lambda) - \CV(\lambda)| =  o_p(1/n)$ under \iid\ sampling of the datapoints $z_i$, mild regularity conditions, and the convergence assumptions $\est(\lam) \toprob \beta^\ast(\lam)$ and $\cvest{i}(\lam) \toprob \beta^\ast(\lam)$ for some deterministic $\beta^\ast(\lam)$. Notably, their guarantees target only the linear prediction setting where $\loss(z_i,\beta) = \phi(y_i, \inner{x_i}{\beta})$, and the dependence of the constants in their bound on $\lambda$ is not discussed.  

For each $\lambda$,  \citet[Thm. 1]{beirami2017optimal} provide an asymptotic, probabilistic analysis of the \ACV estimators \cref{eq:acvest} under an assumption that $\est(\lam) \toprob \beta^*(\lam) \in \interior(\Theta)$ where $\Theta$ is compact. Specifically, for each value of $\lam$, they guarantee that $\infnorm{\cvest{i}(\lam) - \acvest{i}(\lam)} = O_p({C_\lam/}{n^2})$ for a constant $C_\lam$ depending on $\lam$ in a way that is not discussed.
Our \cref{thm:acv-approximates-cv} is a consequence of the following similar bound on the estimators employed by cross-validation \cref{eq:cvest} and approximate cross-validation \cref{eq:acvest} (see \cref{thm:acv-approximates-cv-HO-full} in \cref{App:higher-order-assess}):
$
\twonorm{\acvest{i}(\lambda) - \cvest{i}(\lambda)} 
    \leq 
   \frac{\kappa_2}{c_m^2 n^2}\twonorm{\grad_\beta \loss(z_i, \est(\lambda))}^2.$
In comparison to both~\citep{rad2019scalable} and \citep{beirami2017optimal}, our results are non-asymptotic, deterministic, and uniform in $\lam$. They provide a more precise convergence rate than \cite[Sec. A.9]{rad2019scalable}, hold outside of the linear prediction setting, and require no compactness assumptions on the domain of $\beta$.

While \citep{rad2019scalable,beirami2017optimal} assume both a strongly convex objective and a bounded parameter space, our analysis shows that a separate boundedness assumption on the parameter space is unnecessary; strong convexity alone ensures that $\hat{\beta}(\lambda)$ is uniformly bounded in $\lambda$ even when the objective and its gradients are unbounded in $\beta$. 
Subsequently, our results apply both to strictly convex objectives (like unregularized logistic regression) when restricted to a compact set and to strongly convex objectives (like ridge-regularized logistic regression) without any domain restrictions. 

Moreover, the assumptions underlying \cref{thm:acv-approximates-cv} and the other results in this work all 
hold under standard, easily verified conditions on the objective:
\begin{proposition}[Sufficient conditions for assumptions] \label{assumptions-hold}
\leavevmode
\vspace{-2\baselineskip}
\begin{enumerate}[leftmargin=.5cm]
\item \cref{HessobjLipschitz} holds for $\Lambda \subseteq [0,\infty]$ 
with  $C_{\reg,3} = \Lip{}(\Hess \reg)$ and $C_{\loss,3} = \max_{i\in[n]} \Lip{}(\Hess_\beta \loss(\Pcv{i},\cdot))$.
\item If $\reg$ admits an error bound \cref{eq:error_bound} with increasing $\nu_\reg$, and $\loss$ is nonnegative, then 
\balignt\label{eq:cvest-bounded}
\est(\lam) \to \est(\infty) \qtext{as} \lam \to \infty.
\ealignt 
\item \cref{curvedobj} holds for $\Lambda \subseteq [0,\infty]$ if $\obj(\Pcv{i}, \cdot, \lam)$ is $c_\obj$-strongly convex  $\forall\lambda\in\Lambda$ and $i\in[n]$, $\reg$ is strongly convex on a neighborhood of $\est(\infty)$, and \cref{eq:cvest-bounded} holds.
\item 
 \cref{gradlossboundmixed} holds for $\Lambda \subseteq [0,\infty]$ and $(s,r)$ with
$\label{eq:gradlossboundmixed-sufficient}
\gradlossboundmixed{s}{r} \leq \frac{1}{n}\sum_{i=1}^n L_i^s(\twonorm{\grad_\beta\loss(z_i, \est(\infty))}
+
\frac{n-1}{n}\frac{L_i}{c_\obj} \twonorm{\grad_\beta\loss(\Pemp, \est(\infty))})^r
$
if $\obj(\Pcv{i}, \cdot, \lam)$ is $c_\obj$-strongly convex  
and $L_i \defeq  \Lip{}(\grad_\beta\loss(z_i,\cdot)) < \infty$ for each $\lambda\in\Lambda$ and $i\in[n]$.
\end{enumerate}
\end{proposition}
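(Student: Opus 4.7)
The plan is to handle the four parts in order, drawing on additivity of the objective for the first two and on strong-convexity arguments for the last two.

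For Part 1, I would decompose $\Hess_\beta \obj(\Pcv{i},\cdot,\lam) = \Hess_\beta \loss(\Pcv{i},\cdot) + \lam\Hess\reg$ and apply subadditivity and positive homogeneity of the Lipschitz seminorm to immediately get $\Lip{}(\Hess_\beta \obj(\Pcv{i},\cdot,\lam)) \leq \Lip{}(\Hess_\beta\loss(\Pcv{i},\cdot)) + \lam\Lip{}(\Hess\reg)$, then take the max over $i$. For Part 2, optimality of $\est(\lam)$ combined with $\loss\geq 0$ yields $\reg(\est(\lam))-\reg(\est(\infty))\leq \loss(\Pemp,\est(\infty))/\lam$, which tends to $0$ as $\lam\to\infty$; the error bound on $\reg$ together with $\nu_\reg$ being increasing with $\nu_\reg(0)=0$ then forces $\twonorm{\est(\lam)-\est(\infty)}\to 0$.

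For Part 3, the $\nu_\obj(r)=c_\obj r^2$ gradient-growth condition is immediate from $c_\obj$-strong convexity of $\obj(\Pcv{i},\cdot,\lam)$. For the Hessian lower bound, I fix a reference $\lam_0\in\Lambda$ so that strong convexity of $\obj(\Pcv{i},\cdot,\lam_0)$ gives $\Hess_\beta \loss(\Pcv{i},\est(\lam)) + \lam_0 \Hess\reg(\est(\lam)) \succeq c_\obj \I_d$, then invoke Part 2 to pick $\lam_\reg$ ensuring $\est(\lam)\in\nbrhood$ whenever $\lam\geq \lam_\reg$, so that local strong convexity of $\reg$ yields $\Hess\reg(\est(\lam)) \succeq c_\reg^{\mathrm{loc}} \I_d$. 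Decomposing $\Hess_\beta \obj(\Pcv{i},\est(\lam),\lam') = \Hess_\beta \loss(\Pcv{i},\est(\lam)) + \lam_0\Hess\reg(\est(\lam)) + (\lam'-\lam_0)\Hess\reg(\est(\lam))$ yields, for $\lam\geq \lam_\reg$, the bound $\Hess_\beta \obj(\Pcv{i},\est(\lam),\lam') \succeq c_\obj\I_d + (\lam'-\lam_0)c_\reg^{\mathrm{loc}}\I_d$; choosing $c_\loss$ and $c_\reg$ conservatively absorbs the $\lam_0 c_\reg^{\mathrm{loc}}$ term to obtain $c_\loss + \lam' c_\reg$, and the $\lam<\lam_\reg$ case is immediate from $\Hess_\beta\obj(\Pcv{i},\est(\lam),\lam')\succeq c_\obj \I_d$.

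For Part 4, the crucial step is that averaging strongly convex functions preserves the modulus: the CV average $g_\lam(\beta) \defeq \frac{1}{n}\sum_{i=1}^n \obj(\Pcv{i},\beta,\lam) = \frac{n-1}{n}\loss(\Pemp,\beta) + \lam\reg(\beta)$ is $c_\obj$-strongly convex, and a rescaling shows $\argmin g_\lam = \est(\tfrac{n\lam}{n-1})$. Applying the gradient form of strong convexity to $g_\lam$ at $\est(\infty)$, together with $\grad\reg(\est(\infty))=0$, gives $\twonorm{\est(\tfrac{n\lam}{n-1})-\est(\infty)} \leq \frac{n-1}{nc_\obj}\twonorm{\grad_\beta\loss(\Pemp,\est(\infty))}$. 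Reindexing in $\lam$ yields the same displacement bound uniformly for $\lam\in\Lambda$; Lipschitzness of $\grad_\beta\loss(z_i,\cdot)$ with constant $L_i$ then transfers the bound via $\twonorm{\grad_\beta\loss(z_i,\est(\lam))} \leq \twonorm{\grad_\beta\loss(z_i,\est(\infty))} + L_i\twonorm{\est(\lam)-\est(\infty)}$, and substituting, raising to the $r$-th power, weighting by $L_i^s$, averaging over $i$, and taking $\sup_{\lam\in\Lambda}$ completes the proof.

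The main obstacle is the Part 4 change-of-variables: one must recognize that $c_\obj$-strong convexity across $\Lambda$ produces a strongly convex CV average whose minimizer is a rescaled evaluation of $\est$, and exploit $\grad\reg(\est(\infty))=0$ to collapse the gradient at $\est(\infty)$ to a multiple of $\grad_\beta\loss(\Pemp,\est(\infty))$. Part 3's constant selection is the other delicate step, and it relies essentially on Part 2 to guarantee $\est(\lam)\in\nbrhood$ for all sufficiently large $\lam$.
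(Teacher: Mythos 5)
Your proposal follows the paper's proof essentially step for step: Part~1 by subadditivity and positive homogeneity of the Lipschitz seminorm; Part~2 by the optimality inequality $\reg(\est(\lam))-\reg(\est(\infty)) \le \tfrac{1}{\lam}\loss(\Pemp,\est(\infty))$ combined with the error bound and monotonicity of $\nu_\reg$ (the paper packages the identical computation as an application of its Optimizer Comparison \cref{optimizer-comparison} with $\varphi_2=\reg$ and $\varphi_1=\tfrac{1}{\lam}\obj(\Pemp,\cdot,\lam)$); Part~3 by a neighborhood of $\est(\infty)$ on which $\Hess\reg\succeq c_\reg\I_d$ together with a threshold $\lam_\reg$ supplied by Part~2; and Part~4 by averaging the leave-one-out objectives into a $c_\obj$-strongly convex surrogate of the full-data objective, a gradient-growth bound at $\est(\infty)$ (your use of $\grad\reg(\est(\infty))=0$ plays exactly the role of the paper's comparison against $\varphi=\lam\reg$, whose minimizer is $\est(\infty)$), and the Lipschitz transfer of $\twonorm{\grad_\beta\loss(z_i,\est(\lam))}$. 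Your explicit bookkeeping that $\tfrac{1}{n}\sum_{i=1}^n\obj(\Pcv{i},\cdot,\lam)$ equals $\tfrac{n-1}{n}\loss(\Pemp,\cdot)+\lam\reg$ and is minimized at $\est(\tfrac{n\lam}{n-1})$, followed by reindexing, is the same $\tfrac{n}{n-1}$ maneuver the paper performs via Jensen's inequality on $\mineig$, and is if anything stated more carefully.

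The one step that would fail as written is in Part~3. For $\lam'<\lam_0$, your displayed inequality $\Hess_\beta\obj(\Pcv{i},\est(\lam),\lam')\succeq c_\obj\I_d+(\lam'-\lam_0)\,c_\reg^{\mathrm{loc}}\,\I_d$ is false in general: multiplying $\Hess\reg(\est(\lam))\succeq c_\reg^{\mathrm{loc}}\I_d$ by the negative scalar $\lam'-\lam_0$ reverses the semidefinite order, and you have assumed no upper bound on $\Hess\reg$ to substitute. Relatedly, the ``conservative absorption'' with $c_\loss=c_\obj-\lam_0 c_\reg^{\mathrm{loc}}$ can yield $c_\loss\le 0$. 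Both issues are repairable with tools already in your write-up: for $\lam'\le\lam_0$ fall back on the direct bound $\Hess_\beta\obj(\Pcv{i},\est(\lam),\lam')\succeq c_\obj\I_d$ from strong convexity of $\obj(\Pcv{i},\cdot,\lam')$, and take, say, $c_\loss=c_\obj/2$ and $c_\reg=\min(c_\reg^{\mathrm{loc}},\,c_\obj/(2\lam_0))$, which gives $c_\loss+\lam'c_\reg\le c_\obj$ for $\lam'\in[0,\lam_0]$ and $c_\loss+\lam'c_\reg\le c_\obj+(\lam'-\lam_0)c_\reg^{\mathrm{loc}}$ for $\lam'\ge\lam_0$. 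The paper sidesteps the sign issue entirely by anchoring at $\lam_0=0$, reading off $\Hess_\beta\loss(\Pcv{i},\est(\lam))\succeq c_\obj\I_d$ from the strong convexity of $\obj(\Pcv{i},\cdot,0)=\loss(\Pcv{i},\cdot)$; your reference-point variant is a legitimate alternative (and has the minor advantage of not invoking the hypothesis at $\lam=0$), but only with the above patch.
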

While this result, proved in \cref{app:assumptions-hold}, is a deterministic statement, it has an immediate probabilistic corollary:
if the datapoints $z_1, \dots, z_n$ are i.i.d.\ draws from a distribution $\Ppop$, then, under the conditions of \cref{assumptions-hold}, 
\balignt
C_{\loss,3} 
	\leq \, &3\Esub{Z\sim\Ppop}[ \Lip{}(\Hess_\beta\loss(Z,\cdot))^{4}]^{1/4} \qtext{and}\\
\gradlossboundmixed{s}{r} 
	\leq\, &2\Esub{Z\sim\Ppop}[ \Lip{}(\grad_\beta\loss(Z,\cdot))^{2s} (\twonorm{\grad_\beta\loss(Z, \est(\infty))} \\
		&+ \frac{1}{c_m}\Lip{}(\grad_\beta\loss(Z,\cdot))\twonorm{\grad_\beta\loss(Z, \est(\infty))})^{2r}]^{1/2}
\ealignt 
with high probability by Markov's inequality.\footnote{Note that $\est(\infty) = \argmin_\beta \reg(\beta)$ is data-independent.}

\subsection{Infinitesimal Jackknife}
\citet{giordano2018return} (see also \citep{debruyne2008model,pmlr-v32-liua14}) recently studied a second approximation to leave-one-out cross-validation,
 \balignt\label{eq:acvij}
\ACVIJ(\lambda) \defeq \frac{1}{n}\sum_{i=1}^n \loss(z_i, \acvijest{i}(\lambda)),
\ealignt
based on the \emph{infinitesimal jackknife} (IJ)~\citep{jaeckel1972infinitesimal,efron1982jackknife} 
estimate
 \balignt\label{eq:acvijest}
 \acvijest{i}(\lambda)\hspace{-.075cm} \defeq \hspace{-.075cm}\est(\lambda)\hspace{-.075cm} +\hspace{-.075cm}  \Hess_\beta \obj(\Pemp, \est(\lambda), \lambda)^{-1} \frac{\grad_\beta \loss(z_i, \est(\lambda))}{n},
\ealignt
A potential computational advantage of $\ACVIJ$ over \ACV is that $\ACVIJ$ requires only a single Hessian inversion, while \ACV performs $n$ Hessian inversions.\footnote{Note however that for many losses the Hessians of \cref{eq:acvest,eq:acvijest} differ only by a rank-one update so that all $n$ Hessians can be inverted in time comparable to inverting $1$.}

The following theorem, proved in \cref{app:acv-approximates-acvij}, shows that, under conditions similar to those of \cref{thm:acv-approximates-cv}, \ACV and $\ACVIJ$ are nearly the same. %
\begin{theorem}[$\ACVIJ$-\ACV assessment error]\label{thm:acv-approximates-acvij} 
If \cref{gradlossboundmixed,curvedobj} hold for some $\Lambda \subseteq [0,\infty]$ and each $(s,r) \in \{(1,2), (2,2), (3,2)\}$,
then, for each $\lambda \in \Lambda$,
\balignt
|\ACV^{\text{\em IJ}}(\lambda) - \ACV(\lambda)| &\leq \frac{\gradlossboundmixed{1}{2}}{c_{\lam,\lam}^2n^2} +\frac{\gradlossboundmixed{2}{2}}{c_{\lam,\lam}^3n^3} +  \frac{\gradlossboundmixed{3}{2}}{2c_{\lam,\lam}^4n^4}.  %
\ealignt 
\end{theorem}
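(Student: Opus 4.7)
The plan is to leverage the fact that $\acvest{i}(\lambda)$ and $\acvijest{i}(\lambda)$ differ only through the Hessian used (leave-one-out vs.\ full-data), and to expand the resulting difference of inverses via a finite Neumann-type series. Subtracting the two definitions immediately gives $\acvest{i}(\lambda) - \acvijest{i}(\lambda) = (H_i^{-1} - H^{-1})\grad_\beta\loss(z_i, \est(\lambda))/n$, where $H \defeq \Hess_\beta\obj(\Pemp, \est(\lambda), \lambda)$ and $H_i \defeq \Hess_\beta\obj(\Pcv{i}, \est(\lambda), \lambda)$. Since $\Pemp = \Pcv{i} + \frac{1}{n}\delta_{z_i}$, the two Hessians are related exactly by $H = H_i + \frac{1}{n}L_i$ with $L_i \defeq \Hess_\beta\loss(z_i, \est(\lambda))$, or equivalently $H_i = H(\mathrm{I} - \frac{1}{n}H^{-1}L_i)$.

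Next, I would iterate the resolvent identity $H_i^{-1} = H^{-1} + \frac{1}{n}H^{-1}L_i H_i^{-1}$ three times to obtain the finite expansion $H_i^{-1} - H^{-1} = \frac{1}{n}H^{-1}L_iH^{-1} + \frac{1}{n^2}(H^{-1}L_i)^2 H^{-1} + \frac{1}{n^3}(H^{-1}L_i)^3 H_i^{-1}$. Substituting into the formula above decomposes $\acvest{i}(\lambda) - \acvijest{i}(\lambda)$ into three explicit terms of orders $n^{-2}$, $n^{-3}$, and $n^{-4}$, carrying one, two, and three copies of $L_i$ respectively. \cref{curvedobj} supplies $\opnorm{H_i^{-1}} \leq 1/c_{\lam,\lam}$, and the same bound passes to $H = H_i + \frac{1}{n}L_i$ whenever the per-example loss Hessian $L_i$ is positive semidefinite; in addition, $\opnorm{L_i} \leq \Lip{}(\grad_\beta\loss(z_i,\cdot))$ by the definition of the Lipschitz constant.

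To convert the optimizer-level expansion into a loss-level bound, I would apply the fundamental theorem of calculus $\loss(z_i, \acvijest{i}) - \loss(z_i, \acvest{i}) = \int_0^1 \grad_\beta\loss(z_i, \acvest{i} + t(\acvijest{i}-\acvest{i}))^\top(\acvijest{i}-\acvest{i})\,dt$ and approximate the gradient along the integration path by its value at $\est(\lambda)$, picking up a Lipschitz correction of order $\Lip{}(\grad_\beta\loss(z_i,\cdot))\cdot \max(\twonorm{\acvest{i}-\est(\lambda)}, \twonorm{\acvijest{i}-\est(\lambda)}) \leq \Lip{}(\grad_\beta\loss(z_i,\cdot))\twonorm{\grad_\beta\loss(z_i,\est(\lambda))}/(c_{\lam,\lam}n)$. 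Combined with the Neumann expansion, each of the three terms contributes a product of the form $\Lip{}(\grad_\beta\loss(z_i,\cdot))^s\twonorm{\grad_\beta\loss(z_i,\est(\lambda))}^2/(c_{\lam,\lam}^{s+1} n^{s+1})$ for $s = 1, 2, 3$. Averaging over $i$ and invoking \cref{gradlossboundmixed} at $(s,r) \in \{(1,2),(2,2),(3,2)\}$ then produces exactly the three summands in the claimed bound.

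The main obstacle I expect is the careful bookkeeping in the third step: I must arrange the Lipschitz corrections coming from replacing the gradient along the integration path by its value at $\est(\lambda)$ so that they absorb cleanly into the higher-order Neumann terms, and in particular I need to recover the precise constants in the statement — especially the $\frac{1}{2}$ factor in the final summand, which I expect originates from an integral-form Taylor remainder of the shape $\int_0^1 (1-t)\,dt = \frac{1}{2}$.
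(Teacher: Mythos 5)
Your estimator-level setup is sound: the exact relation $H = H_i + \frac{1}{n}L_i$ together with a \emph{single} application of the resolvent identity, $H_i^{-1} - H^{-1} = \frac{1}{n}H^{-1}L_i H_i^{-1}$, already yields $\twonorm{\acvest{i}(\lam) - \acvijest{i}(\lam)} \leq \opnorm{L_i}\twonorm{g_i}/(c_{\lam,\lam}^2 n^2)$ for $g_i \defeq \grad_\beta\loss(z_i,\est(\lam))$, which is exactly the estimator bound \cref{eq:acvest-acvestij-bound} that the paper obtains via its proximal Newton comparison (\cref{lem:proximal-comparison}). The genuine gap is in your attribution of the theorem's three summands: they are \emph{not} the three orders of a Neumann expansion of $H_i^{-1}-H^{-1}$. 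If you follow your plan literally --- expanding $g_i^\top(\acvijest{i}(\lam)-\acvest{i}(\lam))$ through all three Neumann terms and then adding the Lipschitz correction for freezing the gradient at $\est(\lam)$ --- the bookkeeping does not close. The leading correction has size $\Lip{}(\grad_\beta\loss(z_i,\cdot))\cdot\frac{\twonorm{g_i}}{c_{\lam,\lam}n}\cdot\twonorm{\acvijest{i}(\lam)-\acvest{i}(\lam)}$, which is of order $\Lip{}(\grad_\beta\loss(z_i,\cdot))^2\twonorm{g_i}^2/(c_{\lam,\lam}^3 n^3)$ --- the same order and magnitude as your second Neumann term, so the two \emph{add} and you get coefficient $2$ rather than $1$ on $\gradlossboundmixed{2}{2}/(c_{\lam,\lam}^3n^3)$; analogous double counting inflates the $n^{-4}$ coefficient well beyond the stated $\tfrac{1}{2}$, and crossing the correction with the second and third Neumann terms leaves an $n^{-5}$ residue requiring a $(4,2)$ moment bound that is not among the assumed pairs $\{(1,2),(2,2),(3,2)\}$. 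Since you are upper-bounding an absolute value, no sign cancellation lets these extra same-order contributions ``absorb cleanly,'' as you hoped.

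The correct structure, which is the paper's, uses the one-step resolvent bound once and generates the higher-order terms from a \emph{second-order expansion of the loss}, not of the Hessian inverse. With $\delta_i \defeq \acvijest{i}(\lam)-\acvest{i}(\lam)$ satisfying $\twonorm{\delta_i}\leq \opnorm{L_i}\twonorm{g_i}/(c_{\lam,\lam}^2n^2)$, Taylor's theorem with Lagrange remainder about $\acvest{i}(\lam)$, plus a mean-value expansion of $\grad_\beta\loss(z_i,\acvest{i}(\lam))$ about $\est(\lam)$, gives three pieces: $\twonorm{g_i}\twonorm{\delta_i}$; a cross term at most $\Lip{}(\grad_\beta\loss(z_i,\cdot))\twonorm{\acvest{i}(\lam)-\est(\lam)}\twonorm{\delta_i}$ with $\twonorm{\acvest{i}(\lam)-\est(\lam)}\leq \twonorm{g_i}/(c_{\lam,\lam}n)$; and a remainder at most $\tfrac{1}{2}\Lip{}(\grad_\beta\loss(z_i,\cdot))\twonorm{\delta_i}^2$. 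Averaging over $i$ and invoking \cref{gradlossboundmixed} at $(1,2)$, $(2,2)$, $(3,2)$ reproduces the three summands with the exact stated constants; your instinct that the $\tfrac{1}{2}$ comes from the Taylor weight $\int_0^1(1-t)\,dt$ is correct, but it attaches to the quadratic-in-$\twonorm{\delta_i}$ remainder, not to a third-order Neumann term. One further repair: you justify $\opnorm{H^{-1}}\leq 1/c_{\lam,\lam}$ only when $L_i\psdge 0$, which is not assumed; instead note that $H$ is (up to harmless additional regularizer curvature) a $\tfrac{n}{n-1}$-scaled average of the leave-one-out Hessians, each $\psdge c_{\lam,\lam}\I_d$ by \cref{curvedobj}, so concavity of the minimum eigenvalue and Jensen's inequality give $\mineig(H)\geq \tfrac{n}{n-1}c_{\lam,\lam}\geq c_{\lam,\lam}$, as in the paper's proof of \cref{thm:acv-approximates-acvij-full}.
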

\cref{thm:acv-approximates-acvij} ensures that all of the assessment and selection guarantees for $\ACV$ in this work also extend to $\ACVIJ$. 
In particular, \cref{thm:acv-approximates-cv,thm:acv-approximates-acvij} together imply $\sup_{\lambda\in\Lambda} |\ACVIJ(\lambda)-\CV(\lambda)| = O(1/n^2)$.
A similar $\ACVIJ$-\CV comparison could be derived from the general infinitesimal jackknife analysis of \citet[Cor. 1]{giordano2018return}, which gives
$\twonorm{\acvijest{i}(\lam) - \cvest{i}(\lam)} \leq C_
\lam/n^2$.
However, the constant $C_\lam$ in \citep[Cor. 1]{giordano2018return} is unbounded in $\lam$ for non-strongly convex regularizers.
Our results only demand curvature from $\reg$ in the neighborhood of its minimizer and thereby establish $\sup_{\lambda\in\Lambda} |\ACVIJ(\lambda)-\CV(\lambda)| = O(1/n^2)$ even for robust, non-strongly convex regularizers like the pseudo-Huber penalty~\cite[Sec. A6.8]{HartleyZi2004},
$
\pi_\delta(\beta) = \sum_{j=1}^d \delta^2 (\sqrt{1+(\beta_j/\delta)^2} - 1).
$
In addition, our analyses avoid the compact domain assumption of \citep[Cor. 1]{giordano2018return}.

\subsection{Higher-order Approximations to CV}
\label{sec:higher-order}

The optimization perspective adopted in this paper naturally points towards generalizations of the estimators \cref{eq:acvest} and \cref{eq:acvijest}. In particular, stronger assessment guarantees can be provided for regularized higher-order Taylor approximations of the objective function. For example, for the regularized $p$-th order approximation,
\balignt
\ACV_p(\lambda) &\defeq \frac{1}{n} \sum_{i=1}^n \loss(z_i, \rhoacvest{i}(\lambda)) \qtext{with}\\
\rhoacvest{i}(\lambda) &\defeq \argmin_\beta \widehat{\obj}_p(\Pcv{i},\beta, \lambda;\est(\lambda))\\ &+ \frac{\Lip{}(\grad_\beta^p \obj(\Pcv{i},\cdot, \lambda)) }{p+1}\|\beta - \est(\lambda)\|_2^{p+1},
\ealignt
where $\hat{m}_p$ is a $p$-th order Taylor expansion of the objective defined by $\hat{f}_{p}(\beta;\est(\lambda))\defeq \sum_{k=0}^{p} \frac{1}{k!} \nabla^k f(\est(\lambda))[\beta - \est(\lambda)]^{\otimes k} $, %
we obtain the following improved assessment guarantee, proved in \cref{App:higher-order-assess}:
\begin{theorem}[$\ACV_p$-\CV assessment error]\label{thm:acv-approximates-cv-ho} If \cref{hocurvedobj,,gradlossboundmixed,,LipschitzpObj} hold for some $\Lambda \subseteq[0,\infty]$ and each $(s,r) \in \{ (0,p+1), (1,p+1), (1,2p)\}$, then, for $\kappa_p$ defined in \cref{thm:acv-approximates-cv} and each $\lam \in \Lambda$, 
\balignt
|\ACV_p(\lambda) \hspace{-.075cm}-\hspace{-.075cm} \CV(\lambda)| 
\hspace{-.075cm}\leq \hspace{-.075cm}\frac{2\kappa_p}{n^p} (\frac{\gradlossboundmixed{0}{p+1}}{c_{\lam,\lam}^p} \hspace{-.075cm} + \hspace{-.075cm}\frac{\gradlossboundmixed{1}{p+1}}{nc_{\lam,\lam}^{p+1}}\hspace{-.05cm} +\hspace{-.075cm} \frac{\kappa_{p}\gradlossboundmixed{1}{2p}}{n^{p}c_{\lam,\lam}^{2p}}).
\ealignt
\end{theorem}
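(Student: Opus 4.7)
The plan is to generalize the strategy from \cref{thm:acv-approximates-cv} (which handles $p=2$) to arbitrary $p$-th order regularized Taylor surrogates through three linked estimates: a bound on the leave-one-out perturbation $\|\cvest{i}(\lambda) - \est(\lambda)\|$, an optimizer comparison controlling $\|\rhoacvest{i}(\lambda) - \cvest{i}(\lambda)\|$, and a Taylor expansion of $\loss(z_i,\cdot)$ that transfers these parameter-space bounds to the loss scale.

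The first estimate combines the first-order optimality $0 \in \partial_\beta \obj(\Pemp, \est(\lambda), \lambda)$ with the identity $\obj(\Pemp, \beta, \lambda) - \obj(\Pcv{i}, \beta, \lambda) = \loss(z_i, \beta)/n$ to conclude $-\nabla_\beta \loss(z_i, \est(\lambda))/n \in \partial_\beta \obj(\Pcv{i}, \est(\lambda), \lambda)$. Pairing this with the $\nu(r) = c_{\lam,\lam} r^2$ gradient growth of $\obj(\Pcv{i}, \cdot, \lambda)$ at $\cvest{i}(\lambda)$ (supplied by \cref{hocurvedobj}) and Cauchy--Schwarz yields
$$\|\cvest{i}(\lambda) - \est(\lambda)\| \leq \frac{\|\nabla_\beta \loss(z_i,\est(\lambda))\|}{n\, c_{\lam,\lam}}.$$

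For the second estimate, I would apply the gradient-growth form of \cref{optimizer-comparison} to $\varphi_1(\beta) = \obj(\Pcv{i}, \beta, \lambda)$ and $\varphi_2(\beta) = \widehat{\obj}_p(\Pcv{i}, \beta, \lambda;\est(\lambda)) + \frac{L_i}{p+1}\|\beta - \est(\lambda)\|^{p+1}$ with $L_i \defeq \Lip{}(\nabla_\beta^p \obj(\Pcv{i}, \cdot, \lambda))$, so that $\varphi_1$ is minimized at $\cvest{i}(\lambda)$ and $\varphi_2$ at $\rhoacvest{i}(\lambda)$. Because the $(p-1)$-th order Taylor expansion of $\nabla_\beta \obj$ equals $\nabla_\beta \widehat{\obj}_p$, the Lipschitz-$p$-th-derivative remainder gives $\|\nabla_\beta \obj(\Pcv{i}, \beta, \lambda) - \nabla_\beta \widehat{\obj}_p(\Pcv{i}, \beta, \lambda;\est(\lambda))\| \leq \frac{L_i}{p!}\|\beta - \est(\lambda)\|^p$, and the regularizer contributes an additional $L_i \|\beta - \est(\lambda)\|^p$ in gradient norm, so $\|\nabla(\varphi_2 - \varphi_1)(\cvest{i}(\lambda))\| \leq L_i(1 + 1/p!)\|\cvest{i}(\lambda) - \est(\lambda)\|^p$. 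Combining the gradient growth of $\varphi_2$ (supplied by \cref{hocurvedobj}) with Cauchy--Schwarz, the first estimate, and the inequality $L_i/(p!\,c_{\lam,\lam}) \leq \kappa_p$ (which follows from the definition of $\kappa_p$ together with \cref{LipschitzpObj}) produces
$$\|\rhoacvest{i}(\lambda) - \cvest{i}(\lambda)\| \leq \frac{2\kappa_p \|\nabla_\beta \loss(z_i,\est(\lambda))\|^p}{n^p\, c_{\lam,\lam}^p}.$$

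The third estimate uses a second-order Taylor expansion of $\loss(z_i, \cdot)$ around $\cvest{i}(\lambda)$: $|\loss(z_i, \rhoacvest{i}) - \loss(z_i, \cvest{i})| \leq \|\nabla \loss(z_i, \cvest{i})\|\|\rhoacvest{i} - \cvest{i}\| + \tfrac{1}{2} \Lip{}(\nabla \loss(z_i, \cdot))\|\rhoacvest{i} - \cvest{i}\|^2$, after which the split $\|\nabla \loss(z_i, \cvest{i})\| \leq \|\nabla \loss(z_i, \est(\lambda))\| + \Lip{}(\nabla \loss(z_i, \cdot))\|\cvest{i} - \est(\lambda)\|$ and substitution of the previous two estimates produces three contributions whose orders are $\|\nabla \loss\|^{p+1}/n^p$, $\Lip{}(\nabla \loss)\|\nabla \loss\|^{p+1}/n^{p+1}$, and $\Lip{}(\nabla \loss)\|\nabla \loss\|^{2p}/n^{2p}$ (with appropriate powers of $c_{\lam,\lam}$ and $\kappa_p$). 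Averaging over $i \in [n]$ and invoking \cref{gradlossboundmixed} at $(s,r) \in \{(0, p+1), (1, p+1), (1, 2p)\}$ produces exactly the three summands of the stated bound. The principal obstacle is establishing enough global curvature of $\varphi_2$ for \cref{optimizer-comparison} to apply: for $p \geq 3$ the bare polynomial $\widehat{\obj}_p$ need not be convex, and the $L_i \|\beta - \est(\lambda)\|^{p+1}/(p+1)$ regularizer is tuned precisely to enforce $\varphi_2 \geq \varphi_1$; \cref{hocurvedobj} should formalize the gradient growth needed, but if it only supplies local curvature one can fall back to the error-bound form of \cref{optimizer-comparison} combined with $\varphi_2 \geq \varphi_1$ at a slightly weaker constant.
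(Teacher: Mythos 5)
Your architecture is exactly the paper's: the paper proves a general-growth statement (\cref{thm:acv-approximates-cv-HO-full}) whose three ingredients, specialized at $q=2$, are precisely your three estimates --- control of $\twonorm{\cvest{i}(\lambda)-\est(\lambda)}$ via \cref{optimizer-comparison} (\cref{lem:cvest-est-comparison}), a regularized Taylor comparison (\cref{taylor-comparison-1}) for $\twonorm{\rhoacvest{i}(\lambda)-\cvest{i}(\lambda)}$, and a second-order expansion of $\loss(z_i,\cdot)$ averaged with \cref{gradlossboundmixed} at $(s,r)\in\{(0,p+1),(1,p+1),(1,2p)\}$ --- and your three resulting terms assemble into the stated bound correctly. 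The genuine gap is the one you flag but do not close: \cref{hocurvedobj} supplies $\nu(r)=c_{\lam,\lam}r^2$ gradient growth only for $\obj(\Pcv{i},\cdot,\lambda)$ itself, not for your surrogate $\varphi_2$, and for $p\geq 3$ the bare Taylor polynomial need not even be convex. The paper closes this with \cref{lem:taylor_strong_convex}, a Nesterov-style Hessian comparison, $\Hess\varphi(x)\psdle \Hess\widehat{\varphi}_p(x;w)+\frac{\Lip{}(\grad^p\varphi)}{(p-1)!}\twonorm{x-w}^{p-1}\I_d \psdle \Hess\Phi(x)$, showing the \emph{regularized} Taylor surrogate inherits the full strong convexity of $\varphi$, hence global $c_{\lam,\lam}r^2$ gradient growth (\cref{rhocurvedtaylorfull}). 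Your proposed fallback does not rescue the theorem: the error-bound form of \cref{optimizer-comparison} with $\varphi_2\geq\varphi_1$ controls $\nu_{\varphi_1}(\twonorm{\rhoacvest{i}(\lambda)-\cvest{i}(\lambda)})$ by the Taylor gap at $\cvest{i}(\lambda)$, which is $O(\twonorm{\cvest{i}(\lambda)-\est(\lambda)}^{p+1})$, yielding $\twonorm{\rhoacvest{i}(\lambda)-\cvest{i}(\lambda)}=O(n^{-(p+1)/2})$ --- a strictly weaker \emph{rate} than the needed $O(n^{-p})$ for every $p\geq 2$, not merely a weaker constant, so the final $n^{-p}$ bound would not follow.

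A second, smaller defect is the constant in your second estimate. Under the main-text normalization $\frac{L_i}{p+1}\twonorm{\beta-\est(\lambda)}^{p+1}$ that you adopt, the regularizer's gradient has norm $L_i\twonorm{\cvest{i}(\lambda)-\est(\lambda)}^{p}$ while the Taylor remainder contributes $\frac{L_i}{p!}\twonorm{\cvest{i}(\lambda)-\est(\lambda)}^{p}$; their sum $(1+\frac{1}{p!})L_i\twonorm{\cdot}^p$ exceeds $\frac{2L_i}{p!}\twonorm{\cdot}^p$ for every $p\geq 2$, so your claimed factor $2\kappa_p$ does not follow from your own computation (it would give $(p!+1)\kappa_p$, e.g., $3\kappa_2$ at $p=2$). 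The paper's \cref{taylor-comparison-1} uses the normalization $\frac{\Lip{}(\grad^p\varphi)}{(p+1)!}\twonorm{\cdot}^{p+1}$, under which both gradient contributions scale as $\frac{L_i}{p!}\twonorm{\cdot}^p$ and the factor $2$ is exact; the paper itself wavers between the two normalizations (main text versus appendix lemma), but a complete proof of the stated constant must commit to the factorial scaling.
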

This result relies on the following curvature and smoothness assumptions, which replace \cref{curvedobj,HessobjLipschitz}.

\newcounter{tmp}
\setcounter{assumption}{\thetmp}
\begingroup
\setcounter{tmp}{\value{assumption}-2}
\renewcommand\theassumption{\arabic{assumption}b}
\begin{assumption}[Curvature of objective] 
\label{hocurvedobj} 
For some $c_\loss, c_\reg > 0$ and $\lam_\reg <\infty$, all $i \in [n]$, and all $\lam$ in a given $\Lambda \subseteq [0,\infty]$,
$\obj(\Pcv{i},\cdot, \lam)$ has $\nu_\obj(r) = c_{\lam,\lam} r^2$ gradient growth for $c_{\lam,\lam} \defeq c_\loss + \lambda c_\reg\indic{ \lam \geq \lam_\pi}$.
\end{assumption}
\setcounter{assumption}{\value{assumption}+1} 
\renewcommand\theassumption{\arabic{assumption}b}
\begin{assumption}[Lipschitz $p$-th derivative]\label{LipschitzpObj}
For some $C_{\loss, p+1}, C_{\pi,p+1} < \infty$, a given $\Lambda \subseteq [0,\infty]$, and $\forall i \in [n]$%
\balignt
\Lip{}(\grad_\beta^p \obj(\Pcv{i},\cdot, \lambda)) 
\leq C_{\loss,p+1} + \lam C_{\reg,p+1},\, \, \forall \lambda \in \Lambda.
\ealignt 
\end{assumption}
\endgroup
\setcounter{assumption}{\thetmp +5}
Unregularized higher-order IJ approximations to CV were considered in \citep{debruyne2008model,pmlr-v32-liua14} and recently analyzed by~\citep{giordano2019higher}.
A result similar to \cref{thm:acv-approximates-cv-ho} could be derived from~\cite[Thm.~1]{giordano2019higher}, which controls the approximation error of an \emph{unregularized} IJ version of $\rhoacvest{i}(\lambda)$,
but that work %
additionally assumes bounded lower-order derivatives. 
More generally, the framework in \cref{App:higher-order-assess} provides assessment results for objectives that satisfy weaker curvature conditions than \cref{curvedobj}.

\subsection{Model Selection}
Often, \CV is used not only to assess a model but also to select a high-quality model for subsequent use. The technique requires training a model with many different values of $\lam$ and selecting the one with the lowest \CV error.
If \ACV is to be used in its stead, we would like to guarantee that the model selected by \ACV has test error comparable to that selected by \CV.  

When \CV and \ACV are uniformly close (as in \cref{thm:acv-approximates-cv}), we know that any minimizer of \CV nearly minimizes \ACV as well, so it suffices to show that all near minimizers of \ACV have comparable test error. 
However, this task is made difficult by the potential multimodality of \ACV and \CV.  As we see in \cref{fig:multimodal_ridge}, even for a benign objective function like the ridge regression objective with quadratic error loss and quadratic penalty, \ACV and \CV can have multiple minimizers. 
Interestingly, %
our next result, proved in \cref{App:StrongCurvACV}, shows that any near minimizers of \ACV must produce estimators that are 
$O(1/\sqrt{n})$ close. %
\begin{figure}[!htbp]
\begin{subfigure}[t]{.23\textwidth}
   \includegraphics[width=\textwidth]{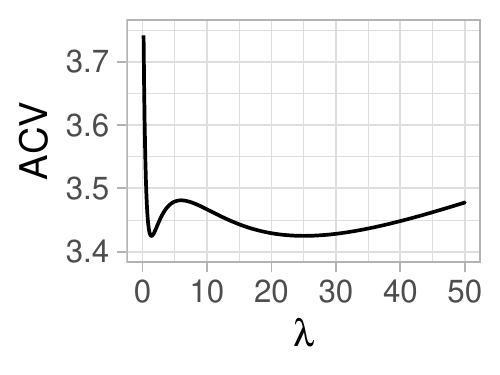}
   \caption{Ridge $\ACV$ \cref{eq:acvest}}%
    \label{fig:multimodal_ridge}
 \end{subfigure}
\begin{subfigure}[t]{.23\textwidth}
        \includegraphics[width=\textwidth]{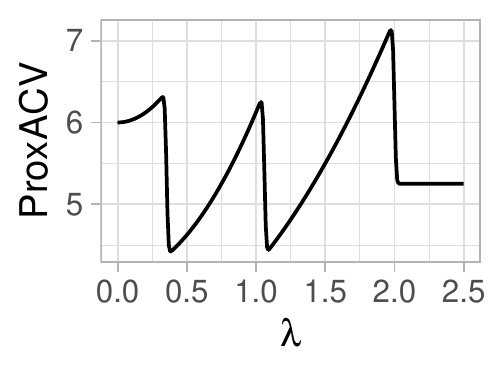}
   \caption{Lasso $\ProxACV$ \cref{eq:proxacvest}}%
   \label{fig:multimodal_lasso}
   \end{subfigure}%
  \caption{\textbf{Multimodality of $\ACV$ and \ProxACV}: (a) $\ACV$ for 
$\loss(z,\beta) = (\beta-z)^\top A (\beta-z), \reg(\beta) = \twonorm{\beta}^2,$
  with $A=\diag(1,40)$, sample mean $\bar{z} = \frac{1}{\sqrt{n}}(1.3893, 1.5)$, and sample covariance $\I_2$.
  (b) \ProxACV for 
$\loss(z, \beta) = \half\twonorm{\beta-z}^2, \reg(\beta)=\onenorm{\beta},$
 with sample mean $\bar{z} = \frac{1}{\sqrt{n}}(\sqrt{1/8}, \sqrt{9/8}, 2 )$ and sample covariance $\I_3$.}
\end{figure}

\begin{theorem}[\ACV-\CV selection error]\label{Thm:WeakCurvACV}
 If \cref{curvedobj,gradlossboundmixed,HessobjLipschitz} hold for  $\Lambda \subseteq [0,\infty]$ and each $(s,r) \in \{(0,3),(1,2),(1,3),(1,4)\}$,
  then $\forall\lambda', \lambda\in\Lambda$ with $\lambda'< \lambda$,
 \balignt
&\twonorm{\est(\lambda) - \est(\lambda')}^2
	\leq  \frac{2}{c_\obj} (\frac{4\gradlossboundmixed{0}{2}}{c_\loss n} + \Delta \ACV +\frac{\gradlossboundmixed{1}{2}}{n^2c_\obj^2} ) 
	\text{ and } \\
\label{eq:weak-risk}
&\twonorm{\est(\lambda_\ACV) - \est(\lambda_\CV)}^2 
	\leq 
	\frac{8}{c_\obj} (\frac{\gradlossboundmixed{0}{2}}{c_\loss n}
	\hspace{-.1cm}+\hspace{-.1cm}  
	\frac{A''c_\obj^2+\gradlossboundmixed{1}{2}}{4c_\obj^2n^2})
 \ealignt
 for 
 $\Delta \ACV \hspace{-.075cm}\defeq\hspace{-.075cm} \ACV(\lambda)\hspace{-.05cm} -\hspace{-.05cm} \ACV(\lambda')$,
$\lambda_\ACV \in \argmin_{\lam \in \Lam} \ACV(\lam)$,  $\lam_\CV \in \argmin_{\lam \in \Lam} \CV(\lam)$, $A'' \defeq 2({\kappa_2}{}\frac{ \gradlossboundmixed{0}{3}}{c_\obj^2}
	+\frac{\kappa_2}{n} \frac{\gradlossboundmixed{1}{3}}{c_\obj^3} 
	+\frac{\kappa_2^2}{n^2} \frac{\gradlossboundmixed{1}{4}}{2c_\obj^4})$, and $\kappa_p$ defined in \cref{thm:acv-approximates-cv}.
\end{theorem}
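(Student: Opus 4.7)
The plan is to reduce $\twonorm{\est(\lambda)-\est(\lambda')}$ to a comparison of empirical training losses using strong convexity of the full-data objective, then bridge the training loss to $\ACV$ via a careful Taylor expansion of the Newton step, and finally specialize to the $\ACV$ and $\CV$ minimizers using the uniform assessment bound of \cref{thm:acv-approximates-cv}.

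\emph{Step 1: strong convexity gives a training-loss inequality.} The $c_\obj$ gradient growth of $\obj(\Pcv{i},\cdot,\lambda')$ from \cref{curvedobj}, together with convexity of $\loss$, transfers to $c_\obj$ gradient growth---equivalently $c_\obj$-strong convexity---of the full-data objective $\obj(\Pemp,\cdot,\lambda')$. Evaluating at its minimizer $\est(\lambda')$ yields
\[
\tfrac{c_\obj}{2}\twonorm{\est(\lambda)-\est(\lambda')}^2 \le \loss(\Pemp,\est(\lambda)) - \loss(\Pemp,\est(\lambda')) + \lambda'\bigl[\reg(\est(\lambda))-\reg(\est(\lambda'))\bigr].
\]
A standard two-sided-optimality argument (adding the defining optimality inequalities of $\est(\lambda)$ and $\est(\lambda')$) shows $\reg(\est(\cdot))$ is non-increasing in $\lambda$, so the bracketed term is non-positive for $\lambda>\lambda'\ge 0$. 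Thus $\twonorm{\est(\lambda)-\est(\lambda')}^2 \le (2/c_\obj)[\loss(\Pemp,\est(\lambda))-\loss(\Pemp,\est(\lambda'))]$.

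\emph{Step 2: bridge training loss and $\ACV$.} Using the closed-form Newton step $\acvest{i}(\lambda)-\est(\lambda) = H_i^{-1}\nabla_\beta\loss(z_i,\est(\lambda))/n$ with $H_i\defeq\Hess_\beta\obj(\Pcv{i},\est(\lambda),\lambda)$, expand each $\loss(z_i,\acvest{i}(\lambda))$ around $\est(\lambda)$ via an integral second-order Taylor remainder. The first-order average equals $\tfrac{1}{n^2}\textsum_i\twonorm{\nabla_\beta\loss(z_i,\est(\lambda))}_{H_i^{-1}}^2 \le \gradlossboundmixed{0}{2}/(nc_\loss)$ using the operator-norm bound $\twonorm{H_i^{-1}}_{\mathrm{op}}\le 1/c_\loss$ from \cref{curvedobj}; the remainder is at most $\gradlossboundmixed{1}{2}/(2n^2 c_\obj^2)$ using the Lipschitzness of $\nabla_\beta\loss$ captured by \cref{gradlossboundmixed} together with $\twonorm{H_i^{-1}}_{\mathrm{op}}\le 1/c_\obj$. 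This yields a uniform bound on $|\ACV(\lambda)-\loss(\Pemp,\est(\lambda))|$, from which a triangle inequality gives
\[
\loss(\Pemp,\est(\lambda))-\loss(\Pemp,\est(\lambda')) \le \Delta\ACV + 2\sup_{\lambda\in\Lambda}|\ACV(\lambda)-\loss(\Pemp,\est(\lambda))|.
\]
Substituting this into the conclusion of Step 1 delivers the first inequality of the theorem.

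\emph{Step 3: selection inequality.} Apply the first inequality with $\lambda=\max(\lambda_\ACV,\lambda_\CV)$ and $\lambda'=\min(\lambda_\ACV,\lambda_\CV)$, and decompose
\[
\Delta\ACV \le [\ACV(\lambda)-\CV(\lambda)] + [\CV(\lambda)-\CV(\lambda')] + [\CV(\lambda')-\ACV(\lambda')].
\]
The middle term is non-positive because one of $\{\lambda,\lambda'\}$ equals $\lambda_\CV\in\argmin\CV$; each outer term is at most $A''/(2n^2)$ by \cref{thm:acv-approximates-cv} (with $A''$ exactly twice the constant appearing in that bound, scaled by $n^2$). So $\Delta\ACV\le A''/n^2$, which plugged into the first inequality produces the second. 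The main technical obstacle is the bridging in Step 2: extracting the correct eigenvalue lower bounds ($c_\loss$ for the first-order contribution, $c_\obj^2$ for the quadratic remainder) and the correct $1/n$-vs-$1/n^2$ scaling from the Newton-step expansion, while tracking enough constants to match the stated bound; the remaining steps are elementary convex-analysis and triangle-inequality manipulations.
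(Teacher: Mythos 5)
Your proposal is correct in substance and follows essentially the paper's own route, with two localized deviations. Your Step 2 is exactly the paper's device: the paper introduces $\widehat{\ACV}(\lambda) \defeq \loss(\Pemp,\est(\lambda)) + \frac{1}{n}\sum_{i=1}^n \inner{\grad_\beta\loss(z_i,\est(\lambda))}{\acvest{i}(\lambda)-\est(\lambda)}$ and proves $|\ACV(\lambda)-\widehat{\ACV}(\lambda)| \leq \gradlossboundmixed{1}{2}/(2n^2c_\obj^2)$ (\cref{ACV-ACV2-difference}), matching your second-order remainder bound. Where you bound the first-order Newton terms in absolute value, uniformly in $\lambda$, by $\gradlossboundmixed{0}{2}/(nc_\loss)$, the paper instead keeps them in difference form (the terms $\Delta T_2,\Delta T_3$ of \cref{Thm:StrongCurvACV-full}) and bounds each by Cauchy--Schwarz; for this weak theorem both yield the same $O(1/n)$ contribution (your constant $2$ in front of $\gradlossboundmixed{0}{2}/(c_\loss n)$ is in fact smaller than the theorem's $4$, so it implies the stated bound), and the difference structure only pays off in the strong selection result \cref{Thm:StrongCurvACV}. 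Your Step 1 is a simpler one-sided argument than the paper's \cref{initbound}, which runs the two-sided optimizer comparison with $\varphi_1 = \frac{\lambda'}{\lambda}\obj(\Pcv{i},\cdot,\lambda)$ and $\varphi_2 = \obj(\Pcv{i},\cdot,\lambda')$ to extract the sharper factor $\frac{\lambda+\lambda'}{\lambda-\lambda'}\geq 1$; since the main-text statement discards that factor anyway, your version (strong convexity at $\est(\lambda')$ plus the standard monotonicity of $\lambda\mapsto\reg(\est(\lambda))$) suffices. One caveat: your transfer of curvature from the leave-one-out objectives of \cref{curvedobj} to the full-data objective $\obj(\Pemp,\cdot,\lambda')$ invokes convexity of $\loss$, which is not literally among the theorem's hypotheses (though the paper's own curvature transfers have the same flavor, so this is a shared wrinkle rather than a defect unique to your argument).

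The one genuine slip is in Step 3. Your claim that the middle term $\CV(\lambda)-\CV(\lambda')$ is non-positive ``because one of $\{\lambda,\lambda'\}$ equals $\lambda_\CV$'' fails in the case $\lambda_\ACV > \lambda_\CV$: there $\lambda'=\lambda_\CV$, so the middle term is $\CV(\lambda_\ACV)-\CV(\lambda_\CV)\geq 0$, with the wrong sign. The repair is one line: in that case $\Delta\ACV = \ACV(\lambda_\ACV)-\ACV(\lambda_\CV)\leq 0$ directly by minimality of $\lambda_\ACV$, so no decomposition is needed at all; your three-term decomposition is only required in the case $\lambda_\CV \geq \lambda_\ACV$, where it is valid and is precisely the content of the paper's \cref{lem:bound_ACV}, namely $0\leq \ACV(\lambda_\CV)-\ACV(\lambda_\ACV)\leq A''/n^2$. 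With this case split added, your accounting ($A''/(2n^2)$ per application of \cref{thm:acv-approximates-cv}, two applications) reproduces the stated bound \cref{eq:weak-risk} exactly.
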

However, the bound \cref{eq:weak-risk}, established in  \cref{App:WeakCurvACV}, only guarantees an approximation error of the same $O(1/\sqrt{n})$ statistical level of the problem and does not fully exploit the $O(1/n^2)$ accuracy provided by the \ACV estimator \cref{eq:acvest}. 
Fortunately, we obtain a strengthened $O(1/n)$ guarantee if the objective Hessian is Lipschitz and the minimizers of the loss and regularizer are sufficiently distinct (as measured by $\twonorm{\grad \reg(\est(0))}$).
\begin{theorem}[Strong \ACV-\CV selection error]%
\label{Thm:StrongCurvACV} 
 If 
 \cref{gradlossboundmixed,curvedobj,boundedHessobj,HessobjLipschitz} hold %
for some $\Lambda \subseteq [0,\infty]$ with $0\in\Lambda$ and each $(s,r) \in \{(0,3),(1,1),(1,2),(1,3),(1,4)\}$
and $\|\grad \reg(\est(0))\|_2 >0$,
 then for all $\lambda', \lambda\in\Lambda$ with $\lambda'< \lambda$,
\balignt
&|\twonorm{\est(\lambda) - \est(\lambda')}- \frac{1}{n}\frac{A}{c_\obj}|^2
\leq  {\frac{A^2+ 2c_\obj A'}{n^2c_\obj^2} +\frac{2\Delta \ACV}{c_\obj}} \text{ and} \\
&|\twonorm{\est(\lambda_\ACV) - \est(\lambda_\CV)}- \frac{1}{n}\frac{A}{c_\obj}|^2 
\leq {\frac{A^2+ 2c_\obj A' + 2c_\obj A''}{n^2c_\obj^2}}
\ealignt
 for  
 $
 A
 	\hspace{-.075cm} \defeq\hspace{-.075cm}
	 \frac{\gradlossboundmixed{1}{1} +\gradlossboundmixed{0}{2}\kappa_2 }{c_\loss/2} 
	 \hspace{-.015cm}   + \hspace{-.015cm} \frac{\gradlossboundmixed{0}{2}C_{\pi,2}\kappa_1^2 }{ \|\grad \reg(\est(0))\|_2 c_\obj},
$
$A'\hspace{-.075cm} \defeq \hspace{-.075cm}\frac{\gradlossboundmixed{1}{2}}{c_\obj^2}$,
 $\lambda_\ACV \in \argmin_{\lam \in \Lam} \ACV(\lam)$, $\lam_\CV \in \argmin_{\lam \in \Lam} \CV(\lam)$, and  $\kappa_p, \Delta \ACV,$ and $A''$ defined in \cref{thm:acv-approximates-cv,Thm:WeakCurvACV}.
 \end{theorem}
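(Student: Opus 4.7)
The plan is to sharpen \cref{Thm:WeakCurvACV} by performing a second-order expansion of $\est(\lambda) - \est(\lambda')$, which requires the Lipschitz Hessian (\cref{HessobjLipschitz}) and bounded Hessian (\cref{boundedHessobj}) assumptions; the non-degeneracy $\twonorm{\grad\reg(\est(0))} > 0$ then lets us isolate $\lambda - \lambda'$.

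First, apply \cref{optimizer-comparison} with $\varphi_1 = \obj(\Pemp, \cdot, \lambda')$ and $\varphi_2 = \obj(\Pemp, \cdot, \lambda)$. Since $\varphi_2 - \varphi_1 = (\lambda-\lambda')\reg$ and each $\varphi_i$ admits the $\nu(r) = (c_\obj/2) r^2$ error bound from \cref{curvedobj}, the error-bound form of the lemma yields the baseline
\[c_\obj\, \twonorm{\est(\lambda) - \est(\lambda')}^2 \leq (\lambda-\lambda')[\reg(\est(\lambda')) - \reg(\est(\lambda))].\]
Next, refine the right-hand side via two Taylor expansions. Expanding $\reg$ to second order around $\est(0)$ using \cref{boundedHessobj} gives $\reg(\est(\lambda'))-\reg(\est(\lambda)) = \inner{\grad\reg(\est(0))}{\est(\lambda')-\est(\lambda)} + R_\reg$, where $R_\reg$ is controlled by $C_{\pi,2}(\twonorm{\est(\lambda)-\est(0)}+\twonorm{\est(\lambda')-\est(0)})\twonorm{\est(\lambda)-\est(\lambda')}$. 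In parallel, expand the optimality condition $\grad_\beta\loss(\Pemp, \est(\mu)) + \mu\grad\reg(\est(\mu)) = 0$ about $\est(0)$ for $\mu \in \{\lambda, \lambda'\}$, using \cref{HessobjLipschitz} to control the quadratic remainder; this expresses $\mu\grad\reg(\est(0))$ in terms of $\Hess_\beta\loss(\Pemp, \est(0))(\est(\mu)-\est(0))$ plus a $\kappa_2$-type remainder. Taking the difference and dividing by $\twonorm{\grad\reg(\est(0))} > 0$ isolates $(\lambda-\lambda')$ in terms of $\est(\lambda)-\est(\lambda')$ plus lower-order quantities. Substituting back into the baseline inequality yields the implicit quadratic bound
\[\twonorm{\est(\lambda)-\est(\lambda')}^2 \leq \tfrac{2A}{c_\obj n}\twonorm{\est(\lambda)-\est(\lambda')} + \tfrac{2A'}{c_\obj n^2} + \tfrac{2\Delta\ACV}{c_\obj},\]
with $A$, $A'$ as in the statement. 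Completing the square on the left gives the first displayed inequality.

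For the selection inequality, apply this bound with $\{\lambda, \lambda'\} = \{\lambda_\ACV, \lambda_\CV\}$ arranged so $\lambda' < \lambda$. \cref{thm:acv-approximates-cv} gives $|\ACV(\mu)-\CV(\mu)| \leq A''/(2n^2)$ for $\mu \in \Lambda$, and combining with the minimality of $\lambda_\ACV$ for $\ACV$ and of $\lambda_\CV$ for $\CV$ (which makes $\CV(\lambda_\CV)-\CV(\lambda_\ACV) \leq 0$) yields via a triangle inequality $0 \leq \Delta\ACV \leq A''/n^2$. Substituting $2\Delta\ACV/c_\obj \leq 2c_\obj A''/(c_\obj^2 n^2)$ into the first inequality produces the second.

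The main obstacle is the delicate algebra required to identify the leading-order and remainder contributions with the precise constants $A$ and $A'$ stated: matching the coefficients involving $\gradlossboundmixed{s}{r}$ (for the tuples $(s,r) \in \{(0,3),(1,1),(1,2),(1,3),(1,4)\}$), the ratios $\kappa_1, \kappa_2$, the Hessian constants $C_{\pi,2}, C_{\loss,3}, C_{\reg,3}$, and the factor $1/\twonorm{\grad\reg(\est(0))}$ demands careful bookkeeping, even though the underlying conceptual tools are only Taylor's theorem and \cref{optimizer-comparison}.
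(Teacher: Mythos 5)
There is a genuine gap at the heart of your argument: the quantity $\Delta\ACV$ never actually enters your derivation. Your baseline applies \cref{optimizer-comparison} with $\varphi_2 - \varphi_1 = (\lambda-\lambda')\reg$, which bounds $c_\obj\twonorm{\est(\lambda)-\est(\lambda')}^2$ by the \emph{regularizer} difference $(\lambda-\lambda')[\reg(\est(\lambda'))-\reg(\est(\lambda))]$, and your two Taylor expansions then only rewrite that regularizer difference in terms of estimator distances and $(\lambda-\lambda')$. No quantity involving $\ACV$ appears anywhere in these manipulations, so the claimed implicit quadratic bound with the term $\frac{2\Delta\ACV}{c_\obj}$ does not follow from the steps that precede it. In the paper's proof, $\Delta\ACV$ enters through an exact decomposition of the \emph{training-loss} difference,
\balignt
\loss(\Pemp, \est(\lambda)) - \loss(\Pemp, \est(\lambda')) = \ACV(\lambda) - \ACV(\lambda') + \Delta T_1 - \Delta T_2 - \Delta T_3,
\ealignt
obtained by first-order Taylor expansion of each $\loss(z_i, \acvest{i}(\lambda))$ about $\est(\lambda)$; bounding $\Delta T_1$ (\cref{ACV-ACV2-difference}, producing $A'$), $\Delta T_2$ (producing the $\gradlossboundmixed{1}{1}$ part of $A$), and $\Delta T_3$ (\cref{lem:T3_bound,lem:lambda_diff_bound}, producing the $\kappa_2$ and $C_{\pi,2}/\twonorm{\grad\reg(\est(0))}$ parts of $A$, and where \cref{HessobjLipschitz,boundedHessobj} actually do their work) is the core of the proof and is entirely absent from your sketch. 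Your expansion of the optimality condition about $\est(0)$ is essentially the paper's \cref{lem:lambda_diff_bound} ($\est$ proximity implies $\lam$ proximity), so that ingredient is correct, but it serves only as a supporting step inside the $\Delta T_3$ bound rather than as the main engine.

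A secondary defect: even if you tried to patch the baseline by converting the regularizer difference into a loss difference through the optimality of $\est(\lambda')$, you would pick up the factor $(\lambda-\lambda')/\lambda'$, which blows up as $\lambda' \to 0$ (and $0\in\Lambda$ is permitted). The paper's \cref{initbound} avoids this by applying \cref{optimizer-comparison} to $\varphi_2 = \obj(\Pcv{i},\cdot,\lambda')$ and the \emph{rescaled} objective $\varphi_1 = \frac{\lambda'}{\lambda}\obj(\Pcv{i},\cdot,\lambda)$, whose difference is proportional to the loss; this yields the lower bound $\frac{c_\obj}{2}\frac{\lambda+\lambda'}{\lambda-\lambda'}\twonorm{\est(\lambda)-\est(\lambda')}^2$ with the benign factor $\frac{\lambda+\lambda'}{\lambda-\lambda'}\geq 1$, connecting directly to the training-loss difference that carries $\Delta\ACV$. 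Your treatment of the second claim, via minimality of $\lambda_\ACV$ and $\lambda_\CV$ plus two applications of \cref{thm:acv-approximates-cv}, does match the paper's \cref{lem:bound_ACV} and is sound, except that for the ordered pair the quantity $\Delta\ACV$ need not be nonnegative when $\lambda_\CV < \lambda_\ACV$; only the upper bound $\Delta\ACV \leq A''/n^2$ is needed, so this is a minor slip rather than a flaw.
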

In fact, \cref{Thm:StrongCurvACV}, proved in \cref{App:StrongCurvACV}, implies the bound  \[\textstyle\twonorm{\est(\lambda) - \est(\lambda')} = O(\max(1/n, \sqrt{\Delta \ACV })),\] 
for \emph{any} values of $\lam$ and $\lam'$, even if they are not near-minimizers of \ACV. 
This result relies on the following additional assumption on the Hessian of the objective, which along with the identifiability condition $\|\grad \reg(\est(0))\|_2 >0$ and the curvature of the loss, ensures that two penalty parameters $(\lam,\lam')$ are close whenever their estimators $(\est(\lam), \est(\lam'))$ are close.
\begin{assumption}[Bounded Hessian of objective] 
\label{boundedHessobj} 
For a given $\Lambda \subseteq [0,\infty]$ and some  $C_{\loss,2}, C_{\reg,2} <\infty $
\balignt\label{eq:bdded_hess_obj}
\Hess_\beta \obj(\Pemp, \beta,\lambda) \psdle  (C_{\loss,2} + \lam C_{\loss,2})\mathrm{I}_d, \quad\forall \lambda \in \Lambda, \beta\in\reals^d.
\ealignt
\end{assumption}
\cref{Thm:StrongCurvACV} 
further ensures that the models selected by \CV and \ACV have estimators within $O(1/n)$ of one another.
Importantly, this approximation error is often negligible compared to the typical $\Omega(1/\sqrt{n})$ statistical estimation error of regularized ERM.
%

\subsection{Failure Modes}\label{sec:acv-failure}
One might hope that our \ACV results extend to objectives that do not meet all of our assumptions, such as the Lasso. 
For instance, by leveraging the extended definition of an influence function for non-smooth regularized empirical risk minimizers~\citep{avella2017influence}, we may define a non-smooth extension of $\ACVIJ$ that accommodates objectives with undefined Hessians. 
In the case of squared error loss with an $\ell_1$ penalty, 
$\obj(\Pemp, \beta, \lambda) = 
\frac{1}{2n}\sum_{i=1}^n \twonorm{\beta-z_i}^2 + \lambda \onenorm{\beta},$
this amounts to 
 using
$\acvijest{i}(\lambda) 
    \defeq \est(\lambda) - \frac{1}{n}( 
    \staticindic{\est(\lambda)_j \neq 0} (z_{ij} - \est(\lambda)_j) )_{j=1}^d
$ in the definition \cref{eq:acvij} of $\ACVIJ$.
Analogous Lasso extensions of \ACV and $\ACVIJ$ have been proposed and studied by \citep{ObuchiCV2016,ObuchiCV2018,rad2019scalable,wang2018,Stephenson2019sparse}. 
However, as the following example proved in \cref{app:assesment_failure} demonstrates, these extensions do not satisfy the strong uniform assessment and selection guarantees of the prior sections.

\begin{proposition} \label{prop:assesment_failure}
Suppose $\loss(z,\beta) = \frac{1}{2}(\beta - z)^2$ and $\pi(\beta) = |\beta|$.
Consider a dataset with $n/4$ datapoints taking each of the values in $\{\bar{z} - a,\bar{z} - b, \bar{z} + b, \bar{z} + a\}$ for $\bar{z} = \sqrt{2/n}$ and $a, b > 0$ satisfying $a^2+b^2 =2$ and $a+b =2 \sqrt{2/\pi}$.
Then $\lambda = \bar{z}$ minimizes $\ACVIJ$ and $\ACVIJ(\bar{z}) - \CV(\bar{z}) = \frac{n}{4(n-1)^2}
   ( 1 - \frac{4}{\sqrt{n\pi}} + \tfrac{2}{n} ).$
\end{proposition}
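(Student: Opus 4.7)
The plan is to reduce the proof to three targeted computations at $\lambda = \bar z$ and then combine them. First I would exploit the closed-form soft-thresholding minimizer $\est(\lambda) = \operatorname{sign}(\bar z)\max(|\bar z| - \lambda, 0)$ to conclude that $\est(\bar z) = 0$. Because the non-smooth IJ surrogate supplied in the proposition carries the factor $\indic{\est(\lambda)_j \neq 0}$, it collapses to $\acvijest{i}(\bar z) = \est(\bar z) = 0$ for every $i$, so $\ACVIJ(\bar z) = \frac{1}{2n}\sum_{i=1}^n z_i^2 = \frac{1}{2}(\bar z^2 + \hat\sigma^2)$. The four-group design was tailored so that $\hat\sigma^2 = \frac{1}{2}(a^2 + b^2) = 1$ (using $a^2 + b^2 = 2$) and $\bar z^2 = 2/n$, yielding $\ACVIJ(\bar z) = \frac{1}{2} + \frac{1}{n}$.

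To verify that $\bar z$ is a minimizer of $\ACVIJ$, I would split the penalty range at $\bar z$. On $\lambda \geq \bar z$, $\est(\lambda) = 0$, so $\ACVIJ(\lambda)$ is constant and equal to the value above. On $\lambda < \bar z$, the indicator is $1$ and a direct calculation gives $\acvijest{i}(\lambda) - z_i = (1 + 1/n)(\bar z - \lambda - z_i)$; averaging squares and using $\sum_i(\bar z - z_i) = 0$ collapses this to $\ACVIJ(\lambda) = \frac{(n+1)^2}{2n^2}(\hat\sigma^2 + \lambda^2)$, which is strictly increasing on $[0, \bar z)$. Comparing its left limit at $\bar z$ to the constant value on $[\bar z, \infty]$ reduces (after cancellation of $\hat\sigma^2 + \bar z^2$) to the inequality $(n+1)^2 > n^2$, so the global minimum of $\ACVIJ$ is attained on $[\bar z, \infty]$ and in particular at $\lambda = \bar z$.

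The main work is then computing $\CV(\bar z)$. The leave-one-out mean is $\bar z_{-i} = \bar z - (z_i - \bar z)/(n-1)$, and the leave-one-out objective $\frac{1}{n}\sum_{j \neq i}\frac{1}{2}(\beta - z_j)^2 + \bar z |\beta|$ reduces, after completing the square, to $\cvest{i}(\bar z) = S_{n\bar z/(n-1)}(\bar z_{-i})$. For $n$ large enough that $\min(a,b) > \bar z = \sqrt{2/n}$, a four-case analysis shows that the groups with $z_i > \bar z$ give $\cvest{i}(\bar z) = 0$, whereas $z_i = \bar z - a$ and $z_i = \bar z - b$ give $\cvest{i}(\bar z) = (a - \bar z)/(n-1)$ and $(b - \bar z)/(n-1)$, respectively. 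Averaging the $n/4$ identical squared losses from each group yields $\CV(\bar z)$ as a linear combination of $\bar z^2$, $a^2 + b^2$, and $(a+b)\bar z$; substituting the hypothesized values $\bar z^2 = 2/n$, $a^2 + b^2 = 2$, and $(a+b)\bar z = 2\sqrt{2/\pi}\cdot\sqrt{2/n} = 4/\sqrt{n\pi}$ and subtracting from $\ACVIJ(\bar z)$ produces the stated closed form. The main obstacle is the algebraic bookkeeping in this last step: carefully combining the $n/(n-1)$ scaling inherited from the soft-thresholding formula with the $4/\sqrt{n\pi}$ cross term so that the difference collapses to the single prefactor $n/[4(n-1)^2]$ times $(1 - 4/\sqrt{n\pi} + 2/n)$.
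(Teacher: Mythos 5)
Your route is the same as the paper's: soft-thresholding closed forms for $\est(\lambda)$ and $\cvest{i}(\lambda)$ (with the inflated threshold $\frac{n}{n-1}\lambda$ arising from the $1/n$ normalization of $\Pcv{i}$), collapse of the IJ correction to $\acvijest{i}(\bar z)=0$ for $\lambda\ge\bar z$, the piecewise formula $\ACVIJ(\lambda)=\frac{(n+1)^2}{2n^2}(\hat\sigma^2+\lambda^2)$ on $[0,\bar z)$, the evaluation $\cvest{i}(\bar z)=\max(-z_i/(n-1),0)$, and the moment substitutions $a^2+b^2=2$, $(a+b)\bar z=4/\sqrt{n\pi}$; all these intermediate formulas check out, and your explicit caveat $\min(a,b)>\bar z$ is genuinely needed and only implicit in the paper. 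But your minimality argument is logically invalid: since $\ACVIJ$ is increasing on $[0,\bar z)$, its infimum there is attained at $\lambda=0$, not at the left limit at $\bar z$, so comparing the left limit to the constant value on $[\bar z,\infty]$ shows only that $\ACVIJ$ jumps downward at $\bar z$ and would certify $\lambda=\bar z$ for \emph{every} $\bar z$ -- which is false (for $\bar z=1$, $\ACVIJ(0)\approx\tfrac12<\ACVIJ(\bar z)=1$). The correct comparison is $\ACVIJ(0)=\frac{(n+1)^2}{2n^2}\hat\sigma^2$ against $\ACVIJ(\bar z)=\tfrac12(1+\tfrac2n)$, i.e.\ $\bar z^2\le\frac{2n+1}{n^2}$; this is exactly where the hypothesis $\bar z=\sqrt{2/n}$ is consumed (the paper notes $\bar z\le\sqrt{2/n}$ suffices), and the margin is only $\frac{1}{2n^2}$, so the step cannot be waved through.

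Second, the subtraction you defer as ``algebraic bookkeeping'' does not in fact produce the stated closed form from your (correct) case values. Carrying it out: for $z_i<0$ one has $z_i-\cvest{i}(\bar z)=\frac{n}{n-1}z_i$, so $2\CV(\bar z)=\frac1n\sum_i z_i^2+\frac{2n-1}{(n-1)^2}\cdot\frac1n\sum_i z_i^2\,\indic{z_i<0}$ while $2\ACVIJ(\bar z)=\frac1n\sum_i z_i^2$; with $\frac1n\sum_i z_i^2\,\indic{z_i<0}=\tfrac12(1-\tfrac{4}{\sqrt{n\pi}}+\tfrac2n)$ this gives $\ACVIJ(\bar z)-\CV(\bar z)=-\frac{2n-1}{4(n-1)^2}(1-\frac{4}{\sqrt{n\pi}}+\frac2n)$ -- opposite in sign to the proposition and with leading constant $2n-1$ rather than $n$ (a numerical check at $n=100$ gives $\approx-4.03\times10^{-3}$ versus the stated $+2.03\times10^{-3}$). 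The paper's own derivation lands on the stated constant only because its expansion of $\En{(z_i-\cvest{i})^2}$ records the cross term as $+\En{\cvest{i}z_i}$ where squaring produces $-2\En{\cvest{i}z_i}$, so the discrepancy lies in the target identity's sign and constant rather than in your setup; but as a proof of the proposition as stated, the step you assert without executing is precisely where it fails. What your plan does validly establish is the substantive content: $|\ACVIJ(\bar z)-\CV(\bar z)|=\Theta(1/n)$, i.e.\ the failure of the $O(1/n^2)$ uniform assessment guarantee for the non-smooth $\ACVIJ$ extension.
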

The example in \cref{prop:assesment_failure} was constructed to have the same relevant moments as the normal distribution with variance $1$ and mean $\sqrt{2/n}$.
Notably this $\Omega(1/n)$ assessment error occurs even in the simplest case of $d = 1$;
higher-dimensional counterexamples are obtained straightforwardly by creating copies of this example for each dimension.
The example demonstrates a failure of deterministic uniform assessment for the Lasso extension of $\ACVIJ$,
and similar counterexamples can be constructed for penalties with well-defined (but non-smooth) second derivatives, like the patched Lasso penalty 
$\pi(\beta) = \sum_j \min(|\beta_j|,\tfrac{\delta}{2} +\tfrac{\beta_j^2}{2\delta}),$ for which the standard $\ACVIJ$ is well-defined.
\begin{proposition} \label{prop:patchedlasso_failure}
Suppose $\loss(z,\beta) = \frac{1}{2}(\beta - z)^2$ and $\pi(\beta) = \min(|\beta|,\tfrac{\delta}{2} +\tfrac{\beta^2}{2\delta})$.
Consider a dataset with $n/4$ datapoints taking each of the values in $\{\bar{z} - a,\bar{z} - b, \bar{z} + b, \bar{z} + a\}$, where $\bar{z} = 2 \delta$ and $a, b > 0$ satisfy $a^2+b^2 =1$ and $a+b =2 \sqrt{2/\pi}$.
Then $\ACVIJ(\delta) - \CV(\delta) = \delta \sqrt{2/\pi} \cdot
  \tfrac{1}{n}  
  +o(\tfrac{1}{n})$.
\end{proposition}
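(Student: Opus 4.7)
The plan is to mirror the direct one-dimensional analysis used for the Lasso in \cref{prop:assesment_failure}, but with the added bookkeeping forced by the kink that $\pi(\beta)=\min(|\beta|,\tfrac{\delta}{2}+\tfrac{\beta^{2}}{2\delta})$ has at $|\beta|=\delta$. Since $\pi$ is $C^{1}$ everywhere with $\pi'(\delta^{-})=\pi'(\delta^{+})=1$, but $\pi''$ jumps from $1/\delta$ just below $\delta$ to $0$ just above, the full-data first-order condition $\beta-\bar z+\delta\pi'(\beta)=0$ with $\bar z=2\delta$ is solved by $\est(\delta)=\delta$ from either smooth branch. This exact landing on the kink is the engine of the failure: the IJ Hessian $\Hess_{\beta}\obj(\Pemp,\delta,\delta)$ is only defined by picking one side of the kink (the standard convention used by the extensions in \citep{ObuchiCV2016,ObuchiCV2018,rad2019scalable,wang2018,Stephenson2019sparse} for the patched Lasso), while the leave-one-out optimizers $\cvest{i}(\delta)$ get to choose the ``correct'' branch adaptively.

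Concretely, I would first derive $\est(\delta)=\delta$ by checking the FOC on both smooth pieces of $\pi$. Next I would solve the leave-one-out FOC $\tfrac{n-1}{n}(\beta-\bar z_{-i})+\delta\pi'(\beta)=0$ separately on the linear piece ($\pi'=1$, Hessian of the $\obj$ equal to $1$) and the quadratic piece ($\pi'(\beta)=\beta/\delta$, Hessian equal to $2$). For each of the four data values $\bar z\pm a,\bar z\pm b$ the leave-one-out shift $\bar z_{-i}-2\delta=\pm a/(n-1)$ or $\pm b/(n-1)$ determines which branch is active: the threshold is $|\bar z_{-i}-2\delta|\gtrless\delta/(n-1)$, so removing a point on one side of $\bar z$ pushes $\cvest{i}(\delta)$ across the kink and into the \emph{linear} region with step size $\Theta(1/(n-1))$, while removing a point on the other side leaves $\cvest{i}(\delta)$ in the \emph{quadratic} region with step size $\Theta(1/(2n-1))$. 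This asymmetric step-size is the structural source of the bias. For $\ACVIJ$, by contrast, the formula \cref{eq:acvijest} uses the single Hessian value $H=1+\delta\pi''(\delta^{\pm})\in\{1,2\}$ and yields $\acvijest{i}(\delta)=\delta+\tfrac{\delta-z_{i}}{Hn}$, giving a perfectly symmetric correction.

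Having the four exact $\cvest{i}(\delta)$ and the four $\acvijest{i}(\delta)$ in hand, I would plug into $\loss(z,\beta)=\tfrac12(\beta-z)^{2}$ and expand each summand in powers of $1/n$. The $O(1)$ terms cancel on both sides (both equal $\tfrac14\sum_{v\in\{a,b\}}[(\delta+v)^{2}+(\delta-v)^{2}]/2=\tfrac14(2\delta^{2}+\tfrac12(a^{2}+b^{2}))$), which establishes assessment consistency. The $O(1/n)$ coefficient of $\ACVIJ(\delta)$ depends on $\delta,\,a^{2}+b^{2}$ and the convention for $H$ only through the sample second moment $\tfrac{1}{n}\sum_{i}(\delta-z_{i})^{2}=\delta^{2}+\tfrac12(a^{2}+b^{2})$, whereas the $O(1/n)$ coefficient of $\CV(\delta)$ contains a \emph{branch-dependent asymmetric contribution} that, after collecting, involves $a+b$ (the leading odd moment of the four-value design). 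Using the moment conditions $a+b=2\sqrt{2/\pi}$ (matching $\E|Z|$ for $Z\sim\Gsn(0,1)$) together with the stated constraint on $a^{2}+b^{2}$ (which controls the cancellations of the even terms against the $\ACVIJ$ variance correction), I expect the difference to reduce to $\delta\cdot\tfrac{a+b}{2}\cdot\tfrac{1}{n}=\delta\sqrt{2/\pi}/n$, with all remaining contributions of order $1/n^{2}$ absorbed into $o(1/n)$.

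The main obstacle is the careful tracking of the two different $1/n$ expansions $\tfrac{n^{2}}{(2n-1)^{2}}=\tfrac14(1+1/n+O(1/n^{2}))$ and $\tfrac{n^{2}}{(n-1)^{2}}=1+2/n+O(1/n^{2})$ through the four summands, because these are precisely the quadratic-branch and linear-branch growth rates of $\cvest{i}(\delta)-z_{i}$ and they enter with unequal counts. The payoff is a closed-form leading coefficient that is linear in $\delta$ and vanishes when the four-value design is made symmetric in $\pi$-moment (i.e., $a+b\to0$), demonstrating that the Hessian convention in the non-smooth IJ extension cannot simultaneously be correct on both sides of the kink. This confirms that the well-defined extension of $\ACVIJ$ for the patched Lasso still fails the uniform assessment guarantee of \cref{thm:acv-approximates-cv,thm:acv-approximates-acvij} at the quantitatively advertised $\Omega(1/n)$ rate.
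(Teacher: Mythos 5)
Your plan follows the paper's route exactly in outline --- the same explicit one-dimensional computation as in \cref{prop:assesment_failure}, with $\est(\delta)=\delta$ landing precisely at the kink and the leave-one-out optimizers choosing branches adaptively --- but your branch-crossing criterion is wrong. Because the full-data optimum sits exactly at the kink, the branch of $\cvest{i}(\delta)$ is determined by the \emph{sign} of $\epsilon_i = z_i-\bar z$ alone: $\epsilon_i<0$ shifts the leave-one-out mean upward and yields the linear-branch step $\cvest{i}(\delta)-\est(\delta)=-\epsilon_i/(n-1)$, while $\epsilon_i>0$ yields the quadratic-branch step $-\epsilon_i/(2(n-1))$, which is exactly what the paper's proof records. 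Your magnitude threshold $|\bar z_{-i}-2\delta|\gtrless\delta/(n-1)$, i.e.\ $|\epsilon_i|\gtrless\delta$, would place \emph{every} leave-one-out estimator on the quadratic branch as soon as $\delta>\max(a,b)$, making all CV corrections symmetric and the discrepancy $O(1/n^2)$ --- destroying the asymmetry on which both your argument and the proposition rest. Your subsequent prose reverts to the correct sign-based picture, but the two statements are inconsistent, and the threshold version fails.

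More consequentially, the IJ estimator you write down, $\acvijest{i}(\delta)=\delta+\tfrac{\delta-z_i}{Hn}$ with a fixed one-sided Hessian $H\in\{1,2\}$, is not the estimator for which the stated constant holds. The paper's proof uses the step $\acvijest{i}(\delta)-\est(\delta)=-\epsilon_i/n$: equivalently, the per-datapoint score includes the penalty gradient, $(\est(\delta)-z_i)+\delta\pi'(\delta)=-\epsilon_i$ since $\pi'(\delta)=1$, or, equivalently again, a one-sided derivative of the closed-form estimator map $t\mapsto\max\left(t-\lambda,\,t/(1+\lambda/\delta)\right)$ at $t=\bar z$, so that the $\delta$-shift cancels exactly. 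Your formula instead carries a uniform shift $-\delta/(Hn)$ in every $\acvijest{i}(\delta)$; since the residuals $z_i-\est(\delta)=\delta+\epsilon_i$ have sample mean $\delta$, this shift contributes a term of order $\delta^2/(Hn)$ to $\ACVIJ(\delta)-\CV(\delta)$ at leading order. Because $\delta$ is a free parameter while the design constraints only pin down $a+b$ and $a^2+b^2$, no choice of moments can cancel a $\delta^2$ term against the target $\delta\sqrt{2/\pi}$ term for all $\delta$, so your expansion cannot reduce to $\delta\cdot\tfrac{a+b}{2}\cdot\tfrac1n$ as you anticipate --- the hoped-for cancellation of the even terms does not occur. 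With the paper's convention, by contrast, the $\epsilon_i<0$ (linear-branch) residuals match CV to $O(1/n^2)$ and only the $\epsilon_i>0$ (quadratic-branch) mismatch survives, producing the advertised leading term $\delta\,\E_n[\epsilon_i\mid\epsilon_i>0]\cdot\tfrac1n=\delta\sqrt{2/\pi}\cdot\tfrac1n$. To repair your proof you must adopt that IJ step before carrying out the $1/n$ expansion.
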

The proof of \cref{prop:patchedlasso_failure} is contained in \cref{app:patchedlasso_failure}.
In the following section we propose a modification of $\ACV$ that addresses these problems.%

\section{Proximal ACV}
\label{sec:proximal}
Many objective functions involve non-smooth regularizers that violate the assumptions of the preceding section. Common examples are the $\ell_1$-regularizer $\reg = \onenorm{\cdot}$, often used to engender sparsity for high-dimensional problems, and the 
elastic net~\citep{zou2005regularization}, SLOPE~\citep{bogdan2013statistical}, and nuclear norm~\citep{fazel2001rank} penalties. 
To accommodate non-smooth regularization when approximating CV, several works have proposed either approximating the penalty with a smoothed version~\citep{liu2018fast,wang2018,rad2019scalable} or, for an $\ell_1$ penalty, restricting the approximating CV techniques to the support of the full-data estimator~\citep{Stephenson2019sparse,ObuchiCV2018,ObuchiCV2016} as in \cref{sec:acv-failure}. 
Experimental evidence with the $\ell_1$ penalty suggests these techniques perform well when the support remains consistent across all leave-one-out estimators but can fail otherwise (see~\cite[App.\,D]{Stephenson2019sparse} for an example of failure). 

To address the potential inaccuracy of standard \ACV when coupled with non-smooth regularizers, we recommend use of the proximal operator,
\balignt\label{eq:prox-operator}
{\bf \prox}_{f}^{H}(v)\defeq \argmin_{\beta \in \mathbb{R}^d} \half \| v- \beta\|_{H}^2   %
+ f(\beta),
\ealignt
defined for any positive semidefinite matrix $H$ and function $f$. Specifically, 
we propose the following \emph{proximal approximate CV error} 
\balignt
\ProxACV(\lambda) &= \frac{1}{n}\sum_{i=1}^n \loss (z_i, \proxacvest{i}(\lambda)) \label{eq:ACVprox}
\ealignt
based on the approximate leave-one-out estimators,
\balignt
&\proxacvest{i}(\lambda) 
    	=\prox_{\lambda \reg}^{\acvhess{\loss,i}}( \est(\lambda) - \acvhess{\loss,i}^{-1}g_{\loss,i})\label{eq:proxacvest}\\
    	&
	\defeq \argmin_{\beta\in\reals^d}\half \|\est(\lambda) - \beta\|_{\acvhess{\loss,i}}^2 + \beta ^\top g_{\loss,i}+ \lambda \pi(\beta)
\ealignt
with $\acvhess{\loss, i} \hspace{-.04cm}=\hspace{-.04cm} \Hess_\beta \loss(\Pcv{i}, \est(\lambda)\hspace{-.03cm})$ and $g_{\loss, i}\hspace{-.04cm} = \hspace{-.04cm}\nabla_\beta \loss(\Pcv{i}, \est(\lambda)\hspace{-.03cm})$. %
This estimator optimizes a second-order Taylor expansion of the loss about $\est(\lambda)$ plus the exact regularizer. 
For many standard objectives, the estimator \cref{eq:proxacvest} can be computed significantly more quickly than the exact leave-one-out estimator.
Indeed, state-of-the-art solvers like glmnet~\citep{glmnet} for $\ell_1$-penalized generalized linear models and QUIC~\citep{Hsieh14} for sparse covariance matrix estimation use a sequence of proximal Newton steps like \cref{eq:proxacvest} to optimize their non-smooth objectives.  Using \ProxACV instead entails running these methods for only a single step instead of running them to convergence.
In \cref{sec:experiments}, we give an example of the speed-ups obtainable with this approach.

\subsection{Model Assessment}
A chief advantage of \ProxACV is that it is $O(1/n^2)$ close to \CV uniformly in $\lambda$ even when the regularizer $\reg$ lacks the smoothness or curvature previously assumed in \cref{curvedobj,HessobjLipschitz}:
\begin{theorem}[\ProxACV-\CV assessment error]\label{Thm:ProxACVAssessment}
If 
 \cref{gradlossboundmixed,,proxcurvedobj,HesslossLipschitz} hold for $ \Lambda\subseteq [0,\infty]$ with $0 \in \Lam$ and each %
$(s,r) \in\{ (0,3),(1,3),(1,4)\}$, then, $\forall\lam \in \Lambda$, 
\balignt\label{eq:acv-cv-approx-prox}
|\ProxACV(\lambda)\hspace{-.075cm} -\hspace{-.075cm} \CV(\lambda)| \hspace{-.075cm}\leq \hspace{-.075cm}
   \frac{C_{\loss,{3}}}{n^2} \big(\frac{\gradlossboundmixed{0}{3}}{2c_\obj^3}\hspace{-.075cm} +\hspace{-.075cm} \frac{\gradlossboundmixed{1}{3}}{2nc_\obj^{4}}\hspace{-.075cm} +\hspace{-.075cm} \frac{C_{\loss,{3}}\gradlossboundmixed{1}{4}}{8n^2c_\obj^6}\hspace{-.025cm}\big).
\ealignt 
\end{theorem}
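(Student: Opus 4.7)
The plan is to reduce the assessment gap $|\ProxACV(\lambda) - \CV(\lambda)|$ to controlling the parameter distance $\|\cvest{i}(\lambda) - \proxacvest{i}(\lambda)\|$ for each $i\in[n]$, via a Taylor expansion of the per-point loss. Writing $L_i \defeq \Lip{}(\grad_\beta\loss(z_i,\cdot))$ and $g_i \defeq \|\grad_\beta\loss(z_i,\est(\lambda))\|$, the Lipschitz-gradient inequality gives $|\loss(z_i,\proxacvest{i}) - \loss(z_i,\cvest{i})| \leq \|\grad_\beta\loss(z_i,\cvest{i})\|\,\|\cvest{i} - \proxacvest{i}\| + \tfrac{L_i}{2}\|\cvest{i} - \proxacvest{i}\|^2$, and a further Lipschitz step bounds $\|\grad_\beta\loss(z_i,\cvest{i})\| \leq g_i + L_i\|\cvest{i} - \est(\lambda)\|$. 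So the task reduces to bounding $\|\cvest{i} - \est(\lambda)\|$ and $\|\cvest{i} - \proxacvest{i}\|$.

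Both distances come from two applications of the gradient-growth form of the optimizer-comparison lemma (equation \eqref{eq:opt_comp_growth} of \cref{optimizer-comparison}). For $\|\cvest{i} - \est(\lambda)\|$, I take $\varphi_1 = \obj(\Pemp,\cdot,\lambda)$ (minimized at $\est(\lambda)$) and $\varphi_2 = \obj(\Pcv{i},\cdot,\lambda)$ (minimized at $\cvest{i}$); the difference $\varphi_2 - \varphi_1 = -\tfrac{1}{n}\loss(z_i,\cdot)$ is smooth because the non-smooth $\lambda\reg$ cancels, and \cref{proxcurvedobj} supplies $c_\obj r^2$ gradient growth for $\varphi_2$. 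Cauchy--Schwarz then yields $\|\cvest{i} - \est(\lambda)\| \leq g_i/(n\,c_\obj)$. For $\|\cvest{i} - \proxacvest{i}\|$, I take $\varphi_1 = \loss(\Pcv{i},\cdot) + \lambda\reg$ and $\varphi_2$ the objective defining $\proxacvest{i}$, i.e.\ $\loss(\Pcv{i},\cdot)$ replaced by its second-order Taylor expansion around $\est(\lambda)$, plus $\lambda\reg$. Again $\lambda\reg$ cancels in $\varphi_2 - \varphi_1$, leaving only a differentiable Taylor-remainder term whose gradient at $\cvest{i}$ is bounded by $\tfrac{C_{\loss,3}}{2}\|\cvest{i} - \est(\lambda)\|^2$ via \cref{HesslossLipschitz}. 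Another application of gradient growth plus Cauchy--Schwarz gives $\|\cvest{i} - \proxacvest{i}\| \leq \tfrac{C_{\loss,3}}{2c_\obj}\|\cvest{i} - \est(\lambda)\|^2 \leq \tfrac{C_{\loss,3}\,g_i^2}{2c_\obj^3 n^2}$.

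Substituting the two distance bounds into the per-point loss-difference inequality, expanding, and averaging over $i$ yields three terms proportional to $\tfrac{1}{n}\sum_i g_i^3$, $\tfrac{1}{n}\sum_i L_i g_i^3$, and $\tfrac{1}{n}\sum_i L_i g_i^4$, each bounded by the corresponding $\gradlossboundmixed{s}{r}$ from \cref{gradlossboundmixed}; the coefficients work out exactly to the three terms on the right-hand side of \cref{eq:acv-cv-approx-prox}. The main conceptual obstacle is the absence of any smoothness of $\reg$: this is precisely why the proximal construction matters, since both applications of \cref{optimizer-comparison} are arranged so that $\reg$ cancels from the objective difference, and only the gradient-growth branch (which needs smoothness of $\varphi_2-\varphi_1$ alone) is invoked. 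The remaining care is bookkeeping: propagating constants through the two chained applications of the lemma and ensuring the Taylor-remainder bound uses $\Lip{}(\Hess_\beta\loss(\Pcv{i},\cdot)) \leq C_{\loss,3}$ from \cref{HesslossLipschitz}, rather than the stronger full-objective version from \cref{HessobjLipschitz} that would fail for non-smooth $\reg$.
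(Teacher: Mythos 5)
Your proposal is correct and follows essentially the same route as the paper's proof: the paper establishes the general result \cref{proxacv_cvest_bound} for $p$-th order proximal approximations and specializes to $p=q=2$, chaining exactly your two applications of \cref{optimizer-comparison} --- first \cref{lem:cvest-est-comparison}, giving $\twonorm{\cvest{i}(\lam)-\est(\lam)} \leq \twonorm{\grad_\beta\loss(z_i,\est(\lam))}/(nc_\obj)$, and then the Taylor comparison \cref{taylor-comparison} with $\varphi_0 = \lam\reg$ (so that the non-smooth regularizer cancels in $\varphi_2-\varphi_1$, which is precisely the point you correctly identify), giving $\twonorm{\proxacvest{i}(\lam)-\cvest{i}(\lam)} \leq \frac{C_{\loss,3}}{2c_\obj}\twonorm{\cvest{i}(\lam)-\est(\lam)}^2$ --- before averaging a second-order expansion of the per-point losses exactly as you do; your descent-lemma bookkeeping is equivalent to the paper's Lagrange-remainder-plus-mean-value expansion and produces identical constants. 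The one step you elide, and the only place your writeup falls short of a complete argument, is the gradient growth needed in the second application: there $\varphi_2$ is the quadratic model $\widehat{\loss}_2(\Pcv{i},\cdot\,;\est(\lam)) + \lam\reg$, whose $c_\obj r^2$ gradient growth is \emph{not} directly supplied by \cref{proxcurvedobj}, which speaks only of the true objective $\obj(\Pcv{i},\cdot,\lam)$. This is exactly where the hypothesis $0\in\Lambda$ (which your proposal never uses) enters: \cref{proxcurvedobj} at $\lam=0$ makes $\loss(\Pcv{i},\cdot)$ itself $c_\obj$-strongly convex, hence $\Hess_\beta\loss(\Pcv{i},\est(\lam)) \succeq c_\obj\mathrm{I}_d$, so the quadratic model is $c_\obj$-strongly convex, and adding the convex $\reg$ preserves the $c_\obj r^2$ gradient growth --- a one-line repair, spelled out at the end of \cref{app:proofProxACVAssessment}, but necessary for the argument to close.
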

This result, proved in \cref{app:proofProxACVAssessment}, relies on the following modifications of \cref{curvedobj,HessobjLipschitz}: 
\begingroup
\setcounter{tmp}{\value{assumption}}
\setcounter{assumption}{\value{assumption}-4} 
\renewcommand\theassumption{\arabic{assumption}c}
 \begin{assumption}[Curvature of objective] 
 \label{proxcurvedobj} 
For $c_\obj>0$, 
 all $ i \in [n]$, and all $\lambda$ in a given $\Lambda \subseteq [0,\infty]$, 
 $\obj(\Pcv{i}, \cdot,\lambda)$ has $\nu_\obj(r) = c_\obj r^2$ gradient growth, and $\reg$ is convex.
\end{assumption}
\setcounter{assumption}{\value{assumption}+1} 
\renewcommand\theassumption{\arabic{assumption}c}
\begin{assumption}[Lipschitz Hessian of loss] \label{HesslossLipschitz}
For all $ i \in [n]$,
$
\Lip{}(\Hess_\beta \loss(\Pcv{i},\cdot)) \leq C_{\loss,3} < \infty$.
 \end{assumption}
 \endgroup
 \setcounter{assumption}{\thetmp}

 Hence, \ProxACV provides a faithful estimate of \CV for the non-smooth Lasso, elastic net, SLOPE, and nuclear norm penalties whenever a strongly convex loss with Lipschitz Hessian is used.

\subsubsection{Infinitesimal Jackknife}
We also propose the following approximation to \CV, 
\balignt
\ProxACV^{\text{IJ}}(\lambda) &= \frac{1}{n}\sum_{i=1}^n \loss (z_i, \proxacvijest{i}(\lambda))\label{eq:ACVproxij},
\ealignt
based on the infinitesimal jackknife-based estimators
\balignt\label{eq:proxacvij}
\proxacvijest{i}(\lambda) &\defeq {\bf \prox}_{\lambda \reg}^{ \acvhess{\loss}}( \est(\lambda) - \acvhess{\loss}^{-1}g_{\loss,i})
\ealignt
with $\acvhess{\loss} = \Hess_\beta \loss(\Pemp, \est(\lambda))$.
This approximation is sometimes computationally cheaper than \cref{eq:ACVprox} as the same Hessian is used for every estimator.
The following result, proved in \cref{app:proxacv-close-proxacvij}, shows that \ProxACV and $\ProxACVIJ$ are close under our usual assumptions.

\begin{theorem}[$\ProxACVIJ$-\ProxACV assessment error]\label{thm:proxacv-close-proxacvij}
If \cref{proxcurvedobj,gradlossboundmixed} hold for  $\Lambda \subseteq [0,\infty]$ and each  %
$(s,r) \in \{(1,2), (2,2),(3,2)\}$, then for each $\lambda \in \Lambda$, 
\balignt
&|\ProxACV(\lambda) - \ProxACV^{\text{\em IJ}}(\lambda)| \\
&\leq 
\frac{1}{n^2c_\obj^2} \gradlossboundmixed{1}{2}+  \frac{1}{2n^4c_\obj^4}\gradlossboundmixed{3}{2} + \frac{1}{n^3c_\obj^3} \gradlossboundmixed{2}{2}.
\ealignt
\end{theorem}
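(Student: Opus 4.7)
The plan is to reduce the $\ProxACV$-$\ProxACVIJ$ gap to a bound on the parameter distance $\|\proxacvest{i}(\lambda) - \proxacvijest{i}(\lambda)\|$ via a first-order Taylor expansion of $\loss(z_i, \cdot)$, and then control that parameter distance with two applications of the optimizer comparison lemma (\cref{optimizer-comparison}). Both proximal estimators are minimizers of objectives of the form $\varphi_H(\beta) \defeq \tfrac{1}{2}\|\est(\lambda) - \beta\|_H^2 + \beta^\top g_{\loss,i} + \lambda\pi(\beta)$ that differ only in their quadratic Hessian $H$, so the key input to \cref{optimizer-comparison} is the identity $\acvhess{\loss} - \acvhess{\loss,i} = \Hess_\beta\loss(z_i,\est(\lambda))/n$, which has operator norm at most $L_i/n$ for $L_i \defeq \Lip{}(\grad_\beta\loss(z_i,\cdot))$.

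First I would bound $\|\proxacvest{i}(\lambda) - \est(\lambda)\|$ by introducing an auxiliary objective $\varphi_0(\beta) \defeq \tfrac{1}{2}\|\est(\lambda) - \beta\|_{\acvhess{\loss,i}}^2 + \beta^\top \grad_\beta\loss(\Pemp,\est(\lambda)) + \lambda\pi(\beta)$; the full-data first-order optimality $-\grad_\beta\loss(\Pemp,\est(\lambda)) \in \lambda\partial\pi(\est(\lambda))$ shows $\est(\lambda)$ minimizes $\varphi_0$. Writing $\varphi_1 \defeq \varphi_{\acvhess{\loss,i}}$, we have $\grad_\beta(\varphi_1 - \varphi_0)(\beta) = g_{\loss,i} - \grad_\beta\loss(\Pemp,\est(\lambda)) = -\grad_\beta\loss(z_i,\est(\lambda))/n$, so invoking \cref{eq:opt_comp_growth} with the $c_\obj r^2$ gradient growth of $\varphi_1$ gives $\|\proxacvest{i}(\lambda) - \est(\lambda)\| \leq \|\grad_\beta\loss(z_i,\est(\lambda))\|/(n c_\obj)$, and the same bound holds for $\|\proxacvijest{i}(\lambda) - \est(\lambda)\|$ by the symmetric argument using $\varphi_2 \defeq \varphi_{\acvhess{\loss}}$. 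Next, \cref{eq:opt_comp_growth} applied to $\varphi_1$ and $\varphi_2$ with $\grad_\beta(\varphi_2 - \varphi_1)(\beta) = (\acvhess{\loss} - \acvhess{\loss,i})(\beta - \est(\lambda))$, together with the operator-norm bound above, yields $\|\proxacvest{i}(\lambda) - \proxacvijest{i}(\lambda)\| \leq L_i\|\grad_\beta\loss(z_i,\est(\lambda))\|/(n^2 c_\obj^2)$.

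For the second step, writing $\Delta_i \defeq \proxacvest{i}(\lambda) - \proxacvijest{i}(\lambda)$ and using $L_i$-Lipschitz continuity of $\grad_\beta\loss(z_i,\cdot)$, the standard Taylor bound gives
\balign
|\loss(z_i,\proxacvest{i}(\lambda)) - \loss(z_i,\proxacvijest{i}(\lambda))| \leq \|\grad_\beta\loss(z_i,\proxacvijest{i}(\lambda))\| \|\Delta_i\| + \tfrac{L_i}{2}\|\Delta_i\|^2.
\ealign
Applying Lipschitzness once more to bound $\|\grad_\beta\loss(z_i,\proxacvijest{i}(\lambda))\| \leq \|\grad_\beta\loss(z_i,\est(\lambda))\|(1 + L_i/(n c_\obj))$ and substituting the estimate on $\|\Delta_i\|$ produces three summands of the form $L_i^s\|\grad_\beta\loss(z_i,\est(\lambda))\|^2/(n^{s+2}c_\obj^{s+1})$ for $s \in \{1,2,3\}$. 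Averaging over $i$ identifies the resulting sums as $\gradlossboundmixed{s}{2}$ from \cref{gradlossboundmixed} and delivers the stated three-term bound.

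The main technical obstacle is verifying that the auxiliary objectives $\varphi_0, \varphi_1, \varphi_2$ admit $c_\obj r^2$ gradient growth, as required by \cref{optimizer-comparison}. Because \cref{proxcurvedobj} only assumes convexity of $\pi$, this strong convexity must come from the quadratic term, so one must show that the loss Hessians $\acvhess{\loss,i}$ and $\acvhess{\loss}$ at $\est(\lambda)$ inherit $c_\obj$-positive-definiteness from the gradient-growth hypothesis on $m(\Pcv{i},\cdot,\lambda)$, the non-negativity of the convex $\pi$'s contribution to the gradient-growth inequality forcing the loss alone to carry the required curvature. Once this reduction is in place, the remainder of the proof is a mechanical application of \cref{optimizer-comparison} and Lipschitz calculus.
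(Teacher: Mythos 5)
Your overall architecture coincides with the paper's: the paper likewise first bounds $\twonorm{\proxacvest{i}(\lambda)-\proxacvijest{i}(\lambda)}$ by comparing the two proximal objectives through \cref{optimizer-comparison} (packaged there as the Proximal Newton comparison, \cref{lem:proximal-comparison}), uses the companion proximity bound $\twonorm{\proxacvest{i}(\lambda)-\est(\lambda)}\leq \twonorm{\grad_\beta\loss(z_i,\est(\lambda))}/(nc_\obj)$, and then Taylor-expands the loss and averages into the moments $\gradlossboundmixed{s}{2}$; your constants match the paper's exactly, and your descent-lemma packaging of the Taylor step is if anything cleaner than the paper's Lagrange-remainder expansion. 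However, there is a genuine gap in your final paragraph, where you verify the curvature hypothesis. You claim that gradient growth of $\obj(\Pcv{i},\cdot,\lambda)=\loss(\Pcv{i},\cdot)+\lambda\reg$ plus convexity of $\reg$ forces the loss alone to carry the curvature, the ``non-negativity of the convex $\reg$'s contribution'' letting you discard it. The inequality runs the wrong way: monotonicity of $\partial\reg$ gives $\inner{y-x}{u_\reg-v_\reg}\geq 0$, a \emph{lower} bound on the penalty's contribution to the gradient-growth inequality, so it cannot be subtracted off --- the penalty may be the sole source of curvature. Concretely, take $\loss(\Pcv{i},\cdot)$ affine and $\reg=\twonorm{\cdot}^2$ (convex, as \cref{proxcurvedobj} permits): $\obj(\Pcv{i},\cdot,\lambda)$ has $2\lambda r^2$ gradient growth, yet $\acvhess{\loss,i}=0$ and $\acvhess{\loss}$ is singular, so $\acvhess{\loss}^{-1}g_{\loss,i}$ in \cref{eq:proxacvij} is not even defined; more generally, your $\varphi_1,\varphi_2$ need not have $c_\obj r^2$ growth, since the quadratic model $\half\twonorm{\cdot-\est(\lambda)}_{\acvhess{\loss,i}}^2$ retains only the loss curvature at the single point $\est(\lambda)$, not the global growth of $\obj$.

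What rescues the argument --- and what the paper actually uses --- is the hypothesis $0\in\Lambda$, which appears in the detailed appendix statement (\cref{thm:pacv-approximates-pacvij-full}) and in \cref{Thm:ProxACVAssessment}, though it is omitted from the main-text statement you were given. With $0\in\Lambda$, \cref{proxcurvedobj} applies at $\lambda=0$, where $\obj(\Pcv{i},\cdot,0)=\loss(\Pcv{i},\cdot)$, so the leave-one-out losses themselves have $c_\obj r^2$ gradient growth and hence are $c_\obj$-strongly convex; this yields $\acvhess{\loss,i}\psdge c_\obj\mathrm{I}_d$ directly, and $\acvhess{\loss}\psdge\frac{n}{n-1}c_\obj\mathrm{I}_d$ by concavity of the minimum eigenvalue and Jensen's inequality. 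Since $\lambda\reg$ is convex, each of your auxiliary objectives $\varphi_0,\varphi_1,\varphi_2$ then inherits $c_\obj r^2$ gradient growth from its quadratic term, and the remainder of your proof goes through verbatim.
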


\cref{thm:proxacv-close-proxacvij,Thm:ProxACVAssessment} imply that $|\ProxACV^{\text{IJ}}(\lambda) - \CV(\lambda)| = O(1/n^2)$ for any $\lam \in \Lam$, and subsequently, all assessment and selection guarantees for $\ProxACV$ in this paper also extend to $\ProxACVIJ$. %
\subsection{Model Selection}
The following theorem, proved in \cref{app:ProxACVSelection}, establishes a model selection guarantee for \ProxACV.  
 \begin{theorem}[\ProxACV-\CV selection error]\label{Thm:ProxACVSelection}
If \cref{gradlossboundmixed,,proxcurvedobj,HesslossLipschitz} hold for $\Lambda \subseteq [0,\infty]$ and each $(s,r) = \{(0,3),(1,2),(1,3), (1,4)\}$, then $\forall \lambda' < \lambda \in \Lambda$,
\balignt
&\twonorm{\est(\lambda) - \est(\lambda')}^2
    \leq \frac{2}{n c_\obj}( \frac{4\gradlossboundmixed{0}{2} }{c_\obj } 
+ \frac{\gradlossboundmixed{1}{2}}{nc_\obj^2} + \Delta\ProxACV) \\
&\text{and }
\label{eq:weak-risk-prox}
 \twonorm{\est(\lambda_{\bf PACV})  -  \est(\lambda_{\CV})}^2 
 \leq \frac{2}{n c_\obj}( \frac{4\gradlossboundmixed{0}{2} }{c_\obj } 
+ \frac{\gradlossboundmixed{1}{2}}{nc_\obj^2} + \tilde{A})
 \ealignt
for $\lambda_{\CV} \in$ $\text{\em argmin}_{\lambda \in \Lambda} \CV(\lambda)$, $\lambda_{\bf PACV} \in \text{\em argmin}_{\lambda \in \Lambda}$ $\ProxACV(\lambda)$ and $\tilde{A}\hspace{-.05cm} \defeq \hspace{-.075cm}
   \frac{C_{\loss,{3}}}{n^2} \big(\frac{\gradlossboundmixed{0}{3}}{c_\obj^3}\hspace{-.075cm} +\hspace{-.075cm} \frac{\gradlossboundmixed{1}{3}}{nc_\obj^{4}}\hspace{-.075cm} +\hspace{-.075cm} \frac{C_{\loss,{3}}\gradlossboundmixed{1}{4}}{4n^2c_\obj^6}\hspace{-.025cm}\big)$.
\end{theorem}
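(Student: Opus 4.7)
The plan is to mirror the proof of \cref{Thm:WeakCurvACV}, replacing the quadratic Newton expansion of \ACV by the proximal Newton step defining \ProxACV.  The backbone of the argument is a two-sided curvature inequality for the empirical objective $\obj(\Pemp, \cdot, \lambda)$ followed by a substitution of $\ProxACV$ for the empirical risk.  For the curvature step, I would start from $\obj(\Pemp, \beta, \lambda) = \obj(\Pcv{i}, \beta, \lambda) + \tfrac{1}{n}\loss(z_i, \beta)$ and argue that $\obj(\Pemp, \cdot, \lambda)$ inherits $c_\obj r^2$ gradient growth from \cref{proxcurvedobj} (since averaging and adding convex terms preserves gradient growth).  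The induced error bound applied at $\beta = \est(\lambda)$ under the objective indexed by $\lambda'$ (and symmetrically), combined with the optimality of $\est(\lambda)$ and $\est(\lambda')$ to cancel the unknown regularizer contribution, yields
\[
  2 c_\obj \twonorm{\est(\lambda)-\est(\lambda')}^2 \le \bigl(\loss(\Pemp, \est(\lambda')) - \loss(\Pemp, \est(\lambda))\bigr) + \bigl(\loss(\Pemp, \est(\lambda)) - \loss(\Pemp, \est(\lambda'))\bigr) \cdot (\text{suitable signed factors}),
\]
which collapses to a bound involving only the empirical risk gap on the right-hand side, up to the absorbed constants.

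Next, I would translate the empirical risk gap into a $\ProxACV$ gap.  For each $i$, I would Taylor-expand $\loss(z_i, \proxacvest{i}(\lambda))$ around $\est(\lambda)$ with Lagrange remainder controlled by \cref{HesslossLipschitz}, and use the non-expansiveness of $\prox_{\lambda \reg}^{\acvhess{\loss,i}}$ in the $\acvhess{\loss,i}$-metric together with \cref{proxcurvedobj} to get $\twonorm{\proxacvest{i}(\lambda) - \est(\lambda)} \le \twonorm{\acvhess{\loss,i}^{-1} g_{\loss,i}} \le \twonorm{g_{\loss,i}}/c_\obj$.  Combining these two bounds with \cref{gradlossboundmixed} at $(s,r)\in\{(0,2), (0,3), (1,2), (1,3),(1,4)\}$ produces a quantitative replacement inequality of the form
\[
  \bigl|\loss(\Pemp, \est(\lambda)) - \loss(\Pemp, \est(\lambda')) - (\ProxACV(\lambda) - \ProxACV(\lambda'))\bigr| \le \frac{4 \gradlossboundmixed{0}{2}}{c_\obj n} + \frac{\gradlossboundmixed{1}{2}}{n^2 c_\obj^2},
\]
which, when substituted into the Step-1 inequality and divided through by $n c_\obj$, produces the first stated bound.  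The selection inequality then follows by instantiating the first bound with $(\lambda, \lambda') = (\max(\lambda_\CV, \lambda_{\bf PACV}), \min(\lambda_\CV, \lambda_{\bf PACV}))$ and bounding $\Delta \ProxACV \le 2\tilde A$ via the telescoping argument $\ProxACV(\lambda_\CV) - \ProxACV(\lambda_{\bf PACV}) \le |\ProxACV(\lambda_\CV) - \CV(\lambda_\CV)| + |\CV(\lambda_{\bf PACV}) - \ProxACV(\lambda_{\bf PACV})|$, which invokes $\lambda_\CV \in \arg\min \CV$, $\lambda_{\bf PACV} \in \arg\min \ProxACV$, and the uniform assessment guarantee of \cref{Thm:ProxACVAssessment}.

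The main obstacle is the replacement step: unlike the smooth \ACV case \cref{eq:acvest}, $\proxacvest{i}(\lambda)$ is defined implicitly through a proximal minimization, so the first-order Taylor term $\grad\loss(z_i, \est(\lambda))^\top(\proxacvest{i}(\lambda) - \est(\lambda))$ cannot be analyzed algebraically.  One must instead bound $\twonorm{\proxacvest{i}(\lambda) - \est(\lambda)}$ uniformly in $\lambda$ using non-expansiveness of the prox map in the local Hessian metric, and then track constants (not merely orders) to recover the precise coefficients $\tfrac{4 \gradlossboundmixed{0}{2}}{c_\obj}$ and $\tfrac{\gradlossboundmixed{1}{2}}{c_\obj^2}$ appearing in the theorem's bound.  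Ensuring that the argument remains uniform in $\lambda$ despite the $\lambda$-dependence of $\acvhess{\loss,i}$, $g_{\loss,i}$, and $\est(\lambda)$ is where most of the care in the proof will have to be exercised.
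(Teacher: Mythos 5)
Your high-level plan coincides with the paper's proof (\cref{app:ProxACVSelection}): the paper also decomposes $\loss(\Pemp, \est(\lambda)) - \loss(\Pemp, \est(\lambda'))$ as $\ProxACV(\lambda) - \ProxACV(\lambda')$ plus a linearization error $\Delta T_1$ and two cross terms $\Delta T_2, \Delta T_3$; it converts the loss gap into $\twonorm{\est(\lambda)-\est(\lambda')}^2$ via the same scaled-objective cancellation you sketch (\cref{initbound}, applying \cref{optimizer-comparison} to $\varphi_1 = \tfrac{\lambda'}{\lambda}\obj(\Pcv{i},\cdot,\lambda)$ and $\varphi_2 = \obj(\Pcv{i},\cdot,\lambda')$ so the regularizers cancel); and it proves the selection claim by exactly your telescoping argument through the minimality of $\lambda_\CV$ and $\lambda_{\bf PACV}$ together with two applications of \cref{Thm:ProxACVAssessment} (\cref{lem:bound_PACV}). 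Your ``replacement inequality'' with error $\tfrac{4\gradlossboundmixed{0}{2}}{nc_\obj} + \tfrac{\gradlossboundmixed{1}{2}}{n^2c_\obj^2}$ is precisely the sum of the paper's bounds on $\Delta T_1, \Delta T_2, \Delta T_3$. (One bookkeeping slip: $\tilde{A}$ is already defined as twice the assessment bound of \cref{Thm:ProxACVAssessment}, so the telescoping yields $\Delta\ProxACV \le \tilde{A}$, not $2\tilde{A}$.)

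However, there is a genuine gap at the single most load-bearing step: your proximity bound $\twonorm{\proxacvest{i}(\lambda) - \est(\lambda)} \le \twonorm{\acvhess{\loss,i}^{-1} g_{\loss,i}} \le \twonorm{g_{\loss,i}}/c_\obj$ is both unjustified and quantitatively too weak. Non-expansiveness of $\prox_{\lambda\reg}^{\acvhess{\loss,i}}$ compares two prox \emph{outputs}, and as written you have not exhibited $\est(\lambda)$ as the prox of anything, so the first inequality does not follow. Worse, even if granted, it loses a factor of $n$: by first-order optimality of the full-data estimator, $g_{\loss,i} = \grad_\beta\loss(\Pcv{i},\est(\lambda)) = \grad_\beta\loss(\Pemp,\est(\lambda)) - \tfrac{1}{n}\grad_\beta\loss(z_i,\est(\lambda))$ with $\grad_\beta\loss(\Pemp,\est(\lambda)) \in -\lambda\,\partial\reg(\est(\lambda))$, so $\twonorm{g_{\loss,i}}$ is of order $\lambda\twonorm{\grad\reg(\est(\lambda))}$, i.e.\ $O(1)$, not $O(1/n)$. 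With an $O(1)$ displacement bound, $\Delta T_1, \Delta T_2, \Delta T_3$ are all $O(1)$ and the theorem's $\tfrac{4\gradlossboundmixed{0}{2}}{nc_\obj}$ and $\tfrac{\gradlossboundmixed{1}{2}}{n^2c_\obj^2}$ terms cannot be recovered. The fix is the paper's \cref{eq:proxacvest-est-proximity}: first-order optimality implies the fixed-point identity $\est(\lambda) = \prox_{\lambda \reg}^{\acvhess{\loss,i}}\big(\est(\lambda) - \acvhess{\loss,i}^{-1}\grad_\beta\loss(\Pemp, \est(\lambda))\big)$, after which non-expansiveness applied to the two prox \emph{inputs} gives $\twonorm{\proxacvest{i}(\lambda) - \est(\lambda)} \le \twonorm{\acvhess{\loss,i}^{-1}(\grad_\beta\loss(\Pemp,\est(\lambda)) - g_{\loss,i})} = \tfrac{1}{n}\twonorm{\acvhess{\loss,i}^{-1}\grad_\beta\loss(z_i,\est(\lambda))} \le \tfrac{1}{nc_\obj}\twonorm{\grad_\beta\loss(z_i,\est(\lambda))}$; the explicit $1/n$ and the per-datapoint gradient are exactly what feed the moment quantities $\gradlossboundmixed{s}{r}$ of \cref{gradlossboundmixed} (which are stated in terms of $\grad_\beta\loss(z_i,\est(\lambda))$, not $g_{\loss,i}$). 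Since you already name non-expansiveness as the intended tool, the repair is local, but without the fixed-point identification the proof as proposed does not go through.
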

Notably and unlike the \ACV selection results of \cref{Thm:WeakCurvACV,Thm:StrongCurvACV}, \cref{Thm:ProxACVSelection} demands no curvature or smoothness from the regularizer.
Moreover, for $\ell_1$-penalized problems, the $O(1/\sqrt{n})$ error bound is tight as the following example illustrates.

\begin{proposition} \label{prop:lack-curvature}
Suppose $\loss(z,\beta) = \frac{1}{2}(\beta - z)^2$ and $\pi(\beta) = |\beta|$.
Consider a dataset evenly split between the values $a=\sqrt{2}$ and $b=2\sqrt{2/n} - \sqrt{2}$ for $n\geq 4$.  Then $\bar{z} = \frac{1}{n}\sum_{i=1}^n z_i = \sqrt{2/n}$, and $\ProxACV(0) - \ProxACV(\bar{z}) = \frac{5}{2n^2}$, but $\est(0) - \est(\bar{z}) = \bar{z} - 0 = \sqrt{2/n}$.
\end{proposition}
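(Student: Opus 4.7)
The strategy is direct computation, exploiting the closed-form optimality conditions available for squared-error loss with $\ell_1$ regularization.
First I would identify the exact estimators. By strong convexity of the quadratic, $\est(0) = \bar z$. For $\est(\bar z)$, I would verify that $\beta = 0$ satisfies the subgradient condition $0 \in -\bar z + \bar z\, \partial|0| = [-2\bar z, 0]$, so $\est(\bar z) = 0$ and therefore $\est(0) - \est(\bar z) = \bar z = \sqrt{2/n}$, as claimed.

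Next I would compute the \ProxACV estimators via \cref{eq:proxacvest}. Since the loss is quadratic, $\acvhess{\loss,i} = (n-1)/n$ and $g_{\loss,i} = \frac{1}{n}\sum_{j\ne i}(\est(\lambda) - z_j) = (z_i - \bar z)/n$ when $\est(\lambda)=\bar z$. At $\lambda = 0$ the prox is the identity, so $\proxacvest{i}(0) = \bar z - (z_i - \bar z)/(n-1)$, and the corresponding loss is $(z_i - \proxacvest{i}(0))^2/2 = (z_i - \bar z)^2 n^2/(2(n-1)^2)$. Using $a + b = 2\bar z$, so that $(a-\bar z)^2 = (b-\bar z)^2 = (\sqrt 2 - \sqrt{2/n})^2 = 2(\sqrt n - 1)^2/n$, averaging over the $n/2$ copies of each value collapses $\ProxACV(0)$ to $n(\sqrt n - 1)^2/(n-1)^2$.

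For $\lambda = \bar z$, using $\est(\bar z) = 0$, the prox reduces to soft-thresholding by $n\bar z/(n-1)$ applied to $v_i = (n\bar z - z_i)/(n-1)$. The crucial check is the threshold comparison: for $z_i = a$, a direct computation gives $|v_a| - n\bar z/(n-1) = -\sqrt 2/(n-1) < 0$, so $\proxacvest{i}(\bar z) = 0$ and the loss at this point is $a^2/2 = 1$. For $z_i = b$ (which is nonpositive when $n \ge 4$), one has $v_b > 0$ and $|v_b| - n\bar z/(n-1) = -b/(n-1) \ge 0$, so $\proxacvest{i}(\bar z) = -b/(n-1)$ and the loss contribution is $b^2 n^2/(2(n-1)^2)$. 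Averaging, $\ProxACV(\bar z)$ simplifies to $\tfrac12 + n(\sqrt n - 2)^2/(2(n-1)^2)$. Subtracting the two closed forms over the common denominator $2(n-1)^2$ and expanding $(\sqrt n - 1)^2$, $(\sqrt n - 2)^2$, and $(n-1)^2$, all terms of order $n^2$, $n^{3/2}$, and $n$ cancel, leaving a constant numerator of order one. This yields the stated $O(1/n^2)$ gap in \ProxACV, in sharp contrast with the $\Theta(1/\sqrt n)$ gap in the estimators themselves.

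The main obstacle is the careful case analysis of the soft-thresholding step at $z_i = b$: the quantity $|v_b| - n\bar z/(n-1)$ works out to $\sqrt 2(\sqrt n - 2)/(\sqrt n(n-1))$, which vanishes exactly at $n=4$ and is strictly positive for $n > 4$, so the expressions above reduce to $\proxacvest{i}(\bar z) = 0$ in the boundary case and match $-b/(n-1)$ for $n>4$. Once this boundary is handled, the argument reduces to routine bookkeeping with the explicit expressions for $a$, $b$, and $\bar z$.
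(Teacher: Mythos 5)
Your setup and all of your intermediate closed forms are correct, and your route is essentially the same as the paper's in \cref{app:lack-curvature}: exploit the fact that for a quadratic loss the proximal Newton step \cref{eq:proxacvest} reproduces the exact leave-one-out soft-thresholding estimators, then evaluate $\ProxACV(0)$ and $\ProxACV(\bar z)$ in closed form via a case analysis at the threshold. You correctly get $\est(0)=\bar z$, $\est(\bar z)=0$, $\proxacvest{i}(0) = \tfrac{n\bar z - z_i}{n-1}$, $\proxacvest{i}(\bar z)=0$ for $z_i=a$ and $\proxacvest{i}(\bar z)=-b/(n-1)$ for $z_i=b$, and hence $\ProxACV(0) = \tfrac{n(\sqrt n -1)^2}{(n-1)^2}$ and $\ProxACV(\bar z) = \tfrac12 + \tfrac{n(\sqrt n -2)^2}{2(n-1)^2}$. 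You are in fact more careful than the paper on one point: you retain the $\tfrac{n}{n-1}$ normalization arising from $\Pcv{i} = \tfrac1n\sum_{j\neq i}\delta_{z_j}$, whereas the paper's appendix works with $\bar z_i = \bar z - z_i/n$ and silently drops this factor in its formula for $\cvest{i}(\lambda)$.

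The genuine gap is your final step, which you dismiss as ``routine bookkeeping'' and do not carry out: it does not deliver the stated value. From your own closed forms,
\begin{align}
\ProxACV(0) - \ProxACV(\bar z) = \frac{2n(\sqrt n - 1)^2 - (n-1)^2 - n(\sqrt n - 2)^2}{2(n-1)^2},
\end{align}
and expanding the numerator gives $(2n^2 - 4n^{3/2} + 2n) - (n^2 - 2n + 1) - (n^2 - 4n^{3/2} + 4n) = -1$, so the difference equals $-\tfrac{1}{2(n-1)^2}$: negative, and not $\tfrac{5}{2n^2}$. (A numerical check at $n=16$ confirms this: $\ProxACV(0)=0.64$, $\ProxACV(\bar z)=0.6422$, difference $-1/450$.) Thus your sentence ``this yields the stated $O(1/n^2)$ gap'' asserts what your arithmetic, if completed, refutes in its exact form. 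In fairness, the mismatch originates in the paper itself: besides the dropped $\tfrac{n}{n-1}$ factor, the appendix's final evaluation is inconsistent with its own penultimate line, since $\left(\tfrac{2}{n}+\tfrac{1}{n^2}\right)\cdot\tfrac{a^2}{2} - \left(1+\tfrac{2}{n}\right)\bar z^2 = -\tfrac{3}{n^2}$, not $\tfrac{5}{n^2}$. The qualitative conclusion the counterexample needs --- a $\Theta(1/\sqrt n)$ estimator gap alongside an $O(1/n^2)$ \ProxACV gap --- does follow from your computation, but as a proof of the proposition as literally stated your attempt falls short of the claimed identity, and the correct move would have been to carry out the subtraction and flag the discrepancy rather than assert the stated constant.
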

The proof of this proposition is contained in \cref{app:lack-curvature}. 
At the heart of this counterexample is multimodality, 
which can occur for $\ell_1$ penalized objectives (see \cref{fig:multimodal_lasso}), much as it did for the ridge example of \cref{fig:multimodal_ridge}. 
In particular, 
for $\ell_1$ regularized objectives,
the modes of $\ACV$ and \ProxACV 
can be $\Omega(1/\sqrt{n})$ apart.
While this example prevents us from obtaining an $O(1/n)$  deterministic model selection bound for \ProxACV in the worst case, it is possible that \cref{Thm:ProxACVSelection} can be generically strengthened (as in \cref{Thm:StrongCurvACV}) when the the minimizers of the loss and regularizer are sufficiently separated. 
In addition, the possibility of a strong \emph{probabilistic} model selection bound is not precluded. 
\section{Experiments}
\label{sec:experiments}

We present two sets of experiments to illustrate the value of the newly proposed \ProxACV procedure. The first compares the assessment quality of $\ProxACV$ and prior non-smooth \ACV proposals. The second compares the speed of $\ProxACV$ to exact $\CV$. See \url{https://github.com/aswilson07/ApproximateCV} for code reproducing all experiments.
\subsection{\ProxACV versus \ACV and $\ACVIJ$} \label{sec:logistic}
To compare \ProxACV with prior non-smooth extensions of \ACV and $\ACVIJ$,
we adopt the code and the $\ell_1$-regularized logistic regression experimental setup of \citet[App.\,F]{Stephenson2019sparse}. We use the $\beta\in\reals^{151}$ setting, changing only the number of datapoints to $n = 150$ and non-zero weights to $75$ (see \cref{app:exp_details} for more details).
For two ranges of $\lambda$ values, we compare exact \CV with the approximations provided by \ProxACV \cref{eq:proxacvest} and the prior non-smooth extensions of \ACV and $\ACVIJ$ discussed in \cref{sec:acv-failure} and detailed in \cref{app:exp_details}.
 \cref{fig:l1_logistic} (top) shows that for sufficiently large $\lambda$ all three approximations closely match $\CV$.  However, as noted in \citep[App.\,F]{Stephenson2019sparse}, the non-smooth extension of $\ACVIJ$ provides an extremely poor approximation leading to grossly incorrect model selection as $\lambda$ decreases.
 Moreover, the approximation provided by the non-smooth extension of \ACV also deteriorates as $\lambda$ decreases; this is especially evident in the small $\lambda$ range of \cref{fig:l1_logistic} (bottom), where the relative error of the \ACV approximation exceeds 100\%.  Meanwhile, \ProxACV provides a significantly more faithful approximation of \CV across the range of large and small $\lambda$ values. 

 \begin{figure}[!htbp]\centering
        {\includegraphics[width=.45\textwidth]{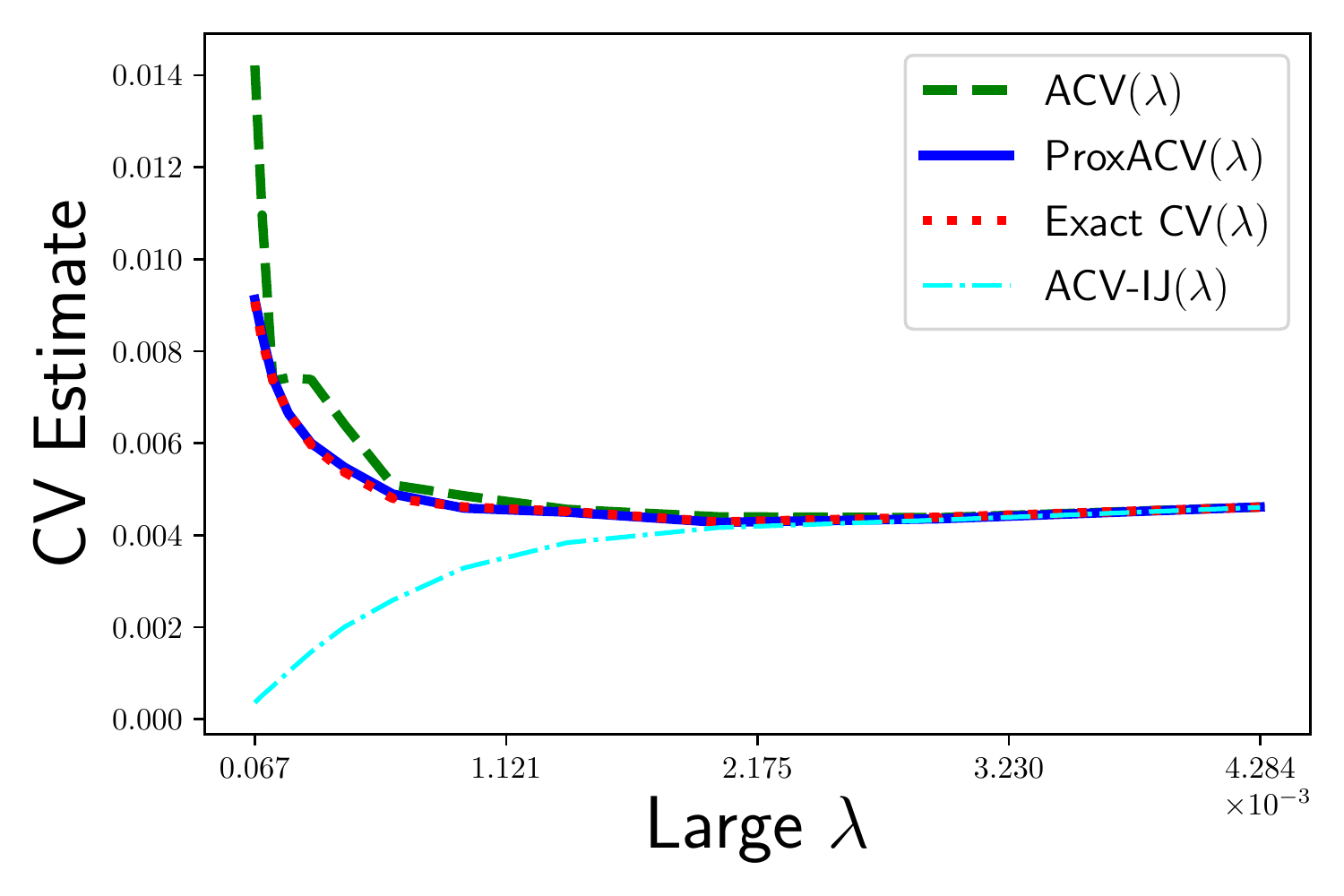}}
        {\includegraphics[width=.45\textwidth]{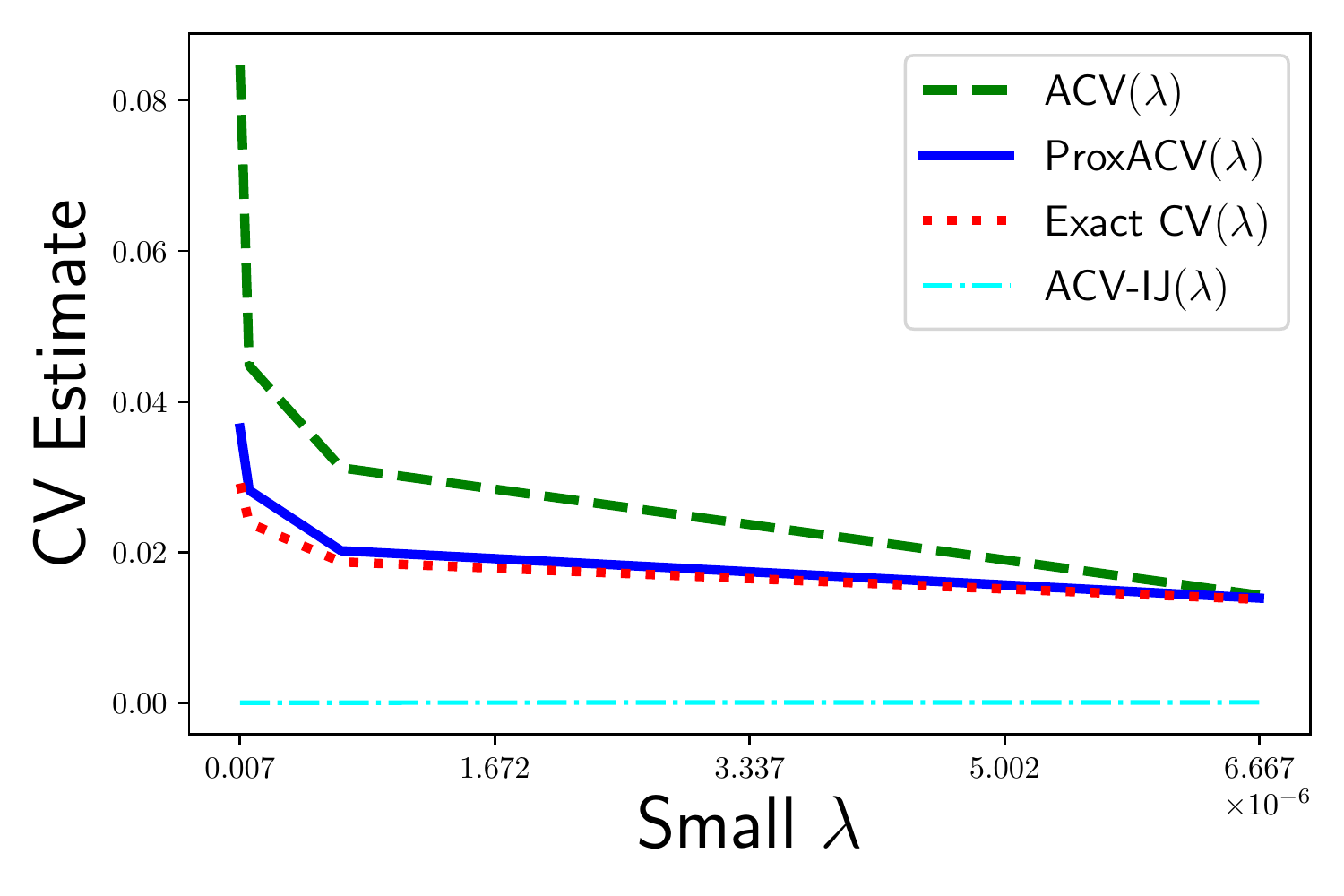}}
  \caption{{\bf \ProxACV vs. \ACV and $\ACVIJ$:} %
  Fidelity of non-smooth \CV approximations in the $\ell_1$-regularized logistic regression setup of \cref{sec:logistic}.
  }
  \label{fig:l1_logistic}
  \end{figure}

  \begin{figure}[!htbp]\centering
   \scalebox{.9}{
   \includegraphics[width=.25\textwidth]{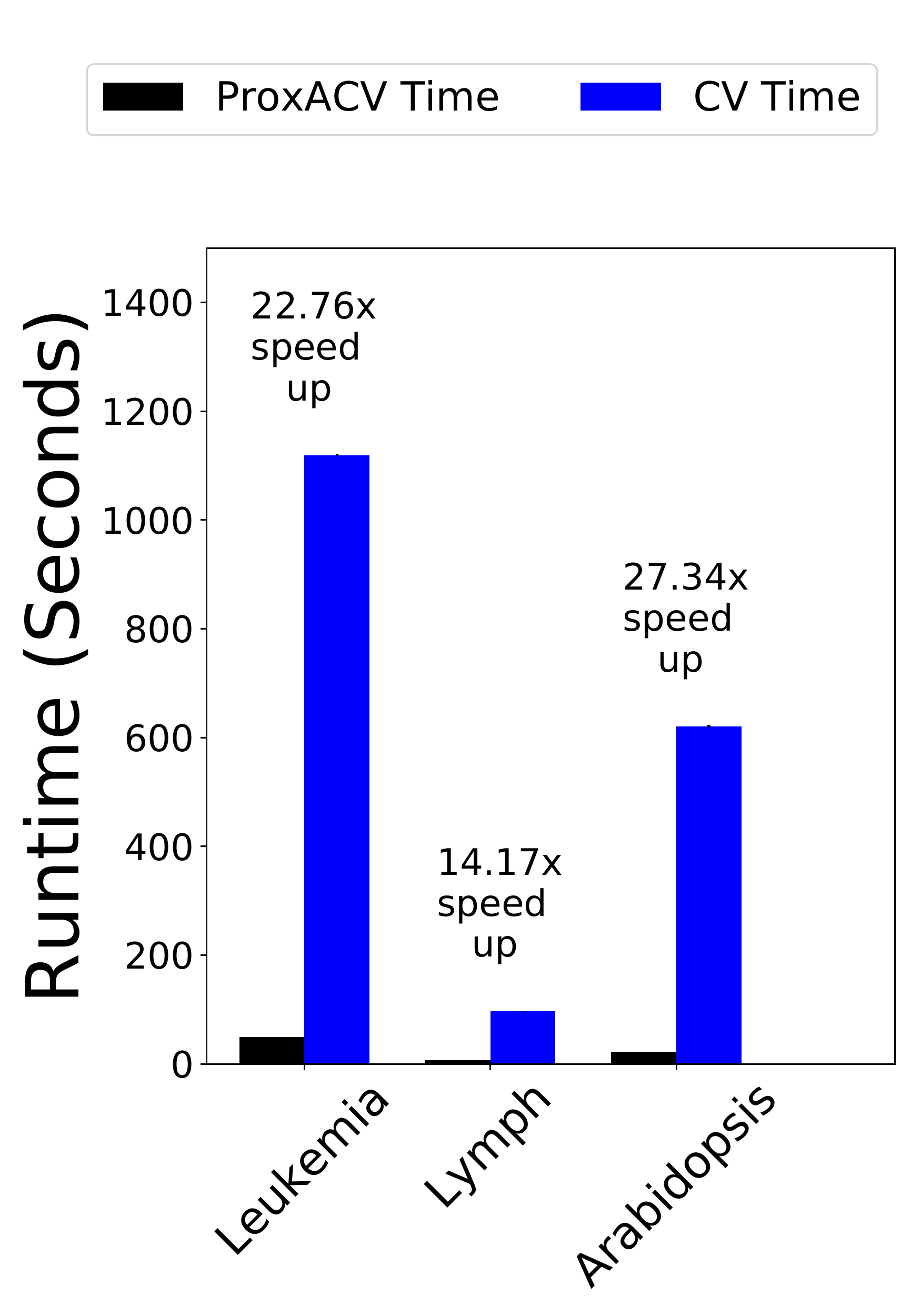}
   }
        \scalebox{.9}{
        \includegraphics[width=.25\textwidth]{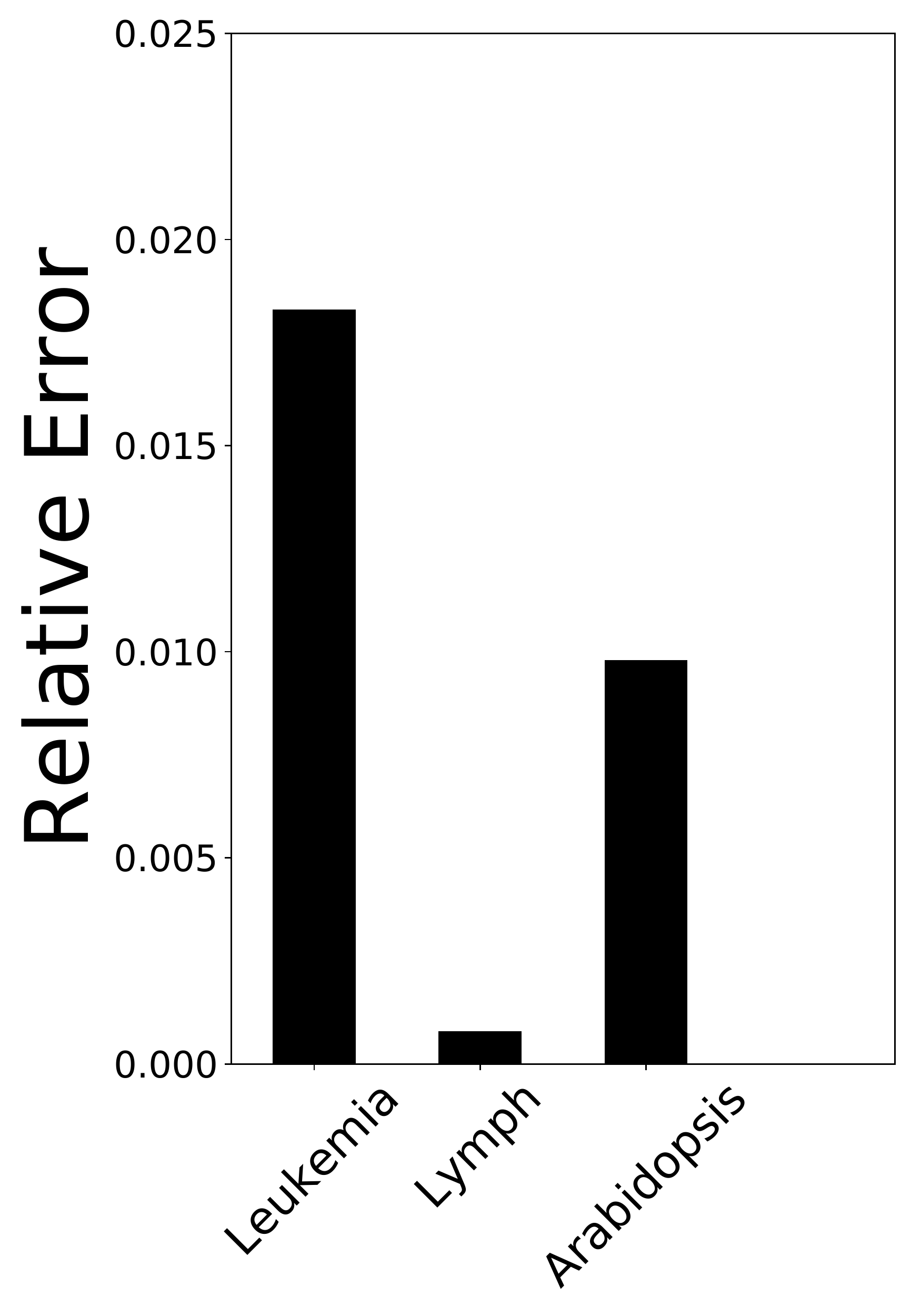}
        }
\caption{{\bf \ProxACV vs. \CV:} Speed-up and relative error of \ProxACV over exact \CV on three biological datasets using QUIC sparse inverse covariance estimation (see \cref{sec:speedup}). 
\ProxACV provides a faithful estimate of \CV with significant computational gains.} %
\label{fig:QUIC_plots}
  \end{figure}

\subsection{ProxACV Speed-up}\label{sec:speedup}

We next benchmark the speed-up of \ProxACV over \CV on the task of sparse inverse covariance estimation. 
using three biological data sets preprocessed by~\citet{Li2010}: Arabidopsis ($p = 834$, $n =118$), Leukemia ($p =1, 225$, $n = 72$), and Lymph ($p = 587$, $n= 148$). 
We employ the standard graphical Lasso objective for matrices $\beta \in \reals^{p\times p}$ (see \cref{app:exp_details} for details) and
compute our \CV and full-data estimators using the released Matlab implementation of the state-of-the-art graphical Lasso solver, QUIC~\citep{Hsieh14}.
Since QUIC optimizes $\obj(\Pcv{i}, \beta, \lambda)$ using a proximal Newton algorithm, we compute our proximal ACV estimators by running QUIC for a single proximal Newton step instead of running it to convergence.  We follow the exact experimental setup of \cite[Fig. 2]{Hsieh14} which employs a penalty of $\lam = 0.5$ for all datasets. 
The timing for each leave-one-out iteration of \CV and \ProxACV was computed using a single core on a 2.10 GHz Intel Xeon E5-4650 CPU.    
In \cref{fig:QUIC_plots}, we display the average relative error, $1  - \ProxACV(\lambda)/\CV(\lambda)$, and running time ($\pm$ 1 standard deviation) over 10 independent runs.
  We see that \ProxACV delivers  14 - 27-fold average speed-ups over \CV with relative errors below 0.02 in each case.
 \paragraph{Importance of curvature}
\cref{Thm:ProxACVAssessment} relies on the curvature $c_\obj$ of the objective, and, in general, such a curvature assumption is necessary for \ProxACV to provide a faithful approximation.
The graphical Lasso objective is strictly but not strongly convex, but the default $\lam$ choice of \citep{Hsieh14} effectively limits the domain of $\obj$ to a compact set with a sizable curvature. 
 However, as $\lam$ decreases, the effective domain of $\obj$ grows, and the curvature decays leading to a worse approximation. 
 For example, when $\lambda = 0.25$ on the Arabidopsis dataset, we obtain a 97.43-fold average speed-up but with $0.137$ mean relative error.

\subsubsection*{Acknowledgments}
We thank Kim-Chuan Toh, Matyas Sustik, and Cho-Jui Hsieh  for sharing their covariance estimation data and Will Stephenson for sharing his approximate cross-validation code.
We also thank the anonymous reviewers for their role in improving this manuscript.
Special thanks to Gary Chamberlain who inspired this project -- Rest in peace.
\bibliography{refs}

\begin{thebibliography}{32}
\providecommand{\natexlab}[1]{#1}
\providecommand{\url}[1]{\texttt{#1}}
\expandafter\ifx\csname urlstyle\endcsname\relax
  \providecommand{\doi}[1]{doi: #1}\else
  \providecommand{\doi}{doi: \begingroup \urlstyle{rm}\Url}\fi

\bibitem[Airola et~al.(2009)Airola, Pahikkala, Waegeman, Baets, and
  Salakoski]{pmlr-v8-airola10a}
A.~Airola, T.~Pahikkala, W.~Waegeman, B.~D. Baets, and T.~Salakoski.
\newblock A comparison of auc estimators in small-sample studies.
\newblock In S.~Dzeroski, P.~Guerts, and J.~Rousu, editors, \emph{Proceedings
  of the third International Workshop on Machine Learning in Systems Biology},
  volume~8 of \emph{Proceedings of Machine Learning Research}, pages 3--13,
  Ljubljana, Slovenia, 05--06 Sep 2009. PMLR.
\newblock URL \url{http://proceedings.mlr.press/v8/airola10a.html}.

\bibitem[Airola et~al.(2011)Airola, Pahikkala, Waegeman, Baets, and
  Salakoski]{AIROLA20111828}
A.~Airola, T.~Pahikkala, W.~Waegeman, B.~D. Baets, and T.~Salakoski.
\newblock An experimental comparison of cross-validation techniques for
  estimating the area under the roc curve.
\newblock \emph{Computational Statistics \& Data Analysis}, 55\penalty0
  (4):\penalty0 1828 -- 1844, 2011.
\newblock ISSN 0167-9473.
\newblock \doi{https://doi.org/10.1016/j.csda.2010.11.018}.
\newblock URL
  \url{http://www.sciencedirect.com/science/article/pii/S0167947310004469}.

\bibitem[Avella-Medina et~al.(2017)]{avella2017influence}
M.~Avella-Medina et~al.
\newblock Influence functions for penalized m-estimators.
\newblock \emph{Bernoulli}, 23\penalty0 (4B):\penalty0 3178--3196, 2017.

\bibitem[Beck and Teboulle(2009)]{FISTA}
A.~Beck and M.~Teboulle.
\newblock A fast iterative shrinkage-thresholding algorithm for linear inverse
  problems.
\newblock \emph{SIAM journal on imaging sciences}, 2\penalty0 (1):\penalty0
  183--202, 2009.

\bibitem[Beirami et~al.(2017)Beirami, Razaviyayn, Shahrampour, and
  Tarokh]{beirami2017optimal}
A.~Beirami, M.~Razaviyayn, S.~Shahrampour, and V.~Tarokh.
\newblock On optimal generalizability in parametric learning.
\newblock In \emph{Advances in Neural Information Processing Systems}, pages
  3455--3465, 2017.

\bibitem[Bogdan et~al.(2013)Bogdan, Berg, Su, and
  Candes]{bogdan2013statistical}
M.~Bogdan, E.~v.~d. Berg, W.~Su, and E.~Candes.
\newblock Statistical estimation and testing via the sorted l1 norm.
\newblock \emph{arXiv preprint arXiv:1310.1969}, 2013.

\bibitem[Bolte et~al.(2017)Bolte, Nguyen, Peypouquet, and
  Suter]{bolte2017errorbound}
J.~Bolte, T.-P. Nguyen, J.~Peypouquet, and B.~W. Suter.
\newblock From error bounds to the complexity of first-order descent methods
  for convex functions.
\newblock \emph{Mathematical Programming}, 165\penalty0 (2):\penalty0 471Ð507,
  2017.

\bibitem[Debruyne et~al.(2008)Debruyne, Hubert, and Suykens]{debruyne2008model}
M.~Debruyne, M.~Hubert, and J.~A. Suykens.
\newblock Model selection in kernel based regression using the influence
  function.
\newblock \emph{Journal of Machine Learning Research}, 9\penalty0
  (Oct):\penalty0 2377--2400, 2008.

\bibitem[Efron(1982)]{efron1982jackknife}
B.~Efron.
\newblock \emph{The jackknife, the bootstrap, and other resampling plans},
  volume~38.
\newblock Siam, 1982.

\bibitem[Fazel et~al.(2001)Fazel, Hindi, and Boyd]{fazel2001rank}
M.~Fazel, H.~Hindi, and S.~P. Boyd.
\newblock A rank minimization heuristic with application to minimum order
  system approximation.
\newblock In \emph{Proceedings of the 2001 American Control Conference.(Cat.
  No. 01CH37148)}, volume~6, pages 4734--4739. IEEE, 2001.

\bibitem[Friedman et~al.(2010)Friedman, Hastie, and Tibshirani]{glmnet}
J.~Friedman, T.~Hastie, and R.~Tibshirani.
\newblock Regularization paths for generalized linear models via coordinate
  descent.
\newblock \emph{Journal of Statistical Software}, 33\penalty0 (1):\penalty0
  1--22, 2010.
\newblock URL \url{http://www.jstatsoft.org/v33/i01/}.

\bibitem[Geisser(1975)]{geisser1975predictive}
S.~Geisser.
\newblock The predictive sample reuse method with applications.
\newblock \emph{Journal of the American statistical Association}, 70\penalty0
  (350):\penalty0 320--328, 1975.

\bibitem[Giordano et~al.(2019{\natexlab{a}})Giordano, Jordan, and
  Broderick]{giordano2019higher}
R.~Giordano, M.~I. Jordan, and T.~Broderick.
\newblock A higher-order swiss army infinitesimal jackknife.
\newblock \emph{arXiv preprint arXiv:1907.12116}, 2019{\natexlab{a}}.

\bibitem[Giordano et~al.(2019{\natexlab{b}})Giordano, Stephenson, Liu, Jordan,
  and Broderick]{giordano2018return}
R.~Giordano, W.~Stephenson, R.~Liu, M.~Jordan, and T.~Broderick.
\newblock A swiss army infinitesimal jackknife.
\newblock In K.~Chaudhuri and M.~Sugiyama, editors, \emph{Proceedings of
  Machine Learning Research}, volume~89 of \emph{Proceedings of Machine
  Learning Research}, pages 1139--1147, 16--18 Apr 2019{\natexlab{b}}.

\bibitem[Hartley and Zisserman(2004)]{HartleyZi2004}
R.~Hartley and A.~Zisserman.
\newblock \emph{Multiple View Geometry in Computer Vision}.
\newblock Cambridge University Press, ISBN: 0521540518, second edition, 2004.

\bibitem[Hsieh et~al.(2014)Hsieh, Sustik, Dhillon, and Ravikumar]{Hsieh14}
C.~Hsieh, M.~Sustik, I.~Dhillon, and P.~Ravikumar.
\newblock Quic: Quadratic approximation for sparse inverse covariance
  estimation.
\newblock \emph{Journal of Machine Learning Research}, 15:\penalty0 2911--2947,
  10 2014.
\newblock ISSN 1532-4435.

\bibitem[Jaeckel(1972)]{jaeckel1972infinitesimal}
L.~Jaeckel.
\newblock The infinitesimal jackknife, memorandum.
\newblock Technical report, MM 72-1215-11, Bell Lab. Murray Hill, NJ, 1972.

\bibitem[Karimi et~al.(2016)Karimi, Nutini, and Schmidt]{Karimi2016PL}
H.~Karimi, J.~Nutini, and M.~W. Schmidt.
\newblock Linear convergence of gradient and proximal-gradient methods under
  the {Polyak-{\L}ojasiewicz} condition.
\newblock In \emph{Machine Learning and Knowledge Discovery in Databases -
  European Conference, {ECML} {PKDD} 2016, Riva del Garda, Italy, September
  19-23, 2016, Proceedings, Part {I}}, pages 795--811, 2016.

\bibitem[Lee et~al.(2014)Lee, Sun, and Saunders]{lee2014proximal}
J.~D. Lee, Y.~Sun, and M.~A. Saunders.
\newblock Proximal newton-type methods for minimizing composite functions.
\newblock \emph{SIAM Journal on Optimization}, 24\penalty0 (3):\penalty0
  1420--1443, 2014.

\bibitem[Li and Toh(2010)]{Li2010}
L.~Li and K.-C. Toh.
\newblock An inexact interior point method for l1-regularized sparse covariance
  selection.
\newblock \emph{Mathematical Programming Computation}, 2\penalty0 (3):\penalty0
  291--315, Dec 2010.

\bibitem[Liu et~al.(2014)Liu, Jiang, and Liao]{pmlr-v32-liua14}
Y.~Liu, S.~Jiang, and S.~Liao.
\newblock Efficient approximation of cross-validation for kernel methods using
  bouligand influence function.
\newblock In E.~P. Xing and T.~Jebara, editors, \emph{Proceedings of the 31st
  International Conference on Machine Learning}, volume~32 of \emph{Proceedings
  of Machine Learning Research}, pages 324--332, Bejing, China, 22--24 Jun
  2014. PMLR.
\newblock URL \url{http://proceedings.mlr.press/v32/liua14.html}.

\bibitem[Liu et~al.(2018)Liu, Lin, Ding, Wang, and Liao]{liu2018fast}
Y.~Liu, H.~Lin, L.~Ding, W.~Wang, and S.~Liao.
\newblock Fast cross-validation.
\newblock In \emph{Proceedings of the Twenty-Seventh International Joint
  Conference on Artificial Intelligence, {IJCAI-18}}, pages 2497--2503.
  International Joint Conferences on Artificial Intelligence Organization, 7
  2018.

\bibitem[Nesterov(2008)]{Nesterov08}
Y.~Nesterov.
\newblock Accelerating the cubic regularization of {N}ewton's method on convex
  problems.
\newblock \emph{Mathematical Programming}, 112\penalty0 (1):\penalty0 159--181,
  2008.
\newblock ISSN 0025-5610.

\bibitem[Nesterov(2019)]{Nesterov2019}
Y.~Nesterov.
\newblock Implementable tensor methods in unconstrained convex optimization.
\newblock \emph{Mathematical Programming}, pages 1--27, 2019.

\bibitem[Obuchi and Kabashima(2016)]{ObuchiCV2016}
T.~Obuchi and Y.~Kabashima.
\newblock Cross validation in lasso and its acceleration.
\newblock \emph{Journal of Statistical Mechanics}, 2016.

\bibitem[Obuchi and Kabashima(2018)]{ObuchiCV2018}
T.~Obuchi and Y.~Kabashima.
\newblock Accelerating cross-validation in multinomial logistic regression with
  l1-regularization.
\newblock \emph{Journal of Machine Learning Research}, 2018.

\bibitem[Rad and Maleki(2019)]{rad2019scalable}
K.~R. Rad and A.~Maleki.
\newblock A scalable estimate of the out-of-sample prediction error via
  approximate leave-one-out.
\newblock \emph{arXiv preprint arXiv:1801.10243}, 2019.

\bibitem[Rockafellar(1970)]{Rockafellar1970ConvexAnalysis}
R.~T. Rockafellar.
\newblock \emph{Convex analysis}.
\newblock Princeton Mathematical Series, No. 28. Princeton University Press,
  Princeton, N.J., 1970.

\bibitem[Stephenson and Broderick(2019)]{Stephenson2019sparse}
W.~T. Stephenson and T.~Broderick.
\newblock Sparse approximate cross-validation for high-dimensional glms.
\newblock \emph{arXiv preprint arXiv:1905.13657}, 2019.

\bibitem[Stone(1974)]{Stone1974}
M.~Stone.
\newblock Cross-validatory choice and assessment of statistical predictions.
\newblock \emph{Journal of the Royal Statistical Society. Series B
  (Methodological)}, 36\penalty0 (2):\penalty0 111--147, 1974.

\bibitem[Wang et~al.(2018)Wang, Zhou, Lu, Maleki, and Mirrokni]{wang2018}
S.~Wang, W.~Zhou, H.~Lu, A.~Maleki, and V.~Mirrokni.
\newblock Approximate leave-one-out for fast parameter tuning in high
  dimensions.
\newblock In J.~Dy and A.~Krause, editors, \emph{Proceedings of the 35th
  International Conference on Machine Learning}, volume~80 of \emph{Proceedings
  of Machine Learning Research}, pages 5228--5237, Stockholmsmässan, Stockholm
  Sweden, 10--15 Jul 2018. PMLR.

\bibitem[Zou and Hastie(2005)]{zou2005regularization}
H.~Zou and T.~Hastie.
\newblock Regularization and variable selection via the elastic net.
\newblock \emph{Journal of the royal statistical society: series B (statistical
  methodology)}, 67\penalty0 (2):\penalty0 301--320, 2005.

\end{thebibliography}
\bibliographystyle{abbrvnat}

\appendix \onecolumn

\section{Proof of \cref{optimizer-comparison}: Optimizer comparison}\label{App:opt-comp}
The first claim \cref{eq:opt_comp_error_bound} follows immediately from the definition of the error bound \cref{eq:error_bound}.

To establish the second claim, we note that our (sub)differentiability assumptions and the optimality of $x_{\varphi_1}$ and $x_{\varphi_2}$ imply that 
$0 \in \partial \varphi_2(x_{\varphi_2})$
and $0 = u + \grad (\varphi_1-\varphi_2)(x_{\varphi_1})$
for some $u \in \partial\varphi_2(x_{\varphi_1})$. Gradient growth \cref{eq:gradient_growth} now implies
\balignt
\nu_{\varphi_2}(\twonorm{x_{\varphi_1} - x_{\varphi_2} })
    \leq 
    \inner{x_{\varphi_1} - x_{\varphi_2}}{u - 0}
    = 
    \inner{x_{\varphi_1} - x_{\varphi_2}}{\grad (\varphi_2- \varphi_1)(x_{\varphi_1})}.
\ealignt
\section{Proof of \cref{thm:acv-approximates-cv,thm:acv-approximates-cv-ho}:
$\ACV$-\CV and $\ACV_p$-\CV assessment error} \label{App:higher-order-assess}
\cref{thm:acv-approximates-cv,thm:acv-approximates-cv-ho} will follow from the following more detailed statement, proved in \cref{sec:proof-acv-approximates-cv-HO-full}.
Consider the higher-order gradient estimator 
\balignt
\HOACV(\lambda) &\defeq \frac{1}{n} \sum_{i=1}^n \loss(z_i, \hoacvest{i}(\lambda)) \qtext{with}\hoacvest{i}(\lambda) \defeq \argmin_\beta \widehat{\obj}_p(\Pcv{i},\beta, \lambda;\est(\lambda)),
\ealignt
which recovers our approximate CV error~\eqref{eq:acv1} and estimate~\eqref{eq:acvest} when $p=2$. 
We will make use of the following assumptions which generalize \cref{curvedobj,hocurvedobj}.
\begingroup
\setcounter{tmp}{\value{assumption}}
\setcounter{assumption}{\value{assumption}-4} 
\renewcommand\theassumption{\arabic{assumption}d}
\begin{assumption}[Curvature of objective] 
\label{hocurvedobjfull} 
For some $q,  c_\obj> 0$, all $i \in [n]$, and all $\lam$ in a given $\Lambda \subseteq [0,\infty]$,
$\obj(\Pcv{i},\cdot, \lam)$ has $\nu_\obj(r) = c_\obj r^q$ gradient growth
\end{assumption}
\setcounter{assumption}{\value{assumption}-1} 
\renewcommand\theassumption{\arabic{assumption}e}
\begin{assumption}[Curvature of Taylor approximation] 
\label{hocurvedtaylorfull} 
For some $p, q,c_\loss,c_\reg > 0$ and $\lam_\reg <\infty$, all $i \in [n]$, and all $\lam$ in a given $\Lambda \subseteq [0,\infty]$,
 $\widehat{\obj}_p(\Pcv{i},\cdot,\lambda;\est(\lambda))$   has $\nu(r) = c_{\lam,\lam} r^q$ gradient growth, where $c_{\lam,\lam}\defeq c_\loss + \lam c_\reg\indic{\lam \geq \lam_\reg}$.
\end{assumption}
\setcounter{assumption}{\value{assumption}-1} 
\renewcommand\theassumption{\arabic{assumption}f}
\begin{assumption}[Curvature of regularized Taylor approximation] 
\label{rhocurvedtaylorfull} 
For some $p, q,c_\loss,c_\reg > 0$ and $\lam_\reg <\infty$, all $i \in [n]$, and all $\lam$ in a given $\Lambda \subseteq [0,\infty]$,
 $\widehat{\obj}_p(\Pcv{i},\cdot,\lambda;\est(\lambda))+ \frac{\Lip{}(\grad_\beta^p \obj(\Pcv{i},\cdot, \lambda))}{p+1}\twonorm{\cdot - \est(\lambda)}^{p+1}$   has $\nu(r) = c_{\lam,\lam} r^q$ gradient growth, where $c_{\lam,\lam}\defeq c_\loss + \lam c_\reg\indic{\lam \geq \lam_\reg}$. %
\end{assumption}
\endgroup
\setcounter{assumption}{\thetmp} %

\begin{theorem}[$\ACV_p$-\CV and $\HOACV$-\CV assessment error]\label{thm:acv-approximates-cv-HO-full}
If \cref{hocurvedobjfull} holds for some $\Lambda\subseteq[0,\infty]$, then, for all $\lambda \in \Lambda$ and $i \in [n]$,
\balignt \label{eq:ho_curve_bound_1}
\twonorm{\est(\lambda) - \cvest{i}(\lambda)}^{q-1} \leq \frac{1}{n} \frac{1}{c_\obj}\twonorm{\grad_\beta \loss(z_i, \est(\lambda))}.
\ealignt

If \cref{hocurvedobjfull,hocurvedtaylorfull,LipschitzpObj} hold for some $\Lambda\subseteq[0,\infty]$, then, for all $\lam \in\Lam$ and $i\in[n]$, 
 \begin{subequations}
 \balignt
\twonorm{\hoacvest{i}(\lam) - \cvest{i}(\lambda)}^{q-1}
    &\leq 
       \kappa_{p,\lambda}^\lambda \twonorm{\cvest{i}(\lambda)- \est(\lambda) }^p
          \label{eq:acv-cv-est-ho-bound1}
\ealignt
for $\kappa_{p,\lambda}^\lambda \defeq  \frac{C_{\loss,{p+1}} + \lambda C_{\pi,{p+1}}}{p!(c_\loss + \lam c_\reg \indic{\lam \geq \lambda_\reg})}$.

If \cref{hocurvedobjfull,rhocurvedtaylorfull,LipschitzpObj} hold for some $\Lambda\subseteq[0,\infty]$, then, for all $\lam \in\Lam$ and $i\in[n]$, 
\balignt
\twonorm{\rhoacvest{i}(\lam) - \cvest{i}(\lambda)}^{q-1}
    &\leq 
       2\kappa_{p,\lambda}^\lambda \twonorm{\cvest{i}(\lambda)- \est(\lambda) }^p .
          \label{eq:acv-cv-est-rho-bound1}
        \ealignt
        \end{subequations}
 If~\cref{hocurvedobjfull,hocurvedtaylorfull,gradlossboundmixed,LipschitzpObj} hold for some $\Lambda\subseteq[0,\infty]$ and each $(s,r) \in \{(0,\frac{p+(q-1)^2}{(q-1)^2}), (1,\frac{2p}{(q-1)^2}), (1,\frac{p+q-1}{(q-1)^2})\}$, then, for all $\lam \in \Lam$,
  \begin{subequations}
\balignt
&|\HOACV(\lambda) - \CV(\lambda)| \\
	&\leq  
		\frac{1}{n^{\frac{p}{(q-1)^2}}} \frac{(\kappa_{p,\lambda}^\lambda)^{\frac{1}{q-1}}}{c_\obj^{\frac{p}{(q-1)^2}}} \gradlossboundmixed{0}{\frac{p+(q-1)^2}{(q-1)^2}}  
		+ \half\frac{1}{n^{\frac{2p}{(q-1)^2}}} \frac{(\kappa_{p,\lambda}^\lambda)^{\frac{2}{q-1}}}{c_m^{\frac{2p}{(q-1)^2}}} \gradlossboundmixed{1}{\frac{2p}{(q-1)^2}} 
		+ \frac{1}{n^{\frac{p+q-1}{(q-1)^2}}}\frac{(\kappa_{p,\lambda}^\lambda)^{\frac{1}{q-1}}}{c_\obj^{\frac{p+q-1}{(q-1)^2}}}  \gradlossboundmixed{1}{\frac{p+q-1}{(q-1)^2}}
	\qtext{and} 
	\label{eq:acv-cv-ho-bound2_1} 
	\ealignt
If~\cref{hocurvedobjfull,rhocurvedtaylorfull,gradlossboundmixed,LipschitzpObj} hold for some $\Lambda\subseteq[0,\infty]$ and each $(s,r) \in \{(0,\frac{p+(q-1)^2}{(q-1)^2}), (1,\frac{2p}{(q-1)^2}), (1,\frac{p+q-1}{(q-1)^2})\}$, then, for all $\lam \in \Lam$,
	\balignt
&|\ACV_p(\lambda) - \CV(\lambda)| \\
	&\leq 
		\frac{1}{n^{\frac{p}{(q-1)^2}}} \frac{(2\kappa_{p,\lambda}^\lambda)^{\frac{1}{q-1}}}{c_\obj^{\frac{p}{(q-1)^2}}} \gradlossboundmixed{0}{\frac{p+(q-1)^2}{(q-1)^2}}  
		+ \half\frac{1}{n^{\frac{2p}{(q-1)^2}}} \frac{(2\kappa_{p,\lambda}^\lambda)^{\frac{2}{q-1}}}{c_m^{\frac{2p}{(q-1)^2}}} \gradlossboundmixed{1}{\frac{2p}{(q-1)^2}} 
		+ \frac{1}{n^{\frac{p+q-1}{(q-1)^2}}}\frac{(2\kappa_{p,\lambda}^\lambda)^{\frac{1}{q-1}}}{c_\obj^{\frac{p+q-1}{(q-1)^2}}}  \gradlossboundmixed{1}{\frac{p+q-1}{(q-1)^2}}.
	\label{eq:acv-cv-ho-bound2}
\ealignt
\end{subequations}
\end{theorem}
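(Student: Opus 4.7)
The plan is to prove the four types of bounds in order, since each one feeds into the next, and all four are obtained by invoking the optimizer comparison lemma (\cref{optimizer-comparison}) with suitably chosen $\varphi_1, \varphi_2$.

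\emph{Step 1: Bound $\|\est(\lambda) - \cvest{i}(\lambda)\|$ (Eq.~\cref{eq:ho_curve_bound_1}).} Apply the gradient growth variant \cref{eq:opt_comp_growth} of \cref{optimizer-comparison} to $\varphi_1 = \obj(\Pcv{i},\cdot,\lambda)$ (minimizer $\cvest{i}(\lambda)$) and $\varphi_2 = \obj(\Pemp,\cdot,\lambda)$ (minimizer $\est(\lambda)$). By construction, $\varphi_2 - \varphi_1 = \frac{1}{n}\loss(z_i,\cdot)$, which is differentiable; using the gradient growth of $\varphi_1$ from \cref{hocurvedobjfull} and Cauchy--Schwarz on the right-hand side of \cref{eq:opt_comp_growth} yields $c_\obj \twonorm{\est(\lambda)-\cvest{i}(\lambda)}^{q} \le \tfrac{1}{n}\twonorm{\est(\lambda)-\cvest{i}(\lambda)}\cdot \twonorm{\grad_\beta\loss(z_i,\est(\lambda))}$, and dividing by $\twonorm{\est(\lambda)-\cvest{i}(\lambda)}$ gives the claim.

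\emph{Step 2: Bound $\|\hoacvest{i}(\lambda) - \cvest{i}(\lambda)\|$ and $\|\rhoacvest{i}(\lambda) - \cvest{i}(\lambda)\|$ (Eqs.~\cref{eq:acv-cv-est-ho-bound1,eq:acv-cv-est-rho-bound1}).} For \cref{eq:acv-cv-est-ho-bound1}, apply \cref{eq:opt_comp_growth} to $\varphi_1 = \obj(\Pcv{i},\cdot,\lambda)$ and $\varphi_2 = \widehat{\obj}_p(\Pcv{i},\cdot,\lambda;\est(\lambda))$, so that their minimizers are $\cvest{i}(\lambda)$ and $\hoacvest{i}(\lambda)$. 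I would use the gradient growth of $\varphi_2$ guaranteed by \cref{hocurvedtaylorfull}. Since $\nabla \widehat{\obj}_p(\Pcv{i},\cdot,\lambda;\est(\lambda))$ is the $(p-1)$-st order Taylor expansion of $\nabla \obj(\Pcv{i},\cdot,\lambda)$ around $\est(\lambda)$, the standard Taylor remainder bound combined with \cref{LipschitzpObj} yields
\begin{align*}
\twonorm{\grad(\varphi_2-\varphi_1)(\cvest{i}(\lambda))}
\le \tfrac{\Lip{}(\grad_\beta^p\obj(\Pcv{i},\cdot,\lambda))}{p!}\twonorm{\cvest{i}(\lambda)-\est(\lambda)}^p
\le \tfrac{C_{\loss,p+1}+\lambda C_{\reg,p+1}}{p!}\twonorm{\cvest{i}(\lambda)-\est(\lambda)}^p.
\end{align*}
Cauchy--Schwarz, dividing by $\twonorm{\hoacvest{i}(\lambda)-\cvest{i}(\lambda)}$, and invoking the definition of $\kappa_{p,\lambda}^\lambda$ gives \cref{eq:acv-cv-est-ho-bound1}. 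For \cref{eq:acv-cv-est-rho-bound1}, the regularized $\varphi_2$ has an additional term whose gradient at $\cvest{i}$ is $\Lip{}(\grad_\beta^p\obj)\twonorm{\cvest{i}-\est(\lambda)}^{p-1}(\cvest{i}-\est(\lambda))$, of norm at most $\Lip{}(\grad_\beta^p\obj)\twonorm{\cvest{i}-\est(\lambda)}^p$; adding this to the Taylor remainder and using $1 + 1/p! \le 2$ produces the factor of $2$.

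\emph{Step 3: Assemble the assessment bounds (Eqs.~\cref{eq:acv-cv-ho-bound2_1,eq:acv-cv-ho-bound2}).} Write $|\HOACV(\lambda)-\CV(\lambda)| \le \frac{1}{n}\sum_i |\loss(z_i,\hoacvest{i}(\lambda)) - \loss(z_i,\cvest{i}(\lambda))|$ and bound each summand by a first-order Taylor expansion of $\loss(z_i,\cdot)$ around $\cvest{i}(\lambda)$:
\begin{align*}
|\loss(z_i,\hoacvest{i})-\loss(z_i,\cvest{i})| \le \twonorm{\grad_\beta\loss(z_i,\cvest{i})}\twonorm{\hoacvest{i}-\cvest{i}} + \tfrac{\Lip{}(\grad_\beta\loss(z_i,\cdot))}{2}\twonorm{\hoacvest{i}-\cvest{i}}^2.
\end{align*}
Bound $\twonorm{\grad_\beta\loss(z_i,\cvest{i})} \le \twonorm{\grad_\beta\loss(z_i,\est(\lambda))}+\Lip{}(\grad_\beta\loss(z_i,\cdot))\twonorm{\cvest{i}-\est(\lambda)}$, then substitute the estimator bounds \cref{eq:ho_curve_bound_1,eq:acv-cv-est-ho-bound1} to express everything in terms of $\twonorm{\grad_\beta\loss(z_i,\est(\lambda))}$, $\Lip{}(\grad_\beta\loss(z_i,\cdot))$, and powers of $n$ and $c_\obj$. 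Summing over $i$, the three resulting terms have $(s,r)$ indices equal to $(0,\tfrac{p+(q-1)^2}{(q-1)^2})$, $(1,\tfrac{p+q-1}{(q-1)^2})$, and $(1,\tfrac{2p}{(q-1)^2})$ respectively; recognizing them as instances of $\gradlossboundmixed{s}{r}$ in \cref{gradlossboundmixed} yields \cref{eq:acv-cv-ho-bound2_1}. Specializing to $p=q=2$ and combining the assumptions delivers \cref{thm:acv-approximates-cv}; the argument for \cref{eq:acv-cv-ho-bound2} is identical except that $\kappa_{p,\lambda}^\lambda$ is replaced by $2\kappa_{p,\lambda}^\lambda$ from \cref{eq:acv-cv-est-rho-bound1}, which in turn gives \cref{thm:acv-approximates-cv-ho} when $p=q=2$.

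\emph{Main obstacle.} The conceptual steps are each short invocations of \cref{optimizer-comparison} plus a Taylor remainder estimate; the real difficulty is the bookkeeping in Step 3. I must carefully track the exponents of $n$, $c_\obj$, $\kappa_{p,\lambda}^\lambda$, $\twonorm{\grad_\beta\loss}$, and $\Lip{}(\grad_\beta\loss)$ that arise when the $1/(q-1)$-th roots from Steps 1--2 are chained together and then inserted into the quadratic-in-$\twonorm{\hoacvest{i}-\cvest{i}}$ Taylor expansion of $\loss$. Getting the three $(s,r)$ pairs to line up exactly with the terms in the statement, and verifying that the sufficient conditions of \cref{assumptions-hold} indeed imply all invoked hypotheses at $p=q=2$, is where the care is needed.
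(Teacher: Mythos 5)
Your proposal follows the same architecture as the paper's proof. Equation \cref{eq:ho_curve_bound_1} is obtained exactly as in the paper's \cref{lem:cvest-est-comparison}, via the gradient-growth form \cref{eq:opt_comp_growth} of \cref{optimizer-comparison} applied to the full-data and leave-one-out objectives (your role labels for $\varphi_1,\varphi_2$ are swapped relative to the lemma's statement, but since you place the growth on $\obj(\Pcv{i},\cdot,\lambda)$ and evaluate the gradient difference at $\est(\lambda)$, this is only a relabeling). Equation \cref{eq:acv-cv-est-ho-bound1} matches the paper's Taylor comparison \cref{taylor-comparison}: the paper routes the Cauchy--Schwarz step through an auxiliary scalar function $f(x)=\inner{x_{\widehat\varphi_p}-x_\varphi}{\grad\varphi(x)}$ rather than bounding the vector-valued remainder directly, but the constant is the same. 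For the assessment bounds, the paper expands $\loss(z_i,\cdot)$ to second order about $\cvest{i}(\lambda)$ with a Lagrange Hessian remainder and re-centers the gradient at $\est(\lambda)$ by the mean value theorem, whereas you use Lipschitz-of-gradient bounds; both reduce to the same three terms once $\opnorm{\Hess_\beta\loss(z_i,\cdot)}\leq\Lip{}(\grad_\beta\loss(z_i,\cdot))$ is invoked, and your exponent bookkeeping for the three $(s,r)$ pairs checks out.

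The one place your constants fail is the regularized bound \cref{eq:acv-cv-est-rho-bound1}. Writing $L \defeq \Lip{}(\grad_\beta^p\obj(\Pcv{i},\cdot,\lambda))$ and $\mu \defeq c_\loss+\lam c_\reg\indic{\lam\geq\lam_\reg}$, you take the regularization term to be $\frac{L}{p+1}\twonorm{\cdot-\est(\lambda)}^{p+1}$ (the main-text normalization of $\rhoacvest{i}$), so its gradient at $\cvest{i}(\lambda)$ has norm $L\twonorm{\cvest{i}(\lambda)-\est(\lambda)}^p$, and you add this to the Taylor remainder $\frac{L}{p!}\twonorm{\cdot}^p$, citing $1+1/p!\leq 2$. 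But the factor $2$ in the theorem multiplies $\kappa_{p,\lambda}^\lambda$, which already carries the $1/p!$: you need a total gradient bound of $\frac{2L}{p!}\twonorm{\cdot}^p$, whereas your total is $(1+\frac{1}{p!})L\twonorm{\cdot}^p$, which after dividing by $\mu$ yields $(p!+1)\kappa_{p,\lambda}^\lambda$ --- e.g.\ $3\kappa_{p,\lambda}^\lambda$ at $p=2$ --- not $2\kappa_{p,\lambda}^\lambda$. The paper's \cref{taylor-comparison-1} instead normalizes the regularizer as $\frac{L}{(p+1)!}\twonorm{\cdot-w}^{p+1}$, under which the extra gradient has norm $\frac{L}{p!}\twonorm{\cdot}^p$, exactly matching the Taylor remainder and summing to $\frac{2L}{p!}\twonorm{\cdot}^p = 2\kappa_{p,\lambda}^\lambda\,\mu\twonorm{\cdot}^p$. (The paper is itself inconsistent here, using $1/(p+1)$ in the main text and $1/(p+1)!$ in the appendix; the stated factor of $2$ is correct only under the latter.) Your argument otherwise goes through verbatim with the $(p+1)!$ normalization, and the downstream bound \cref{eq:acv-cv-ho-bound2} only inherits whichever constant replaces $2\kappa_{p,\lambda}^\lambda$.
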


\cref{thm:acv-approximates-cv} follows from \cref{thm:acv-approximates-cv-HO-full} with $p = q = 2$ since \cref{curvedobj} implies $\mu = c_\loss + \lam c_\reg \indic{\lam \geq \lam_\reg}$ strong convexity and hence $\nu(r) = \mu r^2$ gradient growth for each $\widehat{\obj}_2(\Pcv{i},\cdot,\lambda;\est(\lambda))$. 

\cref{thm:acv-approximates-cv-ho} follows from \cref{thm:acv-approximates-cv-HO-full} with $q = 2$ since \cref{hocurvedobj,LipschitzpObj} and the following lemma imply that each 
$\obj(\Pcv{i},\cdot, \lambda)$ and 
$\widehat{\obj}_p(\Pcv{i},\cdot, \lambda;\est(\lambda)) + \frac{\Lip{}(\grad_\beta^p \obj(\Pcv{i},\cdot, \lambda)) }{p+1}\|\cdot - \est(\lambda)\|_2^{p+1}$ has $\mu = c_\loss + \lam c_\reg \indic{\lam \geq \lam_\reg}$ strong convexity and hence $\nu(r) = \mu r^2$ gradient growth.
\begin{lemma}[Curvature of regularized Taylor approximation]\label{lem:taylor_strong_convex}
If $\varphi$ is $\mu$ strongly convex and $\grad^p\varphi$ is Lipschitz, then 
$\Phi(x) \defeq \widehat{\varphi}_p(x;w)+ \frac{\Lip{}(\grad^p\varphi)}{(p+1)!}\|x - w\|_2^{p+1}$ is $\mu$  strongly convex.
\end{lemma}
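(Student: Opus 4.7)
The plan is to prove the claim by a direct pointwise Hessian calculation: I will show that $\Hess\Phi(x) \succeq \mu I_d$ everywhere, which by the standard characterization yields $\mu$-strong convexity. Since $\widehat{\varphi}_p(\cdot;w)$ is a degree-$p$ polynomial and $\|\cdot-w\|^{p+1}$ is $C^2$ for $p\geq 1$ (with its Hessian extending continuously to $x=w$), the second derivative of $\Phi$ is well defined everywhere and the reduction to a pointwise Hessian bound is legitimate.

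The crucial observation for the Taylor-polynomial term is that
\[
\Hess\widehat{\varphi}_p(x;w) \;=\; \sum_{k=2}^p \tfrac{1}{(k-2)!}\grad^k\varphi(w)[x-w]^{\otimes(k-2)}
\]
is itself the order-$(p-2)$ Taylor polynomial of the matrix-valued map $x\mapsto\Hess\varphi(x)$ expanded at $w$. Setting $L_p \defeq \Lip{}(\grad^p\varphi)$ and invoking the standard tensor Taylor remainder bound (valid because $\grad^p\varphi$ is $L_p$-Lipschitz), I obtain
\[
\opnorm{\Hess\widehat{\varphi}_p(x;w)-\Hess\varphi(x)} \;\leq\; \tfrac{L_p}{(p-1)!}\|x-w\|^{p-1},
\]
and combining with $\Hess\varphi\succeq \mu I_d$ gives $\Hess\widehat{\varphi}_p(x;w)\succeq \mu I_d - \tfrac{L_p}{(p-1)!}\|x-w\|^{p-1}I_d$. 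A direct computation then yields
\[
\Hess\|x-w\|^{p+1} \;=\; (p+1)\|x-w\|^{p-1}I_d + (p+1)(p-1)\|x-w\|^{p-3}(x-w)(x-w)^\top,
\]
whose minimum eigenvalue is $(p+1)\|x-w\|^{p-1}$ for $p\geq 1$ (the rank-one correction being PSD and vanishing sufficiently fast at $x=w$).

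Combining the two Hessian bounds produces
\[
\Hess\Phi(x) \;\succeq\; \mu I_d + L_p\|x-w\|^{p-1}\bigl(c_p(p+1)-\tfrac{1}{(p-1)!}\bigr)I_d,
\]
where $c_p$ denotes the coefficient of $L_p\|x-w\|^{p+1}$ in $\Phi$. The conclusion $\Hess\Phi(x)\succeq \mu I_d$ follows as soon as $c_p(p+1)\geq \tfrac{1}{(p-1)!}$. This holds cleanly for the coefficient $c_p = \tfrac{1}{p+1}$ actually used throughout the body of the paper (in the definitions of $\rhoacvest{i}(\lambda)$ and $\ACV_p$), since $(p-1)!\geq 1$ for all $p\geq 1$. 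The main obstacle I anticipate is reconciling this argument with the coefficient $c_p = \tfrac{1}{(p+1)!}$ written in the lemma statement: the required inequality $\tfrac{1}{p!}\geq \tfrac{1}{(p-1)!}$ fails for $p\geq 3$, so either the $(p+1)!$ is a typographical slip for $p+1$ (and the argument above is complete), or a sharper argument exploiting the radial--transverse gap in $\Hess\|x-w\|^{p+1}$ (whose radial eigenvalue exceeds the transverse eigenvalue by a factor of $p$) together with the directional structure of the tensor Taylor remainder would be required to match the factorial coefficient at its full generality.
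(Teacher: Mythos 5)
You are right, and your proof takes essentially the same route as the paper's own: both reduce the claim to the pointwise Hessian bound $\nabla^2\Phi(x)\succeq\mu\mathrm{I}_d$, both use that $\nabla^2\widehat{\varphi}_p(\cdot;w)$ is the order-$(p-2)$ Taylor polynomial of $\nabla^2\varphi$ with remainder $\opnorm{\nabla^2\varphi(x)-\nabla^2\widehat{\varphi}_p(x;w)}\leq\frac{\Lip{}(\grad^p\varphi)}{(p-1)!}\twonorm{x-w}^{p-1}$, and both lower-bound the Hessian of the power-of-norm regularizer by its transverse eigenvalue. More importantly, your diagnosis of the coefficient is correct, and the paper's proof commits precisely the slip you anticipated: its final chain
\begin{equation*}
\nabla^2\varphi(x)\ \psdle\ \nabla^2\widehat{\varphi}_p(x;w)+\tfrac{\Lip{}(\grad^p\varphi)}{(p-1)!}\twonorm{x-w}^{p-1}\mathrm{I}_d\ \psdle\ \nabla^2\Phi(x)
\end{equation*}
justifies the second inequality by its bound $\nabla^2 d(x)\succeq\|x\|^{p-2}\mathrm{I}_d$, but with the stated coefficient $\frac{1}{(p+1)!}$ this delivers only $\frac{\Lip{}(\grad^p\varphi)}{p!}\twonorm{x-w}^{p-1}\mathrm{I}_d$ --- short by exactly the factor $p$ you identified. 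Your sufficient condition $c_p(p+1)\geq\frac{1}{(p-1)!}$, equivalently $c_p\geq\frac{p}{(p+1)!}$, coincides exactly with the condition $M\geq pL_p$ in the theorem of Nesterov (2019) that the paper says its proof is inspired by, and it is satisfied by the coefficient $\frac{1}{p+1}=\frac{p!}{(p+1)!}$ appearing in the main text's definition of $\rhoacvest{i}(\lambda)$ and in the sentence where the lemma is actually invoked; so your argument is a corrected version of the paper's proof, and the repaired coefficient is harmless downstream. Two refinements to your closing remarks: the radial--transverse escape route you leave open cannot rescue $\frac{1}{(p+1)!}$ by this method, since in dimension $d\geq 2$ the Hessian remainder can act in directions orthogonal to $x-w$, where only the transverse eigenvalue $\frac{p+1}{(p+1)!}=\frac{1}{p!}$ is available; and the literal inequality $\frac{1}{p!}\geq\frac{1}{(p-1)!}$ already fails at $p=2$, not only $p\geq 3$ --- though at $p=2$ the lemma is trivially true anyway, because $\nabla^2\widehat{\varphi}_2(\cdot;w)\equiv\nabla^2\varphi(w)\succeq\mu\mathrm{I}_d$ and the regularizer is convex, so $p\geq 3$ is indeed where the statement as printed first loses its proof.
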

\begin{proof}
This result is inspired by \citep[Thm.~1]{Nesterov2019}. In particular, by Taylor's theorem with integral remainder, we can bound the residual between a function and its Taylor approximation as 
\begin{talign}
 |\varphi(x) - \widehat{\varphi}_p(x;w)| \leq \frac{\Lip{}(\grad^p\varphi)}{(p+1)!}\|x - w\|_2^{p+1}
\end{talign}
Note also that for $d(x) = \frac{1}{p}\|x\|^p $
\balignt \label{eq:hessdp}
\nabla^2 d(x) = (p-2)\|x\|^{p-4}xx^\top + \|x\|^{p-2}\mathrm{I}_d \psdge \|x\|^{p-2}\mathrm{I}_d.
\ealignt
For $p\geq2$, applying the same reasoning to $\inner{\nabla f(\cdot)}{h}$ and $\inner{\nabla^2 f(\cdot)h}{h}$ we can similarly conclude:%
\balignt
 \opnorm{\nabla \varphi(x) - \nabla \widehat{\varphi}_p(x;w)}&\leq\frac{\Lip{}(\grad^p\varphi)}{p!}\twonorm{x - w}^{p} \\
  \opnorm{\nabla^2 \varphi(x) - \nabla^2 \widehat{\varphi}_p(x;w)}&\leq \frac{\Lip{}(\grad^p\varphi)}{(p-1)!}\twonorm{x - w}^{p-1} .
\ealignt
Subsequently, for any direction $h \in \mathbb{R}^d$%
\balignt
\inner{(\nabla^2 \varphi(x) - \nabla^2 \widehat{\varphi}_p(x;w)) h}{h} \leq \opnorm{ \nabla^2 \varphi(x) - \nabla^2 \widehat{\varphi}_p(x;w)}\cdot\|h\|_2^2\leq \frac{\Lip{}(\grad^p\varphi)}{(p-1)!}\|x - w\|_2^{p-1}\cdot\|h\|_2^2, 
\ealignt
and therefore, 
\balignt
\nabla^2 \varphi(x)  \psdle \nabla^2 \widehat{\varphi}_p(x;w) + \frac{\Lip{}(\grad^p\varphi)}{(p-1)!}\|x - w\|_2^{p-1}\mathrm{I}_{d} \overset{\eqref{eq:hessdp}}{\psdle} \nabla^2\Phi(x).
\ealignt
\end{proof}
\subsection{Proof of \cref{thm:acv-approximates-cv-HO-full}: $\ACV_p$-\CV and $\HOACV$-\CV assessment error}
\label{sec:proof-acv-approximates-cv-HO-full}
\subsubsection{Proof of \cref{eq:ho_curve_bound_1}: Proximity of CV and full-data estimators}%
We begin with a lemma that translates the polynomial gradient growth of our objective into a bound on the difference between a full-data estimator $\est(\lam)$ and a leave-one-out estimator $\cvest{i}(\lam)$.
\begin{lemma}[Proximity of CV and full-data estimators]\label{lem:cvest-est-comparison}
Fix any $\lambda\in[0,\infty)$ and $i\in[n]$.
If $\loss(z_i, \cdot)$ is differentiable, and $\obj(\Pcv{i},\cdot,\lambda)$ has $\nu_{\obj}(r) = c_\obj r^q$ gradient growth \cref{eq:gradient_growth} for $c_\obj >0$ and $q > 0$, then 
\balignt
\twonorm{\est(\lambda) - \cvest{i}(\lambda)}^{q-1} %
&\leq \frac{1}{n}\frac{1}{c_\obj}\twonorm{\nabla_\beta \loss(z_i, \est(\lambda))}. \label{eq:cvest-est-comparison}
\ealignt
\end{lemma}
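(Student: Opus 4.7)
The plan is to apply the gradient-growth half of the Optimizer Comparison Lemma (\cref{optimizer-comparison}, inequality \cref{eq:opt_comp_growth}) to the pair consisting of the full-data objective and the leave-one-out objective. The key observation is that these two objectives differ by exactly $\frac{1}{n}\loss(z_i,\cdot)$: from the definitions $\Pemp = \frac{1}{n}\sum_j \delta_{z_j}$ and $\Pcv{i} = \frac{1}{n}\sum_{j\neq i}\delta_{z_j}$ we have the additive decomposition
\balignt
\obj(\Pemp,\beta,\lambda) \;=\; \obj(\Pcv{i},\beta,\lambda) \;+\; \tfrac{1}{n}\loss(z_i,\beta),
\ealignt
so the difference is differentiable in $\beta$ by hypothesis on $\loss(z_i,\cdot)$, which is exactly the smoothness requirement needed to invoke \cref{eq:opt_comp_growth}.

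Next, I would set $\varphi_2 = \obj(\Pcv{i},\cdot,\lambda)$, with minimizer $x_{\varphi_2} = \cvest{i}(\lambda)$, and $\varphi_1 = \obj(\Pemp,\cdot,\lambda)$, with minimizer $x_{\varphi_1} = \est(\lambda)$. The orientation matters: we need $\varphi_2$ to be the function that carries the gradient-growth hypothesis, and the lemma assumption supplies exactly this for the leave-one-out objective with $\nu_{\varphi_2}(r) = c_\obj r^q$. With $\grad(\varphi_2-\varphi_1)(\est(\lambda)) = -\tfrac{1}{n}\grad_\beta \loss(z_i,\est(\lambda))$, \cref{eq:opt_comp_growth} yields
\balignt
c_\obj \twonorm{\est(\lambda) - \cvest{i}(\lambda)}^q \;\leq\; \binner{\est(\lambda) - \cvest{i}(\lambda)}{-\tfrac{1}{n}\grad_\beta \loss(z_i,\est(\lambda))}.
\ealignt

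To close, I would apply Cauchy--Schwarz to the right-hand side to pull out a factor of $\twonorm{\est(\lambda) - \cvest{i}(\lambda)}$, then divide both sides by this factor (treating the degenerate case $\est(\lambda) = \cvest{i}(\lambda)$ separately, where the claim holds trivially since the left-hand side is zero). This gives
\balignt
c_\obj \twonorm{\est(\lambda) - \cvest{i}(\lambda)}^{q-1} \;\leq\; \tfrac{1}{n}\twonorm{\grad_\beta \loss(z_i,\est(\lambda))},
\ealignt
which after dividing by $c_\obj$ is exactly \cref{eq:cvest-est-comparison}.

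There is no real obstacle here beyond bookkeeping: the proof is essentially an invocation of \cref{optimizer-comparison} with the correct role assignment. The only subtlety worth flagging is that the lemma requires $\varphi_2 - \varphi_1$ to be \emph{differentiable}, not $\varphi_2$ itself, so we do not need any smoothness assumption on the regularizer $\reg$ (which cancels in the difference) nor on $\loss(z_j,\cdot)$ for $j\neq i$; differentiability of $\loss(z_i,\cdot)$ alone suffices, matching the stated hypothesis.
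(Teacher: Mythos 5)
Your proposal is correct and matches the paper's proof essentially verbatim: the paper likewise invokes the gradient-growth inequality \cref{eq:opt_comp_growth} of \cref{optimizer-comparison} with $\varphi_1 = \obj(\Pemp,\cdot,\lambda)$ and $\varphi_2 = \obj(\Pcv{i},\cdot,\lambda)$, uses the same additive decomposition $\grad(\varphi_2-\varphi_1) = -\tfrac{1}{n}\grad_\beta\loss(z_i,\cdot)$, and finishes with Cauchy--Schwarz and division by $\twonorm{\est(\lambda)-\cvest{i}(\lambda)}$. Your remarks on the role assignment of $\varphi_2$ and on needing differentiability only of the difference (so the regularizer cancels) are accurate and consistent with the hypotheses as stated.
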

\begin{proof}
The result follows from the Optimizer Comparison \cref{optimizer-comparison} with $\varphi_1(\beta) = \obj(\Pemp, \beta,\lambda)$ and $\varphi_2(\beta) = \obj(\Pcv{i},\beta, \lambda)$
and Cauchy-Schwarz, as
\balignt
c_m\twonorm{\est(\lambda) - \cvest{i}(\lambda)}^{q} 
&\leq \inner{\est(\lambda) - \cvest{i}(\lambda)}{\grad_\beta \obj(\Pcv{i}, \est(\lambda), \lambda)-\nabla_\beta \obj(\Pemp, \est(\lambda), \lambda)} \\
&=\frac{1}{n}\inner{\cvest{i}(\lambda)-\est(\lambda) }{\nabla_\beta \loss(z_i, \est(\lambda))}
\leq
\frac{1}{n}\twonorm{\est(\lambda) - \cvest{i}(\lambda)}  \twonorm{\nabla_\beta \loss(z_i, \est(\lambda))}.
\ealignt
\end{proof}
Now fix any $\lambda\in\Lambda$ and $i\in[n]$.
If $\lambda = \infty$, then $\est(\lambda)=\cvest{i}(\lambda)$, ensuring the result \cref{eq:ho_curve_bound_1}.
If $\lambda \neq \infty$, 
then our assumptions and \cref{lem:cvest-est-comparison} immediately establish the result \cref{eq:ho_curve_bound_1}. %
\subsubsection{Proof of \cref{eq:acv-cv-est-ho-bound1}: Proximity of  $\HOACV$ and CV estimators} \label{proof-acv-cv-est-ho-bound1_1}
The result \cref{eq:acv-cv-est-ho-bound1} will follow from a general Taylor comparison lemma that bounds the optimizer error introduced by approximating part of an objective with its Taylor polynomial.
\begin{lemma}[Taylor comparison]\label{taylor-comparison}
Suppose
\begin{align}\label{eq:obj} 
x_{\varphi} \in \argmin_x \varphi(x) + \varphi_0(x) \qtext{and} x_{\widehat{\varphi}_p} \in \argmin_x \widehat{\varphi}_p(x;w) + \varphi_0(x).
\end{align}
for $\widehat{\varphi}_p(x;w) \defeq \sum_{i=0}^{p} \frac{1}{i!}\nabla^i \varphi(w)[x-w]^{\otimes i}$
the $p$-th-order Taylor polynomial of $\varphi$ about a point $w$.
If $\grad^p \varphi$ is Lipschitz 
and $\widehat{\varphi}_p(\cdot;w)+\varphi_0$ has $\nu(r) = \mu r^q$ gradient growth \cref{eq:gradient_growth} for $\mu>0$ and $q > 0$, then 
\balignt\label{eq:taylor-comparison-bound}
\twonorm{x_{\varphi} - x_{\widehat{\varphi}_p}}^{q-1}
    \leq \frac{\Lip{}(\grad^p\varphi)}{\mu}\frac{1}{p!}\twonorm{x_{\varphi} - w}^p.
\ealignt
\end{lemma}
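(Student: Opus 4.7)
The plan is to invoke the gradient-growth half of the Optimizer Comparison Lemma (\cref{optimizer-comparison}) with $\varphi_1 \defeq \varphi + \varphi_0$ and $\varphi_2 \defeq \widehat{\varphi}_p(\cdot;w) + \varphi_0$, so that $x_{\varphi_1} = x_\varphi$ and $x_{\varphi_2} = x_{\widehat{\varphi}_p}$. Since $\varphi_0$ cancels in the difference $\varphi_2 - \varphi_1 = \widehat{\varphi}_p(\cdot;w) - \varphi$, this difference is differentiable (being a difference of a polynomial and a $C^p$ function with Lipschitz $p$-th derivative), and $\varphi_2$ has the required $\nu(r) = \mu r^q$ gradient growth by assumption. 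The Optimizer Comparison Lemma then yields
\begin{align*}
\mu \twonorm{x_\varphi - x_{\widehat{\varphi}_p}}^{q}
&\leq \inner{x_\varphi - x_{\widehat{\varphi}_p}}{\grad(\widehat{\varphi}_p(\cdot;w) - \varphi)(x_\varphi)}.
\end{align*}

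The next step is to bound the inner product on the right. By Cauchy--Schwarz,
\begin{align*}
\inner{x_\varphi - x_{\widehat{\varphi}_p}}{\grad(\widehat{\varphi}_p(\cdot;w) - \varphi)(x_\varphi)}
&\leq \twonorm{x_\varphi - x_{\widehat{\varphi}_p}} \cdot \twonorm{\grad \widehat{\varphi}_p(x_\varphi;w) - \grad \varphi(x_\varphi)}.
\end{align*}
Since $\grad\widehat{\varphi}_p(\cdot;w)$ is exactly the $(p-1)$-th order Taylor polynomial of $\grad\varphi$ about $w$, the standard Taylor remainder bound for vector-valued functions whose $p$-th derivative is Lipschitz gives
\begin{align*}
\twonorm{\grad \widehat{\varphi}_p(x_\varphi;w) - \grad \varphi(x_\varphi)} \leq \frac{\Lip{}(\grad^p \varphi)}{p!} \twonorm{x_\varphi - w}^{p}.
\end{align*}

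Combining the three displays and dividing by $\twonorm{x_\varphi - x_{\widehat{\varphi}_p}}$ (the case $\twonorm{x_\varphi - x_{\widehat{\varphi}_p}} = 0$ is trivial) yields the claimed bound \cref{eq:taylor-comparison-bound}. The main potential obstacle is verifying that \cref{optimizer-comparison} may be applied in the gradient-growth form: this requires $\varphi_2 - \varphi_1$ to be differentiable, which holds because $\varphi_0$ cancels entirely, even though $\varphi_0$ itself may be non-smooth (this is the essential feature that will later enable the proximal analysis where $\varphi_0$ plays the role of the regularizer $\lambda\pi$). Everything else is a direct calculation from standard Taylor remainder estimates and Cauchy--Schwarz.
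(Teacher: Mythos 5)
Your proof is correct and takes essentially the same route as the paper: both invoke the gradient-growth branch of the Optimizer Comparison Lemma (\cref{optimizer-comparison}) with $\varphi_1 = \varphi + \varphi_0$ and $\varphi_2 = \widehat{\varphi}_p(\cdot;w) + \varphi_0$, relying on the cancellation of the possibly non-smooth $\varphi_0$ in $\varphi_2 - \varphi_1$, exactly as you note. The only cosmetic difference is that you apply Cauchy--Schwarz first and then cite the vector-valued Taylor remainder bound $\twonorm{\grad\varphi(x_\varphi) - \grad\widehat{\varphi}_p(x_\varphi;w)} \leq \frac{\Lip{}(\grad^p\varphi)}{p!}\twonorm{x_\varphi - w}^p$, whereas the paper obtains the identical estimate by pairing with the fixed direction $x_{\widehat{\varphi}_p} - x_{\varphi}$ to form the scalar function $f(x) = \inner{x_{\widehat{\varphi}_p}-x_{\varphi}}{\grad\varphi(x)}$ and applying Taylor's theorem with integral remainder to $f$ --- which is precisely the standard proof of the vector-valued bound you quote.
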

\begin{proof}
Define $f(x) = \inner{x_{\widehat{\varphi}_p}-x_{\varphi}}{\grad \varphi(x)}$.
The result follows from the Optimizer Comparison \cref{optimizer-comparison} with $\varphi_1 =  \varphi  +  \varphi_0$ and $\varphi_2 = \widehat{\varphi}_p(\cdot;w) + \varphi_0$, Taylor's theorem with integral remainder, and Cauchy-Schwarz as
\balignt
\mu\twonorm{x_{\varphi} - x_{\widehat{\varphi}_p}}^{q} 
    &\leq \inner{x_{\varphi} - x_{\widehat{\varphi}_p}}{\grad_x \widehat{\varphi}_p(x_{\varphi};w) - \grad \varphi(x_{\varphi})}
    = f(x_{\varphi}) - \sum_{i=0}^{p-1} \frac{1}{i!} \grad^{i}f(w)[x_{\varphi}-w]^{\otimes i}
    \\
    &\leq \frac{\Lip{}(\grad^{p-1}f)}{p!}\twonorm{x_{\varphi} - w}^p
    \leq \twonorm{x_{\varphi} - x_{\widehat{\varphi}_p}} \frac{\Lip{}(\grad^p\varphi)}{p!}\twonorm{x_{\varphi} - w}^p.
\ealignt
\end{proof}
To see this, fix any $\lambda\in\Lambda$ and $i\in[n]$, and consider the choices $\varphi = \obj(\Pcv{i},\cdot, \lambda)$, $\varphi_0 \equiv 0$, and $w = \est(\lambda)$.
By \cref{hocurvedtaylorfull}, $\widehat{\varphi}_p(\cdot;w)+\varphi_0$ has $\nu(r) = \mu r^q$ gradient growth for $\mu = c_\loss + \lam c_\reg \indic{\lam \geq \lambda_\reg}$.
Since $\Lip{}(\grad^p\varphi)\leq C_{\loss,p+1} + \lambda C_{\pi,p+1}$ by \cref{LipschitzpObj}, the desired result \cref{eq:acv-cv-est-ho-bound1} follows from \cref{taylor-comparison}.

\subsubsection{Proof of \cref{eq:acv-cv-est-rho-bound1}: Proximity of $\ACV_p$ and \CV estimators}\label{proof-acv-cv-est-ho-bound1}
The result \cref{eq:acv-cv-est-rho-bound1} will follow from a regularized Taylor comparison lemma that bounds the optimizer error introduced by approximating part of an objective with a regularized Taylor polynomial.
\begin{lemma}[Regularized Taylor comparison]\label{taylor-comparison-1}
Suppose
\begin{talign}\label{eq:obj} 
x_{\varphi} \in \argmin_x \varphi(x) + \varphi_0(x) \qtext{and} x_{\widehat{\varphi}_p} \in \argmin_x \widehat{\varphi}_p(x;w) + \frac{\Lip{}(\grad^p\varphi)}{(p+1)!}\|x - w\|_2^{p+1}+ \varphi_0(x).
\end{talign}
for $\widehat{\varphi}_p(x;w) \defeq \sum_{i=0}^{p} \frac{1}{i!}\nabla^i \varphi(w)[x-w]^{\otimes i}$
the $p$-th-order Taylor polynomial of $\varphi$ about a point $w$.
If $\grad^p \varphi$ is Lipschitz 
and $\widehat{\varphi}_p(\cdot;w)+\frac{\Lip{}(\grad^p\varphi)}{(p+1)!}\|\cdot - w\|_2^{p+1}+\varphi_0$ has $\nu(r) = \mu r^q$ gradient growth \cref{eq:gradient_growth} for $\mu>0$ and $q > 0$, then 
\balignt\label{eq:taylor-comparison-bound}
\twonorm{x_{\varphi} - x_{\widehat{\varphi}_p}}^{q-1}
    \leq \frac{2\Lip{}(\grad^p\varphi)}{\mu}\frac{1}{p!}\twonorm{x_{\varphi} - w}^p.
\ealignt
\end{lemma}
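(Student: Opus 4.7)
The plan is to mimic the proof of the preceding (unregularized) Taylor comparison \cref{taylor-comparison}, adapting it to accommodate the extra regularization term $R(x) \defeq \frac{\Lip{}(\grad^p\varphi)}{(p+1)!}\|x-w\|^{p+1}$ that appears in the definition of $x_{\widehat{\varphi}_p}$.  I would apply the gradient-growth form \cref{eq:opt_comp_growth} of the Optimizer Comparison \cref{optimizer-comparison} with the choices $\varphi_1 = \varphi + \varphi_0$ and $\varphi_2 = \widehat{\varphi}_p(\cdot;w) + R + \varphi_0$.  This immediately yields
$$\mu\twonorm{x_\varphi - x_{\widehat{\varphi}_p}}^q \leq \inner{x_\varphi - x_{\widehat{\varphi}_p}}{\grad\varphi_2(x_\varphi) - \grad\varphi_1(x_\varphi)}.$$

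Next I would split the gradient difference into the Taylor residual and the regularizer contribution,
$$\grad\varphi_2(x_\varphi) - \grad\varphi_1(x_\varphi) = \bigl[\grad\widehat{\varphi}_p(x_\varphi;w) - \grad\varphi(x_\varphi)\bigr] + \grad R(x_\varphi),$$
and bound each piece separately.  For the first bracket, exactly as in the proof of \cref{taylor-comparison} (Taylor's theorem with integral remainder applied to $\grad\varphi$), we obtain $\twonorm{\grad\widehat{\varphi}_p(x_\varphi;w) - \grad\varphi(x_\varphi)} \leq \frac{\Lip{}(\grad^p\varphi)}{p!}\twonorm{x_\varphi - w}^p$.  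For the second, a direct computation gives $\grad R(x) = \frac{\Lip{}(\grad^p\varphi)}{p!}\twonorm{x-w}^{p-1}(x-w)$, whose norm at $x_\varphi$ equals $\frac{\Lip{}(\grad^p\varphi)}{p!}\twonorm{x_\varphi-w}^p$.  A triangle inequality then yields the combined estimate with an extra factor of $2$ relative to \cref{taylor-comparison}.

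Finally, applying Cauchy–Schwarz to the inner product and dividing through by $\twonorm{x_\varphi - x_{\widehat{\varphi}_p}}$ (assuming it is non-zero; the bound holds trivially otherwise) produces the desired conclusion
$$\twonorm{x_\varphi - x_{\widehat{\varphi}_p}}^{q-1} \leq \frac{2\Lip{}(\grad^p\varphi)}{\mu\, p!}\twonorm{x_\varphi - w}^p.$$
There is no significant obstacle beyond the bookkeeping: the only real novelty relative to \cref{taylor-comparison} is verifying that $\grad R$ contributes a term of the same order as the Taylor residual, which is the source of the factor $2$ in the statement.  No additional regularity on $\varphi_0$ is needed because it cancels inside the gradient difference, and the gradient-growth hypothesis on $\widehat{\varphi}_p(\cdot;w) + R + \varphi_0$ is exactly what the Optimizer Comparison Lemma demands.
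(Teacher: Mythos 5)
Your proposal is correct and follows essentially the same route as the paper's proof of \cref{taylor-comparison-1}: the same invocation of the gradient-growth bound \cref{eq:opt_comp_growth} from \cref{optimizer-comparison} with $\varphi_1 = \varphi+\varphi_0$ and $\varphi_2 = \widehat{\varphi}_p(\cdot;w)+\frac{\Lip{}(\grad^p\varphi)}{(p+1)!}\twonorm{\cdot-w}^{p+1}+\varphi_0$, the same split of $\grad(\varphi_2-\varphi_1)(x_\varphi)$ into Taylor residual plus regularizer gradient, and the same two bounds of size $\frac{\Lip{}(\grad^p\varphi)}{p!}\twonorm{x_\varphi-w}^p$ producing the factor $2$. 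The only cosmetic difference is that you bound the vector residual $\grad\widehat{\varphi}_p(x_\varphi;w)-\grad\varphi(x_\varphi)$ in norm before applying Cauchy--Schwarz, whereas the paper Taylor-expands the scalar function $f(x)=\inner{x_{\widehat{\varphi}_p}-x_\varphi}{\grad\varphi(x)}$ and applies Cauchy--Schwarz at the end; the two are equivalent.
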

\begin{proof}
Define $f(x) = \inner{x_{\widehat{\varphi}_p}-x_{\varphi}}{\grad \varphi(x)}$.
The result follows from the Optimizer Comparison \cref{optimizer-comparison} with $\varphi_1 =  \varphi  +  \varphi_0$ and $\varphi_2 = \widehat{\varphi}_p(\cdot;w)+ \frac{\Lip{}(\grad^p\varphi)}{(p+1)!}\|\cdot - w\|_2^{p+1}+ \varphi_0$, Taylor's theorem with integral remainder, and Cauchy-Schwarz as%
\balignt
\mu\twonorm{x_{\varphi} - x_{\widehat{\varphi}_p}}^{q} 
    &\leq \inner{x_{\varphi} - x_{\widehat{\varphi}_p}}{\grad_x \widehat{\varphi}_p(x_{\varphi};w) - \grad \varphi(x_{\varphi})} +\frac{\Lip{}(\grad^p\varphi)}{p!}\inner{(x_{\varphi} - x_{\widehat{\varphi}_p})\|x_\varphi -w\|_2^{p-1}}{x_\varphi - w}\\
    &= f(x_{\varphi}) - \sum_{i=0}^{p-1} \frac{1}{i!} \grad^{i}f(w)[x_{\varphi}-w]^{\otimes i}
    + \frac{\Lip{}(\grad^p\varphi)}{p!}\inner{(x_{\varphi} - x_{\widehat{\varphi}_p})\|x_\varphi -w\|_2^{p-1}}{(x_\varphi - w)}\\
    &\leq \frac{\Lip{}(\grad^{p-1}f)}{p!}\twonorm{x_{\varphi} - w}^p +\twonorm{x_{\varphi} - x_{\widehat{\varphi}_p}} \frac{\Lip{}(\grad^p\varphi)}{p!}\twonorm{x_{\varphi} - w}^p
    \leq \twonorm{x_{\varphi} - x_{\widehat{\varphi}_p}} \frac{2\Lip{}(\grad^p\varphi)}{p!}\twonorm{x_{\varphi} - w}^p.
\ealignt
\end{proof}

Fix any $\lambda\in\Lambda$ and $i\in[n]$, and consider the choices $\varphi = \obj(\Pcv{i},\cdot, \lambda)$, $\varphi_0 \equiv 0$, and $w = \est(\lambda)$.  By \cref{rhocurvedtaylorfull}, $\widehat{\varphi}_p(\cdot;w)+\frac{\Lip{}(\grad^p\varphi)}{(p+1)!}\|\cdot - w\|_2^{p+1}+\varphi_0$ has  $\nu(r) = \mu r^q$ gradient growth for  $\mu = c_\loss + \lam c_\reg \indic{\lam \geq \lambda_\reg}$.
Since $\Lip{}(\grad^p\varphi)\leq C_{\loss,p+1} + \lambda C_{\pi,p+1}$ by \cref{LipschitzpObj}, the desired result \cref{eq:acv-cv-est-rho-bound1} follows from \cref{taylor-comparison-1}.

\subsubsection{Proof of \cref{eq:acv-cv-ho-bound2_1}: Proximity of $\HOACV$ and \CV}
\label{sec:acvp-cv}
Fix any $\lambda\in\Lambda$.
To control the discrepancy between $\HOACV(\lambda)$ and $\CV(\lambda)$, we first rewrite the difference using Taylor's theorem with Lagrange remainder: 
\balignt
  \HOACV(\lambda) - \CV(\lambda)  
  &= \frac{1}{n}\sum_{i=1}^n\loss(z_i,\hoacvest{i}(\lam) ) - \loss(z_i,\cvest{i}(\lambda) ) \\
 &= \frac{1}{n}\sum_{i=1}^n\inner{\grad_\beta \loss(z_i, \cvest{i}(\lambda) )}{ \hoacvest{i}(\lam)-\cvest{i}(\lambda)}
  + \half\Hess_\beta \loss(z_i, \tilde{s}_{i})[\hoacvest{i}(\lam)-\cvest{i}(\lambda)]^{\otimes 2}
 \ealignt
 for some $\tilde{s}_{i}\in \{ t\hoacvest{i}+ (1- t)\cvest{i}(\lambda) :t \in [0,1]\}$.
 We next use the mean-value theorem to expand each function $\inner{\grad_\beta \loss(z_i, \cdot)}{ \hoacvest{i}(\lam)-\cvest{i}(\lambda)}$ around the full-data estimator $\est(\lambda)$:
 \balignt
   \HOACV(\lambda) - \CV(\lambda)  
  &= \frac{1}{n}\sum_{i=1}^n\inner{\grad_\beta \loss(z_i, \est(\lambda) )}{\hoacvest{i}(\lam)-\cvest{i}(\lambda)}
  + \half\Hess_\beta \loss(z_i, \tilde{s}_{i})[\hoacvest{i}(\lam)-\cvest{i}(\lambda)]^{\otimes 2} \notag \\
  &+ \inner{\Hess_\beta \loss(z_i, s_i)(\cvest{i}(\lambda) - \est(\lambda))}{\hoacvest{i}(\lam)-\cvest{i}(\lambda)}
 \ealignt
  for some  $s_{i}\in \{ t\est(\lambda)+ (1- t)\cvest{i}(\lambda) :t \in [0,1]\}$.
Finally, we invoke Cauchy-Schwarz, the definition of the operator norm, the estimator proximity results \cref{eq:ho_curve_bound_1} and \cref{eq:acv-cv-est-ho-bound1}, and \cref{gradlossboundmixed} to obtain
\balignt
 	|\HOACV(\lambda) - \CV(\lambda)| 
\leq 
	\,&\frac{1}{n} \sum_{i=1}^n \twonorm{\grad_\beta \loss(z_i, \est(\lambda))}\twonorm{\hoacvest{i}(\lam) - \cvest{i}(\lambda)}
	+\half\opnorm{\Hess_\beta \loss(z_i, \tilde{s}_{i}) } \twonorm{\hoacvest{i}(\lam)-\cvest{i}(\lambda) }^2\\
	&+\opnorm{\Hess_\beta \loss(z_i, s_{i}) } \twonorm{\est(\lambda)-\cvest{i}(\lambda) }\twonorm{\hoacvest{i}(\lam)-\cvest{i}(\lambda) }\\
\leq 
	\,&\frac{1}{n} \sum_{i=1}^n 
	(\kappa_{p,\lambda}^\lambda)^{\frac{1}{q-1}} \twonorm{\grad_\beta \loss(z_i, \est(\lambda))}\twonorm{\cvest{i}(\lambda) - \est(\lambda)}^{\frac{p}{q-1}} \\
	&+ \half (\kappa_{p,\lambda}^\lambda)^{\frac{2}{q-1}}\opnorm{\Hess_\beta \loss(z_i, \tilde{s}_{i}) } \twonorm{\cvest{i}(\lambda)-\est(\lambda) }^{\frac{2p}{q-1}}\\
	&+(\kappa_{p,\lambda}^\lambda)^{\frac{1}{q-1}}\opnorm{\Hess_\beta \loss(z_i, s_{i}) } \twonorm{\est(\lambda)-\cvest{i}(\lambda) }^{\frac{p+q-1}{q-1}}\\
\leq 
	\,&\frac{1}{n^{\frac{p}{(q-1)^2}}} \frac{(\kappa_{p,\lambda}^\lambda)^{\frac{1}{q-1}}}{c_\obj^{\frac{p}{(q-1)^2}}} \frac{1}{n} \sum_{i=1}^n \twonorm{\grad_\beta \loss(z_i, \est(\lambda))}^{\frac{p+(q-1)^2}{(q-1)^2}}  \\
	&+\half\frac{1}{n^{\frac{2p}{(q-1)^2}}} \frac{(\kappa_{p,\lambda}^\lambda)^{\frac{2}{q-1}}}{c_m^{\frac{2p}{(q-1)^2}}} \frac{1}{n} \sum_{i=1}^n\opnorm{\Hess_\beta \loss(z_i, \tilde{s}_{i}) } \twonorm{\grad_\beta \loss(z_i, \est(\lambda)) }^{\frac{2p}{(q-1)^2}}\\
	&+\frac{1}{n^{\frac{p+q-1}{(q-1)^2}}}\frac{(\kappa_{p,\lambda}^\lambda)^{\frac{1}{q-1}}}{c_\obj^{\frac{p+q-1}{(q-1)^2}}} \frac{1}{n} \sum_{i=1}^n\opnorm{\Hess_\beta \loss(z_i, s_{i}) } \twonorm{\grad_\beta \loss(z_i,\est(\lambda)) }^{\frac{p+q-1}{(q-1)^2}}\\
&\leq 
	\frac{1}{n^{\frac{p}{(q-1)^2}}} \frac{(\kappa_{p,\lambda}^\lambda)^{\frac{1}{q-1}}}{c_\obj^{\frac{p}{(q-1)^2}}} \gradlossboundmixed{0}{\frac{p+(q-1)^2}{(q-1)^2}}  
	+ \half\frac{1}{n^{\frac{2p}{(q-1)^2}}} \frac{(\kappa_{p,\lambda}^\lambda)^{\frac{2}{q-1}}}{c_m^{\frac{2p}{(q-1)^2}}} \gradlossboundmixed{1}{\frac{2p}{(q-1)^2}} 
	+ \frac{1}{n^{\frac{p+q-1}{(q-1)^2}}}\frac{(\kappa_{p,\lambda}^\lambda)^{\frac{1}{q-1}}}{c_\obj^{\frac{p+q-1}{(q-1)^2}}}  \gradlossboundmixed{1}{\frac{p+q-1}{(q-1)^2}}.
\ealignt

\subsubsection{Proof of \cref{eq:acv-cv-ho-bound2}: Proximity of $\ACV_p$ and \CV}
\label{sec:acvp-cv-1}
The proof of the bound \cref{eq:acv-cv-ho-bound2} is identical to that of the bound \cref{eq:acv-cv-ho-bound2_1} once we substitute $2\kappa_{p,\lambda}^\lambda$ for $\kappa_{p,\lambda}^\lambda$ by invoking \cref{eq:acv-cv-est-rho-bound1} in place of \cref{eq:acv-cv-est-ho-bound1}.

\section{Proof of \cref{assumptions-hold}: Sufficient conditions for assumptions}\label{app:assumptions-hold}

We prove each of the independent claims in turn.
\paragraph{\cref{HessobjLipschitz} holds} This first claim follows from the triangle inequality and the definition of the Lipschitz constant $\Lip{}$.

\paragraph{$\mbi{\est(\lam) \to \est(\infty)}$}
For each $\lambda\in[0,\infty)$, by the Optimizer Comparison \cref{optimizer-comparison} with $\varphi_2 = \reg$ and $\varphi_1 = \frac{1}{\lam} \obj(\Pemp, \cdot, \lam)$ and the nonnegativity of $\loss$,
\balignt
\nu_\reg(\twonorm{\est(\lam) - \est(\infty)})
    \leq \frac{1}{\lam}(\loss(\Pemp, \est(\infty)) - \loss(\Pemp, \est(\lam)))
    \leq \frac{1}{\lam}\loss(\Pemp, \est(\infty)).
\ealignt
Therefore, $\nu_\reg(\twonorm{\est(\lam) - \est(\infty)}) \to 0$ as $\lam \to \infty$. 
Now, since $\nu_\reg$ is increasing, its inverse $\omega_\reg$ is increasing with $\omega_\reg(0) = 0$, and hence we have $\twonorm{\est(\lam) - \est(\infty)} \to 0$ as  $\lam \to \infty$.

\paragraph{\cref{curvedobj} holds}
Fix any $\Lambda\subseteq[0,\infty]$, and let $\mineig$ denote the minimum eigenvalue.
The local strong convexity of $\reg$ implies that there exist a neighborhood $\mathcal{N}$ of $\est(\infty)$ and some $c_\reg > 0$ for which $\Hess \reg(\beta) \geq c_\reg \mathrm{Id}$ for all $\beta\in\mathcal{N}$.
Since $\est(\lam) \to \est(\infty)$ as $\lam \to \infty$, there exists $\lam_\reg < \infty$ such that $\est(\lam) \in \mathcal{N}$ for all $\lam \geq \lam_\reg$. 
Hence, for any $\lam, \lam' \in \Lambda$ and $i\in[n]$, we may use the $c_\obj$-strong convexity of $\obj(\Pcv{i}, \cdot,\lambda')$ and $\obj(\Pcv{i},\cdot,0) = \loss(\Pcv{i}, \cdot)$ to conclude that
\balignt
\mineig(\Hess_\beta \obj(\Pcv{i}, \est(\lambda),\lambda')) 
=
\mineig(\Hess_\beta \loss(\Pcv{i}, \est(\lambda)) + \lam' \Hess_\beta \reg(\est(\lambda)))
\geq 
\max(c_\obj, (c_\obj+\lam' c_\reg) \indic{\lam \geq \lambda_\reg}).
\ealignt
Furthermore, the $c_\obj$-strong convexity and differentiability of $\obj(\Pcv{i}, \cdot,\lambda)$ 
imply that $\obj(\Pcv{i}, \cdot,\lambda)$ has $\nu_\obj(r) = c_\obj r^2$ gradient growth.
Thus, \cref{curvedobj} is satisfied for $\Lambda$.

\paragraph{\cref{gradlossboundmixed} holds}
Fix any $\Lambda\subseteq[0,\infty]$ and $\lambda\in\Lambda$.
For each $i\in[n]$, the triangle inequality and the definition of the Lipschitz constant imply
\balignt
\twonorm{\grad_\beta\loss(z_i, \est(\lam))}
&\leq 
\twonorm{\grad_\beta\loss(z_i, \est(\infty))}
+
\twonorm{\grad_\beta\loss(z_i, \est(\lam))-\grad_\beta\loss(z_i, \est(\infty))} \\
&\leq 
\twonorm{\grad_\beta\loss(z_i, \est(\infty))}
+
L_i \twonorm{\est(\lam)-\est(\infty)}.
\ealignt
Moreover, since $\obj(\Pcv{i}, \cdot,\lambda)$ is $c_\obj$-strongly convex and the minimum eigenvalue is a concave function, Jensen's inequality gives for each $\beta$
\balignt
\mineig(\obj(\Pemp, \beta,\lambda))
= \mineig(\frac{1}{n-1} \sum_{i=1}^n \obj(\Pcv{i}, \beta,\lambda))
\geq \frac{1}{n-1} \sum_{i=1}^n \mineig(\obj(\Pcv{i}, \beta,\lambda))
\geq \frac{n}{n-1} c_\obj.
\ealignt
Hence $\obj(\Pemp, \cdot,\lambda)$ has  $\nu_\obj(r) = \frac{n}{n-1}c_\obj r^2$ gradient growth, and the Optimizer Comparison \cref{optimizer-comparison} with $\varphi_2 = \lambda\reg$ and $\varphi_1 = \obj(\Pemp, \cdot, \lam)$ and Cauchy-Schwarz imply
\balignt
\frac{n}{n-1}c_\obj\twonorm{\est(\lam) - \est(\infty)}
    \leq \twonorm{\grad_\beta \loss(\Pemp, \est(\infty))}.
\ealignt
Therefore, 
\balignt
\gradlossboundmixed{s}{r}
	&\leq \textfrac{1}{n}\textsum_{i=1}^n
	L_i^s(\twonorm{\grad_\beta\loss(z_i, \est(\infty))}
	+
	\frac{n-1}{n} \frac{L_i}{c_\obj} \twonorm{\grad_\beta \loss(\Pemp, \est(\infty))})^r
	< \infty.
\ealignt

\section{Proof of \cref{thm:acv-approximates-acvij}: $\ACVIJ$-\ACV assessment error}\label{app:acv-approximates-acvij}
We will prove the following more detailed statement from which \cref{thm:acv-approximates-acvij} immediately follows.
\begin{theorem}[$\ACVIJ$-\ACV assessment error]\label{thm:acv-approximates-acvij-full} If \cref{curvedobj} holds for $\Lambda \subseteq [0,\infty]$, then,
for each $\lam \in\Lambda$,
\balignt
\twonorm{\acvijest{i}(\lambda) - \acvest{i}(\lambda)} 
\leq \frac{\opnorm{\Hess_\beta \loss(z_i,\est(\lambda))} \twonorm{\grad_\beta \loss(z_i,\est(\lambda))}}{c_{\lambda,\lambda}^2n^2}
\label{eq:acvest-acvestij-bound}
\ealignt
 where $c_{\lambda,\lambda} \defeq c_\loss + \lam c_\reg \indic{\lam \geq \lambda_\reg}$.
If, in addition, \cref{gradlossboundmixed} holds for $\Lambda$ and each $(s,r) \in \{(1,2), (2,2), (3,2)\}$, then 
\balignt
|\ACV^{\text{\em IJ}}(\lambda) - \ACV(\lambda)| &\leq \frac{\gradlossboundmixed{1}{2}}{c_{\lambda,\lambda}^2n^2}  +\frac{\gradlossboundmixed{2}{2}}{c_{\lambda,\lambda}^3n^3} +  \frac{\gradlossboundmixed{3}{2}}{2c_{\lambda,\lambda}^4n^4}\label{eq:acv_diff_acvij}. %
\ealignt 
\end{theorem}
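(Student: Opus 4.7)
The plan is to first establish the estimator-level bound \cref{eq:acvest-acvestij-bound} by a direct matrix computation, then upgrade it to the assessment-level bound \cref{eq:acv_diff_acvij} via a two-step Taylor expansion of each loss $\loss(z_i, \cdot)$ together with averaging and invocation of \cref{gradlossboundmixed}. Throughout $\lambda \in \Lambda$ is fixed.

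For the estimator bound, I would set $A \defeq \Hess_\beta \obj(\Pcv{i}, \est(\lambda), \lambda)$ and $B \defeq \Hess_\beta \obj(\Pemp, \est(\lambda), \lambda)$ and observe from the definitions of $\Pemp$ and $\Pcv{i}$ that
\begin{align*}
B - A = \tfrac{1}{n}\Hess_\beta \loss(z_i, \est(\lambda)).
\end{align*}
Comparing the defining expressions \cref{eq:acvest} and \cref{eq:acvijest}, which differ only in that $A^{-1}$ is replaced by $B^{-1}$, the resolvent identity $B^{-1} - A^{-1} = B^{-1}(A-B)A^{-1}$ gives
\begin{align*}
\acvijest{i}(\lambda) - \acvest{i}(\lambda) = -\tfrac{1}{n^2}\, B^{-1}\,\Hess_\beta \loss(z_i, \est(\lambda))\, A^{-1}\,\grad_\beta \loss(z_i, \est(\lambda)).
\end{align*}
Taking the Euclidean norm, using submultiplicativity of the operator norm, and invoking $\opnorm{A^{-1}}, \opnorm{B^{-1}} \le 1/c_{\lambda,\lambda}$ from \cref{curvedobj} then yields \cref{eq:acvest-acvestij-bound}.

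For the assessment-level bound, I would apply Taylor's theorem with Lagrange remainder to expand $\loss(z_i, \acvijest{i}(\lambda)) - \loss(z_i, \acvest{i}(\lambda))$ about $\acvest{i}(\lambda)$, and then re-expand the resulting gradient $\grad_\beta \loss(z_i, \acvest{i}(\lambda))$ about $\est(\lambda)$ via the mean value theorem. This produces a decomposition with three pieces, respectively linear in $\acvijest{i}(\lambda) - \acvest{i}(\lambda)$, mixed in $(\acvest{i}(\lambda) - \est(\lambda))$ and $(\acvijest{i}(\lambda) - \acvest{i}(\lambda))$, and quadratic in $\acvijest{i}(\lambda) - \acvest{i}(\lambda)$. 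Controlling each piece by Cauchy--Schwarz together with the closed-form identity $\acvest{i}(\lambda) - \est(\lambda) = \tfrac{1}{n}A^{-1}\grad_\beta \loss(z_i, \est(\lambda))$, the estimator bound of Step 1, and $\opnorm{\Hess_\beta \loss(z_i, \cdot)} \le \Lip{}(\grad_\beta \loss(z_i, \cdot))$, and then averaging over $i \in [n]$, the three pieces contribute respectively the terms $\gradlossboundmixed{1}{2}/(c_{\lambda,\lambda}^2 n^2)$, $\gradlossboundmixed{2}{2}/(c_{\lambda,\lambda}^3 n^3)$, and $\gradlossboundmixed{3}{2}/(2c_{\lambda,\lambda}^4 n^4)$ in \cref{eq:acv_diff_acvij}, upon invoking \cref{gradlossboundmixed} at $(s,r) \in \{(1,2),(2,2),(3,2)\}$.

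I expect the main obstacle to be routine bookkeeping: tracking powers of $n$, $c_{\lambda,\lambda}$, and $\Lip{}(\grad_\beta \loss(z_i, \cdot))$ through the two-level Taylor expansion so that they line up with the three available assumption indices. One subtler point is the bound $\opnorm{B^{-1}} \le 1/c_{\lambda,\lambda}$, since \cref{curvedobj} only lower-bounds the leave-one-out Hessians at $\est(\lambda)$; this follows cleanly once one notes that convexity of each $\loss(z_i, \cdot)$ (or, more generally, any PSD lower bound on $B - A$) gives $B \succeq A \succeq c_{\lambda,\lambda}\mathrm{I}_d$.
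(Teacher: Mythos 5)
Your proposal is essentially correct, and for the assessment-level bound \cref{eq:acv_diff_acvij} it coincides step for step with the paper's proof: the paper likewise Taylor-expands $\loss(z_i,\cdot)$ about $\acvest{i}(\lambda)$ with Lagrange remainder, re-expands the resulting gradient about $\est(\lambda)$ by the mean value theorem, bounds the linear, mixed, and quadratic pieces via Cauchy--Schwarz, the identity $\twonorm{\acvest{i}(\lambda)-\est(\lambda)} \leq \twonorm{\grad_\beta \loss(z_i,\est(\lambda))}/(n\,c_{\lambda,\lambda})$, the estimator bound \cref{eq:acvest-acvestij-bound}, and $\opnorm{\Hess_\beta\loss(z_i,\cdot)}\leq \Lip{}(\grad_\beta\loss(z_i,\cdot))$, then averages over $i$ and invokes \cref{gradlossboundmixed} at $(s,r)\in\{(1,2),(2,2),(3,2)\}$, obtaining exactly the three advertised terms. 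For the estimator bound itself you take a mildly different route: a direct resolvent-identity computation on the closed-form Newton and IJ steps, where the paper instead invokes its Proximal Newton comparison (\cref{lem:proximal-comparison}), an optimizer-comparison argument based on gradient growth. In the smooth case $\varphi_0\equiv 0$ the two are equivalent and yield identical constants; your computation is more elementary, while the paper's lemma has the advantage of applying verbatim to the non-smooth proximal estimators later (cf.\ \cref{thm:proxacv-close-proxacvij}), which is presumably why it is stated in that generality.

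The one step you should repair is your justification of $\opnorm{B^{-1}}\leq 1/c_{\lambda,\lambda}$ for $B=\Hess_\beta\obj(\Pemp,\est(\lambda),\lambda)$. You derive it from $B\psdge A\psdge c_{\lambda,\lambda}\mathrm{I}_d$ using convexity of the individual loss $\loss(z_i,\cdot)$, i.e., a PSD lower bound on $B-A=\frac{1}{n}\Hess_\beta\loss(z_i,\est(\lambda))$, but no such per-datapoint convexity is granted by \cref{curvedobj}, which only lower-bounds the leave-one-out Hessians $\Hess_\beta\obj(\Pcv{j},\est(\lambda),\lambda)\psdge c_{\lambda,\lambda}\mathrm{I}_d$; individual losses can be nonconvex even when every leave-one-out average is well curved. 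The paper avoids this by expressing $B$ through the average of the $n$ leave-one-out Hessians and applying concavity of the minimum eigenvalue together with Jensen's inequality, giving $\mineig(B)\geq \frac{n}{n-1}c_{\lambda,\lambda}\geq c_{\lambda,\lambda}$ with no assumption on the individual $\loss(z_i,\cdot)$. With that substitution, your argument goes through under exactly the stated hypotheses; the rest of the bookkeeping you anticipated (powers of $n$, $c_{\lambda,\lambda}$, and the Lipschitz constants lining up with the three assumption indices) works out precisely as in the paper.
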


\subsection{Proof of \cref{eq:acvest-acvestij-bound}: Proximity of \ACV and $\ACVIJ$ estimators}
We begin with a lemma that controls the discrepancy between two Newton (or, more generally, proximal Newton) estimators.  Recall the definition of the proximal operator $\prox_H^{\varphi_0}$ \cref{eq:prox-operator}. 
\begin{lemma}[Proximal Newton comparison]\label{lem:proximal-comparison}
 For any $\beta, g\in\reals^d$, invertible $H, \tilde{H} \in \reals^{d\times d}$, and convex $\varphi_0$,
 the proximal Newton estimators
 $$\beta_H = \prox_H^{\varphi_0}(\beta- H^{-1}g)
 \qtext{and}
 \beta_{\tilde{H}} = \prox_{\tilde{H}}^{\varphi_0}(\beta- \tilde{H}^{-1}g)$$
satisfy
\begin{talign}
 \twonorm{\beta_H - \beta_{\tilde{H}}}
    \leq 
    \frac{\twonorm{(\tilde H - H)(\beta_{H} - \beta)}}{\mineig(\tilde{H})\vee 0} \leq \frac{\opnorm{\tilde H - H}\twonorm{\beta_{H} - \beta}}{\mineig(\tilde{H})\vee 0}.
\end{talign}
\end{lemma}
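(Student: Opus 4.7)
The plan is to recognize both proximal Newton estimators as minimizers of closely related convex programs and then invoke the gradient-growth clause of the Optimizer Comparison Lemma (\cref{optimizer-comparison}).

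First, I would rewrite the prox by expanding the weighted Euclidean norm: by the definition of $\prox_H^{\varphi_0}$ in \cref{eq:prox-operator} and dropping terms independent of $x$, we have
\begin{align*}
\beta_H &= \argmin_x\ \varphi_1(x), \quad \varphi_1(x) \defeq \tfrac12 (x-\beta)^\top H (x-\beta) + g^\top x + \varphi_0(x), \\
\beta_{\tilde H} &= \argmin_x\ \varphi_2(x), \quad \varphi_2(x) \defeq \tfrac12 (x-\beta)^\top \tilde H (x-\beta) + g^\top x + \varphi_0(x).
\end{align*}
Note that $\varphi_2 - \varphi_1 = \tfrac12 (x-\beta)^\top (\tilde H - H)(x-\beta)$ is everywhere differentiable with gradient at $\beta_H$ equal to $(\tilde H - H)(\beta_H - \beta)$.

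Next, I would establish gradient growth for $\varphi_2$. Since $\varphi_0$ is convex and subdifferentiable, and the quadratic part contributes $\tilde H$ to the (sub)differential difference, for any $x, y\in\reals^d$ and any $u\in\partial\varphi_2(y), v\in\partial\varphi_2(x)$ we have $\langle y-x, u-v\rangle \geq (y-x)^\top \tilde H (y-x) \geq (\mineig(\tilde H)\vee 0)\|y-x\|_2^2$. This is precisely $\nu_{\varphi_2}(r) = (\mineig(\tilde H)\vee 0)\, r^2$ gradient growth in the sense of \cref{def:eb-and-gg}. The bound is vacuous when $\mineig(\tilde H)\leq 0$, which is why the denominator carries the $\vee 0$.

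Applying the gradient-growth clause \cref{eq:opt_comp_growth} of \cref{optimizer-comparison} with the above $\varphi_1, \varphi_2$ yields
\begin{align*}
(\mineig(\tilde H)\vee 0)\, \|\beta_H - \beta_{\tilde H}\|_2^2
 \leq \langle \beta_H - \beta_{\tilde H},\, (\tilde H - H)(\beta_H - \beta)\rangle.
\end{align*}
A single application of Cauchy--Schwarz on the right-hand side then divides out one power of $\|\beta_H - \beta_{\tilde H}\|_2$, producing the first inequality of the lemma, and the final bound $\|(\tilde H - H)(\beta_H - \beta)\|_2 \leq \opnorm{\tilde H - H}\,\|\beta_H - \beta\|_2$ follows from the definition of the operator norm. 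No step should pose a real obstacle; the only subtlety is checking that the $\mineig(\tilde H)\vee 0$ factor correctly reflects the vacuous case $\mineig(\tilde H)\leq 0$, where $\varphi_2$ loses its gradient growth and the bound is trivially infinite.
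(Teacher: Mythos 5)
Your proposal is correct and follows essentially the same route as the paper's proof: both cast $\beta_H$ and $\beta_{\tilde H}$ as minimizers of objectives sharing the common convex term $\varphi_0$ (your expanded form $\tfrac12(x-\beta)^\top H(x-\beta) + g^\top x + \varphi_0(x)$ is the paper's $\tfrac12\|\beta - H^{-1}g - x\|_H^2 + \varphi_0(x)$ with constants dropped), verify $\nu_{\varphi_2}(r) = \mineig(\tilde H)\,r^2$ gradient growth, and apply the gradient-growth clause \cref{eq:opt_comp_growth} of \cref{optimizer-comparison} followed by Cauchy--Schwarz, with the gradient of the difference at $\beta_H$ being $(\tilde H - H)(\beta_H - \beta)$. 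One cosmetic quibble: your chained inequality $(y-x)^\top\tilde H(y-x) \geq (\mineig(\tilde H)\vee 0)\|y-x\|_2^2$ is false when $\mineig(\tilde H) < 0$, but since you (like the paper, which simply assumes $\mineig(\tilde H) > 0$ after noting the claim is otherwise vacuous) dispatch that case as trivial, no substantive gap results.
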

\begin{proof}
If $\mineig(\tilde{H}) \leq 0$, the claim is vacuous, so assume  $\mineig(\tilde{H}) > 0$.
 Writing $\varphi_2(x) = \half \|\beta- \tilde{H}^{-1}g -x \|_{\tilde{H}}^2 + \varphi_0(x)$ and  $\varphi_1(x) = \half \|\beta- H^{-1}g -x \|_{H}^2 + \varphi_0(x)$, note that $\beta_{\tilde{H}} = \argmin_x \varphi_2(x)$ and $\beta_{H} = \argmin_x \varphi_1(x)$ by the definition of the proximal operator \cref{eq:prox-operator}. Importantly, $\varphi_2$ is subdifferentiable and satisfies the gradient growth property with $\nu_{\varphi_2}(r) = \mineig(\tilde{H})r^2$. Invoking the Optimizer Comparison \cref{optimizer-comparison} and Cauchy-Schwarz, we have
\balignt
 \mineig(\tilde{H}) \twonorm{\beta_H - \beta_{\tilde{H}}}^2
    \leq \inner{\tilde{H}(\beta - \beta_H)- g  - H(\beta - \beta_H) + g}{\beta_H - \beta_{\tilde{H}}} \leq \twonorm{(\tilde{H} - H)(\beta - \beta_H)}\twonorm{\beta_H - \beta_{\tilde{H}} }.
\ealignt
Rearranging both sides gives the first advertised inequality.  
\end{proof}
Now fix any $\lambda\in\Lambda$ and $i\in [n]$, and let 
\balignt
\tilde{H} = \Hess_\beta \obj(\Pcv{i}, \est(\lambda),\lambda)
\qtext{and}
H = \Hess_\beta \obj(\Pemp, \est(\lambda),\lambda) = \frac{n}{n-1} \frac{1}{n} \sum_{j=1}^n \Hess_\beta \obj(\Pcv{j}, \est(\lambda),\lambda).
\ealignt
By \cref{curvedobj}, $\mineig(\tilde{H}) \geq c_{\lambda,\lam}$.
Moreover, \cref{curvedobj}, the concavity of the minimum eigenvalue, and Jensen's inequality imply
\balignt
\mineig(H) \geq \frac{n}{n-1} \frac{1}{n} \sum_{j=1}^n \mineig(\Hess_\beta \obj(\Pcv{j}, \est(\lambda),\lambda)) \geq \frac{n}{n-1} c_{\lambda,\lam} \geq c_{\lambda,\lam}.
\ealignt
Hence, we may apply \cref{lem:proximal-comparison} with $\beta_H = \acvijest{i}(\lam)$, $\beta_{\tilde{H}} = \acvest{i}(\lam)$, $\beta = \est(\lam)$, and $\varphi_0\equiv 0$ to find that
\balignt
\twonorm{ \acvijest{i}(\lambda) - \acvest{i}(\lambda) }%
&\leq \frac{1}{c_{\lambda,\lam}} \opnorm{\Hess_\beta \obj(\Pemp,\est(\lambda), \lambda) - \Hess_\beta \obj(\Pcv{i},\est(\lambda), \lambda)}\twonorm{\acvijest{i}(\lambda) - \est(\lambda) }\\
&= \frac{1}{n^2} \frac{1}{c_{\lambda,\lam}} \opnorm{\Hess_\beta \loss(z_i,\est(\lambda)) }\twonorm{\Hess_\beta \obj(\Pemp, \est(\lambda),\lambda)^{-1}\nabla_\beta \loss(z_i,\est(\lambda) )}\\
&\leq  \frac{1}{n^2}\frac{1}{c_{\lam,\lam}^2}\opnorm{\Hess_\beta \loss(z_i,\est(\lambda)) }\twonorm{\grad_\beta \loss(z_i,\est(\lambda))}. %
\ealignt

\subsection{Proof of \cref{eq:acv_diff_acvij}: Proximity of \ACV and $\ACVIJ$}
Fix any $\lambda\in\Lambda$.
To control the discrepancy between $\ACV(\lambda)$ and $\ACVIJ(\lambda)$, we first rewrite the difference using Taylor's theorem with Lagrange remainder: 
\balignt
  \ACVIJ(\lambda) - \ACV(\lambda) 
  &= \frac{1}{n}\sum_{i=1}^n\loss(z_i,\acvijest{i}(\lambda) ) - \loss(z_i,\acvest{i}(\lambda) ) \\
 &= \frac{1}{n}\sum_{i=1}^n\inner{\grad_\beta \loss(z_i, \acvest{i}(\lambda) )}{ \acvijest{i}(\lambda)-\acvest{i}(\lambda)}
  + \half\Hess_\beta \loss(z_i, \tilde{s}_{i})[\acvest{i}(\lambda)-\acvijest{i}(\lambda)]^{\otimes 2}
 \ealignt
 for some $\tilde{s}_{i}\in \{ t\acvest{i}(\lambda)+ (1- t)\acvijest{i}(\lambda) :t \in [0,1]\}$.
 We next use the mean-value theorem to expand each function $\inner{\grad_\beta \loss(z_i, \cdot)}{ \acvijest{i}(\lambda)-\acvest{i}(\lambda)}$ around the full-data estimator $\est(\lambda)$:
 \balignt
    \ACVIJ(\lambda) - \ACV(\lambda)   
  &= \frac{1}{n}\sum_{i=1}^n\inner{\grad_\beta \loss(z_i, \est(\lambda) )}{ \acvijest{i}(\lambda)-\acvest{i}(\lambda)}
  + \half\Hess_\beta \loss(z_i, \tilde{s}_{i})[\acvest{i}(\lambda)-\acvijest{i}(\lambda)]^{\otimes 2} \notag \\
  &+ \inner{\Hess_\beta \loss(z_i, s_i)(\acvest{i}(\lambda) - \est(\lambda))}{\acvijest{i}(\lambda)-\acvest{i}(\lambda)}
 \ealignt
  for some  $s_{i}\in \{ t\est(\lambda)+ (1- t)\acvest{i}(\lambda) :t \in [0,1]\}$.
  Now, by \cref{curvedobj}, we have 
  \balignt
  \twonorm{\acvest{i}(\lambda) - \est(\lambda) } 
  	= \frac{1}{n} \twonorm{\textrm{H}_i^{-1}\grad_\beta \loss(z_i,\est(\lambda))}
	\leq \frac{1}{n c_{\lambda,\lam}} \twonorm{\grad_\beta \loss(z_i,\est(\lambda))}.
  \ealignt
Combining these observations with Cauchy-Schwarz, the definition of the operator norm, the estimator proximity result \cref{eq:acvest-acvestij-bound}, the definition of the Lipschitz constant $\Lip{}(\grad_\beta\loss(z_i,\cdot))$, and \cref{gradlossboundmixed} we obtain
\balignt
 	| \ACVIJ(\lambda) - \ACV(\lambda) | 
\leq 
	\,&\frac{1}{n} \sum_{i=1}^n \twonorm{\grad_\beta \loss(z_i, \est(\lambda))}\twonorm{\acvest{i}(\lambda) -\acvijest{i}(\lambda)}
	+\half\opnorm{\Hess_\beta \loss(z_i, \tilde{s}_{i}) } \twonorm{\acvest{i}(\lambda)-\acvijest{i}(\lambda) }^2\\
	&+\opnorm{\Hess_\beta \loss(z_i, s_{i}) } \twonorm{\acvest{i}(\lambda) - \est(\lambda) }\twonorm{\acvest{i}(\lambda)-\acvijest{i}(\lambda)}\\
	 \leq\,&\frac{1}{n^2c_{\lambda,\lam}^2} \frac{1}{n}\sum_{i=1}^n\opnorm{\Hess_\beta \loss(z_i,\est(\lambda)) }\twonorm{\grad_\beta \loss(z_i, \est(\lambda))}^2  \\
   & + \half\frac{1}{n^4c_{\lambda,\lam}^4}\frac{1}{n} \sum_{i=1}^n  \opnorm{\Hess_\beta \loss(z_i, \tilde{s}_{i}) } \opnorm{\Hess_\beta \loss(z_i,\est(\lambda)) }^2\twonorm{\grad_\beta \loss(z_i,\est(\lambda))}^2\\
    &+ \frac{1}{n^3c_{\lambda,\lam}^3} \frac{1}{n}\sum_{i=1}^n  \opnorm{\Hess_\beta \loss(z_i, s_{i}) }  \opnorm{\Hess_\beta \loss(z_i,\est(\lambda)) }\twonorm{\grad_\beta \loss(z_i,\est(\lambda))}^2 \\
    \leq\,&\frac{1}{n^2c_{\lambda,\lam}^2} \gradlossboundmixed{1}{2}+  \frac{1}{2n^4c_{\lambda,\lam}^4}\gradlossboundmixed{3}{2} +\frac{1}{n^3c_{\lambda,\lam}^3}\gradlossboundmixed{2}{2}.
\ealignt

\section{Proof of \cref{Thm:WeakCurvACV}: \ACV-\CV selection error}
\label{App:WeakCurvACV}
The first claim follows immediately from the following more detailed version of \cref{Thm:WeakCurvACV}.
\begin{theorem}[\ACV proximity implies $\est$ proximity]\label{Thm:WeakCurvACV-full} Suppose 
\cref{HessobjLipschitz,,curvedobj}  hold for some $\Lambda \subseteq [0,\infty]$
and each $(s,r) \in \{(0,2),(1,2)\}$. %
Then, for all $\lambda', \lambda\in\Lambda$ with $\lambda'< \lambda$,
\balignt\label{eq:weak_curvature_bound}
\|\est(\lambda) - \est(\lambda')\|_2^2&\leq C_{1,\lam,\lam'}\left(\frac{\tilde{C}_{2,\lam,\lam'}}{n} + \ACV(\lambda) - \ACV(\lambda') + \frac{C_{3,\lam,\lam'}}{n^2}\right),
\ealignt
for $C_{1,\lam,\lam'}$ and $C_{3,\lam,\lam'}$ defined in \cref{Thm:StrongCurvACV} and 
\balignt
 \tilde{C}_{2,\lam,\lam'} &= \frac{3\gradlossboundmixed{0}{2} }{c_\loss + \lambda c_\pi\indic{\lam \geq \lam_\reg}} + \frac{\gradlossboundmixed{0}{2} }{c_\loss + \lambda' c_\pi\indic{\lam' \geq \lam_\reg}}.
\ealignt
\end{theorem}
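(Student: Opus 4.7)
The plan is a three-step chain: bound the parameter distance by a training-loss gap via strong convexity of the full-data objective, bound the training-loss gap by the \ACV{} gap via a Taylor expansion of the loss, and apply \cref{thm:acv-approximates-cv} for the selection claim.

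For step one, I would first argue that the full-data objective $\obj(\Pemp,\cdot,\lambda')$ inherits $c_\obj$ gradient growth from the leave-one-out objectives via a Jensen-type averaging of the LOO gradient-monotonicity inequalities together with convexity of $\reg$. Evaluating the corresponding error bound at its minimizer $\est(\lambda')$ and the point $\est(\lambda)$, with $L(\lambda)\defeq\loss(\Pemp,\est(\lambda))$, gives
\begin{align*}
\tfrac{c_\obj}{2}\|\est(\lambda)-\est(\lambda')\|_2^2 \le L(\lambda) - L(\lambda') + \lambda'(\reg(\est(\lambda))-\reg(\est(\lambda'))).
\end{align*}
Adding the two optimality inequalities for $\est(\lambda)$ and $\est(\lambda')$ produces $(\lambda-\lambda')(\reg(\est(\lambda))-\reg(\est(\lambda')))\le 0$, so the rightmost term is nonpositive for $\lambda>\lambda'$ and we conclude $\tfrac{c_\obj}{2}\|\est(\lambda)-\est(\lambda')\|_2^2\le L(\lambda)-L(\lambda')$.

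For step two, write $g_i\defeq\grad_\beta\loss(z_i,\est(\lambda))$ and Taylor-expand $\loss(z_i,\acvest{i}(\lambda))$ about $\est(\lambda)$ using the closed form $\acvest{i}(\lambda)-\est(\lambda)=\acvhess{i}^{-1}g_i/n$ and the Hessian bound $\acvhess{i}\psdge c_{\lambda,\lambda}\mathrm{I}_d\psdge c_\loss\mathrm{I}_d$ from \cref{curvedobj}. The first-order contribution averages to $\tfrac{1}{n^2}\sum_i\inner{g_i}{\acvhess{i}^{-1}g_i}$, which is nonnegative and bounded above by $\gradlossboundmixed{0}{2}/(c_\loss n)$; the second-order Taylor remainder contributes an $O(\gradlossboundmixed{1}{2}/n^2)$ correction via \cref{gradlossboundmixed}. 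Setting $\epsilon(\lambda)\defeq\ACV(\lambda)-L(\lambda)$, the identity $L(\lambda)-L(\lambda')=\Delta\ACV-\epsilon(\lambda)+\epsilon(\lambda')$ combined with these bounds yields the first advertised inequality. For the selection claim, assume without loss of generality $\lambda_\CV\ge\lambda_\ACV$ and apply the first inequality with $(\lambda,\lambda')=(\lambda_\CV,\lambda_\ACV)$. The combined optimalities $\CV(\lambda_\CV)\le\CV(\lambda_\ACV)$ and $\ACV(\lambda_\ACV)\le\ACV(\lambda_\CV)$, together with $|\ACV(\lam)-\CV(\lam)|\le A''/(2n^2)$ from \cref{thm:acv-approximates-cv}, give $\Delta\ACV\le A''/n^2$ by inserting $\pm\CV$ and using the triangle inequality. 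Substitution then produces the second bound.

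The main obstacle is matching the exact constants in step two: the asymmetric coefficients $3$ and $1$ in the $\tilde C_{2,\lam,\lam'}$ of the detailed statement \cref{Thm:WeakCurvACV-full} come from using $c_{\lambda,\lambda}$ versus $c_{\lambda',\lambda'}$ in place of the uniform $c_\loss$ and from separating the nonnegative one-sided contribution of $\inner{g_i}{\acvhess{i}^{-1}g_i}$ from the two-sided second-order Taylor remainder. The gradient-growth transfer in step one is a further subtlety requiring convexity of $\reg$; the other steps are routine once these estimates are in hand.
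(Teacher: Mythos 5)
Your step two is sound and genuinely different from the paper's argument, but your step one contains a real gap: it cannot produce the constant $C_{1,\lam,\lam'}$ that the statement asserts. Strong convexity of the full-data objective at $\lam'$ plus your monotonicity argument for $\reg(\est(\cdot))$ yields $\frac{c_\obj}{2}\twonorm{\est(\lam)-\est(\lam')}^2 \leq \loss(\Pemp,\est(\lam)) - \loss(\Pemp,\est(\lam'))$, i.e., the bound with $C_1 = 2/c_\obj$, whereas the theorem requires $C_{1,\lam,\lam'} = \frac{2}{c_\obj}\frac{\lam-\lam'}{\lam+\lam'}\frac{n-1}{n}$, which is strictly smaller; in particular, as $\lam'\to\lam$ the stated bound forces $\est(\lam')\to\est(\lam)$ (the right-hand side vanishes with $\ACV(\lam)-\ACV(\lam')$ and $C_{1,\lam,\lam'}$), while your version leaves an $O(1/n)$ residual. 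The paper obtains the extra factor via \cref{initbound}: it applies the optimizer comparison \cref{eq:opt_comp_error_bound} to $\varphi_2 = \obj(\Pcv{i},\cdot,\lam')$ and the \emph{rescaled} objective $\varphi_1 = \frac{\lam'}{\lam}\obj(\Pcv{i},\cdot,\lam)$, which shares its minimizer with $\obj(\Pcv{i},\cdot,\lam)$; the difference $\varphi_2 - \varphi_1 = (1-\frac{\lam'}{\lam})\loss(\Pcv{i},\cdot)$ is proportional to the loss alone, so the regularizer cancels without any monotonicity argument, and summing the two error bounds supplies the amplification $\frac{\lam+\lam'}{\lam-\lam'}$. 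Relatedly, your claim that $\obj(\Pemp,\cdot,\lam')$ inherits $c_\obj$ gradient growth "by Jensen-type averaging together with convexity of $\reg$" does not go through as stated: $\frac{1}{n}\sum_{i=1}^n \obj(\Pcv{i},\beta,\lam') = \frac{n-1}{n}\loss(\Pemp,\beta) + \lam'\reg(\beta)$, so the full-data objective equals this average plus $\frac{1}{n}\loss(\Pemp,\cdot)$, and transferring growth requires gradient monotonicity (convexity) of the loss itself, which \cref{curvedobj} does not grant. The paper's route works entirely with the leave-one-out objectives and sidesteps this.

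On the other half, your direct bound on $\epsilon(\lam) \defeq \ACV(\lam) - \loss(\Pemp,\est(\lam))$ — a nonnegative first-order Newton term of size at most $\gradlossboundmixed{0}{2}/(c_{\lam,\lam} n)$ plus an $O(\gradlossboundmixed{1}{2}/n^2)$ Taylor remainder — is a legitimate replacement for the paper's decomposition into $\Delta T_1$ (handled by \cref{ACV-ACV2-difference}), $\Delta T_2$, and $\Delta T_3$; indeed, exploiting the one-sidedness of $\epsilon(\lam)$ gives a $1/n$ coefficient no larger than $\tilde{C}_{2,\lam,\lam'}$ (the asymmetric $3$ and $1$ there arise from the paper's cross terms pairing gradients at $\lam'$ with Newton steps at $\lam$, not from anything you must reproduce). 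Your selection argument via two applications of \cref{thm:acv-approximates-cv} matches \cref{lem:bound_ACV}. But since \cref{Thm:WeakCurvACV-full} pins down $C_{1,\lam,\lam'}$ exactly, the missing factor $\frac{\lam-\lam'}{\lam+\lam'}\frac{n-1}{n}$ is a genuine gap rather than bookkeeping — and it sits in the step you declared routine, not in the step-two constants you flagged as the main obstacle.
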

\begin{proof}
Fix any $\lambda', \lambda\in\Lambda$ with $\lambda'< \lambda$.
We will proceed precisely an in the proof of \cref{Thm:StrongCurvACV-full}, except we will provide alternative bounds for the quantities $\Delta T_2$ and $\Delta T_3$ in the loss decomposition \cref{eq:delta-T3}.
First, we apply Cauchy-Schwarz, the definition of the operator norm, the triangle inequality, and the arithmetic-geometric mean inequality in turn to find
\balignt
|\Delta T_2| 
&= \frac{1}{n}|\frac{1}{n} \sum_{i=1}^n \inner{ \grad_\beta\loss(z_i, \est(\lambda))-\grad_\beta\loss(z_i, \est(\lambda'))}{\Hess_\beta \obj(\Pcv{i}, \est(\lambda),\lam)^{-1}\grad_\beta \loss(z_i, \est(\lambda))}| \\
&\leq \frac{1}{n^2} \sum_{i=1}^n\opnorm{\Hess_\beta \obj(\Pcv{i}, \est(\lambda),\lam)^{-1}}\twonorm{\grad_\beta\loss(z_i, \est(\lambda))}( \twonorm{\grad_\beta\loss(z_i, \est(\lambda))}+ \twonorm{\grad_\beta\loss(z_i, \est(\lambda'))})\\
&\leq \frac{1}{n^2} \sum_{i=1}^n\opnorm{\Hess_\beta \obj(\Pcv{i}, \est(\lambda),\lam)^{-1}}(\frac{3}{2}\twonorm{\grad_\beta\loss(z_i, \est(\lambda))}^2 +\half \twonorm{\grad_\beta\loss(z_i, \est(\lambda'))}^2)\\
&\leq \frac{1}{n}\frac{2\gradlossboundmixed{0}{2} }{c_\loss + \lambda c_\pi\indic{\lam \geq \lam_\reg}},
\label{eq:weak_T2_bound}
\ealignt 
where we have used \cref{curvedobj} and \cref{gradlossboundmixed} for $(s,r) = (0,2)$ in the final line.

Next, we again apply the triangle inequality, the definition of the operator norm, the arithmetic-geometric mean inequality, \cref{curvedobj}, and \cref{gradlossboundmixed} for $(s,r) = (0,2)$ to obtain
\balignt
|\Delta T_3 |
= \,&|\frac{1}{n^2}\sum_{i=1}^n \inner{\grad_\beta\loss(z_i, \est(\lambda'))}{\Hess_\beta \obj(\Pcv{i}, \est(\lambda), \lambda)^{-1}\grad_\beta \loss(z_i, \est(\lambda))} \\
&- \frac{1}{n^2}\sum_{i=1}^n \inner{\grad_\beta\loss(z_i, \est(\lambda'))}{\Hess_\beta \obj(\Pcv{i}, \est(\lambda'), \lambda')^{-1}\grad_\beta \loss(z_i, \est(\lambda'))} |\\
\leq \,&\frac{1}{n}\frac{1}{n}\sum_{i=1}^n \twonorm{\grad_\beta\loss(z_i, \est(\lambda'))}\opnorm{\Hess_\beta \obj(\Pcv{i}, \est(\lambda), \lambda)^{-1}} \twonorm{\grad_\beta\loss(z_i, \est(\lambda))}\\
&+ \twonorm{\grad_\beta\loss(z_i, \est(\lambda'))}^2\opnorm{\Hess_\beta \obj(\Pcv{i}, \est(\lambda'), \lambda')^{-1}} \\
\leq \,&\frac{1}{n}\frac{1}{n}\sum_{i=1}^n (\half \twonorm{\grad_\beta\loss(z_i, \est(\lambda'))}^2 + \half\twonorm{\grad_\beta\loss(z_i, \est(\lambda))}^2) \opnorm{\Hess_\beta \obj(\Pcv{i}, \est(\lambda), \lambda)^{-1}} \\
&+ \twonorm{\grad_\beta\loss(z_i, \est(\lambda'))}^2\opnorm{\Hess_\beta \obj(\Pcv{i}, \est(\lambda'), \lambda')^{-1}} \\
\leq  \,&\frac{1}{n}(\frac{\gradlossboundmixed{0}{2} }{c_\loss + \lambda c_\pi\indic{\lam \geq \lam_\reg}}+\frac{\gradlossboundmixed{0}{2} }{c_\loss + \lambda' c_\pi\indic{\lam' \geq \lam_\reg}}).
\label{eq:weak_T3_bound}
\ealignt
Plugging the bounds \cref{eq:weak_T2_bound,eq:weak_T3_bound} into the proof \cref{Thm:StrongCurvACV-full} yields the result.
\end{proof}
The second claim \cref{eq:weak-risk} follows the first and the following bound on $|\ACV(\lambda_\CV) - \ACV(\lambda_\ACV)|$.
\begin{lemma}\label{lem:bound_ACV} 
Suppose \cref{gradlossboundmixed,,curvedobj,HessobjLipschitz} hold for some $\Lambda \subseteq [0,\infty]$ and each $(s,r) \in\{ (0,3),(1,3),(1,4)\}$.
If $\lambda_\ACV \in \argmin_{\lam \in \Lam} \ACV(\lam)$ and $\lam_\CV \in \argmin_{\lam \in \Lam} \CV(\lam)$, then 
\balignt
0 \leq \ACV(\lambda_\CV) - \ACV(\lambda_\ACV) \leq 2(\frac{\kappa_2}{n^2}\frac{ \gradlossboundmixed{0}{3}}{c_\obj^2}
	+\frac{\kappa_2}{n^3} \frac{\gradlossboundmixed{1}{3}}{c_\obj^3} 
	+\frac{\kappa_2^2}{n^4} \frac{\gradlossboundmixed{1}{4}}{2c_\obj^4}).
\ealignt
\end{lemma}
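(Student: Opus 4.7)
The lower bound $0 \leq \ACV(\lambda_\CV) - \ACV(\lambda_\ACV)$ is immediate from the defining property of $\lambda_\ACV$ as a minimizer of $\ACV$ over $\Lambda$ (and $\lambda_\CV \in \Lambda$). So the entire task is the upper bound. The plan is to telescope through $\CV$: writing
\begin{align*}
\ACV(\lambda_\CV) - \ACV(\lambda_\ACV)
  &= \bigl[\ACV(\lambda_\CV) - \CV(\lambda_\CV)\bigr]
   + \bigl[\CV(\lambda_\CV) - \CV(\lambda_\ACV)\bigr]
   + \bigl[\CV(\lambda_\ACV) - \ACV(\lambda_\ACV)\bigr].
\end{align*}

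The middle bracket is $\leq 0$ because $\lambda_\CV$ minimizes $\CV$ over $\Lambda$ and $\lambda_\ACV \in \Lambda$. Dropping this non-positive term and applying the triangle inequality to the remaining two brackets yields
\begin{align*}
\ACV(\lambda_\CV) - \ACV(\lambda_\ACV)
  \leq |\ACV(\lambda_\CV) - \CV(\lambda_\CV)| + |\CV(\lambda_\ACV) - \ACV(\lambda_\ACV)|.
\end{align*}

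Each of the two absolute differences is controlled by the uniform assessment guarantee established in \cref{thm:acv-approximates-cv}, whose hypotheses --- \cref{gradlossboundmixed,curvedobj,HessobjLipschitz} for $(s,r) \in \{(0,3),(1,3),(1,4)\}$ --- are exactly those assumed here. Since $\lambda_\CV, \lambda_\ACV \in \Lambda$, applying the bound \cref{eq:acv-cv-approx} at each of these two values and summing gives the factor of $2$ in front of the stated expression, completing the proof. No step is really an obstacle: the argument is a standard ``three-term telescoping + minimizer cancels + uniform approximation'' template, with the key input being that \cref{thm:acv-approximates-cv} holds \emph{uniformly} in $\lambda \in \Lambda$ so that it can be invoked at both of the data-dependent values $\lambda_\CV$ and $\lambda_\ACV$.
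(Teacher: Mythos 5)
Your proposal is correct and matches the paper's proof essentially verbatim: the paper likewise uses the minimizing property of $\lambda_\ACV$ for the lower bound, adds the nonnegative quantity $\CV(\lambda_\ACV) - \CV(\lambda_\CV)$ (equivalent to dropping your nonpositive middle bracket), and then bounds the two resulting \ACV--\CV differences by two applications of \cref{thm:acv-approximates-cv}, yielding the factor of $2$. Your explicit remark that the uniformity of \cref{thm:acv-approximates-cv} in $\lambda$ is what licenses invoking it at the data-dependent values $\lambda_\CV$ and $\lambda_\ACV$ is a fair articulation of what the paper uses implicitly.
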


\begin{proof}
Since $\lambda_\CV$ minimizes \CV and $\lambda_\ACV$ minimizes \ACV,
\balignt
0 &\leq \ACV(\lambda_\CV) - \ACV(\lambda_\ACV) \leq \ACV(\lambda_\CV)  - \ACV(\lambda_\ACV) + \CV(\lambda_\ACV) - \CV(\lambda_\CV).
\ealignt
The result now follows from two applications of \cref{thm:acv-approximates-cv}.
\end{proof}

\section{Proof of \cref{Thm:StrongCurvACV}: Strong \ACV-\CV selection error}
\label{App:StrongCurvACV}
The first claim follows immediately from the following more detailed version of \cref{Thm:StrongCurvACV}, proved in \cref{sec:proof-StrongCurvACV-full}.
\begin{theorem}[Strong \ACV proximity implies $\est$ proximity]\label{Thm:StrongCurvACV-full} 
Suppose 
 \cref{gradlossboundmixed,,curvedobj,,boundedHessobj,HessobjLipschitz} hold %
for some $\Lambda \subseteq [0,\infty]$ with $0\in\Lambda$ and each $(s,r) \in \{(0,2),(1,1),(1,2)\}$. %
 Suppose also $\|\grad \reg(\est(0))\|_2 >0$. 
 Then for all $\lambda', \lambda\in\Lambda$ with $\lambda'< \lambda$,
 \balignt\label{eq:strong_risk_bound}
&\twonorm{\est(\lambda) - \est(\lambda')}^2  
 \leq C_{1,\lam,\lam'}\left(C_{2,\lam,\lam'}/n\twonorm{\est(\lambda) - \est(\lambda')} + \ACV(\lambda) - \ACV(\lambda') + C_{3,\lam,\lam'}/n^2\right) \qtext{and hence} \\
 &|\twonorm{\est(\lam) - \est(\lam')} - \frac{C_{1,\lam,\lam'}C_{2,\lam,\lam'}}{2n}| \leq  \sqrt{\frac{C_{1,\lam,\lam'}^2C_{2,\lam,\lam'}^2}{4n^2} +C_{1,\lam,\lam'}( \ACV(\lambda) - \ACV(\lambda')) + \frac{C_{1,\lam,\lam'}C_{3,\lam,\lam'}}{n^2}}, \label{eq:strong_risk_bound_after_quadratic_formula}
\ealignt 
where 
\balignt
C_{1,\lam,\lam'} &=   \frac{2}{c_\obj}\frac{\lam - \lam'}{\lam + \lam'}\frac{n-1}{n}, \\
    C_{2,\lam,\lam'} &=  \frac{2\gradlossboundmixed{1}{1}}{c_\loss + \lambda c_\pi\indic{\lam \geq \lam_\reg}}
    +
    \frac{2\gradlossboundmixed{0}{2}\kappa_{2,\lam'}^{\lam'}}{c_\loss + \lambda' c_\pi\indic{\lam \geq \lam_\reg}} 
    +\frac{n-1}{n} \frac{\gradlossboundmixed{0}{2}C_{\pi,2}\kappa_{1,\lam}^{\lam}\kappa_{1,\lam'}^{\lam}}{\|\grad \reg(\est(0))\|_2 c_\obj},\\
    C_{3,\lam,\lam'} &= \frac{\gradlossboundmixed{1}{2}}{c_\obj^2}
\ealignt
for $\kappa_{p,\lam'}^{\lam}$ defined in \cref{thm:acv-approximates-cv-HO-full}.
\end{theorem}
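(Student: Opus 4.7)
The plan is to refine the three-term decomposition already used in the proof of the weak selection theorem (\cref{Thm:WeakCurvACV-full}), upgrading the ``flat'' $O(1/n)$ bounds on the Newton correction terms to bounds of the form $O(1/n)\,\twonorm{\est(\lambda)-\est(\lambda')}$ so that the quadratic formula, rather than taking square roots immediately, yields the sharper $O(1/n)$ rate. Writing
\[
\ACV(\lambda)-\ACV(\lambda')=\Delta T_{1}+\Delta T_{2}+\Delta T_{3},
\]
where $\Delta T_{1}$ captures the leading $\loss(\Pemp,\est(\lambda))-\loss(\Pemp,\est(\lambda'))$ contribution and $\Delta T_{2},\Delta T_{3}$ are the Newton remainder terms built from $\Hess_{\beta}\obj(\Pcv{i},\est(\cdot),\cdot)^{-1}\grad_{\beta}\loss(z_{i},\est(\cdot))$, I would first derive the lower bound on $-\Delta T_{1}$. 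Applying the $\nu(r)=c_{\obj}r^{2}$ gradient growth of each leave-one-out objective from \cref{curvedobj}, averaging over $i$ via $\loss(\Pemp,\cdot)=\tfrac{1}{n-1}\sum_{i}\loss(\Pcv{i},\cdot)$, and summing the two strong-convexity inequalities obtained at the two minimisers $\est(\lambda)$ and $\est(\lambda')$ produces $-\Delta T_{1}\ge \tfrac{n-1}{n}\tfrac{c_{\obj}(\lambda+\lambda')}{2(\lambda-\lambda')}\twonorm{\est(\lambda)-\est(\lambda')}^{2}$, which accounts for the prefactor $C_{1,\lam,\lam'}$.

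For $\Delta T_{2}$, rather than dominating each gradient $\grad_{\beta}\loss(z_{i},\est(\lambda))$ in isolation as in the weak proof, I would estimate the differences $\grad_{\beta}\loss(z_{i},\est(\lambda))-\grad_{\beta}\loss(z_{i},\est(\lambda'))$ via $\Lip(\grad_{\beta}\loss(z_{i},\cdot))\,\twonorm{\est(\lambda)-\est(\lambda')}$, yielding a bound of order $B_{1,1}^{\loss}/n\cdot\twonorm{\est(\lambda)-\est(\lambda')}$; this is where the first summand of $C_{2,\lam,\lam'}$ arises. For $\Delta T_{3}$, the Hessian difference $\Hess_{\beta}\obj(\Pcv{i},\est(\lambda),\lambda)-\Hess_{\beta}\obj(\Pcv{i},\est(\lambda'),\lambda')$ splits into an $\est$-dependent piece controlled by $C_{\loss,3}+\lambda C_{\reg,3}$ through \cref{HessobjLipschitz}, producing the second summand of $C_{2,\lam,\lam'}$, and a $\lambda$-dependent piece $(\lambda-\lambda')\Hess\reg(\est(\lambda'))$ controlled by $C_{\reg,2}$ through \cref{boundedHessobj}, whose natural unit is $|\lambda-\lambda'|/n$ rather than $\twonorm{\est(\lambda)-\est(\lambda')}/n$.

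The key and most delicate step, which is where the identifiability hypothesis $\twonorm{\grad\reg(\est(0))}>0$ enters, is to convert $|\lambda-\lambda'|$ into $\twonorm{\est(\lambda)-\est(\lambda')}$. Subtracting the first-order optimality conditions at $\est(\lambda)$ and $\est(\lambda')$ gives
\[
\lambda\grad\reg(\est(\lambda))-\lambda'\grad\reg(\est(\lambda'))=\grad_{\beta}\loss(\Pemp,\est(\lambda'))-\grad_{\beta}\loss(\Pemp,\est(\lambda)),
\]
with right-hand side bounded by $C_{\loss,2}\twonorm{\est(\lambda)-\est(\lambda')}$ via \cref{boundedHessobj}. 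Splitting the left-hand side as $(\lambda-\lambda')\grad\reg(\est(0))+\lambda(\grad\reg(\est(\lambda))-\grad\reg(\est(0)))-\lambda'(\grad\reg(\est(\lambda'))-\grad\reg(\est(0)))$ and bounding the Lipschitz perturbations of $\grad\reg$ by $C_{\reg,2}\twonorm{\est(\cdot)-\est(0)}$, the identifiability hypothesis lets me isolate $|\lambda-\lambda'|$ after dividing by $\twonorm{\grad\reg(\est(0))}$, yielding $|\lambda-\lambda'|\le\tfrac{C_{\loss,2}+(\lambda+\lambda')C_{\reg,2}}{\twonorm{\grad\reg(\est(0))}}\twonorm{\est(\lambda)-\est(\lambda')}$ up to estimator-distance factors controlled by the proximity bound $\twonorm{\est(\cdot)-\est(0)}\le\kappa_{1,\cdot}^{\cdot}/c_{\obj}$ from \cref{thm:acv-approximates-cv-HO-full}. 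This produces the third summand of $C_{2,\lam,\lam'}$.

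Assembling these refined bounds into the decomposition gives
\[
\twonorm{\est(\lambda)-\est(\lambda')}^{2}\le C_{1,\lam,\lam'}\bigl(C_{2,\lam,\lam'}\twonorm{\est(\lambda)-\est(\lambda')}/n+\Delta\ACV+C_{3,\lam,\lam'}/n^{2}\bigr),
\]
and solving this quadratic inequality in $x=\twonorm{\est(\lambda)-\est(\lambda')}$ by the quadratic formula produces \cref{eq:strong_risk_bound_after_quadratic_formula}. The ``selection'' half of the theorem, comparing $\est(\lambda_{\ACV})$ and $\est(\lambda_{\CV})$, then follows by instantiating the inequality at $(\lambda,\lambda')=(\lambda_{\ACV},\lambda_{\CV})$ and absorbing $\Delta\ACV$ via \cref{lem:bound_ACV}, which uses \cref{thm:acv-approximates-cv} to control $|\ACV(\lambda_{\CV})-\ACV(\lambda_{\ACV})|$ by $A''/n^{2}$. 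The main obstacle is the identifiability step: the lower bound on $\twonorm{\grad\reg(\est(\lambda))}$ holds only near $\est(0)$, so one must carefully balance the Lipschitz drift $C_{\reg,2}\twonorm{\est(\lambda)-\est(0)}$ against $\twonorm{\grad\reg(\est(0))}$ without inducing circularity, which I would manage by projecting onto the fixed reference direction $\grad\reg(\est(0))$ and treating the drift terms as higher-order corrections absorbed into $C_{2,\lam,\lam'}$.
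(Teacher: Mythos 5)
Most of your outline coincides with the paper's actual proof: you use the same decomposition of $\ACV(\lambda)-\ACV(\lambda')$ into a leading loss difference plus Newton-remainder terms (the paper isolates in addition a linearization error $\widehat{\ACV}-\ACV$, bounded in \cref{ACV-ACV2-difference}, which is what supplies $C_{3,\lam,\lam'}/n^2$); your curvature lower bound with the $(\lambda+\lambda')/(\lambda-\lambda')$ factor and the $\tfrac{n-1}{n}$ averaging is exactly \cref{initbound}; your mean-value/Lipschitz treatment of $\Delta T_2$ and your split of the Hessian difference in $\Delta T_3$ into an $\est$-dependent piece (via \cref{HessobjLipschitz}) and a $\lambda$-dependent piece (via \cref{boundedHessobj}) reproduce \cref{lem:T3_bound}; and the quadratic-formula finish plus \cref{lem:bound_ACV} for the selection half match the paper. (Minor omission: $\Delta T_3$ also contains a gradient-difference piece $\Delta T_{31}$, which is why the first summand of $C_{2,\lam,\lam'}$ carries a factor of $2$.)

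The genuine gap is the identifiability step converting $|\lambda-\lambda'|$ into $\twonorm{\est(\lambda)-\est(\lambda')}$. Your splitting $\lambda\grad\reg(\est(\lambda))-\lambda'\grad\reg(\est(\lambda')) = (\lambda-\lambda')\grad\reg(\est(0)) + \lambda(\grad\reg(\est(\lambda))-\grad\reg(\est(0))) - \lambda'(\grad\reg(\est(\lambda'))-\grad\reg(\est(0)))$ with Lipschitz bounds $C_{\reg,2}\twonorm{\est(\cdot)-\est(0)}$ does not work, because those drift terms are \emph{not} higher-order corrections: $\twonorm{\est(\lambda)-\est(0)}$ is $\Theta(1)$ in general (e.g.\ $\est(\lambda)\to\est(\infty)$ as $\lambda\to\infty$), and since first-order optimality gives $\grad\reg(\est(\lambda)) = -\tfrac{1}{\lambda}\grad_\beta\loss(\Pemp,\est(\lambda))\to 0$, the term $\lambda\twonorm{\grad\reg(\est(\lambda))-\grad\reg(\est(0))}$ grows like $\lambda\twonorm{\grad\reg(\est(0))}$ --- the same order as the quantity you are isolating, and the cancellation occurs along the very direction $\grad\reg(\est(0))$ onto which you propose to project. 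The triangle inequality then yields only the vacuous $|\lambda-\lambda'|\twonorm{\grad\reg(\est(0))}\leq C_{\loss,2}\twonorm{\est(\lambda)-\est(\lambda')}+\Theta(\lambda)$. Moreover the bound you invoke, $\twonorm{\est(\cdot)-\est(0)}\leq \kappa_{1,\cdot}^{\cdot}/c_\obj$, appears nowhere in \cref{thm:acv-approximates-cv-HO-full}, which controls $\twonorm{\est(\lambda)-\cvest{i}(\lambda)}$, not $\est(\lambda)$ versus $\est(0)$. The paper avoids additive drift entirely (\cref{rewrite_lam_diff,lem:lambda_diff_bound}): applying Taylor's theorem with integral remainder to the optimality conditions at the pairs $(\lam,\lam')$, $(\lam,0)$, and $(0,\lam)$ yields the \emph{exact} identity
\balignt
\grad_\beta\loss(\Pemp,\est(\lambda)) = -\lambda\,\E[\Hess_\beta\loss(\Pemp,S_{\lambda,0})]\,\E[\Hess_\beta\obj(\Pemp,S_{0,\lambda},\lambda)]^{-1}\grad\reg(\est(0)),
\ealignt
so that $|\lambda-\lambda'|\twonorm{\grad\reg(\est(0))}$ is bounded \emph{multiplicatively} by Hessian condition factors (whence the $\kappa_{1,\lam}^{\lam}\kappa_{1,\lam'}^{\lam}$ in $C_{2,\lam,\lam'}$) times $\twonorm{\est(\lambda)-\est(\lambda')}$, with no residual term. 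Until your perturbative splitting is replaced by an identity of this kind, the third summand of $C_{2,\lam,\lam'}$ --- and with it the strong $O(1/n)$ rate --- is not established.
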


The second claim follows directly from the \cref{Thm:StrongCurvACV-full} bound \cref{eq:strong_risk_bound_after_quadratic_formula} and \cref{lem:bound_ACV}.

\subsection{Proof of \cref{Thm:StrongCurvACV-full}}\label{sec:proof-StrongCurvACV-full}
Fix any $\lambda', \lambda\in\Lambda$ with $\lambda'< \lambda$.
The statement \cref{eq:strong_risk_bound_after_quadratic_formula} follows directly from \cref{eq:strong_risk_bound} and the quadratic formula, so we will focus on establishing the bound \cref{eq:strong_risk_bound}.
We begin by writing the difference in estimator training losses as a difference in $\ACV$ values plus a series of error terms:
\balignt\label{delta_acv_decomposition}
\loss(\Pemp, \est(\lambda))  - \loss(\Pemp, \est(\lambda'))
    &= 
    \ACV(\lambda) - \ACV(\lambda') + 
    \Delta T_1 
    - \Delta T_2 - \Delta T_3 
\ealignt
for 
\balignt
\Delta T_1 & \defeq \widehat{\ACV}(\lambda) - \ACV(\lambda) + 
    \ACV(\lambda') - \widehat{\ACV}(\lambda'),\\
\widehat{\ACV}(\lambda) 
    &\defeq \loss(\Pemp, \est(\lambda))
    + \frac{1}{n}\sum_{i=1}^n \inner{\grad_\beta\loss(z_i, \est(\lambda))}{\acvest{i}(\lambda) - \est(\lambda)} ,\\
\Delta T_2 & \defeq \frac{1}{n}\sum_{i=1}^n \inner{\grad_\beta\loss(z_i, \est(\lambda))-\grad_\beta\loss(z_i, \est(\lambda'))}{\acvest{i}(\lambda) - \est(\lambda)}, \qtext{and} \\
\Delta T_3 &\defeq \frac{1}{n}\sum_{i=1}^n \inner{\grad_\beta\loss(z_i, \est(\lambda'))}{\acvest{i}(\lambda) - \est(\lambda) - (\acvest{i}(\lambda') - \est(\lambda')) }. \label{eq:delta-T3}
\ealignt
Here, $\widehat{\ACV}(\lambda)$ arises by first-order Taylor-expanding each $\loss(\Pemp, \acvest{i}(\lam))$ about $\est(\lam)$ in the expression of $\ACV(\lambda)$.
To complete the proof, we will bound $\Delta T_1$, $\Delta T_2$, $\Delta T_3$, and $\loss(\Pemp, \est(\lambda))  - \loss(\Pemp, \est(\lambda'))$ in turn.
\subsubsection{Bounding $\Delta T_1$}
To control $\Delta T_1$, we will appeal to the following lemma which shows that $\widehat{\ACV}$ provides an $O(1/n^2)$ approximation to \ACV, uniformly in $\lambda$.
The proof can be found \cref{sec:ACV-ACV_2-difference}.
\begin{lemma}[\ACV-$\widehat{\ACV}$ approximation error]\label{ACV-ACV2-difference}
Suppose 
 \cref{curvedobj,,gradlossboundmixed} hold for some $\Lambda \subseteq [0,\infty]$ and $(s,r) = (1,2)$. 
    Then, for each $\lambda \in \Lambda$,
    \balignt\label{eq:ACV-ACV2-difference}
    |\ACV(\lambda) - \widehat{\ACV}(\lambda)| \leq 
    \frac{1}{n^2}\frac{\gradlossboundmixed{1}{2}}{2c_\obj^2}.
    \ealignt
\end{lemma}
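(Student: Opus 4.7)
The plan is to exploit the fact that $\widehat{\ACV}(\lambda)$ is precisely the first-order Taylor expansion of $\ACV(\lambda)$ about $\est(\lam)$, so the difference decomposes into per-datapoint Taylor remainders:
\begin{align*}
\ACV(\lambda) - \widehat{\ACV}(\lambda) = \frac{1}{n}\sum_{i=1}^n R_i(\lam),
\qquad
R_i(\lam) \defeq \loss(z_i, \acvest{i}(\lam)) - \loss(z_i, \est(\lam)) - \inner{\grad_\beta\loss(z_i, \est(\lam))}{\acvest{i}(\lam) - \est(\lam)}.
\end{align*}
The strategy then has two ingredients: bound each remainder by the squared displacement $\twonorm{\acvest{i}(\lam) - \est(\lam)}^2$, and bound that displacement using the explicit Newton-step formula \cref{eq:acvest}.

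For the first ingredient, I would invoke Taylor's theorem with integral remainder together with the $L_i$-Lipschitz continuity of $\grad_\beta\loss(z_i,\cdot)$ (which appears implicitly through \cref{gradlossboundmixed} with $s=1$) to get the standard descent-lemma bound $|R_i(\lam)| \leq \tfrac{L_i}{2}\twonorm{\acvest{i}(\lam) - \est(\lam)}^2$. For the second ingredient, I would substitute the closed form
\begin{align*}
\acvest{i}(\lam) - \est(\lam) = \tfrac{1}{n}\Hess_\beta\obj(\Pcv{i}, \est(\lam), \lam)^{-1}\grad_\beta\loss(z_i, \est(\lam)),
\end{align*}
and use \cref{curvedobj} to deduce $\Hess_\beta\obj(\Pcv{i}, \est(\lam), \lam) \succeq c_\obj \mathrm{I}_d$ (the stated $c_\obj r^2$ gradient growth forces $c_\obj$-strong convexity when $\obj$ is smooth, which in turn lower-bounds the Hessian). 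This controls the operator norm of the inverse Hessian by $1/c_\obj$ and yields $\twonorm{\acvest{i}(\lam) - \est(\lam)} \leq \tfrac{1}{n c_\obj}\twonorm{\grad_\beta\loss(z_i, \est(\lam))}$.

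Combining the two ingredients and averaging over $i$ produces
\begin{align*}
|\ACV(\lambda) - \widehat{\ACV}(\lambda)| \leq \frac{1}{2 n^2 c_\obj^2} \cdot \frac{1}{n}\sum_{i=1}^n L_i \twonorm{\grad_\beta\loss(z_i, \est(\lam))}^2 \leq \frac{\gradlossboundmixed{1}{2}}{2n^2 c_\obj^2},
\end{align*}
where the last inequality is immediate from the definition of $\gradlossboundmixed{1}{2}$ in \cref{gradlossboundmixed}. I do not anticipate any genuine obstacles here; the calculation is a short composition of well-known inequalities. The only point requiring any care is the choice of curvature constant: \cref{curvedobj} actually provides a sharper Hessian lower bound $c_{\lam,\lam}$, but because $c_{\lam,\lam} \geq c_\obj$ (the gradient-growth constant), passing to $c_\obj$ yields a uniform-in-$\lambda$ bound and matches the form of the stated inequality.
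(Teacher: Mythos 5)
Your proposal is correct and follows essentially the same route as the paper: Taylor-expand each $\loss(z_i,\cdot)$ about $\est(\lam)$ to reduce the difference to second-order remainders, bound each remainder by $\tfrac{L_i}{2}\twonorm{\acvest{i}(\lam)-\est(\lam)}^2$, and control the Newton displacement via the inverse-Hessian bound $\opnorm{\Hess_\beta\obj(\Pcv{i},\est(\lam),\lam)^{-1}}\leq 1/c_\obj$ deduced from the gradient growth in \cref{curvedobj}, finishing with \cref{gradlossboundmixed} for $(s,r)=(1,2)$. The only cosmetic difference is that the paper uses the Lagrange form of the remainder with $\opnorm{\Hess_\beta\loss(z_i,s_i)}\leq \Lip{}(\grad_\beta\loss(z_i,\cdot))$ where you use the integral-remainder descent lemma, and your closing remark invoking $c_{\lam,\lam}\geq c_\obj$ is unnecessary since your derivation already rests directly on the gradient-growth constant $c_\obj$, exactly as in the paper.
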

Applying \cref{ACV-ACV2-difference} to $\lam$ and $\lam'$, we obtain
\balignt\label{eq:final_T1_bound}
|\Delta T_1|
	\leq \frac{1}{n^2}\frac{\gradlossboundmixed{1}{2}}{c_\obj^2}.
\ealignt
\subsubsection{Bounding $\Delta T_2$}
We employ the mean value theorem, Cauchy-Schwarz, the definition of the operator norm, \cref{curvedobj}, and
\cref{gradlossboundmixed} for $(s,r)=(1,1)$ to find that
\balignt
|\Delta T_2| 
&= \frac{1}{n}|\frac{1}{n} \sum_{i=1}^n \inner{ \grad_\beta\loss(z_i, \est(\lambda))-\grad_\beta\loss(z_i, \est(\lambda'))}{\Hess_\beta \obj(\Pcv{i}, \est(\lambda),\lam)^{-1}\grad_\beta \loss(z_i, \est(\lambda))}| \\
&= \frac{1}{n}|\frac{1}{n} \sum_{i=1}^n \inner{ \Hess_\beta\loss(z_i, s_{\lam,\lam'})(\est(\lambda)- \est(\lambda'))}{\Hess_\beta \obj(\Pcv{i}, \est(\lambda),\lam)^{-1}\grad_\beta \loss(z_i, \est(\lambda))}| \\
&\leq \frac{1}{n} \twonorm{\est(\lambda)- \est(\lambda')} \frac{1}{n}\sum_{i=1}^n \opnorm{\Hess_\beta \obj(\Pcv{i}, \est(\lambda),\lam)^{-1}}\opnorm{\Hess_\beta\loss(z_i, s_{\lam,\lam'})}\twonorm{\grad_\beta \loss(z_i, \est(\lambda))}  \\
&\leq \frac{1}{n}\twonorm{\est(\lambda) - \est(\lambda')}\frac{\gradlossboundmixed{1}{1} }{c_\loss + \lambda c_\pi\indic{\lam \geq \lam_\reg}}\label{eq:T2-bound}
\ealignt 
for some convex combination $s_{\lam,\lam'}$ of  $\est(\lambda)$ and $\est(\lambda')$.
\subsubsection{Bounding $\Delta T_3$}
We next show that the double difference term $\Delta T_3$ is controlled by estimator proximity $\twonorm{\est(\lambda) - \est(\lambda')}$ and regularization parameter proximity $|\lam - \lam'|$ times an extra factor of $1/n$. This result is proved in \cref{App:lem_T3_bound}. %
\begin{lemma}\label{lem:T3_bound} 
Suppose  \cref{gradlossboundmixed,,HessobjLipschitz,curvedobj,boundedHessobj} hold for some $\Lambda \subseteq [0,\infty]$ and each $(s,r)=\{(0,2),(1,1)\}$.
Then, for all $\lam, \lam' \in \Lam$, 
\balignt\label{eq:T3_bound}
|\Delta T_3| 
    &\leq 
    \frac{1}{n}\twonorm{\est(\lambda) - \est(\lambda')} \left(
    \frac{\gradlossboundmixed{1}{1}}{c_\loss + \lambda c_\pi\indic{\lam \geq \lam_\reg}}
    +
    \frac{2\gradlossboundmixed{0}{2}\kappa_{2,\lam'}^{\lam'}}{c_\loss + \lambda' c_\pi\indic{\lam \geq \lam_\reg}} 
    \right) \\
    &+ \frac{1}{n}|\lambda - \lambda'| \frac{\gradlossboundmixed{0}{2}C_{\pi,2}}{(c_\loss+c_\reg\lam\indic{\lam \geq \lam_\reg})(c_\loss+c_\reg\lam' \indic{\lam \geq \lam_\reg})}
\ealignt
for $\Delta T_3$ defined in \cref{eq:delta-T3} and $\kappa_{2,\lam'}^{\lam'}$ defined in \cref{thm:acv-approximates-cv-HO-full}.
\end{lemma}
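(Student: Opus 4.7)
The core of the proof is to express the key difference $\acvest{i}(\lambda) - \est(\lambda) - (\acvest{i}(\lambda') - \est(\lambda')) = \tfrac{1}{n}(H_i(\lambda)^{-1} g_i(\lambda) - H_i(\lambda')^{-1} g_i(\lambda'))$, where $H_i(\mu) \defeq \Hess_\beta\obj(\Pcv{i},\est(\mu),\mu)$ and $g_i(\mu) \defeq \grad_\beta\loss(z_i,\est(\mu))$, as a sum of terms that scale either with $\twonorm{\est(\lambda)-\est(\lambda')}$ or with $|\lambda-\lambda'|$. This will allow me to bound $\Delta T_3$---in which each summand is paired with an outer $\grad_\beta\loss(z_i,\est(\lambda')) = g_i(\lambda')$---by Cauchy--Schwarz followed by averaging via \cref{gradlossboundmixed}.

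The decomposition I would use is $H_i(\lambda)^{-1} g_i(\lambda) - H_i(\lambda')^{-1} g_i(\lambda') = H_i(\lambda)^{-1}[g_i(\lambda)-g_i(\lambda')] + [H_i(\lambda)^{-1}-H_i(\lambda')^{-1}] g_i(\lambda')$. For the first piece, the mean value theorem gives $\twonorm{g_i(\lambda)-g_i(\lambda')} \leq \Lip{}(\grad_\beta\loss(z_i,\cdot))\twonorm{\est(\lambda)-\est(\lambda')}$, while \cref{curvedobj} bounds $\opnorm{H_i(\lambda)^{-1}}$; after pairing with the outer $\twonorm{g_i(\lambda')}$ and averaging, this produces the $\gradlossboundmixed{1}{1}$ contribution. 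For the second piece I would apply the inverse-difference identity $A^{-1}-B^{-1}=A^{-1}(B-A)B^{-1}$ and then split $H_i(\lambda')-H_i(\lambda) = [\Hess_\beta\obj(\Pcv{i},\est(\lambda'),\lambda')-\Hess_\beta\obj(\Pcv{i},\est(\lambda),\lambda')] + (\lambda'-\lambda)\Hess\reg(\est(\lambda))$. The first bracket is controlled by \cref{HessobjLipschitz} via $(C_{\loss,3}+\lambda' C_{\reg,3})\twonorm{\est(\lambda')-\est(\lambda)}$, which---combined with the curvature bound $\twonorm{H_i(\lambda')^{-1} g_i(\lambda')}\leq \twonorm{g_i(\lambda')}/c_{\lambda',\lambda'}$ and the outer $1/c_{\lambda,\lambda}$ from $H_i(\lambda)^{-1}$---yields the $\kappa_{2,\lam'}^{\lam'}$ contribution after averaging $\tfrac{1}{n}\sum_i\twonorm{g_i(\lambda')}^2$ against $\gradlossboundmixed{0}{2}$. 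The second bracket is $|\lambda'-\lambda|\,C_{\reg,2}$ in operator norm by \cref{boundedHessobj}, producing the final $|\lambda-\lambda'|$ term after the identical sandwich of inverse-Hessian bounds and average.

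The main obstacle I anticipate is bookkeeping the curvature constants $c_{\mu,\nu}$, since \cref{curvedobj} is stated asymmetrically in its evaluation point and penalty strength; matching these constants correctly to each of the three contributions (rather than collapsing them to a single worst-case bound) is what delivers the precise denominators in the claimed bound. Everything else reduces to routine Cauchy--Schwarz, triangle inequalities, and two invocations of \cref{gradlossboundmixed}: once with $(s,r)=(1,1)$ for the Lipschitz-loss term and once with $(s,r)=(0,2)$ for the two squared-gradient terms.
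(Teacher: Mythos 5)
Your proposal follows essentially the paper's own route: the paper expands $\Delta T_3$ into three terms $\Delta T_{31}+\Delta T_{32}+\Delta T_{33}$ corresponding exactly to your gradient-difference piece, your Hessian point-change piece, and your penalty-change piece, and each is then dispatched with Cauchy--Schwarz, the curvature bounds of \cref{curvedobj}, \cref{HessobjLipschitz} for the point-change, \cref{boundedHessobj} (extracting $\opnorm{\Hess\reg(\est(\lam))}\leq C_{\pi,2}$, as the paper also does implicitly) for the penalty-change, and \cref{gradlossboundmixed} with $(s,r)=(1,1)$ and $(0,2)$, precisely as you describe.

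The one substantive divergence is the anchoring of the inverse-difference step, and it is exactly the bookkeeping issue you flagged---except your proposed decomposition does not quite deliver the claimed denominators. You apply $A^{-1}-B^{-1}=A^{-1}(B-A)B^{-1}$ once with $A=\Hess_\beta \obj(\Pcv{i},\est(\lambda),\lambda)$ and $B=\Hess_\beta \obj(\Pcv{i},\est(\lambda'),\lambda')$ and then split $B-A$, so every term is sandwiched by $\opnorm{A^{-1}}\leq 1/(c_\loss+\lambda c_\pi\indic{\lam\geq\lam_\reg})$ and $\opnorm{B^{-1}}\leq 1/(c_\loss+\lambda' c_\pi\indic{\lam'\geq\lam_\reg})$. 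The paper instead telescopes through the \emph{mixed} Hessian $\Hess_\beta \obj(\Pcv{i},\est(\lambda),\lambda')^{-1}$ (point $\est(\lambda)$, penalty $\lambda'$), whose curvature under \cref{curvedobj} is $c_{\lam',\lam}=c_\loss+\lambda' c_\pi\indic{\lam\geq\lam_\reg}$, applying the inverse-difference identity separately to the point-change and penalty-change differences. For your point-change term the discrepancy is harmless: since $\lambda>\lambda'$, your factor $1/(c_\loss+\lambda c_\pi\indic{\lam\geq\lam_\reg})$ is at most the claimed $1/(c_\loss+\lambda' c_\pi\indic{\lam\geq\lam_\reg})$, so you prove something tighter. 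But for the penalty-change term you obtain denominator $(c_\loss+c_\reg\lambda\indic{\lam\geq\lam_\reg})(c_\loss+c_\reg\lambda'\indic{\lam'\geq\lam_\reg})$ where the lemma claims $(c_\loss+c_\reg\lambda\indic{\lam\geq\lam_\reg})(c_\loss+c_\reg\lambda'\indic{\lam\geq\lam_\reg})$: the indicators differ, and in the regime $\lambda'<\lam_\reg\leq\lambda$ your bound is strictly weaker (only by a factor at most $1+\lam_\reg c_\reg/c_\loss$, so the $O(|\lambda-\lambda'|/n)$ rate is unaffected, but the exact stated constant is not recovered). The fix is one line: handle the penalty change at the fixed point $\est(\lambda)$ via
\balignt
\Hess_\beta \obj(\Pcv{i},\est(\lambda),\lambda)^{-1}-\Hess_\beta \obj(\Pcv{i},\est(\lambda),\lambda')^{-1}
= (\lambda'-\lambda)\,\Hess_\beta \obj(\Pcv{i},\est(\lambda),\lambda)^{-1}\Hess\reg(\est(\lambda))\,\Hess_\beta \obj(\Pcv{i},\est(\lambda),\lambda')^{-1},
\ealignt
exactly as in the paper's treatment of $\Delta T_{33}$.
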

We combine \cref{eq:T3_bound} with the following bound on $|\lambda - \lambda'|$ proved in \cref{App:lambda_diff_bound}:
\begin{lemma}[$\est$ proximity implies $\lam$ proximity]\label{lem:lambda_diff_bound}
Suppose  \cref{curvedobj,boundedHessobj} hold for some $\Lambda \subseteq [0,\infty]$ with $0\in\Lam$.
Then, for all $\lam, \lam' \in \Lam$, 
\balignt\label{eq:lambda_diff}
|\lambda- \lambda'|& \leq \frac{n-1}{n}\frac{(C_{\loss,2} + \lambda C_{\reg,2})(C_{\loss,2} + \lambda'C_{\reg,2})}{ c_\obj  }  \frac{1}{\|\grad \reg(\est(0))\|_2 }\| \est(\lambda) - \est(\lambda')\|_2.
\ealignt
\end{lemma}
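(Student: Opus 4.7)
The plan is to compare the first-order optimality conditions $\grad_\beta \obj(\Pemp, \est(\lam), \lam) = 0$ at three values of the regularization parameter---$\lambda$, $\lambda'$, and $0$---exploiting the hypothesis $0 \in \Lambda$. By subtracting pairs of FOCs and invoking the mean value theorem, I will express each difference of gradients as a Hessian-vector product, so that \cref{boundedHessobj} controls it from above and \cref{curvedobj} (at $\lam = 0$) controls it from below. The two ingredients I need are (i) an upper bound on $|\lambda - \lambda'|\,\|\grad \reg(\est(\lambda))\|_2$ in terms of $\|\est(\lambda) - \est(\lambda')\|_2$, and (ii) a lower bound on $\|\grad \reg(\est(\lambda))\|_2$ in terms of $\|\grad \reg(\est(0))\|_2$. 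Chaining (i) and (ii) then yields \eqref{eq:lambda_diff}. Without loss of generality I take $\lambda \geq \lambda'$; the case $\lambda = \lambda'$ is trivial, and $\lambda > \lambda' \geq 0$ guarantees $\lambda > 0$ so that all divisions by $\lambda$ below are legal.

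For ingredient (i), subtracting the FOCs at $\lambda$ and $\lambda'$ and substituting $\grad_\beta \loss(\Pemp, \est(\lambda)) = -\lambda\, \grad \reg(\est(\lambda))$ (from the FOC at $\lambda$) gives
\[
(\lambda' - \lambda)\, \grad \reg(\est(\lambda)) = \int_0^1 \Hess_\beta \obj\bigl(\Pemp,\, \est(\lambda') + t(\est(\lambda) - \est(\lambda')),\, \lambda'\bigr)\, dt \cdot (\est(\lambda) - \est(\lambda')).
\]
Bounding the operator norm of the integrand by $C_{\loss,2} + \lambda' C_{\reg,2}$ via \cref{boundedHessobj} produces $|\lambda - \lambda'|\,\|\grad \reg(\est(\lambda))\|_2 \leq (C_{\loss,2} + \lambda' C_{\reg,2})\,\|\est(\lambda) - \est(\lambda')\|_2$.

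Ingredient (ii) is the main obstacle, and it is where the $\frac{n-1}{n}$ prefactor and the term $\|\grad \reg(\est(0))\|_2$ enter. I will apply the same mean-value device to the pair $(\lambda, 0)$ twice. First, applied to $\obj(\Pemp, \cdot, \lambda)$ using both FOCs, it yields $\lambda \|\grad \reg(\est(0))\|_2 \leq (C_{\loss,2} + \lambda C_{\reg,2})\,\|\est(\lambda) - \est(0)\|_2$ by \cref{boundedHessobj}. Second, applied to $\loss(\Pemp, \cdot)$, it yields $\lambda\, \grad \reg(\est(\lambda)) = -[\grad_\beta \loss(\Pemp, \est(\lambda)) - \grad_\beta \loss(\Pemp, \est(0))]$; here the key observation is that $\loss(\Pemp, \cdot)$ is $\tfrac{n}{n-1} c_\obj$-strongly convex. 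This follows from \cref{curvedobj} at $\lambda = 0$, which yields $c_\obj$-strong convexity of each $\loss(\Pcv{i}, \cdot)$, combined with the identity $\loss(\Pemp, \cdot) = \tfrac{1}{n-1}\sum_{i=1}^n \loss(\Pcv{i}, \cdot)$. Consequently $\|\est(\lambda) - \est(0)\|_2 \leq \tfrac{n-1}{n}\,\tfrac{\lambda}{c_\obj}\,\|\grad \reg(\est(\lambda))\|_2$, and chaining the two inequalities gives $\|\grad \reg(\est(\lambda))\|_2 \geq \tfrac{n}{n-1}\, c_\obj\, \|\grad \reg(\est(0))\|_2 / (C_{\loss,2} + \lambda C_{\reg,2})$. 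Plugging this lower bound into the inequality from ingredient (i) produces \eqref{eq:lambda_diff}. The main subtlety I anticipate is verifying the averaging identity that transfers strong convexity from the leave-one-out losses to $\loss(\Pemp, \cdot)$ with the improved constant $\tfrac{n}{n-1} c_\obj$; this is exactly what produces the $\tfrac{n-1}{n}$ factor in the final bound.
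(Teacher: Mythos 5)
Your proposal is correct and takes essentially the same route as the paper: both arguments compare the first-order optimality conditions at the three parameter values $\lambda$, $\lambda'$, and $0$ via mean-value expansions, both extract $\tfrac{n}{n-1}c_\obj$-strong convexity of $\loss(\Pemp,\cdot)$ from \cref{curvedobj} at $\lambda=0$ through the identity $\loss(\Pemp,\cdot)=\tfrac{1}{n-1}\sum_{i=1}^n\loss(\Pcv{i},\cdot)$, and both land on the identical constants. The only difference is packaging: the paper composes the three comparisons into one exact vector identity, $|\lambda-\lambda'|\twonorm{\grad\reg(\est(0))}=\twonorm{\E[\Hess_\beta\obj(\Pemp,S_{\lambda,0},\lambda)]\,\E[\Hess_\beta\loss(\Pemp,S_{\lambda,0})]^{-1}\E[\Hess_\beta\obj(\Pemp,S_{\lambda,\lambda'},\lambda')](\est(\lambda)-\est(\lambda'))}$, and bounds operator norms, whereas you chain the corresponding scalar inequalities, using the strong-convexity monotonicity bound in place of inverting $\E[\Hess_\beta\loss]$.
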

Together, \cref{eq:T3_bound} and \cref{eq:lambda_diff} imply 
\balignt\label{eq:final_T3_bound}
|\Delta T_3| 
    &\leq 
    \frac{1}{n}\twonorm{\est(\lambda) - \est(\lambda')} \left(
    \frac{\gradlossboundmixed{1}{1}}{c_\loss + \lambda c_\pi\indic{\lam \geq \lam_\reg}}
    +
    \frac{2\gradlossboundmixed{0}{2}\kappa_{2,\lam'}^{\lam'}}{c_\loss + \lambda' c_\pi\indic{\lam \geq \lam_\reg}} 
    +\frac{n-1}{n} \frac{\gradlossboundmixed{0}{2}C_{\pi,2}\kappa_{1,\lam}^{\lam}\kappa_{1,\lam'}^{\lam}}{\|\grad \reg(\est(0))\|_2 c_\obj}\right).
\ealignt

\subsubsection{Putting the pieces together}
Our final lemma, proved in \cref{sec:initbound}, establishes that, due to the curvature of the loss, two estimators with similar training loss must also be close in Euclidean norm.
\begin{lemma}[Loss curvature]\label{initbound}
Suppose that for some $c_\obj >0$ and  $0 \leq \lam' < \lam \leq \infty$ and all $i \in [n]$, $\obj(\Pcv{i},\cdot, \lam)$ and $\obj(\Pcv{i},\cdot, \lam')$ have $\nu_\obj(r) = c_\obj r^2$ gradient growth.  Then
\balignt
\frac{c_\obj}{2}\, \twonorm{\est(\lambda) - \est(\lambda')}^2 \frac{\lambda+\lambda'}{\lambda - \lambda'} \leq \frac{1}{n}\sum_{i=1}^n \loss(\Pcv{i}, \est(\lambda))  - \loss(\Pcv{i}, \est(\lambda')) 
= \frac{n-1}{n}(\loss(\Pemp, \est(\lambda))  - \loss(\Pemp, \est(\lambda'))).
\ealignt
\end{lemma}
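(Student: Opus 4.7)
The rightmost equality is immediate from $\Pcv{i} = \tfrac{1}{n}\sum_{j\neq i}\delta_{z_j}$: since $\sum_{i=1}^n\sum_{j\neq i}\loss(z_j,\beta) = (n-1)\sum_{j=1}^n\loss(z_j,\beta) = n(n-1)\loss(\Pemp,\beta)$, dividing by $n^2$ gives $\tfrac{1}{n}\sum_i\loss(\Pcv{i},\beta) = \tfrac{n-1}{n}\loss(\Pemp,\beta)$, so the real content is the leftmost inequality. My strategy is to promote the per-fold gradient growth of $\obj(\Pcv{i},\cdot,\lam)$ to strong convexity of the full-data objective $\obj(\Pemp,\cdot,\lam)$ at constant $\tfrac{n}{n-1}c_\obj$, apply the resulting quadratic lower bound at each minimizer, and combine the two inequalities.

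\textbf{Two strong-convexity inequalities.} Write $\Delta L \defeq \loss(\Pemp,\est(\lam)) - \loss(\Pemp,\est(\lam'))$ and $\Delta\reg \defeq \reg(\est(\lam')) - \reg(\est(\lam))$. Evaluating the $\tfrac{nc_\obj}{n-1}$-strong convexity of $\obj(\Pemp,\cdot,\lam)$ at the minimizer $\est(\lam)$ and at the point $\est(\lam')$, and doing the symmetric thing with $\lam$ replaced by $\lam'$, yields
\begin{align*}
\lam\,\Delta\reg - \Delta L &\geq \tfrac{n c_\obj}{2(n-1)}\|\est(\lam)-\est(\lam')\|_2^2,\\
\Delta L - \lam'\,\Delta\reg &\geq \tfrac{n c_\obj}{2(n-1)}\|\est(\lam)-\est(\lam')\|_2^2.
\end{align*}
Adding the two gives $(\lam-\lam')\,\Delta\reg \geq \tfrac{n c_\obj}{n-1}\|\est(\lam)-\est(\lam')\|_2^2$, so $\Delta\reg \geq \tfrac{n c_\obj\,\|\est(\lam)-\est(\lam')\|_2^2}{(n-1)(\lam-\lam')}$. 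Plugging this lower bound back into the second inequality yields
\[
\Delta L \;\geq\; \lam'\,\Delta\reg + \tfrac{n c_\obj}{2(n-1)}\|\est(\lam)-\est(\lam')\|_2^2 \;\geq\; \tfrac{n c_\obj}{2(n-1)}\cdot\tfrac{\lam+\lam'}{\lam-\lam'}\,\|\est(\lam)-\est(\lam')\|_2^2,
\]
and multiplying by $(n-1)/n$ delivers precisely $\tfrac{c_\obj}{2}\|\est(\lam)-\est(\lam')\|_2^2\cdot\tfrac{\lam+\lam'}{\lam-\lam'} \leq \tfrac{n-1}{n}\Delta L$, as claimed.

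\textbf{Main obstacle.} The delicate step is justifying the $\tfrac{n}{n-1}c_\obj$-strong convexity of $\obj(\Pemp,\cdot,\lam) = \loss(\Pemp,\cdot) + \lam\reg$. Directly averaging the hypothesis gives only that $\Phi_\lam \defeq \tfrac{1}{n}\sum_i \obj(\Pcv{i},\cdot,\lam) = \tfrac{n-1}{n}\loss(\Pemp,\cdot) + \lam\reg$ has $c_\obj r^2$ gradient growth; rescaling to obtain $\obj(\Pemp,\cdot,\lam) = \tfrac{n}{n-1}\Phi_\lam - \tfrac{\lam}{n-1}\reg$ introduces a concave correction $-\tfrac{\lam}{n-1}\reg$ that could in principle erode curvature. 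The clean resolution, consistent with the ambient assumption \cref{curvedobj} in \cref{Thm:StrongCurvACV-full} (where $0 \in \Lam$ so that $\obj(\Pcv{i},\cdot,0) = \loss(\Pcv{i},\cdot)$ itself has $c_\obj r^2$ gradient growth), is to average the per-fold loss curvatures alone: $\loss(\Pemp,\cdot) = \tfrac{1}{n-1}\sum_i \loss(\Pcv{i},\cdot)$ then has $\tfrac{n}{n-1}c_\obj$ gradient growth, and adding the convex penalty $\lam\reg$ preserves this, giving the $\tfrac{n}{n-1}c_\obj$-strong convexity of $\obj(\Pemp,\cdot,\lam)$ used above (and likewise for $\lam'$).
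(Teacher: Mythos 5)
Your proof is correct, and its engine is the same as the paper's: the paper proves \cref{initbound} by applying the error-bound form \cref{eq:opt_comp_error_bound} of the Optimizer Comparison \cref{optimizer-comparison} to $\varphi_2 = \obj(\cdot,\cdot,\lam')$ and the rescaled $\varphi_1 = \tfrac{\lam'}{\lam}\obj(\cdot,\cdot,\lam)$, so that $\varphi_2 - \varphi_1 = (1-\tfrac{\lam'}{\lam})\loss$ and the regularizer cancels in one stroke. Your add-then-substitute elimination of $\Delta\reg$ is exactly the positive linear combination $\tfrac{\lam'}{\lam}\times(\text{first inequality}) + (\text{second inequality})$, i.e., the same scaling trick unpacked into two quadratic-growth inequalities at the two minimizers. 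Where you add genuine value is the curvature bookkeeping, which the paper's one-line proof glosses: as written, the paper's $\varphi_1,\varphi_2$ are built from $\Pcv{i}$, whose minimizers are $\cvest{i}(\lam)$ and $\cvest{i}(\lam')$ rather than $\est(\lam)$ and $\est(\lam')$, and the source of the factor $\tfrac{n-1}{n}$ (equivalently, the boosted constant $\tfrac{n}{n-1}c_\obj$ for the full-data objective) is never spelled out. Your "main obstacle" paragraph correctly identifies that this is not a triviality: indeed, naively averaging the lemma's hypothesis gives $\tfrac{1}{n}\sum_i \obj(\Pcv{i},\beta,\lam) = \tfrac{n-1}{n}\,\obj(\Pemp,\beta,\tfrac{n}{n-1}\lam)$, which would yield the stated bound only at rescaled penalty parameters $\tfrac{n}{n-1}\lam, \tfrac{n}{n-1}\lam'$, so curvature must be imported from the ambient assumptions exactly as you say.

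One caveat on your repair: it invokes convexity of $\reg$, which \cref{curvedobj} does not explicitly grant (only the proximal variant assumes $\reg$ convex). You can dispense with it. Write $\obj(\Pemp,\beta,\lam) = \tfrac{1}{n}\sum_{i=1}^n \obj(\Pcv{i},\beta,\lam) + \tfrac{1}{n(n-1)}\sum_{i=1}^n \loss(\Pcv{i},\beta)$; the first average has $c_\obj r^2$ gradient growth by the lemma's own hypothesis, and the second term has $\tfrac{c_\obj}{n-1} r^2$ growth by \cref{curvedobj} at $\lam = 0$ (available since $0\in\Lam$ in \cref{Thm:StrongCurvACV-full}), and gradient growth is additive, giving the needed $\tfrac{n}{n-1}c_\obj$-strong convexity of $\obj(\Pemp,\cdot,\lam)$ and $\obj(\Pemp,\cdot,\lam')$ with no assumption on $\reg$ at all. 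Finally, like the paper, you treat $\lam = \infty$ only implicitly; there one interprets $\obj(\cdot,\cdot,\infty)$ as $\reg$ with minimizer $\est(\infty)$, divides your first inequality by $\lam$ before combining, and reads $\tfrac{\lam+\lam'}{\lam-\lam'}$ as its limit $1$ — the argument goes through unchanged, but it is worth a sentence.
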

The advertised result \cref{eq:strong_risk_bound} now follows by combining \cref{initbound} with the loss difference decomposition \cref{delta_acv_decomposition} and the component bounds \cref{eq:final_T1_bound,eq:T2-bound,eq:final_T3_bound}.

\subsection{Proof of \cref{ACV-ACV2-difference}: \ACV-$\widehat{\ACV}$ approximation error}
\label{sec:ACV-ACV_2-difference}
By Taylor's theorem with Lagrange remainder, 
    \balignt
    &\ACV(\lambda) - \widehat{\ACV}(\lambda)
    = 
    \frac{1}{2n}\sum_{i=1}^n \Hess_\beta\loss(z_i, s_i)[\acvest{i}(\lambda) - \est(\lambda)]^{\otimes 2}
\ealignt
for some $
s_{i} 
    \in \lineseg_i = \{ t\est(\lambda)+ (1- t)\acvest{i}(\lambda) :t \in [0,1]\} $. 
    \cref{curvedobj} implies that $\obj(\Pcv{i}, \est(\lambda), \lambda)$ is $c_\obj$ strongly convex and hence that $\Hess_\beta\obj(\Pcv{i}, \est(\lambda), \lambda) \psdge c_\obj \I_d$. Therefore, we may apply the definition of the operator norm and \cref{gradlossboundmixed} to find that
   \balignt
    |\ACV(\lambda) - \widehat{\ACV}(\lambda)| 
    &\leq 
        \frac{1}{2n}\sum_{i=1}^n\opnorm{ \Hess_\beta\loss(z_i, s_i)} \twonorm{\acvest{i}(\lambda) - \est(\lambda)}^{2}\\
    &= 
        \frac{1}{2n}\sum_{i=1}^n\opnorm{ \Hess_\beta\loss(z_i, s_i)}
        \frac{1}{n^2} \twonorm{\obj(\Pcv{i}, \est(\lambda), \lambda)^{-1}\grad_\beta \loss(z_i, \est(\lambda))}^{2}\\
     & \leq \frac{1}{2n}\sum_{i=1}^n\opnorm{ \Hess_\beta\loss(z_i, s_i)} \frac{1}{n^2}\frac{1}{c_\obj^2}\twonorm{\grad_\beta\loss(z_i,\est(\lambda))}^2\\
     &\leq \frac{1}{n^2}\frac{\gradlossboundmixed{1}{2}}{2c_\obj^2}.
    \ealignt

\subsection{Proof of \cref{lem:T3_bound}: $\Delta T_3$-bound}
\label{App:lem_T3_bound}
Fix any $\lam,\lam' \in\Lam$. We first expand $\Delta T_3$ into three terms:
\balignt
\Delta T_3 
&= \frac{1}{n^2}\sum_{i=1}^n \inner{\grad_\beta\loss(z_i, \est(\lambda'))}{\Hess_\beta \obj(\Pcv{i}, \est(\lambda), \lambda)^{-1}(\grad_\beta \loss(z_i, \est(\lambda)) - \grad_\beta \loss(z_i, \est(\lambda')))} \\
&+  \frac{1}{n^2}\sum_{i=1}^n \inner{\grad_\beta\loss(z_i, \est(\lambda'))}{(\Hess_\beta \obj(\Pcv{i}, \est(\lambda), \lambda')^{-1} - \Hess_\beta \obj(\Pcv{i}, \est(\lambda'), \lambda')^{-1})\grad_\beta \loss(z_i, \est(\lambda'))}\\
&+ \frac{1}{n^2}\sum_{i=1}^n \inner{\grad_\beta\loss(z_i, \est(\lambda'))}{(\Hess_\beta \obj(\Pcv{i}, \est(\lambda), \lambda)^{-1} - \Hess_\beta \obj(\Pcv{i}, \est(\lambda), \lambda')^{-1})\grad_\beta \loss(z_i, \est(\lambda'))}\\
&\defeq \Delta T_{31} + \Delta T_{32} + \Delta T_{33}.
\ealignt
Precisely as in \cref{eq:T2-bound} we obtain
\balignt
|\Delta T_{31}| &\leq \frac{1}{n}\twonorm{\est(\lambda) - \est(\lambda')}\frac{\gradlossboundmixed{1}{1} }{c_\loss + \lambda c_\pi\indic{\lam \geq \lam_\reg}}.
\ealignt
Furthermore, we may use Cauchy-Schwarz, the definition of the operator norm, \cref{HessobjLipschitz,curvedobj}, and \cref{gradlossboundmixed} with $(s,r) = (0,2)$ to find
\balignt
|\Delta T_{32}| 
 & \leq \frac{1}{n} \frac{1}{n} \sum_{i=1}^n \twonorm{\grad_\beta\loss(z_i, \est(\lambda'))}^2 \opnorm{\Hess_\beta \obj(\Pcv{i}, \est(\lambda), \lambda')^{-1}}\opnorm{\Hess_\beta \obj(\Pcv{i}, \est(\lambda'), \lambda')^{-1}}\\
 &\qquad\qquad\qquad\cdot\opnorm{\Hess_\beta \obj(\Pcv{i}, \est(\lambda), \lambda') - \Hess_\beta \obj(\Pcv{i}, \est(\lambda'), \lambda')} \\
 & \leq  \frac{\gradlossboundmixed{0}{2}(C_{\loss,3} + \lambda' C_{\pi,3})}{(c_\loss + \lambda' c_\pi\indic{\lam \geq \lam_\reg})(c_\loss + \lambda' c_\pi\indic{\lam' \geq \lam_\reg})} \frac{1}{n}\twonorm{\est(\lambda) - \est(\lambda')}.
\ealignt
Finally, Cauchy-Schwarz, the definition of the operator norm, \cref{boundedHessobj,curvedobj}, and \cref{gradlossboundmixed} with $(s,r) = (0,2)$ yield
\balignt
|\Delta T_{33}| &\leq  \frac{1}{n} \frac{1}{n} \sum_{i=1}^n\twonorm{\Hess_\beta \obj(\Pcv{i}, \est(\lambda), \lambda)^{-1}}\twonorm{\grad_\beta\loss(z_i, \est(\lambda'))}^2\opnorm{\Hess_\beta \obj(\Pcv{i}, \est(\lambda), \lambda') - \Hess_\beta \obj(\Pcv{i}, \est(\lambda), \lambda)}\\
&\quad \cdot \twonorm{ \Hess_\beta \obj(\Pcv{i}, \est(\lambda), \lambda')^{-1}}\\
&=  \frac{1}{n}|\lambda - \lambda'|\opnorm{\Hess_\beta \pi( \est(\lambda))} \frac{1}{n} \sum_{i=1}^n\twonorm{\Hess_\beta \obj(\Pcv{i}, \est(\lambda), \lambda)^{-1}}\twonorm{ \Hess_\beta \obj(\Pcv{i}, \est(\lambda), \lambda')^{-1}} \twonorm{\grad_\beta \loss(z_i, \est(\lambda'))}^2\\
&\leq\frac{1}{n}|\lambda - \lambda'| \frac{\gradlossboundmixed{0}{2}C_{\pi,2}}{(c_\loss+c_\reg\lam\indic{\lam \geq \lam_\reg})(c_\loss+c_\reg\lam' \indic{\lam \geq \lam_\reg})}.
\ealignt
We obtain the desired result by applying the triangle inequality and summing these three estimates.

\subsection{Proof of \cref{lem:lambda_diff_bound}: $|\lambda - \lambda'|$-bound}
\label{App:lambda_diff_bound}
We begin with a lemma that allows us to rewrite a regularization parameter difference in terms of an estimator difference.
\begin{lemma}\label{rewrite_lam_diff}
Fix any $\lam,\lam' \in [0,\infty]$. If $\grad_\beta \obj(\Pemp, \cdot, \lam')$ is absolutely continuous, then 
\balignt 
0 
&= (\lambda - \lambda')\grad_\beta\reg(\est(\lambda)) + \E[\Hess_\beta\obj(\Pemp,S_{\lambda,\lambda'}, \lambda')](\est(\lambda) - \est(\lambda')) \\
&= \frac{\lambda' - \lambda}{\lam}\grad_\beta\loss(\Pemp,\est(\lambda)) + \E[\Hess_\beta\obj(\Pemp,S_{\lambda,\lambda'}, \lambda')](\est(\lambda) - \est(\lambda')).
\ealignt
for $S_{\lambda, \lambda'}$ distributed uniformly on the set $\{t \est(\lambda) + (1 - t) \est(\lambda') \, :\, t \in [0,1]\}$.
\end{lemma}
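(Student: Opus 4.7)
The plan is to combine two first-order optimality conditions with the fundamental theorem of calculus. First I would write down the optimality conditions at both $\est(\lambda)$ and $\est(\lambda')$: since $\obj(\Pemp,\beta,\lambda) = \loss(\Pemp,\beta)+\lambda \reg(\beta)$ is minimized in $\beta$ at $\est(\lambda)$, and similarly for $\lambda'$, we have $\grad_\beta\loss(\Pemp,\est(\lambda)) + \lambda \grad_\beta\reg(\est(\lambda)) = 0$ and $\grad_\beta\obj(\Pemp,\est(\lambda'),\lambda') = 0$. Evaluating $\grad_\beta\obj(\Pemp,\cdot,\lambda')$ at the \emph{other} optimizer then gives
\begin{align*}
\grad_\beta\obj(\Pemp,\est(\lambda),\lambda')
    &= \grad_\beta\loss(\Pemp,\est(\lambda)) + \lambda'\grad_\beta\reg(\est(\lambda))\\
    &= (\lambda'-\lambda)\grad_\beta\reg(\est(\lambda)),
\end{align*}
which is the key identity that ties the parameter gap $\lambda-\lambda'$ to an evaluation of a single gradient.

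Next I would invoke the fundamental theorem of calculus on the absolutely continuous map $t\mapsto \grad_\beta\obj(\Pemp,\, t\est(\lambda)+(1-t)\est(\lambda'),\,\lambda')$ along the segment from $\est(\lambda')$ to $\est(\lambda)$. Using $\grad_\beta\obj(\Pemp,\est(\lambda'),\lambda')=0$, this yields
\begin{align*}
\grad_\beta\obj(\Pemp,\est(\lambda),\lambda')
    = \Bigl(\int_0^1 \Hess_\beta\obj(\Pemp,\, t\est(\lambda)+(1-t)\est(\lambda'),\,\lambda')\,dt\Bigr)(\est(\lambda)-\est(\lambda')),
\end{align*}
which in the notation of the lemma is exactly $\E[\Hess_\beta\obj(\Pemp,S_{\lambda,\lambda'},\lambda')](\est(\lambda)-\est(\lambda'))$ since $S_{\lambda,\lambda'}$ is uniform on $[\est(\lambda'),\est(\lambda)]$. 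Equating the two expressions for $\grad_\beta\obj(\Pemp,\est(\lambda),\lambda')$ and rearranging gives the first advertised equality $0 = (\lambda-\lambda')\grad_\beta\reg(\est(\lambda))+\E[\Hess_\beta\obj(\Pemp,S_{\lambda,\lambda'},\lambda')](\est(\lambda)-\est(\lambda'))$.

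For the second equality I would substitute the optimality condition at $\est(\lambda)$ in the form $\grad_\beta\reg(\est(\lambda)) = -\tfrac{1}{\lambda}\grad_\beta\loss(\Pemp,\est(\lambda))$ (valid for $\lambda \in (0,\infty)$) into the first identity, obtaining $(\lambda-\lambda')\grad_\beta\reg(\est(\lambda)) = \tfrac{\lambda'-\lambda}{\lambda}\grad_\beta\loss(\Pemp,\est(\lambda))$. The main obstacle is mostly bookkeeping: I would need to check that the boundary cases $\lambda\in\{0,\infty\}$ or $\lambda'\in\{0,\infty\}$ do not spoil either the FTC step or the division by $\lambda$ in the second equality, which one handles by noting that those cases are either vacuous (e.g.\ $\lambda'=\lambda$ makes both sides zero) or follow by a limiting argument using the continuity of $\est(\cdot)$ established earlier in \cref{assumptions-hold}. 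No curvature or boundedness assumptions are actually used, only the stated absolute continuity of $\grad_\beta\obj(\Pemp,\cdot,\lambda')$, which is exactly what powers the FTC step.
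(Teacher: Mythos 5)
Your proposal is correct and follows essentially the same route as the paper's proof: both combine the first-order optimality conditions at $\est(\lambda)$ and $\est(\lambda')$ with Taylor's theorem with integral remainder (your FTC step) along the segment, then substitute $\grad_\beta\reg(\est(\lambda)) = -\tfrac{1}{\lambda}\grad_\beta\loss(\Pemp,\est(\lambda))$ for the second equality. Your attention to the boundary cases $\lambda\in\{0,\infty\}$ is in fact more careful than the paper, which leaves those cases implicit (and, in the downstream application in \cref{lem:lambda_diff_bound}, only ever uses the first form of the identity when a parameter equals $0$).
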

\begin{proof}
The first order optimality conditions for $\est(\lam)$ and $\est(\lam')$ and the absolute continuity of 
$\grad_\beta \obj(\Pemp, \cdot, \lam')$ imply that
\balignt 
0 &= \grad_\beta \obj(\Pemp,\est(\lambda), \lambda) 
= \grad_\beta \loss(\Pemp,\est(\lambda)) + \lambda \grad\reg(\est(\lambda)) \\
&= (\lambda - \lambda')\grad_\beta\reg(\est(\lambda)) + \grad_\beta \obj(\Pemp,\est(\lambda), \lambda')  \\
&= (\lambda - \lambda')\grad_\beta\reg(\est(\lambda)) + (\grad_\beta \obj(\Pemp,\est(\lambda), \lambda')  - \grad_\beta \obj(\Pemp,\est(\lambda'), \lambda') ) \\
&= (\lambda - \lambda')\grad_\beta\reg(\est(\lambda)) + \E[\Hess_\beta\obj(\Pemp,S_{\lambda,\lambda'}, \lambda')](\est(\lambda) - \est(\lambda')) \\
&= \frac{\lambda' - \lambda}{\lam}\grad_\beta\loss(\Pemp,\est(\lambda)) + \E[\Hess_\beta\obj(\Pemp,S_{\lambda,\lambda'}, \lambda')](\est(\lambda) - \est(\lambda'))
\ealignt
by Taylor's theorem with integral remainder.
\end{proof}

Now fix any $\lam,\lam' \in \Lam$.
Since $\grad_\beta \obj(\Pemp, \cdot, \lam')$, $\grad_\beta \obj(\Pemp, \cdot, \lam)$, and $\grad_\beta \obj(\Pemp, \cdot, 0) = \grad_\beta \loss(\Pemp, \cdot)$ are absolutely continuous by \cref{boundedHessobj}, we may apply \cref{rewrite_lam_diff} first to $(\lam, \lam')$, then to $(\lam,0)$, and finally to $(0,\lam)$ to obtain
\balignt 
0 
&= \frac{\lambda'- \lambda}{\lambda}\grad_\beta \loss(\Pemp,\est(\lambda))+ \E[\Hess_\beta\obj(\Pemp,S_{\lambda,\lambda'}, \lambda')](\est(\lambda) - \est(\lambda'))\\
&= \frac{\lambda'- \lambda}{\lambda}\E[\Hess_\beta\loss(\Pemp,S_{\lambda, 0})](\est(\lambda) - \est(0))+ \E[\Hess_\beta\obj(\Pemp,S_{\lambda,\lambda'}, \lambda')](\est(\lambda) - \est(\lambda'))\\
&= (\lambda- \lambda')\E[\Hess_\beta \loss(\Pemp,S_{\lambda, 0})]\E[\nabla^2_\beta \obj(\Pemp,S_{0,\lambda}, \lambda)]^{-1}\grad_\beta \reg(\est(0)) + \E[\Hess_\beta\obj(\Pemp,S_{\lambda,\lambda'}, \lambda')](\est(\lambda) - \est(\lambda'))
\ealignt
where $S_{\lambda, \lambda'}$ is distributed uniformly on the set $\{t \est(\lambda) + (1 - t) \est(\lambda') \, :\, t \in [0,1]\}$ and $S_{\lambda, 0},S_{0,\lambda}$ are distributed uniformly on the set $\{t \est(\lambda) + (1 - t) \est(0) \, :\, t \in [0,1]\}$. Rearranging and taking norms gives the identity
\balignt
|\lambda - \lambda'|\twonorm{\grad_\beta \reg(\est(0))} = \twonorm{\E[\nabla^2_\beta \obj(\Pemp,S_{\lambda, 0}, \lambda)]\E[\Hess_\beta \loss(\Pemp,S_{\lambda, 0})]^{-1}\E[\Hess_\beta\obj(\Pemp,S_{\lambda,\lambda'}, \lambda')](\est(\lambda) - \est(\lambda'))}.
 \ealignt
Our gradient growth assumption for the regularization parameter $0$ implies that each $\loss(\Pcv{i},\cdot)$ is $c_{\obj}$-strongly convex \citep[Lem.~1]{Nesterov08}.  Therefore,
 \balignt
 \E[\Hess_\beta \loss(\Pemp,S_{\lambda, 0})] =  \frac{n}{n-1}\frac{1}{n}\sum_{i=1}^n\E[\Hess_\beta \loss(\Pcv{i},S_{\lambda, 0})] \psdge c_\obj \frac{n}{n-1} \I_d.
 \ealignt
Applying this inequality along with Cauchy Schwarz and \cref{boundedHessobj} for $\lam$ and  $\lam'$,
we now conclude that 
\balignt
|\lambda- \lambda'|
\leq\frac{n-1}{n}\frac{(C_{\loss,2} + \lambda C_{\reg,2})(C_{\loss,2} + \lambda'C_{\reg,2})}{ c_\obj  } \frac{1}{\|\grad \reg(\est(0))\|_2 }\| \est(\lambda) - \est(\lambda')\|_2.
\ealignt

\subsection{Proof of \cref{initbound}: Loss curvature}
\label{sec:initbound}
Fix any $\lam, \lam' \in \Lam$ with $\lam > \lam'$ and $i \in [n]$, and consider the functions $\varphi_2 = \obj(\Pcv{i},\cdot, \lambda')$ and $\varphi_1 = \frac{\lam'}{\lam}\obj(\Pcv{i},\cdot, \lambda)$.  
The gradient growth condition in \cref{curvedobj} implies that $\obj(\Pcv{i},\cdot, \lambda')$ and $\obj(\Pcv{i},\cdot, \lambda)$ are $c_\obj$-strongly convex.  
Hence, $\varphi_2$ admits a $\nu_{\varphi_2}(r) = \frac{c_m}{2} r^2$ error bound, and $\varphi_1$ admits a $\nu_{\varphi_1}(r) = \frac{\lam'}{\lam}\frac{c_m}{2} r^2$ error bound.
The result now follows immediately from the optimizer comparison bound \cref{eq:opt_comp_error_bound}.

\section{Proof of \cref{prop:assesment_failure}}
\label{app:assesment_failure}

We write $\E_n[z_i] \defeq \frac{1}{n}\sum_{i=1}^n z_i$ to denote a sample average. 
 Consider the Lasso estimator
\balignt
\est(\lambda) 
    \defeq \argmin_\beta  \tfrac{1}{2n}\sum_{i=1}^n (\beta - z_i)^2 + \lambda |\beta| 
= \max(\bar{z} - \lambda,0).
\ealignt
Define $\epsilon_i = z_i - \bar{z}$.
The leave-one-out mean $\bar z_{-i}=\tfrac{1}{n}\sum_{j\neq i} z_i$ is equal to $\bar{z} - \tfrac{z_i}{n}$.
The IJ estimate $\acvijest{i}(\lambda)$ is given by
$$
\acvijest{i}(\lambda) = \begin{cases}
0 & \lambda \geq \bar z\\
\est(\lambda) - \frac{z_i - \est(\lambda)}{n}=
\bar z - \lambda - \frac{\epsilon_i + \lambda}{n} & \textrm{else.}
\end{cases}
$$
The IJ approximate cross-validation estimate for this estimator is therefore given by
$$
2\ACVIJ(\lambda)= \E_n[(z_i-\acvijest{i}(\lambda) )^2] = \begin{cases}
    \bar z^2 +1 & \lambda \geq \bar z\\
    (\lambda^2 + 1) (1+\tfrac{1}{n})^2 & \textrm{else.}
\end{cases}
$$

By construction of our dataset, $\lambda \geq 0 > - \bar{z}_{-i}$ for all $i$. Now,
the leave-one-out estimator of $\beta$ is given by
$\cvest{i}(\lam)=\max\left(\bar{z} -\tfrac{n}{n-1} \lambda - \tfrac{1}{n-1} \epsilon_i, 0\right), $
and the leave-one-out CV estimate is given by
\balign
  &2\CV(\lambda) %
= \E_n[(\bar{z} - z_i)^2] + (\bar{z} - \est(\lam))^2\\  
&+ 2 \E_n[(\cvest{i}(\lam)- \est(\lam)) z_i]
+ \E_n[(\cvest{i}(\lam)- \est(\lam))^2]\\
&= 1 + \min(\bar{z}, \lambda)^2\\
&+ \E_n[\max (\left(\bar{z} -\tfrac{n}{n-1} \lambda - \tfrac{1}{n-1} \epsilon_i, 0\right)  - (\max(\bar{z} - \lambda,0))\cdot (\bar z + \epsilon_i)]\\
&+\E_n[(\max \left(\bar{z} -\tfrac{n}{n-1} \lambda - \tfrac{1}{n-1} \epsilon_i, 0\right)  - (\max(\bar{z} - \lambda,0))^2].
\ealign
Evaluating these expressions at $\lambda = \bar{z}$, we get
$\cvest{i}(\bar{z})=\max\left( - \tfrac{z_i}{n-1}, 0\right), $ so that
\balign
2\ACVIJ(\bar{z}) &= \bar{z}^2 + 1,\\
  2\CV(\bar{z}) 
&= 1 + \bar{z}^2 + 
\E_n[\max(- \tfrac{z_i}{n-1},0) \cdot  z_i]\\
&+ \E_n[\max(- \tfrac{z_i}{n-1},0)^2]
\ealign
and thus
\balign
  &2(\ACVIJ(\bar{z}) - \CV(\bar{z})) \\
  &=\E_n[\max(- \tfrac{z_i}{n-1},0) \cdot  z_i]+ \E_n[\max(- \tfrac{z_i}{n-1},0)^2]\\
  &= \E_n[z_i^2 \cdot \bs 1(z_i<0)] \cdot \frac{n}{(n-1)^2}.
\ealign
Our dataset was constructed such that
\balign
    \P_n (\epsilon_i < 0)&=1/2 \\
    \E_n[\epsilon_i | \epsilon_i <0] &= \sqrt{2/\pi}\\ %
    \E_n[\epsilon_i^2 | \epsilon_i <0] &= 1\\
    \E_n[z_i^2 \cdot \bs 1(z_i<0)]
    &= \E_n[(\bar z^2 + 2 \bar z \epsilon_i +\epsilon_i^2) \cdot \bs 1(\epsilon_i<0)]\\
    &=\tfrac{1}{2} (\bar z^2 - 2 \bar z \sqrt{2/\pi} +1),
\ealign
and thus
\balignt
  \ACVIJ(\bar{z}) - \CV(\bar{z}) 
  &= \frac{n}{4(n-1)^2}
  \left ( 1 - 2\bar{z} \sqrt{2/\pi} + \bar z^2\right ).
\ealignt
To make $\lambda = \bar{z}$ the $\ACVIJ$ minimizing choice in this example (a condition we have not assumed thus far), it suffices to have $\bar{z} \leq \sqrt{2/n}$.
For the choice $\bar{z} = \sqrt{2/n}$, we get
$$\ACVIJ(\bar{z}) - \CV(\bar{z}) = \frac{n}{4(n-1)^2}
  \left ( 1 - \frac{4}{\sqrt{n\pi}} + \tfrac{2}{n}\right ).$$

\section{Proof of \cref{prop:patchedlasso_failure}}
\label{app:patchedlasso_failure}

We write $\E_n[z_i] \defeq \frac{1}{n}\sum_{i=1}^n z_i$ to denote a sample average. 
 Consider the patched Lasso estimator
$$\est(\lambda) 
    \defeq \argmin_\beta  \tfrac{1}{2n}\sum_i (\beta - z_i)^2 + \lambda \min(|\beta|,\tfrac{\delta}{2} +\tfrac{\beta^2}{2\delta})
= \max\left(\bar{z} - \lambda, \tfrac{\bar{z}}{1+ \lambda/\delta}\right).$$
Define $\epsilon_i = z_i - \bar{z}$.
The leave-one-out mean is equal to
$\bar{z}_{-i} = \bar{z} - \tfrac{\epsilon_i}{n}.$
For $\epsilon_i/n < \bar{z}$, we have
$\cvest{i}(\lambda) -\est(\lambda) =
-\tfrac{\epsilon_i}{n-1}
$
if $\epsilon_i<0$
and
$\cvest{i}(\lambda)-\est(\lambda) =
-\tfrac{\epsilon_i}{n-1} \tfrac{1}{1+ \lambda/\delta}
$
if $\epsilon_i>0$.
Considering left hand derivatives, we also have
$\acvijest{i}(\lambda) -\est(\lambda) =
-\tfrac{\epsilon_i}{n}.
$

For $\lambda=\delta$ and $\bar{z} = 2\delta$, we get $\est(\lambda) = \delta$, and
and thus, by our choice of dataset, 
\balign
  \ACVIJ(\bar{z}) - \CV(\bar{z}) 
  &= \tfrac{1}{2}\E_n\left[(Z_i - \acvijest{i}(\lambda))^2 -
  (Z_i - \cvest{i}(\lambda))^2\right]\\
  &= \tfrac{1}{2}\E_n\left[(2\delta + \epsilon_i - \delta + \tfrac{\epsilon_i}{n})^2 -
  (2\delta + \epsilon_i - \delta +\tfrac{1}{2} \tfrac{\epsilon_i}{n-1})^2
  | \epsilon_i >0\right]\\
  &\quad+ \tfrac{1}{2}\E_n\left[(2\delta + \epsilon_i - \delta + \tfrac{\epsilon_i}{n})^2 -
  (2\delta + \epsilon_i - \delta + \tfrac{\epsilon_i}{n-1})^2
  | \epsilon_i <0\right]\\
  &= \tfrac{1}{2}\E_n\left[(\delta + \epsilon_i (1+\tfrac{1}{n}))^2 -
  (\delta + \epsilon_i(1
  +\tfrac{1}{2(n-1)}))^2
  | \epsilon_i >0\right] +O(\tfrac{1}{n^2})\\
  &=\delta \E_n[\epsilon_i | \epsilon_i>0] \cdot
  \tfrac{1}{n}  
  +O(\tfrac{1}{n^2})\\
  &= \delta \sqrt{2/\pi} \cdot
  \tfrac{1}{n}  
  +O(\tfrac{1}{n^2}).
\ealign

 \section{Proof of \cref{Thm:ProxACVAssessment}: \ProxACV-\CV assessment error}\label{app:proofProxACVAssessment}
The optimization perspective adopted in this paper naturally points towards generalizations of the proximal estimator \cref{eq:proxacvest}. In particular, stronger assessment guarantees can be provided for (regularized) higher-order Taylor approximations of the objective function. For example, for $p \geq 2$, we may define the $p$-th order approximation
\balignt
\proxHOACV(\lambda) &\defeq \frac{1}{n} \sum_{i=1}^n \loss(z_i, \hoproxacvest(\lambda)) \qtext{with}\\
\hoproxacvest(\lambda) &\defeq \argmin_{\beta\in\reals^d} \{\hat{\loss}_{p}(\Pcv{i},\beta;\est(\lambda)) + \lam \reg(\beta)\}
\ealignt
which recovers \ProxACV~\eqref{eq:ACVprox} and the \ProxACV estimator~\cref{eq:proxacvest} in the setting $p=2$. We also define the regularized $p$-th order approximation
\balignt
\proxRHOACV(\lambda) &\defeq \frac{1}{n} \sum_{i=1}^n \loss(z_i, \rhoproxacvest(\lambda)) \qtext{with}\\
\rhoproxacvest(\lambda) &\defeq \argmin_{\beta\in\reals^d} \left\{\hat{\loss}_{p}(\Pcv{i},\beta;\est(\lambda)) + \frac{\Lip{}(\nabla_\beta^p \loss(\Pcv{i}, \cdot))}{p+1}\twonorm{\est(\lambda) - \beta}^{p+1}+ \lam\reg(\beta)\right\},
\ealignt
where $\hat{\loss}_{p}(\Pcv{i},\cdot;\est(\lambda))$ is a $p$-th order Taylor expansion of the loss $\loss_{p}(\Pcv{i},\cdot)$ about $\est(\lam)$, that is, $\hat{f}_{p}(\beta;\est(\lambda))\defeq \sum_{k=0}^{p} \frac{1}{k!} \nabla^k f(\est(\lambda))(\beta - \est(\lambda))^{\otimes k} $. To analyze both, we will make use of the following assumptions which generalize \cref{proxcurvedobj}.
\begingroup
\setcounter{tmp}{\value{assumption}}
\setcounter{assumption}{\value{assumption}-4} 
\renewcommand\theassumption{\arabic{assumption}g}
\begin{assumption}[Curvature of proximal Taylor approximation] 
\label{proxhocurvedtaylorfull} 
For some $p, q, c_\obj > 0$, all $i \in [n]$, and all $\lam$ in a given $\Lambda \subseteq [0,\infty]$,
 $\widehat{\loss}_p(\Pcv{i},\cdot,\lambda;\est(\lambda)) + \lambda \reg$   has $\nu(r) = c_{\obj} r^q$ gradient growth. %
\end{assumption}
\setcounter{assumption}{\value{assumption}-1} 
\renewcommand\theassumption{\arabic{assumption}h}
\begin{assumption}[Curvature of regularized proximal Taylor approximation] 
\label{proxrhocurvedtaylorfull} 
For some $p, q, c_\obj > 0$, all $i \in [n]$, and all $\lam$ in a given $\Lambda \subseteq [0,\infty]$,
 $\widehat{\loss}_p(\Pcv{i},\cdot,\lambda;\est(\lambda))+ \frac{\Lip{}(\grad_\beta^p \loss(\Pcv{i},\cdot, \lambda))}{p+1}\twonorm{\cdot - \est(\lambda)}^{p+1} + \lambda \reg$   has $\nu(r) = c_{\obj} r^q$ gradient growth. 
\end{assumption}
\endgroup

 \cref{Thm:ProxACVAssessment} will then follow from the following more general statement, proved in \cref{sec:proof-proxacv_cvest_bound}.
\begin{theorem}[$\proxHOACV$-\CV and $\proxRHOACV$-\CV assessment error]\label{proxacv_cvest_bound}
If \cref{hocurvedobjfull,proxhocurvedtaylorfull,HesslossLipschitz} hold for some $\Lambda \subseteq [0,\infty]$, then, for all $\lam \in\Lam$ and $i\in[n]$,
 \begin{subequations}
\balignt
\twonorm{\hoproxacvest(\lambda) - \cvest{i}(\lambda)}^{q-1}
    &\leq 
       \tilde{\kappa}_p \twonorm{\cvest{i}(\lambda)- \est(\lambda) }^p \qtext{with}  \tilde{\kappa}_p\defeq  \frac{C_{\loss,{p+1}}}{p!c_\obj }.
          \label{eq:pacv-cv-est-ho-bound1}
\ealignt
If \cref{hocurvedobjfull,proxhocurvedtaylorfull,HesslossLipschitz} hold for some $\Lambda \subseteq [0,\infty]$, then, for all $\lam \in\Lam$ and $i\in[n]$, 
\balignt
\twonorm{\rhoproxacvest(\lambda) - \cvest{i}(\lambda)}^{q-1} 
    &\leq 
       2\tilde{\kappa}_p \twonorm{\cvest{i}(\lambda)- \est(\lambda) }^p. 
          \label{eq:pacv-cv-est-rho-bound1}
        \ealignt
\end{subequations}
  If \cref{HesslossLipschitz,gradlossboundmixed,hocurvedobjfull,proxhocurvedtaylorfull} hold for some $\Lambda \subseteq [0,\infty]$ and each $(s,r) \in \{(0,\frac{p+(q-1)^2}{(q-1)^2}), (1,\frac{2p}{(q-1)^2}), (1,\frac{p+q-1}{(q-1)^2})\}$, then, for all $\lam \in \Lam$, %
  \begin{subequations}
\balignt
&|\proxHOACV(\lambda) - \CV(\lambda)| \\
	&\leq  
		\frac{1}{n^{\frac{p}{(q-1)^2}}} \frac{(\tilde{\kappa}_p)^{\frac{1}{q-1}}}{c_\obj^{\frac{p}{(q-1)^2}}} \gradlossboundmixed{0}{\frac{p+(q-1)^2}{(q-1)^2}}  
		+ \half\frac{1}{n^{\frac{2p}{(q-1)^2}}} \frac{(\tilde{\kappa}_p)^{\frac{2}{q-1}}}{c_\obj^{\frac{2p}{(q-1)^2}}} \gradlossboundmixed{1}{\frac{2p}{(q-1)^2}} 
		+ \frac{1}{n^{\frac{p+q-1}{(q-1)^2}}}\frac{(\tilde{\kappa}_p)^{\frac{1}{q-1}}}{c_\obj^{\frac{p+q-1}{(q-1)^2}}}  \gradlossboundmixed{1}{\frac{p+q-1}{(q-1)^2}}
	\qtext{and} 
	\label{eq:pacv-cv-ho-bound2_1} 
	\ealignt
	If \cref{HesslossLipschitz,gradlossboundmixed,hocurvedobjfull,proxrhocurvedtaylorfull} holds for %
   $\Lambda$ and each $(s,r) \in \{(0,\frac{p+(q-1)^2}{(q-1)^2}), (1,\frac{2p}{(q-1)^2}), (1,\frac{p+q-1}{(q-1)^2})\}$, then, for all $\lam \in \Lam$, 
	\balignt
&|\proxRHOACV(\lambda) - \CV(\lambda)| \\
	&\leq 
		\frac{1}{n^{\frac{p}{(q-1)^2}}} \frac{(2\tilde{\kappa}_p)^{\frac{1}{q-1}}}{c_\obj^{\frac{p}{(q-1)^2}}} \gradlossboundmixed{0}{\frac{p+(q-1)^2}{(q-1)^2}}  
		+ \half\frac{1}{n^{\frac{2p}{(q-1)^2}}} \frac{(2\tilde{\kappa}_p)^{\frac{2}{q-1}}}{c_\obj^{\frac{2p}{(q-1)^2}}} \gradlossboundmixed{1}{\frac{2p}{(q-1)^2}} 
		+ \frac{1}{n^{\frac{p+q-1}{(q-1)^2}}}\frac{(2\tilde{\kappa}_p)^{\frac{1}{q-1}}}{c_\obj^{\frac{p+q-1}{(q-1)^2}}}  \gradlossboundmixed{1}{\frac{p+q-1}{(q-1)^2}}.
	\label{eq:pacv-cv-ho-bound2}
\ealignt
\end{subequations}
\end{theorem}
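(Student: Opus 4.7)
The plan is to mirror the proof of \cref{thm:acv-approximates-cv-HO-full} from the non-proximal setting, adapted so that the exact (rather than Taylor-approximated) regularizer is retained throughout. First I would observe that the three estimators in question are all minimizers of closely related objectives: writing $\varphi = \loss(\Pcv{i},\cdot)$, $\varphi_0 = \lambda\reg$, and $w = \est(\lambda)$, the CV estimator $\cvest{i}(\lambda)$ minimizes $\varphi + \varphi_0$, while $\hoproxacvest(\lambda)$ minimizes $\widehat{\varphi}_p(\cdot;w) + \varphi_0$, and $\rhoproxacvest(\lambda)$ minimizes the same objective plus the cubic-regularization term $\frac{\Lip{}(\grad^p\varphi)}{(p+1)!}\|\cdot - w\|_2^{p+1}$. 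This places us directly into the hypotheses of the Taylor comparison lemmas \cref{taylor-comparison,taylor-comparison-1}.

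For the proximity bounds \cref{eq:pacv-cv-est-ho-bound1,eq:pacv-cv-est-rho-bound1}, I would invoke \cref{taylor-comparison} (respectively \cref{taylor-comparison-1}) on this choice of $\varphi$, $\varphi_0$, and $w$. \cref{proxhocurvedtaylorfull} (respectively \cref{proxrhocurvedtaylorfull}) supplies precisely the $\nu(r) = c_\obj r^q$ gradient growth of the approximated objective required by the lemma, and (the natural $p$-generalization of) \cref{HesslossLipschitz} gives $\Lip{}(\grad^p\varphi) \leq C_{\loss,p+1}$. Substituting these into the Taylor comparison bound yields \cref{eq:pacv-cv-est-ho-bound1} with constant $\tilde{\kappa}_p = C_{\loss,p+1}/(p!\,c_\obj)$, and \cref{eq:pacv-cv-est-rho-bound1} with the doubled constant $2\tilde{\kappa}_p$ coming from the regularized-Taylor version of the lemma.

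For the aggregate bounds \cref{eq:pacv-cv-ho-bound2_1,eq:pacv-cv-ho-bound2}, I would repeat almost verbatim the decomposition carried out in \cref{sec:acvp-cv}: expand $\loss(z_i, \hoproxacvest(\lambda)) - \loss(z_i, \cvest{i}(\lambda))$ via Taylor's theorem with Lagrange remainder about $\cvest{i}(\lambda)$, split off the linear term, and then expand the gradient $\grad_\beta\loss(z_i, \cvest{i}(\lambda))$ about $\est(\lambda)$ using the mean-value theorem. Applying Cauchy--Schwarz, the operator-norm bounds on the Hessians, the proximity bounds from the previous step, the bound \cref{eq:ho_curve_bound_1} on $\twonorm{\est(\lambda) - \cvest{i}(\lambda)}$ (which follows from \cref{hocurvedobjfull} alone, independently of the regularizer), and \cref{gradlossboundmixed} at the three specified $(s,r)$ pairs gives \cref{eq:pacv-cv-ho-bound2_1}. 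The regularized analogue \cref{eq:pacv-cv-ho-bound2} then follows by substituting $2\tilde{\kappa}_p$ for $\tilde{\kappa}_p$ throughout the same calculation.

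The only real subtlety, rather than a genuine obstacle, is conceptual: the non-proximal argument of \cref{thm:acv-approximates-cv-HO-full} imposed gradient growth on a Taylor approximation of the \emph{full} objective, which implicitly required the regularizer to be smooth enough to Taylor-expand; here we impose gradient growth on the Taylor-approximated \emph{loss} plus the \emph{exact} regularizer. Verifying that the Taylor comparison lemmas still apply in this modified configuration (by taking $\varphi_0 = \lambda\reg$ rather than absorbing the regularizer into $\varphi$) is the one non-mechanical step, and it is precisely this change of bookkeeping that lets the same style of argument accommodate non-smooth penalties such as $\ell_1$.
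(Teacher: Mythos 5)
Your proposal is correct and follows essentially the same route as the paper: the paper's own proof of \cref{eq:pacv-cv-est-ho-bound1,eq:pacv-cv-est-rho-bound1} consists precisely of applying \cref{taylor-comparison,taylor-comparison-1} with $\varphi = \loss(\Pcv{i},\cdot)$, $\varphi_0 = \lambda\reg$, and $w = \est(\lambda)$, invoking \cref{proxhocurvedtaylorfull,proxrhocurvedtaylorfull,HesslossLipschitz} in place of \cref{hocurvedtaylorfull,rhocurvedtaylorfull,LipschitzpObj}, and it obtains \cref{eq:pacv-cv-ho-bound2_1,eq:pacv-cv-ho-bound2} by rerunning the decomposition of \cref{sec:acvp-cv,sec:acvp-cv-1} with the proximal proximity bounds substituted, exactly as you describe. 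Your closing remark---that the only substantive change is placing the exact regularizer in $\varphi_0$ rather than Taylor-expanding it inside $\varphi$, which is what admits non-smooth penalties---is precisely the observation on which the paper's argument rests.
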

\cref{Thm:ProxACVAssessment} follows from \cref{proxacv_cvest_bound} with $p = q = 2$ since \cref{proxcurvedobj} (with $0\in\Lam$) implies $\mu = c_\obj$ strong convexity for  $\widehat{\loss}_2(\Pcv{i},\cdot,\lambda;\est(\lambda))$.  Since $\reg$ is convex, we further have $\mu$ strong convexity and hence $\nu(r) = \mu r^2$ gradient growth for  $\widehat{\loss}_2(\Pcv{i},\cdot,\lambda;\est(\lambda)) + \lam\reg(\cdot)$ for each $\lam\in\Lam$. 

\subsection{Proof of \cref{proxacv_cvest_bound}: $\proxHOACV$-\CV and $\proxRHOACV$-\CV assessment error} \label{sec:proof-proxacv_cvest_bound}
\subsubsection{Proof of \cref{eq:pacv-cv-est-ho-bound1} and \cref{eq:pacv-cv-est-rho-bound1}: Proximity of $\proxHOACV$, $\proxRHOACV$, and \CV estimators} 
The proofs follow exactly as in \cref{proof-acv-cv-est-ho-bound1_1,proof-acv-cv-est-ho-bound1} if we take $\varphi (x)= \loss (x) $, $\widehat{\varphi}_p(x;w) = \widehat{\loss}_p(x;w) $, $\varphi_0(x) = \reg (x)$, and $ w= \est(\lambda)$ and invoke \cref{proxhocurvedtaylorfull,proxrhocurvedtaylorfull,HesslossLipschitz}
in place of \cref{hocurvedtaylorfull,rhocurvedtaylorfull,LipschitzpObj}, respectively.
 \subsubsection{Proof of \cref{eq:pacv-cv-ho-bound2,eq:pacv-cv-ho-bound2_1}: Proximity of $\proxHOACV,\proxRHOACV$, and \CV} 
 This proofs follow exactly as in  follows directly from the proof contained in \cref{sec:acvp-cv,sec:acvp-cv-1} if we substitute  \cref{eq:pacv-cv-est-ho-bound1,eq:pacv-cv-est-rho-bound1} for \cref{eq:acv-cv-est-ho-bound1,eq:acv-cv-est-rho-bound1} respectively.

 \section{Proof of \cref{thm:proxacv-close-proxacvij}: $\ProxACVIJ$-\ProxACV assessment error}\label{app:proxacv-close-proxacvij}
 We will prove the following more detailed statement from which \cref{thm:proxacv-close-proxacvij} immediately follows.
\begin{theorem}[$\ProxACVIJ$-\ProxACV assessment error]\label{thm:pacv-approximates-pacvij-full} If \cref{proxcurvedobj} holds for $\Lambda \subseteq [0,\infty]$ with $0 \in \Lam$, then,
for each $\lam \in\Lambda$,
\balignt
\twonorm{ \proxacvijest{i}(\lambda) - \proxacvest{i}(\lambda) }
    \leq 
    \frac{\opnorm{\Hess_\beta \loss(z_i,\est(\lambda))} \twonorm{\grad_\beta \loss(z_i,\est(\lambda))}}{c_{\obj}^2n^2}
    \label{eq:pacvest-pacvestij-bound}
\ealignt
If, in addition, \cref{gradlossboundmixed} holds for $\Lambda$ and each $(s,r) \in \{(1,2), (2,2), (3,2)\}$, then 
\balignt
|\ProxACV^{\text{\em IJ}}(\lambda) - \ProxACV(\lambda)| &\leq \frac{1}{n^2c_\obj^2} \gradlossboundmixed{1}{2}+  \frac{1}{2n^4c_\obj^4}\gradlossboundmixed{3}{2} + \frac{1}{n^3c_\obj^3} \gradlossboundmixed{2}{2}
\label{eq:pacv_diff_pacvij}. %
\ealignt 
\end{theorem}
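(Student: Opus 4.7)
The plan is to mirror the two-step structure of the $\ACVIJ$-$\ACV$ proof (\cref{app:acv-approximates-acvij}) almost verbatim, with one genuinely new ingredient needed for the proximal setting.

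\textbf{Curvature prerequisites.} First I would exploit $0 \in \Lambda$: applying \cref{proxcurvedobj} at $\lambda = 0$ shows each $\loss(\Pcv{i}, \cdot)$ has $c_\obj r^2$ gradient growth, hence is $c_\obj$-strongly convex (\citep[Lem.~1]{Nesterov08}), so $\acvhess{\loss,i} \succeq c_\obj I_d$. Concavity of $\mineig$ and Jensen's inequality then yield $\mineig(\acvhess{\loss}) \geq \tfrac{n}{n-1} c_\obj \geq c_\obj$, exactly as in the $\ACVIJ$-$\ACV$ case.

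\textbf{Step~1 (per-example bound \cref{eq:pacvest-pacvestij-bound}).} I would invoke the Proximal Newton comparison \cref{lem:proximal-comparison} with $\varphi_0 = \lambda \reg$ (convex by \cref{proxcurvedobj}), $\beta = \est(\lambda)$, $g = g_{\loss,i}$, $H = \acvhess{\loss,i}$, and $\tilde H = \acvhess{\loss}$, so that $\beta_H = \proxacvest{i}(\lambda)$ and $\beta_{\tilde H} = \proxacvijest{i}(\lambda)$. The Hessian difference collapses to $\tilde H - H = \tfrac{1}{n}\Hess_\beta \loss(z_i,\est(\lambda))$. What remains is $\twonorm{\proxacvest{i}(\lambda) - \est(\lambda)}$, which in the smooth ACV case was read off from the closed-form Newton step but here requires a secondary appeal to \cref{optimizer-comparison}. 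Taking $\varphi_1 = \obj(\Pemp,\cdot,\lambda)$ (minimized by $\est(\lambda)$) and $\varphi_2(\beta) = \tfrac{1}{2}\|\est(\lambda) - \beta\|_{\acvhess{\loss,i}}^2 + \beta^\top g_{\loss,i} + \lambda \reg(\beta)$ (minimized by $\proxacvest{i}(\lambda)$), the $\lambda\reg$ terms cancel in $\varphi_2 - \varphi_1$; $\varphi_2$ is $c_\obj$-strongly convex from $\acvhess{\loss,i} \succeq c_\obj I_d$ plus convex $\reg$; and $\grad(\varphi_2 - \varphi_1)(\est(\lambda)) = g_{\loss,i} - \grad_\beta \loss(\Pemp,\est(\lambda)) = -\tfrac{1}{n}\grad_\beta\loss(z_i,\est(\lambda))$ by optimality of $\est(\lambda)$. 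Cauchy--Schwarz then yields $\twonorm{\proxacvest{i}(\lambda) - \est(\lambda)} \leq \tfrac{1}{n c_\obj}\twonorm{\grad_\beta \loss(z_i,\est(\lambda))}$, and multiplying the three factors delivers \cref{eq:pacvest-pacvestij-bound}.

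\textbf{Step~2 (summed bound \cref{eq:pacv_diff_pacvij}).} This is structurally identical to the $\ACVIJ$-$\ACV$ proof. I would Taylor-expand each $\loss(z_i,\proxacvijest{i}(\lambda))$ about $\proxacvest{i}(\lambda)$ with Lagrange remainder, then mean-value expand $\grad_\beta \loss(z_i, \proxacvest{i}(\lambda))$ about $\est(\lambda)$, producing three terms. Bounding each via Cauchy--Schwarz, substituting \cref{eq:pacvest-pacvestij-bound} and the step-length bound from Step~1, using $\opnorm{\Hess_\beta\loss(z_i,\cdot)} \leq \Lip{}(\grad_\beta\loss(z_i,\cdot))$, and summing against \cref{gradlossboundmixed} at $(s,r) \in \{(1,2),(2,2),(3,2)\}$ produces the three terms $\tfrac{\gradlossboundmixed{1}{2}}{n^2 c_\obj^2}$, $\tfrac{\gradlossboundmixed{2}{2}}{n^3 c_\obj^3}$, and $\tfrac{\gradlossboundmixed{3}{2}}{2 n^4 c_\obj^4}$ of \cref{eq:pacv_diff_pacvij}.

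\textbf{Main obstacle.} The only genuinely new difficulty is the step-length bound on $\twonorm{\proxacvest{i}(\lambda) - \est(\lambda)}$, since the proximal map destroys the closed-form Newton expression used in the $\ACV$/$\ACVIJ$ proof; once Step~1's secondary optimizer-comparison argument supplies it, the remainder of the proof is a direct translation of \cref{thm:acv-approximates-acvij-full}.
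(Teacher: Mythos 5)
Your proposal is correct and follows the paper's proof essentially step for step: the same curvature prerequisites via concavity of the minimum eigenvalue and Jensen's inequality under \cref{proxcurvedobj} with $0\in\Lambda$, the same application of the proximal Newton comparison \cref{lem:proximal-comparison} with $\tilde H - H = \frac{1}{n}\Hess_\beta\loss(z_i,\est(\lambda))$ for \cref{eq:pacvest-pacvestij-bound}, and the same Taylor-plus-mean-value decomposition with Cauchy--Schwarz and \cref{gradlossboundmixed} at $(s,r)\in\{(1,2),(2,2),(3,2)\}$ for \cref{eq:pacv_diff_pacvij}. The only local deviation is the step-length bound $\twonorm{\proxacvest{i}(\lambda)-\est(\lambda)}\leq\frac{1}{nc_\obj}\twonorm{\grad_\beta\loss(z_i,\est(\lambda))}$, which you derive by a second appeal to \cref{optimizer-comparison} while the paper obtains the identical bound \cref{eq:proxacvest-est-proximity} from the fixed-point identity $\est(\lambda)=\prox_{\lambda\reg}^{\acvhess{\loss,i}}(\est(\lambda)-\acvhess{\loss,i}^{-1}\grad_\beta\loss(\Pemp,\est(\lambda)))$ and nonexpansiveness of the proximal operator; both routes are valid (one small attribution slip: the identity $g_{\loss,i}-\grad_\beta\loss(\Pemp,\est(\lambda))=-\frac{1}{n}\grad_\beta\loss(z_i,\est(\lambda))$ is pure arithmetic from the definitions of $\Pemp$ and $\Pcv{i}$, not a consequence of optimality --- optimality of $\est(\lambda)$ is instead what licenses invoking \cref{optimizer-comparison} at all).
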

 
 \subsection{Proof of~\cref{eq:pacvest-pacvestij-bound}: Proximity of $\ProxACVIJ$ and \ProxACV estimators}
 The concavity of the minimum eigenvalue, Jensen's inequality, and \cref{proxcurvedobj} with $0\in\Lam$ imply that 
 \balignt
 \mineig(\acvhess{\loss}) = \mineig(\frac{n}{n-1}\frac{1}{n}\sum_{i=1}^n \acvhess{\loss,i})
 \geq \frac{n}{n-1}\frac{1}{n}\sum_{i=1}^n\mineig(\acvhess{\loss,i})
 \geq \frac{n}{n-1} c_\obj
 \ealignt
 for $\acvhess{\loss} = \Hess_\beta \loss(\Pemp, \est(\lambda))$ and $\acvhess{\loss,i} = \Hess_\beta \loss(\Pcv{i}, \est(\lambda))$.
 Moreover, \cref{proxcurvedobj} with $0\in\Lam$ implies $\mu = c_\obj$ strong convexity for  $\widehat{\loss}_2(\Pcv{i},\cdot,\lambda;\est(\lambda))$; since $\reg$ is convex, we further have $\mu$ strong convexity and hence $\nu(r) = \mu r^2$ gradient growth for  $\widehat{\loss}_2(\Pcv{i},\cdot,\lambda;\est(\lambda)) + \lam\reg(\cdot)$ for each $\lam\in\Lam$. 
 Hence, we may apply the Proximal Newton Comparison \cref{lem:proximal-comparison} to obtain
 \balignt
\twonorm{ \proxacvijest{i}(\lambda) - \proxacvest{i}(\lambda) }
&\leq \frac{1}{c_\obj} \opnorm{\Hess_\beta \loss(\Pemp,\est(\lambda)) - \Hess_\beta \loss(\Pcv{i},\est(\lambda) )}\twonorm{\proxacvest{i}(\lambda) - \est(\lambda) }\\
&\leq  \frac{1}{n}\frac{1}{c_\obj}\opnorm{\Hess_\beta \loss(z_i,\est(\lambda)) }\twonorm{\proxacvest{i}(\lambda) - \est(\lambda) }.
\ealignt
To complete the bound, we note that $\est(\lam) = \prox_{\lambda \reg}^{\acvhess{\loss,i}}( \est(\lambda) - \acvhess{\loss,i}^{-1}\grad_\beta\loss(\Pemp, \est(\lam)))$ and use the 1-Lipschitzness of the $\prox$ operator to conclude that
\balignt\label{eq:proxacvest-est-proximity}
\twonorm{\proxacvest{i}(\lambda) - \est(\lambda) }
    \leq 
    \twonorm{\acvhess{\loss,i}^{-1}(\grad_\beta\loss(\Pemp, \est(\lam)) - \grad_\beta\loss(\Pcv{i}, \est(\lam)))}
    \leq \frac{1}{n c_\obj} \twonorm{\grad_\beta\loss(z_i, \est(\lam))}.
\ealignt

\subsection{Proof of~\cref{eq:pacv_diff_pacvij}: Proximity of $\ProxACVIJ$ and \ProxACV }
Fix any $\lambda\in\Lambda$.
To control the discrepancy between $\ProxACV(\lambda)$ and $\ProxACVIJ(\lambda)$, we first rewrite the difference using Taylor's theorem with Lagrange remainder: 
\balignt
  \ProxACV(\lambda)  -\ProxACV^{\text{IJ}}(\lambda)
  &= \frac{1}{n}\sum_{i=1}^n\loss(z_i,\proxacvest{i}(\lambda) ) - \loss(z_i,\proxacvijest{i}(\lambda)) \\
 &= \frac{1}{n}\sum_{i=1}^n\inner{\grad_\beta \loss(z_i, \cvest{i}(\lambda) )}{ \proxacvest{i}(\lambda)-\proxacvijest{i}(\lambda)}
 \\
  &\quad +\half\Hess_\beta \loss(z_i, \tilde{s}_{i})[\proxacvest{i}(\lambda)-\proxacvijest{i}(\lambda)]^{\otimes 2}
 \ealignt
 for some $\tilde{s}_{i}\in \{ t\proxacvest{i}(\lambda)+ (1- t)\proxacvijest{i}(\lambda) :t \in [0,1]\}$.
 We next use the mean-value theorem to expand each function $\inner{\grad_\beta \loss(z_i, \cdot)}{\proxacvest{i}(\lambda)-\proxacvijest{i}(\lambda)}$ around the full-data estimator $\est(\lambda)$:
 \balignt
   \ProxACV(\lambda) - \ProxACV^{\text{IJ}}(\lambda)  
  &= \frac{1}{n}\sum_{i=1}^n\inner{\grad_\beta \loss(z_i, \est(\lambda) )}{ \proxacvest{i}(\lambda)-\proxacvijest{i}(\lambda)}
 \\&\quad + \half\Hess_\beta \loss(z_i, \tilde{s}_{i})[\proxacvest{i}(\lambda)-\proxacvijest{i}(\lambda)]^{\otimes 2} \notag \\
  &+ \inner{\Hess_\beta \loss(z_i, s_i)(\cvest{i}(\lambda) - \est(\lambda))}{\proxacvest{i}(\lambda)-\proxacvijest{i}(\lambda)}
 \ealignt
  for some  $s_{i}\in \{ t\est(\lambda)+ (1- t)\proxacvijest{i}(\lambda) :t \in [0,1]\}$.
Finally, we invoke Cauchy-Schwarz, the definition of the operator norm, the estimator proximity results \cref{eq:ho_curve_bound_1,eq:pacvest-pacvestij-bound}, and \cref{gradlossboundmixed} to obtain
\balignt
 	| \ProxACV(\lambda) - \ProxACV^{\text{IJ}}(\lambda) | 
\leq 
	\,&\frac{1}{n} \sum_{i=1}^n \twonorm{\grad_\beta \loss(z_i, \est(\lambda))}\twonorm{\proxacvest{i}(\lambda) -\proxacvijest{i}(\lambda)}\\
	&+\half\opnorm{\Hess_\beta \loss(z_i, \tilde{s}_{i}) } \twonorm{\proxacvest{i}(\lambda)-\proxacvijest{i}(\lambda)}^2\\
	&+\opnorm{\Hess_\beta \loss(z_i, s_{i}) } \twonorm{\cvest{i}(\lambda) - \est(\lambda) }\twonorm{\proxacvest{i}(\lambda)-\proxacvijest{i}(\lambda))}\\
&\leq 
    \frac{1}{n^2c_\obj^2} \frac{1}{n}\sum_{i=1}^n\opnorm{\Hess_\beta \loss(z_i,\est(\lambda)) }\twonorm{\grad_\beta \loss(z_i, \est(\lambda))}^2  \\
& \quad + 
    \frac{1}{2n^4c_\obj^4}\frac{1}{n} \sum_{i=1}^n  \opnorm{\Hess_\beta \loss(z_i, \tilde{s}_{i}) } \opnorm{\Hess_\beta \loss(z_i,\est(\lambda)) }^2\twonorm{\grad_\beta \loss(z_i,\est(\lambda))}^2\\
&\quad + 
    \frac{1}{n^3c_\obj^3} \frac{1}{n}\sum_{i=1}^n  \opnorm{\Hess_\beta \loss(z_i, \tilde{s}_{i}) }  \opnorm{\Hess_\beta \loss(z_i,\est(\lambda)) }\twonorm{\grad_\beta \loss(z_i,\est(\lambda))}^2 \\
&\leq \frac{1}{n^2c_\obj^2} \gradlossboundmixed{1}{2}+  \frac{1}{2n^4c_\obj^4}\gradlossboundmixed{3}{2} + \frac{1}{n^3c_\obj^3} \gradlossboundmixed{2}{2}.
\ealignt

\section{Proof of \cref{Thm:ProxACVSelection}: \ProxACV-\CV selection error} \label{app:ProxACVSelection}

The first claim follows immediately from the following more detailed version of \cref{Thm:ProxACVSelection}. \begin{theorem}[Weak \ProxACV proximity implies $\est$ proximity]\label{Thm:WeakCurvProxACV-full} 
Suppose 
\cref{HesslossLipschitz,,proxcurvedobj}  hold for some $\Lambda \subseteq [0,\infty]$
and each $(s,r) \in \{(0,2),(1,2)\}$. 
Then, for all $\lambda', \lambda\in\Lambda$ with $\lambda'< \lambda$,
\balignt\label{eq:prox_weak_curvature_bound}
\|\est(\lambda) - \est(\lambda')\|_2^2&\leq C_{1,\lam,\lam'}\big(\frac{4\gradlossboundmixed{0}{2} }{n c_\obj } 
+ \frac{\gradlossboundmixed{1}{2}}{n^2c_\obj^2} + \ProxACV(\lambda) - \ProxACV(\lambda')\big),
\ealignt
where
$C_{1,\lam,\lam'} = \frac{2}{c_\obj}\frac{\lam - \lam'}{\lam + \lam'}\frac{n-1}{n}$.
\end{theorem}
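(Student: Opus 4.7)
The plan is to mirror the architecture of the proof of \cref{Thm:WeakCurvACV-full}, replacing every \ACV quantity with its proximal counterpart and leveraging the uniform curvature constant $c_\obj$ afforded by \cref{proxcurvedobj} in place of the $\lambda$-dependent gradient growth of \cref{curvedobj}. Concretely, I would begin by introducing the surrogate
\[
\widehat{\ProxACV}(\lambda) \defeq \loss(\Pemp,\est(\lambda)) + \tfrac{1}{n}\textsum_{i=1}^n\inner{\grad_\beta\loss(z_i,\est(\lambda))}{\proxacvest{i}(\lambda) - \est(\lambda)}
\]
and writing the training-loss difference as
\[
\loss(\Pemp,\est(\lambda)) - \loss(\Pemp,\est(\lambda')) = \ProxACV(\lambda) - \ProxACV(\lambda') + \Delta T_1 - \Delta T_2 - \Delta T_3,
\]
where $\Delta T_1, \Delta T_2, \Delta T_3$ are the proximal analogues of the corresponding terms in \cref{delta_acv_decomposition}, obtained by substituting $\proxacvest{i}$ for $\acvest{i}$ in the defining expressions.

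The workhorse inequality for bounding each $\Delta T_j$ is the crude offset estimate
\[
\twonorm{\proxacvest{i}(\lambda) - \est(\lambda)} \leq \tfrac{1}{n c_\obj}\twonorm{\grad_\beta\loss(z_i,\est(\lambda))},
\]
which I would derive exactly as in \cref{eq:proxacvest-est-proximity} by observing that $\est(\lambda) = \prox_{\lambda\reg}^{\acvhess{\loss,i}}(\est(\lambda) - \acvhess{\loss,i}^{-1}\grad_\beta\loss(\Pemp,\est(\lambda)))$ and invoking the $1$-Lipschitzness of the proximal operator in the $\acvhess{\loss,i}$-norm together with \cref{proxcurvedobj}. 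Armed with this, I would next establish the proximal analogue of \cref{ACV-ACV2-difference} via Taylor's theorem with Lagrange remainder applied to $\loss(z_i,\cdot)$ around $\est(\lambda)$: this yields $|\Delta T_1| \leq \gradlossboundmixed{1}{2}/(n^2 c_\obj^2)$ upon invoking \cref{gradlossboundmixed} with $(s,r)=(1,2)$. For $\Delta T_2$ and $\Delta T_3$, I would follow the computations in \cref{eq:weak_T2_bound,eq:weak_T3_bound} verbatim, applying Cauchy-Schwarz, the arithmetic-geometric mean inequality $\|u\|\|v\| \leq \tfrac{1}{2}\|u\|^2 + \tfrac{1}{2}\|v\|^2$, the offset estimate above, and \cref{gradlossboundmixed} with $(s,r)=(0,2)$; the key simplification relative to the \ACV case is that the denominators $c_\loss + \lambda c_\pi \indic{\cdot}$ all collapse to $c_\obj$, giving $|\Delta T_2| + |\Delta T_3| \leq 4\gradlossboundmixed{0}{2}/(n c_\obj)$.

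The final step is to invoke the Loss Curvature \cref{initbound}, which applies since \cref{proxcurvedobj} forces each $\obj(\Pcv{i},\cdot,\lambda)$ and $\obj(\Pcv{i},\cdot,\lambda')$ to have $\nu_\obj(r)=c_\obj r^2$ gradient growth. Combining \cref{initbound} with the decomposition and the term bounds yields
\[
\tfrac{c_\obj}{2}\twonorm{\est(\lambda)-\est(\lambda')}^2 \tfrac{\lambda+\lambda'}{\lambda-\lambda'} \leq \tfrac{n-1}{n}\bigl(\ProxACV(\lambda)-\ProxACV(\lambda') + \tfrac{\gradlossboundmixed{1}{2}}{n^2 c_\obj^2} + \tfrac{4\gradlossboundmixed{0}{2}}{n c_\obj}\bigr),
\]
which rearranges to the advertised bound \cref{eq:prox_weak_curvature_bound}.

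The only place I anticipate a subtlety is in bounding $\Delta T_3$: in the \ACV strong-bound proof this term is dissected by a delicate argument (\cref{lem:T3_bound,lem:lambda_diff_bound}) that relies on smoothness of $\reg$ through \cref{HessobjLipschitz,boundedHessobj}. The weak proximal bound sidesteps this entirely by controlling both proximal offsets $\twonorm{\proxacvest{i}(\lambda)-\est(\lambda)}$ and $\twonorm{\proxacvest{i}(\lambda')-\est(\lambda')}$ with the crude gradient-based estimate above; this is precisely what lets us avoid any smoothness or curvature hypothesis on $\reg$ beyond convexity and produce a bound whose constants depend only on $c_\obj$. The second advertised inequality for $\est(\lambda_{\bf PACV}) - \est(\lambda_\CV)$ then follows from the first exactly as in \cref{Thm:WeakCurvACV}, using the \ProxACV--\CV assessment bound \cref{Thm:ProxACVAssessment} to replace the proximal-ACV gap with the quantity $\tilde{A}$.
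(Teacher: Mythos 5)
Your proposal matches the paper's proof of \cref{Thm:WeakCurvProxACV-full} essentially step for step: the same surrogate $\widehat{\ProxACV}$ and $\Delta T_1 - \Delta T_2 - \Delta T_3$ decomposition, the same crude offset estimate \cref{eq:proxacvest-est-proximity} derived from the proximal fixed-point identity, the same term bounds $|\Delta T_1| \leq \gradlossboundmixed{1}{2}/(n^2 c_\obj^2)$ and $|\Delta T_2| + |\Delta T_3| \leq 4\gradlossboundmixed{0}{2}/(n c_\obj)$, and the same concluding invocation of \cref{initbound}. Your remark that the crude offset bounds let the weak proximal argument bypass the delicate $\Delta T_3$ dissection of \cref{lem:T3_bound,lem:lambda_diff_bound} is precisely how the paper proceeds as well.
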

\begin{proof}
Fix any $\lambda', \lambda\in\Lambda$ with $\lambda'< \lambda$.
We begin by writing the difference in estimator training losses as a difference in $\ProxACV$ values plus a series of error terms:
\balignt \label{eq:delta_proxacv_decomposition}
\loss(\Pemp, \est(\lambda))  - \loss(\Pemp, \est(\lambda'))
    &= 
    \ProxACV(\lambda) - \ProxACV(\lambda') + 
    \Delta T_1 
    - \Delta T_2 - \Delta T_3 
\ealignt
for 
\balignt
\Delta T_1 & \defeq \reallywidehat{\ProxACV}(\lambda) - \ProxACV(\lambda) + 
    \ProxACV(\lambda') - \reallywidehat{\ProxACV}(\lambda')\\
\reallywidehat{\ProxACV}(\lambda) 
    &\defeq \loss(\Pemp, \est(\lambda))
    + \frac{1}{n}\sum_{i=1}^n \inner{\grad_\beta\loss(z_i, \est(\lambda))}{\proxacvest{i}(\lam) - \est(\lambda)} \\
\Delta T_2 & \defeq \frac{1}{n}\sum_{i=1}^n \inner{\grad_\beta\loss(z_i, \est(\lambda))-\grad_\beta\loss(z_i, \est(\lambda'))}{\proxacvest{i}(\lam) - \est(\lambda)} \\
\Delta T_3 &\defeq \frac{1}{n}\sum_{i=1}^n \inner{\grad_\beta\loss(z_i, \est(\lambda'))}{\proxacvest{i}(\lam) - \est(\lambda) - (\proxacvest{i}(\lam') - \est(\lambda')) }.
\ealignt
We will frequently use the bound \cref{eq:proxacvest-est-proximity} which follows from \cref{proxcurvedobj} and implies
\balignt
\twonorm{\est(\lambda) - \proxacvest{i}(\lambda)}
    \leq \frac{1}{n}\frac{1}{c_\obj}\twonorm{\grad_\beta \loss(z_i,\est(\lambda))}
\ealignt
for each $i\in[n]$.

To bound $\Delta T_1$, we first employ Taylor's Theorem with Lagrange remainder, 
    \balignt
    &\ProxACV(\lambda) - \reallywidehat{\ProxACV}(\lambda)
    = 
    \frac{1}{2n}\sum_{i=1}^n \Hess_\beta\loss(z_i, s_i)[\proxacvest{i}(\lambda) - \est(\lambda)]^{\otimes 2}
\ealignt
for some $
s_{i} 
    \in \lineseg_i = \{ t\est(\lambda)+ (1- t)\cvest{i}(\lambda) :t \in [0,1]\} $. 
 Next we apply the definition of the operator norm, the bound \cref{eq:proxacvest-est-proximity}, and \cref{gradlossboundmixed} with $(s,r) = (1,2)$
   \balignt
    |\ProxACV(\lambda) - \reallywidehat{\ProxACV}(\lambda)| 
    &\leq \frac{1}{2n}\sum_{i=1}^n\opnorm{ \Hess_\beta\loss(z_i, s_i)} \twonorm{\proxacvest{i}(\lam) - \est(\lambda)}^{2}\\
     & \leq \half  \frac{1}{n}\sum_{i=1}^n\opnorm{ \Hess_\beta\loss(z_i, s_i)}\left( \frac{1}{n^2}\frac{1}{c_\obj^2}\twonorm{\grad_\beta\loss(z_i,\est(\lambda))}^2 \right)\\
     &\leq \frac{1}{n^2}\frac{1}{2c_\obj^2}\gradlossboundmixed{1}{2}   
    \ealignt
Since an identical bound holds for $\lam'$, we have
\balignt
|\Delta T_1|
    \leq \frac{1}{n^2}\frac{1}{c_\obj^2}\gradlossboundmixed{1}{2}.
\ealignt

To bound $\Delta T_2$ and $\Delta T_3$, we apply Cauchy-Schwarz, the triangle inequality, the bound \cref{eq:proxacvest-est-proximity}, the arithmetic-geometric mean inequality, and \cref{gradlossboundmixed} with $(s,r) = (0,2)$ to find
\balignt
|\Delta T_2| 
&= \frac{1}{n}|\frac{1}{n} \sum_{i=1}^n \inner{ \grad_\beta\loss(z_i, \est(\lambda))-\grad_\beta\loss(z_i, \est(\lambda'))}{\proxacvest{i}(\lambda) - \est(\lambda)}| \\
&\leq \frac{1}{n} \sum_{i=1}^n\twonorm{\proxacvest{i}(\lambda) - \est(\lambda)}\left(\twonorm{\grad_\beta\loss(z_i, \est(\lambda))} +  \twonorm{\grad_\beta\loss(z_i, \est(\lambda'))}\right)\\
&\leq \frac{1}{n} \sum_{i=1}^n \frac{1}{n}\frac{1}{c_\obj}\twonorm{\grad_\beta \loss(z_i,\est(\lambda))}\left(\twonorm{\grad_\beta\loss(z_i, \est(\lambda))} +  \twonorm{\grad_\beta\loss(z_i, \est(\lambda'))}\right)\\
&\leq \frac{1}{n}\frac{1}{c_\obj} \frac{1}{n} \sum_{i=1}^n \frac{3}{2}\twonorm{\grad_\beta \loss(z_i,\est(\lambda))}^2 + \half\twonorm{\grad_\beta \loss(z_i,\est(\lambda'))}^2 
\leq \frac{2}{n}\frac{\gradlossboundmixed{0}{2} }{c_\obj } \qtext{and} \\
|\Delta T_3| 
&\leq \frac{1}{n}\sum_{i=1}^n \twonorm{\grad_\beta\loss(z_i, \est(\lambda'))}\left(\twonorm{\proxacvest{i}(\lambda) - \est(\lambda)} + \twonorm{\proxacvest{i}(\lambda') - \est(\lambda') } \right)\\
&\leq \frac{1}{n} \sum_{i=1}^n \frac{1}{n}\frac{1}{c_\obj}\twonorm{\grad_\beta \loss(z_i,\est(\lambda))}\left(\twonorm{\grad_\beta\loss(z_i, \est(\lambda))} +  \twonorm{\grad_\beta\loss(z_i, \est(\lambda'))}\right)
\leq \frac{2}{n}\frac{\gradlossboundmixed{0}{2} }{c_\obj }.
\ealignt 
The advertised result \cref{eq:prox_weak_curvature_bound} now follows by combining \cref{initbound} with the loss difference decomposition \cref{eq:delta_proxacv_decomposition} and the component $\Delta T_1,\Delta T_2, $ and $\Delta T_3$ bounds.
\end{proof}

The second claim in \cref{Thm:ProxACVSelection} follows from~\cref{Thm:WeakCurvProxACV-full} and the following lemma.
\begin{lemma}\label{lem:bound_PACV} 
If 
 \cref{gradlossboundmixed,,proxcurvedobj,HesslossLipschitz} hold for some $ \Lambda\subseteq [0,\infty]$ and each %
$(s,r) \in\{ (0,3),(1,3),(1,4)\}$. 
If $\lambda_\ProxACV \in \argmin_{\lam \in \Lam} \ProxACV(\lam)$ and $\lam_\CV \in \argmin_{\lam \in \Lam} \CV(\lam)$, then 
\balignt
0 \leq \ProxACV(\lambda_\CV) - \ProxACV(\lambda_\ACV) \leq \hspace{-.075cm}
   \frac{C_{\loss,{3}}}{n^2} \big(\frac{\gradlossboundmixed{0}{3}}{c_\obj^3}\hspace{-.075cm} +\hspace{-.075cm} \frac{\gradlossboundmixed{1}{3}}{nc_\obj^{4}}\hspace{-.075cm} +\hspace{-.075cm} \frac{C_{\loss,{3}}\gradlossboundmixed{1}{4}}{4n^2c_\obj^6}\hspace{-.025cm}\big).
\ealignt
\end{lemma}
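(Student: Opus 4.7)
The plan is to mimic the proof of \cref{lem:bound_ACV}: the statement for \ProxACV is the exact analogue of that for \ACV, and it follows by telescoping through \CV and invoking the uniform assessment bound of \cref{Thm:ProxACVAssessment}.

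First, I would establish the lower bound. Since $\lambda_\ProxACV$ minimizes \ProxACV over $\Lam$ and $\lam_\CV \in \Lam$, we have
\begin{talign*}
\ProxACV(\lambda_\ProxACV) \leq \ProxACV(\lambda_\CV),
\end{talign*}
which gives $0 \leq \ProxACV(\lambda_\CV) - \ProxACV(\lambda_\ProxACV)$.

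Next, for the upper bound I would add and subtract $\CV(\lambda_\CV)$ and $\CV(\lambda_\ProxACV)$ and exploit the optimality of $\lam_\CV$ for \CV. Concretely, since $\CV(\lambda_\ProxACV) - \CV(\lambda_\CV) \geq 0$,
\begin{talign*}
\ProxACV(\lambda_\CV) - \ProxACV(\lambda_\ProxACV)
&\leq \ProxACV(\lambda_\CV) - \ProxACV(\lambda_\ProxACV) + \CV(\lambda_\ProxACV) - \CV(\lambda_\CV) \\
&= \bigl(\ProxACV(\lambda_\CV) - \CV(\lambda_\CV)\bigr) + \bigl(\CV(\lambda_\ProxACV) - \ProxACV(\lambda_\ProxACV)\bigr) \\
&\leq 2 \sup_{\lam \in \Lam}|\ProxACV(\lam) - \CV(\lam)|.
\end{talign*}

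Finally, I would conclude by applying \cref{Thm:ProxACVAssessment} to each of the two differences on the right-hand side; the assumptions of that theorem are precisely those assumed here. Each term is bounded by $\frac{C_{\loss,3}}{n^2}\bigl(\frac{\gradlossboundmixed{0}{3}}{2c_\obj^3}+\frac{\gradlossboundmixed{1}{3}}{2nc_\obj^4}+\frac{C_{\loss,3}\gradlossboundmixed{1}{4}}{8n^2c_\obj^6}\bigr)$, and doubling gives exactly the advertised bound. There is no genuine obstacle here; this is a standard optimality-gap argument, and the entire content is really carried by the uniform assessment bound \cref{Thm:ProxACVAssessment}. The only thing to note is the (apparent) typo $\lam_\ACV$ in the statement, which should read $\lam_\ProxACV$.
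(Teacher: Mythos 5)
Your proof is correct and matches the paper's argument exactly: the paper likewise combines the optimality of $\lambda_{\CV}$ for \CV and of the \ProxACV minimizer for \ProxACV to reduce the gap to twice the uniform assessment error, then applies \cref{Thm:ProxACVAssessment} twice, doubling its bound to obtain the stated constants. You are also right that $\lambda_\ACV$ in the display is a typo for $\lambda_\ProxACV$.
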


\begin{proof}
Since $\lambda_\CV$ minimizes \CV and $\lambda_\ProxACV$ minimizes \ProxACV,
\balignt
0 &\leq \ProxACV(\lambda_\CV) - \ProxACV(\lambda_\ProxACV)\\
&\leq \ProxACV(\lambda_\CV)  - \ProxACV(\lambda_\ProxACV) + \CV(\lambda_\ProxACV) - \CV(\lambda_\CV).
\ealignt
The result now follows from two applications of \cref{Thm:ProxACVAssessment}.
\end{proof}

\section{Proof of \cref{prop:lack-curvature}: $O(1/\sqrt{n})$ error bound is tight}\label{app:lack-curvature}
For each $i\in [n]$, define $\bar{z}_i = \bar{z} - \frac{1}{n} z_i$. For the target objective,
\balignt
\ProxACV(\lam) = \CV(\lam)  = \frac{1}{n} \sum_{i=1}^n \half (\cvest{i}(\lam) - z_i)^2  \qtext{for all} \lam \in [0,\infty],
\ealignt
 and a straightforward calculation shows 
\balignt
\est(\lambda) &= \begin{cases} \bar{z} - \lam & \text{if $\bar{z}> \lam$}\\
\bar{z} + \lam & \text{if $\bar{z}<- \lam$}\\
0 & \text{otherwise}
\end{cases}\qtext{and}
\cvest{i}(\lambda) =  \begin{cases} \bar{z}_i - \lam & \text{if $\bar{z}_i> \lam$}\\
\bar{z}_i + \lam & \text{if $\bar{z}_i<- \lam$}\\
0 & \text{otherwise}
\end{cases}
\ealignt
for each $i \in[n]$.
Hence, for each $i$, $\cvest{i}(0) = \bar{z}_i$,
\balignt
\cvest{i}(\bar{z}) 
	=  
	\begin{cases} 
		- \frac{1}{n} z_i & \text{if $0 > z_i$}\\
		\bar{z}_i + \bar{z} & \text{if $2n\bar{z}< z_i$}\\
		0 & \text{otherwise}
	\end{cases},
	\qtext{and}
	\cvest{i}(\bar{z}) - z_i
	= 
	\begin{cases} 
		- \frac{n+1}{n} z_i  & \text{if $0 > z_i$}\\
		\bar{z}_i  - z_i  + \bar{z} & \text{if $2n\bar{z}< z_i$}\\
		- z_i & \text{otherwise}
	\end{cases}.
\ealignt
Let $C_1 = \{ i \in [n] \,:\, z_i \leq 0 \}$, $C_2 = \{ i \in [n]\,:\, z_i > 2\sqrt{2n} = 2n\bar{z}\}$ and $C_3 = \{ i \in [n] \,:\, z_i \not\in C_1 \cup C_2 \}$.
We have selected our dataset so that $C_2$ is empty.
Therefore
\balignt
2\ProxACV(\bar{z}) = 
	\frac{1}{n} \sum_{i \in C_1}  \frac{(n+1)^2}{n^2} z_i^2 
	+ \frac{1}{n} \sum_{i \in C_3}  z_i^2 
	= \frac{(n+1)^2}{n^2} \frac{1}{n} \sum_{i=1}^n z_i^2 -  (\frac{2}{n} + \frac{1}{n^2}) \frac{1}{n} \sum_{i \in C_3}  z_i^2 .
\ealignt
Meanwhile,
\balignt
2\ProxACV(0)
	&= \frac{1}{n} \sum_{i=1}^n (\bar{z}_i  - z_i)^2\\
	&= \frac{1}{n} \sum_{i=1}^n (\bar{z}  - (1+1/n)\bar{z})^2 + (1+1/n)^2(\bar{z}  - z_i)^2 \\
	&= \frac{1}{n^2} \bar{z}^2 +  \frac{(n+1)^2}{n^2}  \frac{1}{n} \sum_{i=1}^n z_i^2  -  \frac{(n+1)^2}{n^2}  \bar{z}^2 \\
	&= \frac{(n+1)^2}{n^2} \frac{1}{n} \sum_{i=1}^n z_i^2  - (1+ \frac{2}{n})\bar{z}^2.
\ealignt
Hence,
\balignt
2\ProxACV(0) - 2\ProxACV(\bar{z}) 
	= (\frac{2}{n} + \frac{1}{n^2}) \frac{1}{n} \sum_{i \in C_3}  z_i^2 - (1+ \frac{2}{n})\bar{z}^2
	= (\frac{2}{n} + \frac{1}{n^2}) \frac{a^2 }{2} - (1+ \frac{2}{n})\bar{z}^2
	= \frac{5}{n^2}.
\ealignt
\section{Additional Experiment Details}
\label{app:exp_details}

\subsection{\ProxACV versus \ACV and $\ACVIJ$}

This section provides additional experimental details for the experiment of \cref{sec:logistic}.
In this experiment, we use the exact experimental setup and code of \citep[App.\,F]{Stephenson2019sparse} with a modified number of datapoints ($n=150$).
Specifically, we employ an $\ell_1$ regularized logistic regression objective with $150$ feature coefficients plus an intercept coefficient. The data matrix of covariates is generated with i.i.d.\ $N(0,1)$ entries, and binary labels for each datapoint are generated independently from a logistic regression model with ground truth $\beta^\ast$ having its first $75$ entries drawn i.i.d. $N(0,1)$ and the rest set to zero. 
We solve the proximal Newton steps for \ProxACV using FISTA~\citep{FISTA}.

We compare with the non-smooth \ACV and $\ACVIJ$ extensions studied by \citep{ObuchiCV2016,ObuchiCV2018,rad2019scalable,wang2018,Stephenson2019sparse} and defined by restricting $\acvest{i}(\lambda)$ and $\acvijest{i}(\lambda)$ to have support only on  $ \hat S = \text{support}(\est(\lambda))$ and setting
\balignt
[\acvest{i}(\lambda)]^{\hat S} 
   \hspace{-.075cm} &=\hspace{-.075cm} [\est(\lambda)]^{\hat S}\hspace{-.075cm} +\hspace{-.075cm} \frac{1}{n}[\acvhess{\loss,i}^{\hat S,\hat S}]^{-1} [\grad_\beta \loss(z_i, \est(\lambda))]^{\hat S} \\
[\acvijest{i}(\lambda)]^{\hat S} 
   \hspace{-.075cm} &=\hspace{-.075cm}  [\est(\lambda)]^{\hat S}\hspace{-.075cm} +\hspace{-.075cm} \frac{1}{n}[\acvhess{\loss}^{\hat S,\hat S}]^{-1} [\grad_\beta \loss(z_i, \est(\lambda))]^{\hat S},\label{eq:non-smoothACV}
 \ealignt
 where $X^{\cdot, \hat S}$ denotes the submatrix of $X$ with column indices in $\hat S$, where $\acvhess{\loss}$ and $\acvhess{\loss,i}$ are given by~\cref{eq:proxacvij} and~\cref{eq:proxacvest}, respectively.

\subsection{\ProxACV Speed-up}
This section provides additional experimental details for the experiment of \cref{sec:speedup}.
For this experiment, we employed the standard graphical Lasso objective,
\balignt
\obj(\Pemp, \beta, \lambda) =  - \log \det \beta + \text{tr}(\beta S) + \lambda \sum_{j,k = 1}^p |\beta_{jk}|, \\
\obj(\Pcv{i}, \beta, \lambda) =  - \log \det \beta + \text{tr}(\beta S_{-i}) + \lambda \sum_{j,k = 1}^p |\beta_{jk}|, 
\ealignt 
where $\beta$ is now a positive-definite matrix in $\reals^{p\times p}$, $S = \frac{1}{n-1}\sum_{i=1}^n (z_i - \mu)(z_i - \mu)^\top, $ $S_{-i} = \frac{1}{n-1}\sum_{j\neq i} (z_j - \mu_{-i})(z_j - \mu_{-i})^\top, $
 for $\mu = \frac{1}{n}\sum_{i=1}^n z_i$, and $\mu_{-i} = \frac{1}{n-1}\sum_{j\neq i} z_j$.

\end{document}